\newcommand{\myTitle}{Exploiting random projections and sparsity with random forests and gradient boosting methods\xspace}
\newcommand{\mySubtitle}{Application to multi-label and multi-output learning, random forest model compression and leveraging input sparsity\xspace}
\newcommand{\myName}{Arnaud Joly\xspace}
\newcommand{\myFaculty}{Faculty of Applied Sciences\xspace}
\newcommand{\myDepartment}{Department of Electrical Engineering \& Computer Science\xspace}
\newcommand{\myUni}{University of Li\`ege\xspace}
\newcommand{\myTime}{December 2016\xspace}
\newcounter{dummy} 
\providecommand{\mLyX}{L\kern-.1667em\lower.25em\hbox{Y}\kern-.125emX\@}
\newcommand{\backrefnotcitedstring}{\relax}
\newcommand{\backrefcitedsinglestring}[1]{(Cited on page~#1.)}
\newcommand{\backrefcitedmultistring}[1]{(Cited on pages~#1.)}
		   \renewcommand*{\backref}[1]{}  
		   \renewcommand*{\backrefalt}[4]{
		      \ifcase #1 %
		         \backrefnotcitedstring%
		      \or%
		         \backrefcitedsinglestring{#2}%
		      \else%
		         \backrefcitedmultistring{#2}%
		      \fi}%
\numberwithin{equation}{chapter}
\numberwithin{figure}{chapter}
\numberwithin{table}{chapter}
\newcommand{\algmargin}{\the\ALG@thistlm}
\newlength{\whilewidth}
\algnewcommand{\parState}[1]{\State%
  \parbox[t]{\dimexpr\linewidth-\algmargin}{\strut #1\strut}}
\newenvironment{remark}[1]{%
    \definecolor{shadecolor}{gray}{0.9}%
    \begin{shaded}{\color{Maroon}\noindent\textsc{#1}}\\%
    }{%
\end{shaded}%
}
\newtheorem{theorem}{Theorem}
\newcommand{\ra}[1]{\renewcommand{\arraystretch}{#1}}
\DeclareMathOperator{\E}{E}
\DeclareMathOperator{\Var}{Var}
\DeclareMathOperator{\Bias}{Bias}
\DeclareMathOperator{\logit}{logit}
\DeclareMathOperator{\sign}{sign}
\definecolor{burgundy}{rgb}{0.5, 0.0, 0.13}
\renewcommand\cite{\citep}
\begin{document}

\frenchspacing{}
\raggedbottom{}
\selectlanguage{american}
\pagenumbering{roman}
\pagestyle{plain}
\sloppy


\begin{titlepage}


	\begin{addmargin}[-1cm]{-3cm}
    \begin{center}
        \large

		\myUni \\
		{\normalsize \myFaculty \\
		\myDepartment \\
		Montefiore Institute} \\

        \hfill

        \vfill

\textbf{PhD Thesis in Engineering Sciences}
		\vfill

        \begingroup
            \color{Maroon}\spacedallcaps{\myTitle} \\ \bigskip
        \endgroup

		\mySubtitle{} \\ \medskip

        \spacedlowsmallcaps{\myName}

        \vfill

		\includegraphics[width=0.4\textwidth]{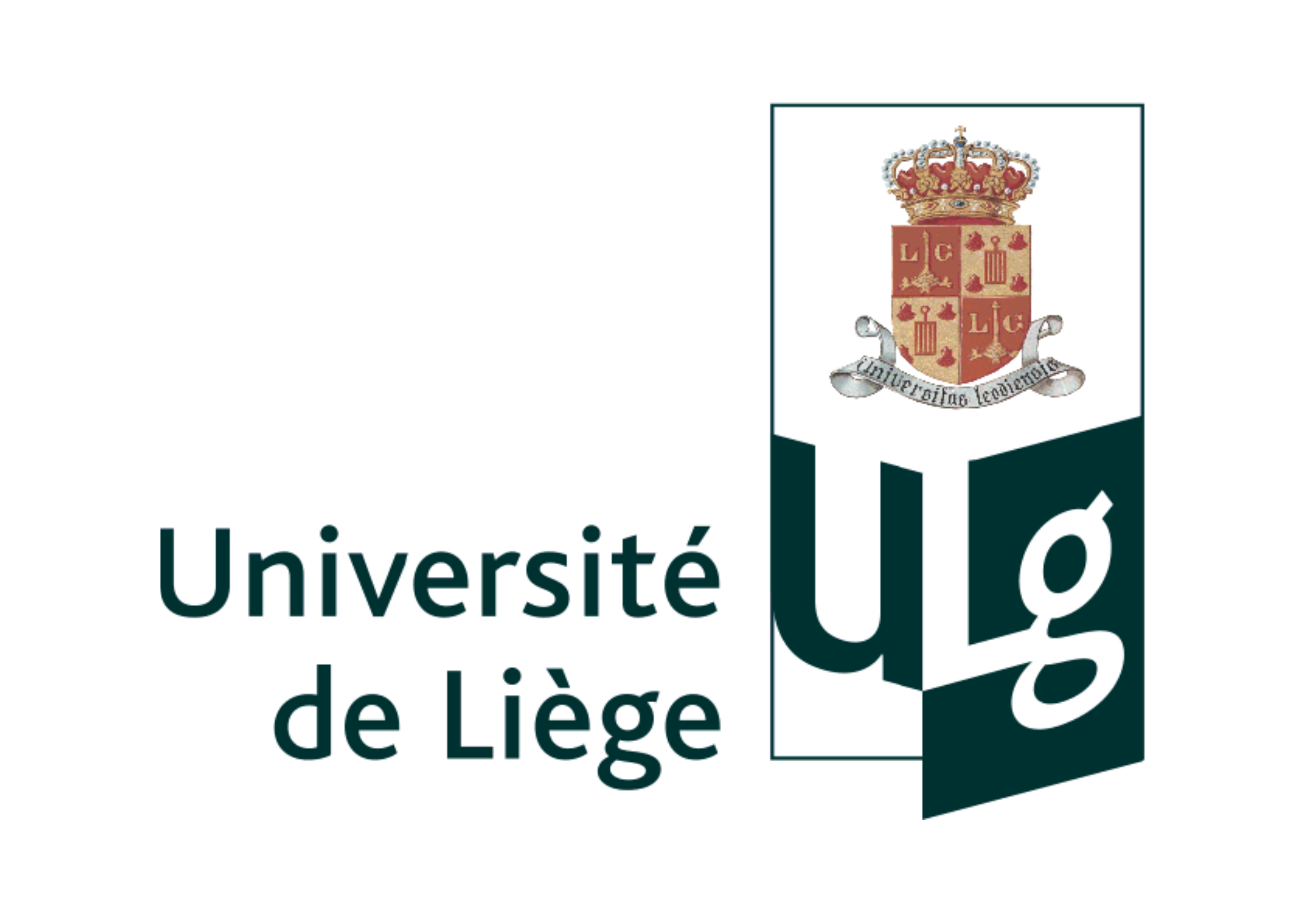}

		\bigskip

		\spacedlowsmallcaps{Advisor: Louis Wehenkel}

		\spacedlowsmallcaps{Co-Advisor: Pierre Geurts}

		\bigskip \myTime\

    \end{center}
  \end{addmargin}
\end{titlepage}

\pdfbookmark[1]{Copyright}{Copyright}

\begingroup
\let\clearpage\relax
\let\cleardoublepage\relax
\let\cleardoublepage\relax

\vspace*{\fill}

\begin{center}

\copyright~2017

\spacedallcaps{\myName}

\spacedallcaps{ALL RIGHTS RESERVED}

\end{center}

\vspace*{\fill}

\endgroup

\cleardoublepage 

\pdfbookmark[1]{Jury members}{Jury members}

\begingroup
\let\clearpage\relax
\let\cleardoublepage\relax
\let\cleardoublepage\relax

\chapter*{Jury members}

\noindent \textsc{Damien Ernst}, Professor at the Université de Li{\`e}ge (President); \\

\noindent \textsc{Louis Wehenkel}, Professor at the Université de  Li{\`e}ge (Advisor); \\

\noindent \textsc{Pierre Geurts}, Professor at the Université de  Li{\`e}ge (Co-Advisor); \\

\noindent \textsc{Quentin Louveaux}, Professor at Université de  Li{\`e}ge; \\

\noindent \textsc{Ashwin Ittoo}, Professor at the Université de  Li{\`e}ge; \\

\noindent \textsc{Grigorios Tsoumakas}, Professor at the Aristotle University of Thessaloniki; \\

\noindent \textsc{Celine Vens}, Professor at the Katholieke Universiteit Leuven; \\

\endgroup

\vfill


\pdfbookmark[1]{Abstract}{Abstract}
\begingroup
\let\clearpage\relax
\let\cleardoublepage\relax
\let\cleardoublepage\relax

\chapter*{Abstract}

Within machine learning, the supervised learning field aims at modeling the
input-output relationship of a system,
from past observations of its behavior. Decision trees characterize the input-output
relationship through a series of nested $if-then-else$ questions, the testing
nodes, leading to a set of predictions, the leaf nodes. Several of such trees
are often combined together for state-of-the-art performance: random forest
ensembles average the predictions of randomized decision trees trained
independently in parallel, while tree boosting ensembles train decision trees
sequentially to refine the predictions made by the previous ones.

The emergence of new applications requires scalable supervised learning
algorithms in terms of computational power and memory space with respect to the
number of inputs, outputs, and observations without sacrificing accuracy. In this
thesis, we identify three main areas where decision tree methods could be
improved for which we provide and evaluate original algorithmic solutions: (i)
learning over high dimensional output spaces, (ii) learning with large sample
datasets and stringent memory constraints at prediction time and (iii) learning
over high dimensional sparse input spaces.

A first approach to solve \emph{learning tasks with a high dimensional output
space}, called binary relevance or single target, is to train one decision tree
ensemble per output. However, it completely neglects the potential correlations
existing between the outputs. An alternative approach called multi-output
decision trees fits a single decision tree ensemble targeting simultaneously all
the outputs, assuming that all outputs are correlated. Nevertheless, both
approaches have (i) exactly the same computational complexity and (ii) target
extreme output correlation structures. In our first contribution, we show how to
combine random projection of the output space, a dimensionality reduction
method, with the random forest algorithm decreasing the learning time
complexity. The accuracy is preserved, and may even be improved by reaching a
different bias-variance tradeoff. In our second contribution, we first formally
adapt the gradient boosting ensemble method to multi-output supervised learning
tasks such as multi-output regression and multi-label classification. We then
propose to combine single random projections of the output space with gradient
boosting on such tasks to adapt automatically to the output correlation
structure.

The random forest algorithm often generates large ensembles of complex models
thanks to the availability of a large number of observations. However, the space
complexity of such models, proportional to their total number of nodes, is often
prohibitive, and therefore these modes are not well suited under \emph{stringent
memory constraints at prediction time}. In our third contribution, we propose to
compress these ensembles by solving a $\ell_1$-based regularization problem
over the set of indicator functions defined by all their nodes.

Some supervised learning tasks have a \emph{high dimensional but sparse input space},
where each observation has only a few of the input variables that have non zero
values. Standard decision tree implementations are not well adapted to treat
sparse input spaces, unlike other supervised learning techniques such as support
vector machines or linear models. In our fourth contribution, we show how to
exploit algorithmically the input space sparsity within decision tree methods.
Our implementation yields a significant speed up both on synthetic and real datasets, while
leading to exactly the same model. It also reduces the required memory to grow
such models by exploiting sparse instead of dense memory storage
for the input matrix.


\endgroup

\vfill

\clearpage

\pdfbookmark[1]{Résumé}{Résumé}
\begingroup
\let\clearpage\relax
\let\cleardoublepage\relax
\let\cleardoublepage\relax

\chapter*{Résumé}

Parmi les techniques d'apprentissage automatique, l'apprentissage supervisé vise à
modéliser les relations entrée-sortie d'un système, à partir d'observations de
son fonctionnement. Les arbres de décision caractérisent cette relation
entrée-sortie à partir d'un ensemble hiérarchique de questions appelées les
noeuds tests amenant à une prédiction, les noeuds feuilles. Plusieurs de ces
arbres sont souvent combinés ensemble afin d'atteindre les performances de
l'état de l'art: les ensembles de forêts aléatoires calculent la moyenne des
prédictions d'arbres de décision randomisés, entraînés indépendamment et en
parallèle alors que les ensembles d'arbres de boosting entraînent des arbres de
décision séquentiellement, améliorant ainsi les prédictions faites par les
précédents modèles de l'ensemble.

L'apparition de nouvelles applications requiert des algorithmes d'apprentissage
supervisé efficaces en terme de puissance de calcul et d'espace mémoire par
rapport au nombre d'entrées, de sorties, et d'observations sans sacrifier la
précision du modèle. Dans cette thèse, nous avons identifié trois domaines
principaux où les méthodes d'arbres de décision peuvent être améliorées pour
lequel nous fournissons et évaluons des solutions algorithmiques originales: (i)
apprentissage sur des espaces de sortie de haute dimension, (ii) apprentissage
avec de grands ensembles d'échantillons et des contraintes mémoires strictes au
moment de la prédiction et (iii) apprentissage sur des espaces d'entrée creux de
haute dimension.

Une première approche pour résoudre des \emph{tâches d'apprentissage avec un
espace de sortie de haute dimension}, appelée \guillemotleft{}binary
relevance\guillemotright{} ou \guillemotleft{}single target\guillemotright{},
est l’apprentissage d’un ensemble d'arbres de décision par  sortie. Toutefois,
cette approche néglige complètement les corrélations potentiellement existantes
entre les sorties. Une approche alternative, appelée \guillemotleft{}arbre de
décision multi-sorties\guillemotright{}, est l’apprentissage d’un seul ensemble
d'arbres de décision pour toutes les sorties, faisant l'hypothèse que toutes les
sorties sont corrélées. Cependant, les deux approches ont (i) exactement la même
complexité en temps de calcul et (ii) visent des structures de corrélation de
sorties extrêmes. Dans notre première contribution, nous montrons comment
combiner des projections aléatoires (une méthode de réduction de
dimensionnalité) de l'espace de sortie avec l'algorithme des forêts aléatoires
diminuant la complexité en temps de calcul de la phase d'apprentissage. La
précision est préservée, et peut même être améliorée en atteignant un compromis
biais-variance différent. Dans notre seconde contribution, nous adaptons d'abord
formellement la méthode d'ensemble \guillemotleft{}gradient
boosting\guillemotright{} à la régression multi-sorties et à la classification
multi-labels. Nous proposons ensuite de combiner une seule projection aléatoire
de l'espace de sortie avec l’algorithme de  \guillemotleft{}gradient
boosting\guillemotright{} sur de telles tâches afin de s'adapter automatiquement à
la structure des corrélations existant entre les sorties.

Les algorithmes de forêts aléatoires génèrent souvent de grands ensembles de
modèles complexes grâce à la disponibilité d'un grand nombre d'observations.
Toutefois, la complexité mémoire, proportionnelle au nombre total de noeuds, de
tels modèles est souvent prohibitive, et donc ces modèles ne sont pas adaptés à
des \emph{contraintes mémoires fortes lors de la phase de prédiction}. Dans
notre troisième contribution, nous proposons de compresser ces ensembles en
résolvant un problème de régularisation basé sur la norme $\ell_1$ sur
l'ensemble des fonctions indicatrices défini par tous leurs noeuds.

Certaines tâches d'apprentissage supervisé ont un \emph{espace d'entrée de haute
dimension mais creux}, où chaque observation possède seulement quelques
variables d'entrée avec une valeur non-nulle. Les implémentations standards des
arbres de décision ne sont pas adaptées pour traiter des espaces d'entrée creux,
contrairement à d'autres techniques d'apprentissage supervisé telles que les
machines à vecteurs de support ou les modèles linéaires. Dans notre quatrième
contribution, nous montrons comment exploiter algorithmiquement le creux de
l'espace d'entrée avec les méthodes d'arbres de décision. Notre implémentation
diminue significativement le temps de calcul sur des ensembles de données
synthétiques et réelles, tout en fournissant exactement le même modèle. Cela
permet aussi de réduire la mémoire nécessaire pour apprendre de tels modèles en
exploitant des méthodes de stockage appropriées pour la matrice des entrées.

\endgroup

\vfill


\pdfbookmark[1]{Acknowledgments}{acknowledgments}

\begingroup
\let\clearpage\relax
\let\cleardoublepage\relax
\let\cleardoublepage\relax
\chapter*{Acknowledgments}

This PhD thesis started with the trust granted by Prof. Louis Wehenkel,  joined
soon after by Prof. Pierre Geurts. I would like to express my sincere gratitude
for their continuous encouragements, guidance and support. I have without doubt
benefitted from their motivations, patience and knowledge. Our insightful
discussions and interactions definitely moved the thesis forward.

I would like to thank the University of Liège, the FRS-FNRS, Belgium, the EU Network of Excellence PASCAL2,
and the IUAP DYSCO, initiated by the Belgian State, Science Policy Office to
have funded this research. Computational resources have been provided by the
Consortium des Équipements de Calcul Intensif (CÉCI), funded by the Fonds de la
Recherche Scientifique de Belgique (F.R.S.-FNRS) under Grant No. 2.5020.11.

The presented research would not have been the same without my co-authors (here
in alphabetic order): Jean-Michel Begon, Mathieu Blondel, Lars Buitinck, Pierre
Damas, Céline Delierneux, Damien Ernst, Hedayati Fares, Alexandre Gramfort,
Pierre Geurts, André Gothot, Olivier Grisel, Jaques Grobler, Alexandre Hego,
Bryan Holt, Justine Huart, Vincent Fran{\c{c}}ois-Lavet, Nathalie Layios, Robert
Layton, Christelle Lecut, Gilles Louppe, Andreas Mueller, Vlad Niculae, Cécile
Oury, Panagiotis Papadimitriou, Fabian Pedregosa, Peter Prettenhofer, Zixiao
Aaron Qiu, Fran{\c{c}}ois Schnitzler, Antonio Sutera, Jake Vanderplas, Gael
Varoquaux, and Louis Wehenkel.

I would like to thank the members of the jury, who take interests in my work,
and took the time to read this  dissertation.

Diane Zander and Sophie Cimino have been of an invaluable help with all the
administrative procedures. I would like to thank them for their patience and
availability. I would also like to thank David Colignon and Alain Empain for their
helpfulness about anything related to super-computers.

I would like to thank my colleagues from the Montefiore Institute, Department of
Electrical Engineering and Computer Science from the University of Liège, whom
have created a pleasant, rich and stimulating environment (in alphabetic order):
Samir Azrour, Tom Barbette, Julien Beckers, Jean-Michel Begon, Kyrylo Bessonov,
Hamid Soleimani Bidgoli, Vincent Botta, Kridsadakorn Chaichoompu, Célia Châtel,
Julien Confetti, Mathilde De Becker, Renaud Detry, Damien Ernst, Ramouna
Fouladi, Florence Fonteneau, Raphaël Fonteneau, Vincent Fran{\c{c}}ois-Lavet,
Damien Gérard, Quentin Gemine, Pierre Geurts, Samuel Hiard, Renaud Hoyoux,
Fabien Heuze, Van Anh Huynh-Thu, Efthymios Karangelos, Philippe Latour, Gilles
Louppe, Francis Maes, Alejandro Marcos Alvarez, Benjamin Laugraud, Antoine
Lejeune, Raphael Liégeois, Quentin Louveaux, Isabelle Mainz, Raphael Marée,
Sébastien Mathieu, Axel Mathei, Romain Mormont, Frédéric Olivier, Julien
Osmalsky, Sébastien Pierard, Zixiao Aaron Qiu, Loïc Rollus, Marie
Schrynemackers, Oliver Stern, Benjamin Stévens, Antonio Sutera, David Taralla,
François Van Lishout, Rémy Vandaele, Philippe Vanderbemden, and Marie Wehenkel.

I would like to thank the \emph{scikit-learn} community who has shared with me
their passion about computer science, machine learning and Python. By
contributing to this open source project, I have learnt much since my first
contribution.

\emph{I also offer my regards and blessing to all the people near and dear to my
heart for their continuous support, and to all of those who supported in any
respect during the completion of this project.}

\endgroup

\pagestyle{scrheadings}
\refstepcounter{dummy}
\pdfbookmark[1]{\contentsname}{tableofcontents}
\setcounter{tocdepth}{2} 
\setcounter{secnumdepth}{3} 
\manualmark{}
\markboth{\spacedlowsmallcaps{\contentsname}}{\spacedlowsmallcaps{\contentsname}}
\tableofcontents
\automark[section]{chapter}
\renewcommand{\chaptermark}[1]{\markboth{\spacedlowsmallcaps{#1}}{\spacedlowsmallcaps{#1}}}
\renewcommand{\sectionmark}[1]{\markright{\thesection\enspace\spacedlowsmallcaps{#1}}}

\pagenumbering{arabic}

\chapter{Introduction}\label{ch:introduction}


Progress in information technology enables the acquisition and storage of
growing amounts of rich data in many domains including science (biology,
high-energy physics, astronomy, etc.), engineering (energy, transportation,
production processes, etc.), and society (environment, commerce, etc.).
Connected objects, such as smartphones, connected sensors or intelligent houses,
are now able to record videos, images, audio signals, object localizations,
temperatures, social interactions of the user through a social network, phone
calls or user to computer program interactions such as voice help assistant or
web search queries. The accumulating datasets come in various forms such as
images, videos, time-series of measurements, recorded transactions, text etc.
WEB technology often allows one to share locally acquired datasets, and
numerical simulation often allows one to generate low cost datasets on demand.
Opportunities exist thus for combining datasets from different sources to search
for generic knowledge and enable robust decision.


All these rich datasets are of little use without the availability of automatic
procedures able to extract relevant information from them in a principled way.
In this context, the field of machine learning aims at developing theory and
algorithmic solutions for the extraction of synthetic patterns of information
from all kinds of datasets, so as to help us to better understand the underlying
systems generating these data and hence to take better decisions for their
control or exploitation.

Among the machine learning tasks, supervised learning aims at modeling a system
by observing its behavior through samples of pairs of inputs and outputs. The
objective of the generated model is to predict with high accuracy the outputs of
the system given previously unseen inputs. A genomic application of supervised
learning would be to model how a DNA sequence, a biological code, is linked to
some genetic diseases. The samples used to fit the model are the input-output
pairs obtained by sequencing the genome, the inputs, of patients with known
medical records for the studied genetic diseases, the outputs. The objective is
here twofold: (i) to understand how the DNA sequence influences the appearing
of the studied genetic diseases and (ii) to use the predictive models to infer
the probability of contracting the genetic disease.

The emergence of new applications, such as image annotation, personalized
advertising or 3D image segmentation, leads to high dimensional data with a large
number of inputs and outputs. It requires scalable supervised learning
algorithms in terms of computational power and memory space without sacrificing
accuracy.

Decision trees~\cite{breiman1984classification} are supervised learning models
organized in the form of a hierarchical set of questions each one typically
based on one input variable leading to a prediction. Used in isolation, trees
are generally not competitive in terms of accuracy, but when combined into
ensembles~\cite{breiman2001random,friedman2001greedy}, they yield
state-of-the-art performances on standard
benchmarks~\cite{caruana2008empirical,fernandez2014we,madjarov2012extensive}.
They however suffer from several limitations that make them not always suited to
address modern applications of machine learning techniques in particular
involving high dimensional input and output spaces.

In this thesis, we identify three main areas where random forest methods could
be improved and for which we provide and evaluate original algorithmic
solutions: (i) learning over high dimensional output spaces, (ii) learning with
large sample datasets and stringent memory constraints at prediction time and
(iii) learning over high dimensional sparse input spaces. We discuss each one of
these solutions in the following paragraphs.

\paragraph{High dimensional output spaces} New applications of machine learning
have multiple output variables, potentially in very high
number~\cite{agrawal2013multi,dekel2010multiclass}, associated to the same set
of input variables. A first approach to address such multi-output tasks is the so-called binary
relevance / single target
method~\cite{tsoumakas2009mining,spyromitros2012multi}, which separately fits one decision tree
ensemble for each output variable, assuming that the different output variables
are independent. A second approach called multi-output decision
trees~\cite{blockeel2000top,geurts2006kernelizing,kocev2013tree} fits a single
decision tree ensemble targeting simultaneously all the outputs, assuming that all outputs are
correlated. However in practice, (i) the computational complexity is the same
for both approaches and (ii) we have often neither of these two extreme output
correlation structures. As our first contribution, we show how to make random
forest faster by exploiting random projections (a dimensionality reduction
technique) of the output space. As a second contribution, we show how to combine
gradient boosting of tree ensembles with single random projections of the output space to
automatically adapt to a wide variety of correlation structures.

\paragraph{Memory constraints on model size} Even with a large number of
training samples $n$, random forest ensembles have good computational complexity
($O(n\log{n})$) and are easily parallelizable leading to the generation of very
large ensembles. However, the resulting models are big as the model complexity is
proportional to the number of samples $n$ and the ensemble size. As our third
contribution, we propose to compress these tree ensembles by solving an
appropriate optimization problem.

\paragraph{High dimensional sparse input spaces} Some supervised learning tasks
have very high dimensional input spaces, but only a few variables have non zero
values for each sample. The input space is said to be ``sparse''. Instances of
such tasks can be found in text-based supervised learning, where each sample is
often mapped to a vector of variables corresponding to the (frequency of)
occurrence of all words (or multigrams) present in the dataset. The problem is sparse as the
size of the text is small compared to the number of possible words  (or multigrams).
Standard decision tree implementations are not well adapted to treat sparse input
spaces, unlike models such as support vector
machines~\cite{cortes1995support,scholkopf2001learning} or linear
models~\cite{bottou2012stochastic}. Decision tree implementations are indeed
treating these sparse variables as dense ones raising the memory needed. The
computational complexity also does not depend upon the fraction of non zero
values. As a fourth contribution, we propose an efficient decision tree
implementation to treat supervised learning tasks with sparse input spaces.

\section{Publications}

This dissertation features several publications about random forest algorithms:
\begin{itemize}

\item \cite{joly2014random}  A. Joly, P. Geurts, and L. Wehenkel. \textit{Random
forests with random projections of the output space for high dimensional
multi-label classification.} In Machine Learning and Knowledge Discovery in
Databases, pages 607–622. Springer Berlin Heidelberg, 2014.

\item \cite{joly2012l1} A. Joly, F. Schnitzler, P. Geurts, and L. Wehenkel.
\textit{L1-based compression of random forest models.} In European Symposium on
Artificial Neural Networks, Computational Intelligence and Machine Learning,
2012.

\item \cite{buitinck2013api} L. Buitinck, G. Louppe, M. Blondel, F. Pedregosa,
A. Mueller, O. Grisel, V. Niculae, P. Prettenhofer, A. Gramfort, J. Grobler, R.
Layton, J. Vanderplas, A. Joly, B. Holt, and G. Varoquaux. \textit{Api design
for machine learning software: experiences from the scikit-learn project.} arXiv
preprint arXiv:1309.0238, 2013.

\end{itemize}
\noindent and also the following submitted article:
\begin{itemize}

\item H. Fares, A. Joly, and P. Papadimitriou. \textit{Scalable Learning of
Tree-Based Models on Sparsely Representable Data.}

\end{itemize}

Some collaborations were made during the thesis, but are not discussed
within this manuscript:
\begin{itemize}

\item \cite{sutera2014simple} A. Sutera, A. Joly, V. Fran{\c{c}}ois-Lavet, Z. A.
Qiu, G. Louppe, D. Ernst, and P. Geurts. \textit{Simple connectome inference
from partial correlation statistics in calcium imaging.} In JMLR: Workshop and
Conference Proceedings, pages 1–12, 2014.

\item \cite{delierneux2015elevated} C. Delierneux, N. Layios, A. Hego, J. Huart,
A. Joly, P. Geurts, P. Damas, C. Lecut, A. Gothot, and C. Oury. \textit{Elevated basal
levels of circulating activated platelets predict icu-acquired sepsis and
mortality: a prospective study.} Critical Care, 19(Suppl 1):P29, 2015a.

\item \cite{delierneux2015prospective}  C. Delierneux, N. Layios, A. Hego, J.
Huart, A. Joly, P. Geurts, P. Damas, C. Lecut, A. Gothot, and C. Oury.
\textit{Prospective analysis of platelet activation markers to predict severe
infection and mortality in intensive care units.} In journal of thrombosis and
haemostasis, volume 13, pages 651–651.

\item \cite{begon2016joint} J.-M. Begon, A. Joly, and P. Geurts. \textit{Joint
learning and pruning of decision forests.} In Belgian-Dutch Conference On
Machine Learning, 2016.

\end{itemize}

The following article has been submitted:
\begin{itemize}

\item C. Delierneux, N. Layios, A. Hego, J. Huart, C. Gosset, C. Lecut, N. Maes,
P. Geurts, A. Joly, P. Lancellotti, P. Damas, A. Gothot, and C. Oury.
\textit{Incremental value of platelet markers to clinical variables for sepsis
prediction in intensive care unit patients: a prospective pilot study.}

\end{itemize}

\section{Outline}

In Part~\ref{part:background} of this thesis, we start by introducing in
Chapter~\ref{ch:supervised-learning} the key concepts about supervised learning:
(i) what are the most popular supervised learning models, (ii) how to assess the
prediction performance of a supervised learning model and (iii) how to optimize
the hyper-parameters of theses models. We also present some unsupervised
projection methods, such as random projections, which transform the original
space to another one. We describe more in detail the decision tree model classes
in Chapter~\ref{ch:tree}. More specifically, we describe the methodology to grow
and to prune such trees. We also show how to adapt decision tree growing and
prediction algorithms to multi-output tasks. In
Chapter~\ref{ch:bias-var-ensemble}, we show why and how to combine models into
ensembles either by learning models independently with averaging methods or
sequentially with boosting methods.

In Part~\ref{part:tree-rp}, we first show how to grow an ensemble of decision
trees on very high dimensional output spaces  by projecting the original output
space onto a random sub-space of lower dimension. In
Chapter~\ref{ch:rf-output-projections}, it turns out that for random forest
models, an averaging ensemble of decision trees, the learning time complexity
can be reduced without affecting the prediction performance. Furthermore, it may
lead to accuracy improvement~\cite{joly2014random}. In
Chapter~\ref{ch:gbrt-output-projection}, we propose to combine random projections
of the output space and the gradient tree boosting algorithm, while reducing learning
time and automatically adapting to any output correlation structure.

In Part~\ref{part:sparsity}, we leverage sparsity in the context of decision
tree ensembles. In Chapter~\ref{ch:rf-compression}, we exploit sparsifying
optimization algorithms to compress random forest models while retaining their
prediction performances~\cite{joly2012l1}. In Chapter~\ref{ch:tree-sparse}, we
show how to leverage input sparsity to speed up decision tree induction.

During the thesis, I made significant contributions to the open source
scikit-learn project~\cite{pedregosa2011scikit,buitinck2013api} and developed my
own open source libraries
random-output-trees\footnote{\url{https://github.com/arjoly/random-output-trees}},
containing the work presented in Chapter~\ref{ch:rf-output-projections} and
Chapter~\ref{ch:gbrt-output-projection}, and
clusterlib\footnote{\url{https://github.com/arjoly/clusterlib}}, containing the
tools to manage jobs on supercomputers.

\part{Background}

\label{part:background}

\chapter{Supervised learning}\label{ch:supervised-learning}

\begin{remark}{Outline}
In the field of machine learning, supervised learning aims at finding the best
function which describes the input-output relation of a system only from
observations of this relationship. Supervised learning problems can be broadly
divided into classification tasks with discrete outputs and into regression
tasks with continuous outputs. We first present major supervised learning
methods for both classification and regression. Then, we show  how to estimate
their performance and how to optimize the hyper-parameters of these models. We
also introduce unsupervised projection techniques used in conjunction with
supervised learning methods.
\end{remark}

Supervised learning aims at modeling an input-output system from observations
of its behavior. The applications of such learning methods encompass a wide
variety of tasks and domains ranging from image recognition to medical diagnosis
tools. Supervised learning algorithms analyze the input-output pairs and learn
how to predict the behavior of a system (see Figure~\ref{fig:system}) by
observing its responses, described by output variables $y_1,\ldots,y_d$, also
called targets, to its environment described by input variables
$x_1,\ldots,x_p$, also called features. The outcome of the supervised learning
is a function $f$ modeling the behavior of the system.

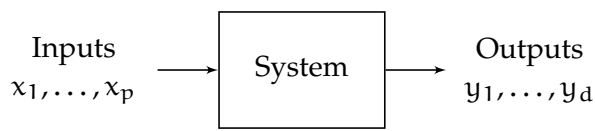
\begin{figure}[h]
\centering
\begin{tikzpicture}[auto, node distance=2cm, semithick]

\node [text centered, text width=5em,](inputs) {Inputs \\ $x_1,\ldots,x_p$};
\node [rectangle, draw, text width=5em, text centered,
       node distance=3cm, minimum height=4em,
       right of=inputs] (system) {System};
\node [right of=system, node distance=3cm,
       text width=5em, text centered] (outputs) {Outputs \\ $y_1,\ldots,y_d$};

\path [draw, -latex'] (inputs) -- (system);
\path [draw, -latex'] (system) -- (outputs);
\end{tikzpicture}

\caption{Input-output view of a system.}
\label{fig:system}
\end{figure}

Supervised learning has numerous applications in the multimedia,
in biology, in engineering or in the societal domain:
\begin{itemize}

\item Identification of digits from photos, such as house number from street
photos or digit post code from letters.

\item Automatic image annotation such as detecting tumorous cells or identifying
people in photos.

\item Detection of genetic diseases from DNA screening.

\item Disease diagnostic based on  clinical and  biological data of a
patient.

\item Automatic text translation from a source language to a target language such
as from French to English.

\item Automatic voice to text transcription from audio records.

\item Market price prediction on the basis of economical and performance indicators.

\end{itemize}

We introduce the supervised learning framework in Section~\ref{sec:sl-intro}. We
describe in Section~\ref{sec:supervised-learning-classes} the most common
classes of supervised learning models used to map the outputs of the system to
its inputs. We introduce how to assess their performances in
Section~\ref{sec:model-eval}, how to compare the model predictions to a ground
truth in Section~\ref{sec:model-diagnosis} and  how to select the best
hyper-parameters of such models in Section~\ref{sec:model-select}. We also show
some input space projection methods in
Section~\ref{sec:dimensionality-reduction}, often used in combination with
supervised learning models improving the computational time and / or the
accuracy of the model.

\section{Introduction}
\label{sec:sl-intro}

The goal of supervised learning is to learn the function $f$ mapping an input
vector $x= (x_1,\ldots,x_p)$ of a system to a vector of system outputs $y =
(y_1,\ldots,y_d)$, only from observations of input-output pairs. The set of
possible input (resp. output) vectors form the input space $\mathcal{X}$ (resp.
output space $\mathcal{Y}$).

Once we have identified the input and output variables, we start to collect
input-output pairs, also called samples. Table~\ref{tab:sl-dataset} displays 5
samples collected from a system with 4 inputs and 3 outputs. We distinguish
three types of variables: binary variables taking only two different values,
like the variables $x_1$ and $y_1$; categorical variables taking two or more
possible values, like variables $x_2$ and $y_2$, and numerical variables having
numerical values, like $x_2$, $x_4$ and $y_3$. A binary variable is also a
categorical variable. For simplicity, we will assume in the following without
loss of generality that binary and categorical variables have been mapped from
the set of their $k$ original values to a set of integers of the same cardinality
$\{0,\ldots,k-1\}$.

\begin{table}[h]
\caption{A dataset formed of samples of pairs of four inputs and three outputs.}
\label{tab:sl-dataset}
\centering
\begin{tabular}{@{}ccccccc@{}}
\toprule
$x_1$ & $x_2$ & $x_3$ & $x_4$ & $y_1$ & $y_2$ & $y_3$ \\
\midrule
$0$ & $0.25$ & A & $0.25$ & True  & Small   & $1.8$ \\
?   & $-2$   & B & $3.$   & True  & Average & $1.7$ \\
$0$ & $3$    & C & $2.$   & False & ?       & $1.65$ \\
$1$ & $10.7$ & ? & $-3.$  & False & Big     & $1.59$ \\
$1$ & $0.$   & A & $2.$   & False & Big     & ? \\
\bottomrule
\end{tabular}
\end{table}

When we collect data, some input and/or output values might be missing or
unavailable. Tasks with missing input values are said to have missing data.
Missing values are marked by a ``?'' in Table~\ref{tab:sl-dataset}.

We classify supervised learning tasks into two main families based on their
output domains. Classification tasks have either binary outputs as in disease
prediction ($y \in \{Healthy, sick\}$) or categorical outputs as in digits
recognition ($y \in \{0,\ldots,9\}$). Regression tasks have numerical outputs
($y \in \mathbb{R}$) such as in house price predictions. A classification task
with only one binary output (resp. categorical output) is called a binary
classification task (resp. multi-class classification task). A multi-class
classification task is assumed to have more than two classes, otherwise it is a
binary classification task. In the presence of multiple outputs, we further
distinguish multi-label classification tasks which associate multiple binary output
values to each input vector. In the multi-label context, the output variables
are also called ``labels'' and the output vectors are called ``label set''. From
a modeling perspective, multi-class classification tasks are multi-label
classification problems whose labels are mutually exclusive.
Table~\ref{tab:sl-tasks} summarizes the different supervised learning tasks.

\begin{table}[h]
\caption{The output domain determines the supervised learning task.}
\label{tab:sl-tasks}
\centering
\begin{tabular}{@{}ll@{}}
\toprule
Supervised learning task & Output domain \\
\midrule
Binary classification & $\mathcal{Y} = \{0, 1\}$ \\
Multi-class classification & $\mathcal{Y} = \{0, 1, \ldots, k-1\}$ with $k\hspace*{-1mm}>\hspace*{-1mm}2$ \\
Multi-label classification & $\mathcal{Y} = \{0, 1\}^d$ with $d\hspace*{-1mm}>\hspace*{-1mm}1$  \\
Multi-output multi-class classification & $\mathcal{Y} = \{0, 1, \ldots, k-1\}^d$\\
& with $k > 2, d > 1$ \\
Regression & $\mathcal{Y} = \mathbb{R}$ \\
Multi-output regression & $\mathcal{Y} = \mathbb{R}^d$ with $d\hspace*{-1mm}>\hspace*{-1mm}1$\\
\bottomrule
\end{tabular}
\end{table}

We will denote by $\mathcal{X}$ an input space, and by $\mathcal{Y}$ an output
space. We denote by $P_{{\cal X},{\cal Y}}$ the joint (unknown) sampling density
over $\mathcal{X} \times \mathcal{Y}$. Superscript indices ($x^{i}, y^{i}$)
denote (input, output) vectors of an observation $i \in \{1, \ldots , n\}$.
Subscript indices (e.g.\ $x_{j}, y_{k}$) denote components of vectors. With
these notations supervised learning can be defined as follows:
\begin{remark}{Supervised learning}
Given a learning sample $\left((x^i, y^i) \in \left(\mathcal{X} \times
\mathcal{Y}\right)\right)_{i=1}^n$ of $n$ observations in the form of
input-output pairs, a supervised learning task is defined as searching for a
function $f^{*} : \mathcal{X} \rightarrow \mathcal{Y}$ in a hypothesis space
$\mathcal{H} \subset \mathcal{Y}^\mathcal{X}$ that minimizes the expectation of
some loss function $\ell : \mathcal{Y} \times \mathcal{Y} \rightarrow
\mathbb{R}^+$ over the joint distribution of input / output pairs:
\begin{equation}
f^{*} = \arg \min_{f \in \mathcal{H}} \E_{P_{{\cal X},{\cal Y}}} \{ \ell(f(x), y)\}.
\end{equation}
\end{remark}

The choice of the loss function $\ell$ depends on the property of the supervised
learning task (see Table~\ref{tab:common-loss} for their definitions):
\begin{itemize}

\item In regression ($\mathcal{Y} = \mathbb{R}$), we often use the squared loss,
except when we want to be robust to the presence of outliers, samples with
abnormal output values, where we prefer other losses such as the absolute loss.

\item In classification tasks ($\mathcal{Y} = \{0,\ldots,k-1\}$), the reported
performance is commonly the average $0-1$ loss, called the error rate. However,
the model does not often directly minimize the $0-1$ loss as it leads to non
convex and non continuous optimization problems with often high computational
cost. Instead, we can relax the multi-class or the binary constraint by
optimizing a smoother loss such as the hinge loss or the logistic loss. To get a
binary or multi-class prediction, we can threshold the predicted value $f(x)$.

\end{itemize}

\begin{table}
\caption{Common losses to measure the discrepancy between a ground truth $y$ and
either a prediction or a score $y'$. In classification, we assume here that
the ground truth $y$ is encoded with $\{-1, 1\}$ values.}
\label{tab:common-loss}
\centering
\begin{tabular}{@{}ll@{}}
\toprule
Regression loss &  \\
\midrule
Square loss & $\ell(y, y') = \frac{1}{2} (y - y')^2$ \\
Absolute loss & $\ell(y, y') = |y - y'|$\\
\midrule
Binary classification loss &  \\
\midrule
0-1 loss &  $\ell(y, y') = 1(y\not=y')$\\
Hinge loss &  $\ell(y, y') = \max(0, 1 - y y')$\\
Logistic loss & $\ell(y, y') = \log(1 + \exp(- 2 y y'))$\\
\bottomrule
\end{tabular}
\end{table}

Figure~\ref{fig:loss} plots several loss discrepancies $\ell(1, y')$ whenever
the ground truth is $y=1$ d as a function of the value $y'$ predicted by the
model. The $0-1$ loss is a step function with a discontinuity at $y'=1$. The
hinge loss has a linear behavior whenever $y' \leq 1$ and is a constant with
$y' \geq 1$. The logistic loss strongly penalizes any mistake and is zero only
if the model is correct with an infinite score. The plot also highlights that we
can use regression losses for classification tasks. It shows that regression
losses penalize any predicted value $y'$ different from the ground truth $y$.
However, this is not always the desired behavior. For instance whenever $y=1$
(resp. $y=0$), regression losses penalize any score greater than $y'>1$ (resp.
smaller than $y' < 0$), while the model truly believes that the output is
positive (resp. negative). This is often the reason why regression losses are
avoided for classification tasks.

\begin{figure}
\centering
\includegraphics[width=0.75\textwidth]{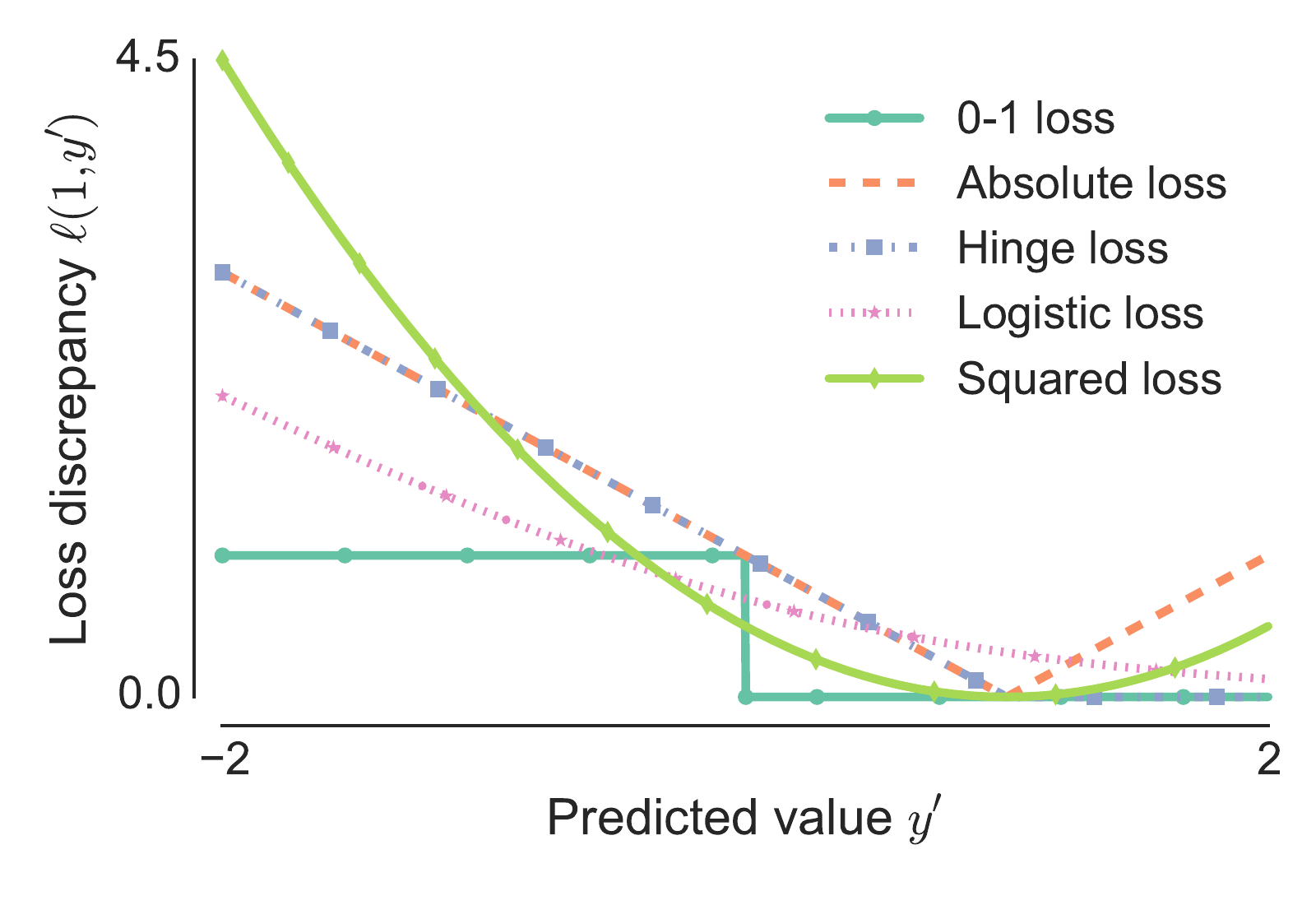}
\caption{Loss discrepancies $\ell(1, y')$ with $y=1$. (Adapted
from~\cite{friedman2001elements})}
\label{fig:loss}
\end{figure}

\section{Classes of supervised learning algorithms}
\label{sec:supervised-learning-classes}

Supervised learning aims at finding the best function $f$ in a hypothesis space
$\mathcal{H}$ to model the input-output function of a system. If there is no
restriction on the hypothesis space $\mathcal{H}$, the model $f$ can be any
function $f \in \mathcal{Y}^\mathcal{X}$.

Consider a binary function $f$ which has $p$ binary inputs. The binary function
is uniquely defined by knowing the output values of the $2^p$ possible input
vectors. The hypothesis space of all binary functions contains $2^{2^p}$ binary
functions. If we observe $n$ different input-output pair assignments, there
remain $2^{2^p - n}$ possible binary functions. For a binary function of $p=5$
inputs, we have $2^{2^5} = 4294967296$ possible binary functions. If we observe
$n = 16$ input-output pair assignments among the $2^5 = 32$ possible ones, we
still have $2^{2^5 - 16} = 65536$ possible binary functions. The number of
possible functions highly increases with the cardinality of each variable. The
hypothesis space will be even larger with a stochastic function, where
different output values are possible for each possible input assignment.

By making assumptions on the model class $\mathcal{H}$, we can largely reduce
the size of the hypothesis space. For instance in the previous example, if we
assume that 2 out of the 5 binary input variables are independent of the output,
there remain $2^{2^3}=256$ possible functions. The correct function would be
uniquely identified by observing the $8$ possible assignments.

Given the data stochasticity, those model classes can directly model the
input-output mapping $f: \mathcal{X} \rightarrow \mathcal{Y}$, but also the
conditional probability $P(y | x)$ and predictions are made through
\begin{equation}
f(x) = \arg\min_{\hat{y} \in \mathcal{Y}} E_{y|x}\left[ L(y, \hat{y}) \right]
= \arg\min_{\hat{y}} \int_{\mathcal{Y}} L(y, \hat{y}) dP(y|x).
\label{eq:model-cond-rule}
\end{equation}

We will present some of the most popular model classes: linear models in
Section~\ref{sub:linear-models-classes}; artificial neural networks in
Section~\ref{sub:neural-classes} which are inspired from the neurons in the
brain; neighbors-based models in Section~\ref{sub:neighbors-class} which find
the nearest samples in the training set; decision tree based-models in
Section~\ref{sub:decision-tree-class} (and in more details in
Chapter~\ref{ch:tree}). Note that we introduce ensemble methods in
Section~\ref{sub:decision-tree-class} and discuss them more deeply in Chapter~\ref{ch:bias-var-ensemble}.

\subsection{Linear models}
\label{sub:linear-models-classes}

Let us denote by $x \in \mathbb{R}^{p} = \begin{bmatrix}x_1& \ldots &
x_p \end{bmatrix}^T$ a vector of input variables. A linear model $\hat{f}$ is a
model having the following form
\[
\hat{f}(x) = \beta_0 + \sum_{j=1}^p \beta_j x_j
= \begin{bmatrix}1 & x^T\end{bmatrix} \beta
\]
\noindent where the vector $\beta \in \mathbb{R}^{1 + p}$ is a
concatenation of the intercept $\beta_0$ and the coefficients $\beta_j$.

Given a set of $n$ input-output pairs $\{(x^i, y^i) \in
(\mathcal{X},\mathcal{Y})\}_{i=1}^n$, we retrieve the coefficient vector $\beta$
of the linear model by minimizing a loss $\ell: \mathcal{Y} \times \mathcal{Y}
\rightarrow \mathbb{R}^+$:
\begin{equation}
\min_{\beta} \sum_{i=1}^n \ell(y^i, \hat{f}(x^i)) =
\min_{\beta} \sum_{i=1}^n \ell(y^i, \begin{bmatrix}1 & {x^i}^T\end{bmatrix} \beta).
\label{eq:optim-linear-model}
\end{equation}

With the square loss $\ell(y, y') = \frac{1}{2} (y - y')^2$, there exists an
analytical solution to Equation~\ref{eq:optim-linear-model} called ordinary
least squares. Let us denote by $\mathbf{X} \in \mathbb{R}^{n \times (1 +p)}$
the concatenation of the input vectors with a first column of $\mathbf{X}$ full
of ones to model the intercept $\beta_0$ and by $\mathbf{y}\in \mathbb{R}^{n}$
the concatenation of the output values. We can now express the sum of squares in
matrix notation:
\begin{align}
\sum_{i=1}^n \ell(y, \hat{f}(x^i))
&= \frac{1}{2} \sum_{i=1}^n (y^i - \hat{f}(x^i))^2\\
&= \frac{1}{2} (\mathbf{y} - \mathbf{X}\beta)^T(\mathbf{y} - \mathbf{X}\beta)
\end{align}
The first order differentiation of the sum of squares with respect to
$\beta$ yields to
\begin{equation}
\frac{\partial}{\partial\beta} \sum_{i=1}^n \ell(y, \hat{f}(x^i))
= \mathbf{X}^T (\mathbf{y} - \mathbf{X}\beta).
\end{equation}
\noindent The vector minimizing the square loss is thus
\begin{equation}
\beta = (\mathbf{X}^T\mathbf{X})^{-1}\mathbf{X}^T y.
\label{eq:ols-solution}
\end{equation}
\noindent The solution exists only if $\mathbf{X}^T\mathbf{X}$ is invertible.

Whenever the number of inputs plus one $p+1$ is greater than the number of
samples $n$, the analytical solution is ill posed as the matrix
$\mathbf{X}^T\mathbf{X}$ is rank deficient ($rank(\mathbf{X}^T\mathbf{X}) < p + 1$). To ensure a
unique solution, we can add a regularization penalty $R$ with a multiplying
constant $\lambda \in \mathbb{R}^{+}$ on the coefficients $\beta$ of the linear model:
\begin{equation}
\min_{\beta} \sum_{i=1}^n L(y^i, \beta_0 + \sum_{j=1}^p \beta_j x_j^i) + \lambda R(\beta_1, \ldots, \beta_p).
\end{equation}

With a $\ell_2$-norm constraint on the coefficients,
we transform the ordinary least square model into a ridge regression
model~\cite{hoerl1970ridge}:
\begin{equation}
\min_{\beta} \sum_{i=1}^n \left(y^i - \beta_0 - \sum_{j=1}^p \beta_j x_j^i\right)^2 +
\lambda \sum_{j=1}^p \beta_j^2.
\end{equation}
\noindent One can show (see Section 3.4.1 of~\cite{friedman2001elements}) that
the constant $\lambda$ controls the maximal value of all coefficients $\beta_j$
in the ridge regression solution.

With a $\ell_1$-norm constraint ($R(\beta_1, \ldots, \beta_p) = \sum_{j=1}^p
|\beta_j|$) on the coefficient $\beta_1,\ldots, \beta_p$, we have the Lasso
model~\cite{tibshirani1996regression}:
\begin{equation}
\min_{\beta} \sum_{i=1}^n \left(y^i - \beta_0 - \sum_{j=1}^p \beta_j x_j^i\right)^2 +
\lambda \sum_{j=1}^p |\beta_j|.
\end{equation}
\noindent Contrarily to the ridge regression, the Lasso has no closed formed
analytical solution  even though the resulting optimization problem remains
convex. However, we gain that the $\ell_1$-norm penalty sparsifies the
coefficients $\beta_j$ of the linear model. If the constant $\lambda$ tends
towards infinity, all the coefficients will be zero $\beta_j=0$. While with
$\lambda=0$, we have the ordinary least square formulation. With $\lambda$
moving from $+\infty$ to $0$, we progressively add variables to the linear model
with a magnitude $\sum_{j=1}^p |\beta_j|$ depending on $\lambda$. The monotone
Lasso~\cite{hastie2007forward} further restricts the coefficient to monotonous
variation with respect to $\lambda$ and has been shown to perform better
whenever the input variables are correlated.

A combination of the $\ell_1$-norm and the $\ell_2$-norm constraints on the
coefficients is called an elastic net penalty~\cite{zou2005regularization}. It
shares both the property of the Lasso and the ridge regression: sparsely
selecting coefficients as in Lasso and considering groups of correlated
variables together as in the ridge regression. With a careful design of the
penalty term $R$, we can enforce further properties such as selecting variables
in groups of pre-defined variables with the group
Lasso~\cite{yuan2006model,meier2008group} or taking into account the variable
locality in the coefficient vector $\beta$ while adding a new variable to the
linear model with the fused Lasso~\cite{tibshirani2005sparsity}.

By selecting an appropriate loss and penalty term, we have a wide variety of
linear models at our disposal with different properties. In regression, an
absolute loss leads to the least absolute deviation
algorithm~\cite{bloomfield2012least} which is robust to outliers. In
classification, we can use a logistic loss to model the class probability
distribution leading to the logistic regression model. With a hinge loss, we aim
at finding a hyperplane which maximizes the separations between the classes
leading to the support vector machine algorithm~\cite{cortes1995support}.

\begin{figure}
\centering
\subfloat[]{{\includegraphics[width=0.4\textwidth]{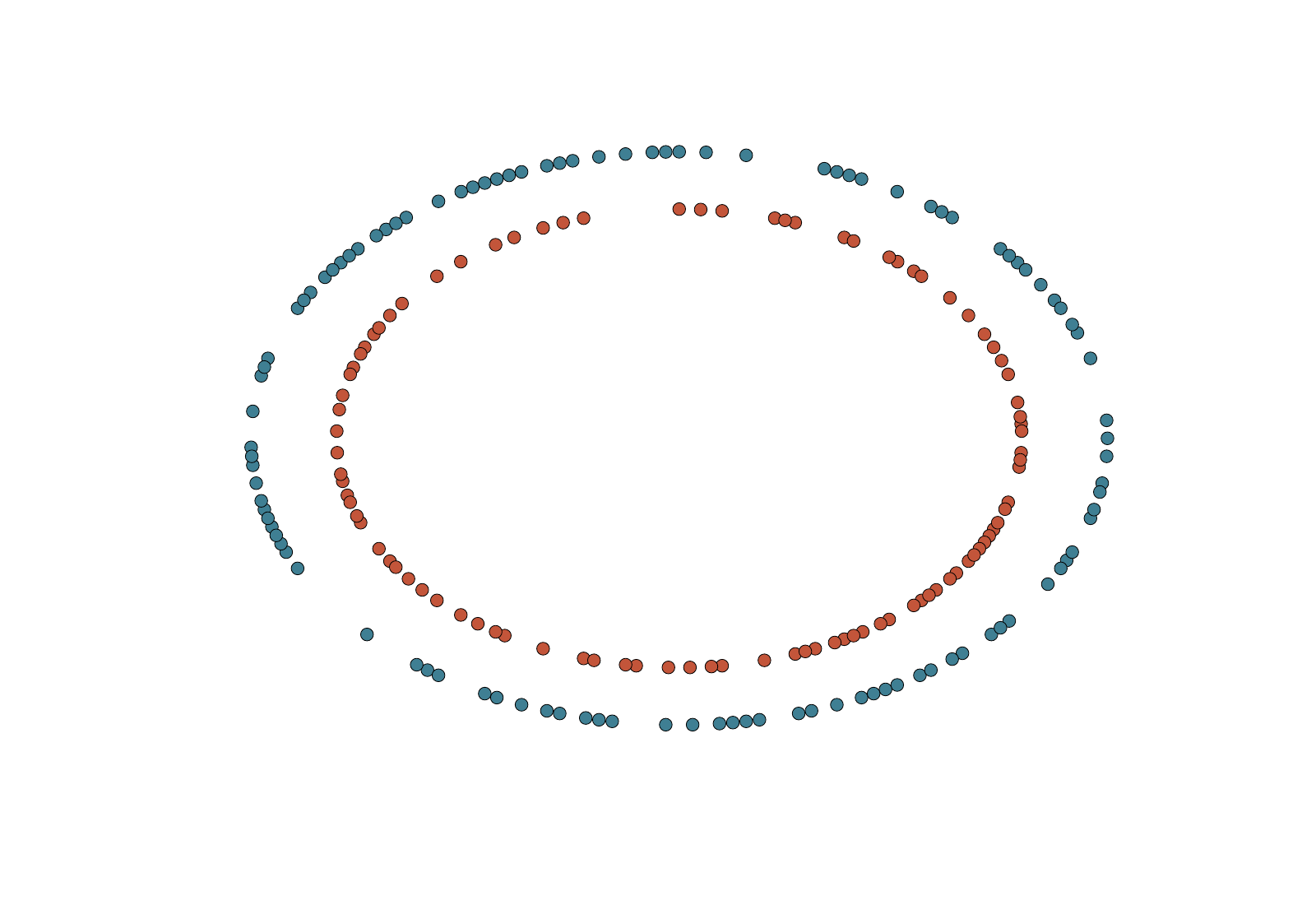}
\label{subfig:circle-data}}} \\
\subfloat[Original input space]{{\includegraphics[width=0.4\textwidth]{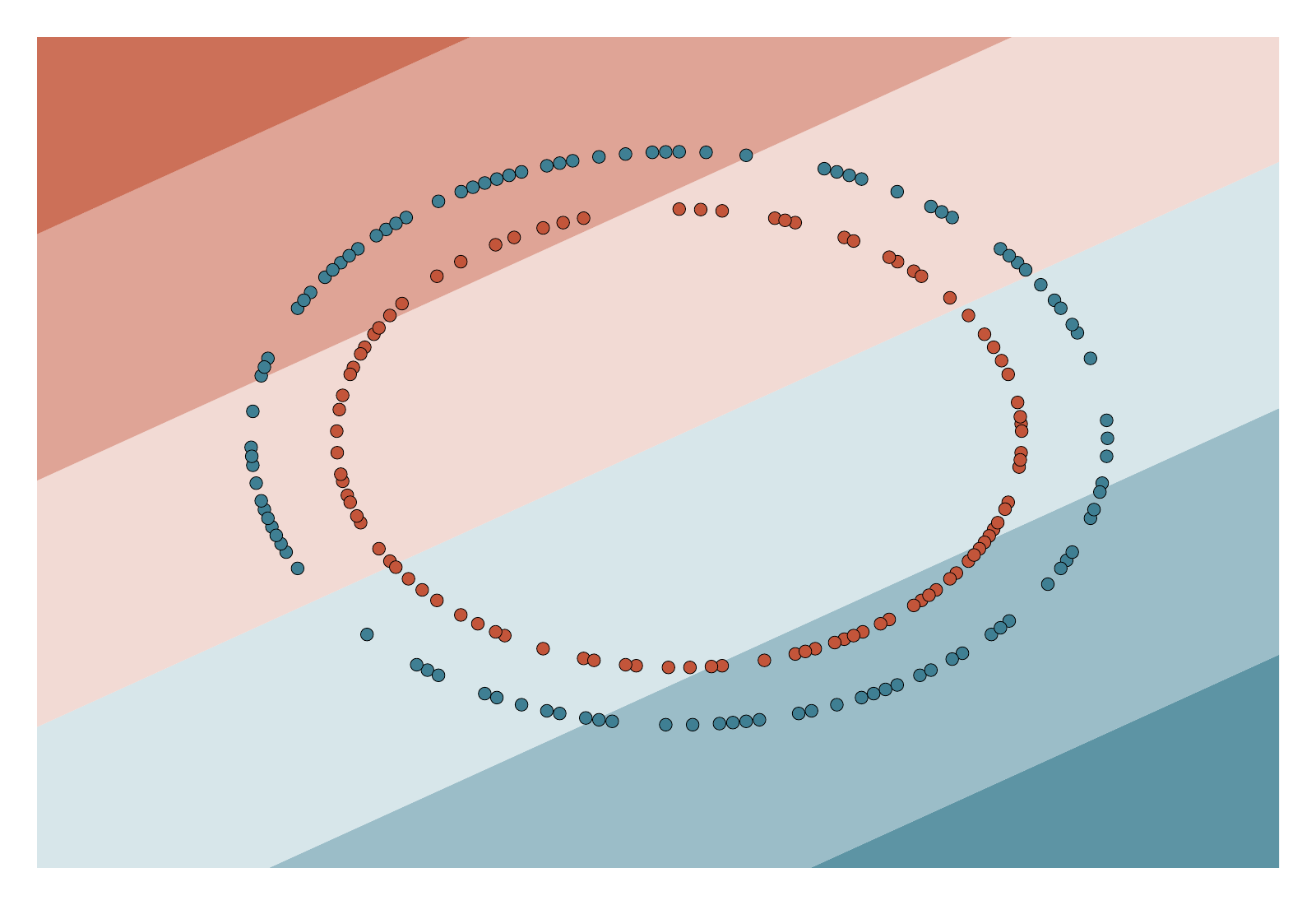}
\label{subfig:circle-linear}}}
\subfloat[Non-linear transformation]{{\includegraphics[width=0.4\textwidth]{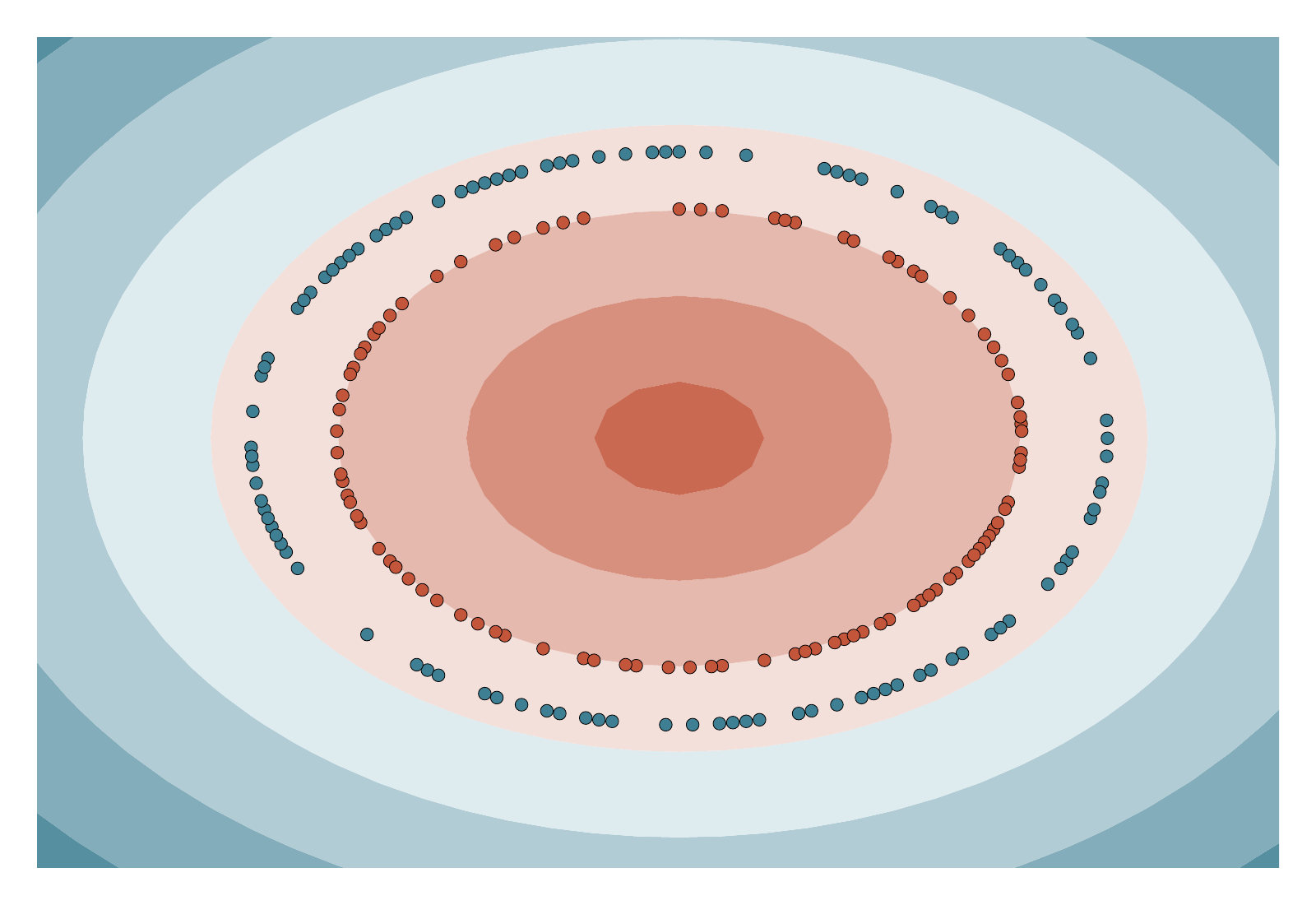}
\label{subfig:circle-nonlinear}}}
\caption{The logistic linear model on the bottom left is unable to find a separating
hyperplane. If we fit the linear model on the distance from the center of circle,
we separate perfectly both classes as shown in the bottom right.}
\label{fig:circle-model}
\end{figure}

A linear model can handle non linear problems by applying first a non linear
transformation to the input space $\mathcal{X}$. For instance, consider the
classification task of Figure~\ref{subfig:circle-data} where each class is
located on a concentric circle. Given the non linearity of the problem, we can
not find a straight line separating both classes in the cartesian plane as
shown in Figure~\ref{subfig:circle-linear}. If instead we fit a linear model on
the distance from the origin $\sqrt{x_1^2+x_2^2}$ as illustrated in
Figure~\ref{subfig:circle-nonlinear}, we find a model separating perfectly
both classes. We often use linear models in conjunction with kernel functions
(presented in Section~\ref{subsec:kernels}), which provide a range of ways to
achieve non-linear transformations of the input space.

\subsection{(Deep) Artificial neural networks}
\label{sub:neural-classes}

An artificial neural network is a statistical model mimicking the structure of
the brain and composed of artificial neurons. A neuron, as shown in
Figure~\ref{subfig:bio-neuron}, is composed of three parts: the \emph{soma}, the
cell body, processes the information from its \emph{dendrites} and transmits its
results to other neurons through the \emph{axon}, a nerve fiber. An artificial
neuron follows the same structure (see Figure~\ref{subfig:artificial-neuron})
replacing biological processing by numerical computations. The basic
neuron~\cite{rosenblatt1958perceptron} used for supervised learning consists in
a linear model of parameters $\beta \in \mathbb{R}^{p+1}$ followed by an
activation function $\phi$:
\[
\hat{f}_{neuron}(x) = \phi\left(\beta_0 + \sum_{j=1}^p \beta_j x_j\right).
\]

The activation function replicates artificially the non linear activation of
real neurons. It is a scalar function such as a hyperbolic tangent $\phi(x) =
\tanh(x)$, a sigmoid $\phi(x) = (1 + e^{-x})^{-1}$ or a rectified linear
function $\phi(x) = \max(0, x)$.

\begin{figure}[h]
\centering
\subfloat[Biological neuron]{{\includegraphics[width=0.5\textwidth]{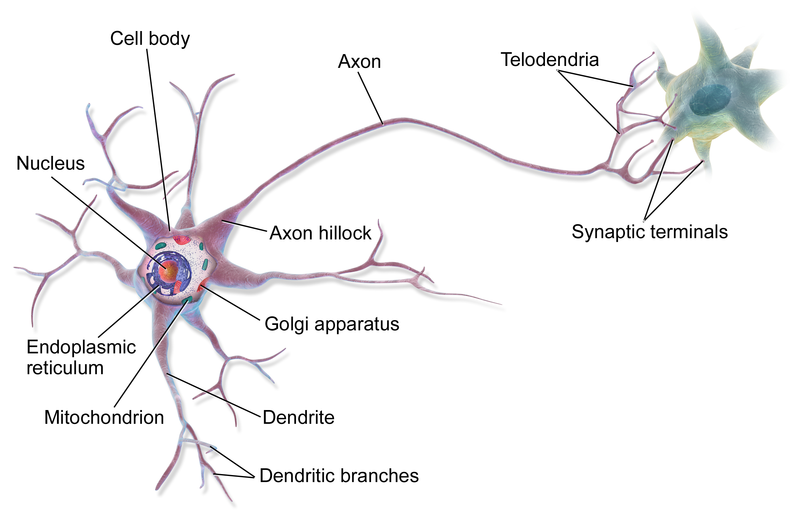}
\label{subfig:bio-neuron}}}
\subfloat[Artificial neuron]{{
\begin{tikzpicture}[auto]

\tikzstyle{arrow} = [->, shorten >=1pt,>=stealth',semithick];
\tikzstyle{func} = [minimum size=10pt,semithick];
\tikzstyle{inout} = [minimum size=10pt,inner sep=0pt];

\node[inout] (y) [label=above:\small{Output}] {$\hat{y}$};

\node[draw, rectangle, func] (activation)[left=of y]
        {$\phi$}
    edge [arrow] (y);
\node[draw, circle, func] (sum) [left=of activation]  {$\Sigma$}
    edge [arrow] (activation);

\node[inout] (bias)  [above left=of sum, label=above:\small{Inputs}] {$1$}
    edge [arrow] node[auto] {\tiny $\beta_0$} (sum);
\node[inout] (x1)  [left=of sum] {$x_1$}
    edge [arrow] node[auto] {\tiny $\beta_1$} (sum);
\node[inout] (x2)  [below left=of sum] {$x_2$}
    edge [arrow] node[auto,swap] {\tiny $\beta_2$} (sum);

\end{tikzpicture}
\label{subfig:artificial-neuron}}}
\caption{A biological neuron (on the left) and an artificial neuron (on the right).}
\label{fig:neuron-model}
\end{figure}

More complex artificial neural networks are often structured into layers of
artificial neurons. The inputs of a layer are the input variables or the outputs
of the previous layer. Each neuron of the layer has one output. The neural
network is divided into three parts as in Figure~\ref{fig:multilayer-network}: the
first and last layers are respectively the \emph{input layer} and the
\emph{output layer}, while the layers in between are the \emph{hidden layers}.
The hidden layer of Figure~\ref{fig:multilayer-network} is called a fully
connected layer as all the neurons (here the input variables) from the previous
layer are connected to each neuron of the layer. Other layer structures exist
such as convolutional layers~\cite{krizhevsky2012imagenet,lecun2004learning}
which mimic the visual cortex~\cite{hubel1968receptive}. A network is not
necessarily feed forward, but can have a more complex topology for example
recurrent neural networks~\cite{boulanger2012modeling,graves2013speech} mimic
the brain memory by forming internal cycles of neurons. Neural networks with
many layers are also known~\cite{lecun2015deep} as deep neural networks.

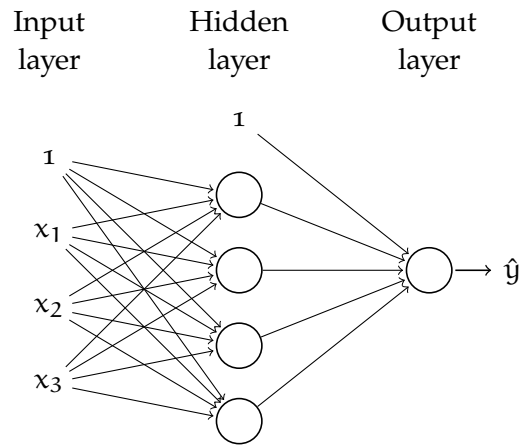
\begin{figure}[h]
\centering
\begin{tikzpicture}[shorten >=1pt,->, node distance=2.5cm]

\tikzstyle{every pin edge}=[<-,shorten <=1pt,semithick]
\tikzstyle{neuron}=[circle,minimum size=17pt,inner sep=0pt,semithick]
\tikzstyle{annotation} = [text width=4em, text centered]

\foreach \name / \y in {0,...,3}
    \node[neuron] (I-\name) at (0,-\y)
    {\ifthenelse{\NOT\name=0}{$x_\name$}{1} };

\node[neuron] (H-0) at (2.5cm, 0.5cm) {1};
\foreach \name / \y in {1,...,4}
    \path[yshift=0.5cm]
        node[draw, neuron] (H-\name) at (2.5cm,-\y cm) {};

\node[draw,neuron,pin={[pin edge={->}]right:$\hat{y}$}, right of=H-2] (O) {};

\foreach \source in {0,...,3}
    \foreach \dest in {1,...,4}
        \path (I-\source) edge (H-\dest);

\foreach \source in {0,...,4}
    \path (H-\source) edge (O);

\node[annotation,above of=H-0, node distance=1cm] (hl) {Hidden layer};
\node[annotation,left of=hl] {Input layer};
\node[annotation,right of=hl] {Output layer};
\end{tikzpicture}
\caption{A neural network with an input layer, a fully connected hidden layer
and an output layer.}
\label{fig:multilayer-network}
\end{figure}

Artificial neurons form a graph of variables. Through this representation, we
can learn such models by applying gradient based optimization
techniques~\cite{bengio2012practical,glorot2010understanding,lecun2012efficient}
to find the coefficient vector associated to each neuron minimizing a given loss
function.

\subsection{Neighbors based methods}
\label{sub:neighbors-class}

The $k$-nearest neighbors model is defined by a distance metric $d$ and a set
of samples. At learning time, those samples are stored in a database. We predict
the output of an unseen sample by aggregating the outputs of the $k$-nearest
samples in the input space according to the distance metric $d$, with $k$
being a user-defined parameter.

More precisely, given a training set $\left((x^i, y^i) \in
\left(\mathcal{X} \times \mathcal{Y}\right)\right)_{i=1}^n$ and a distance
measure $d : \mathcal{X} \times \mathcal{X}\rightarrow\mathbb{R}^{+}$, an unseen
sample with value in the input space $x$ is assigned a prediction through the
following procedure:
\begin{enumerate}

\item Compute the distances $d(x^i, x)$ in the input space, $\forall i= 1, \ldots, n$, between the training samples $x^i$ and
the input vector $x$.

\item Search for the $k$ samples in the training set which have the
smallest distance to the vector $x$.

\item In classification, compute the proportion of samples of each
class among these $k$-nearest neighbors: the final prediction is the class
with the highest proportion. This corresponds to a majority vote over the $k$
nearest neighbors. In regression, the prediction is the average output of the
$k$-nearest neighbors.

\end{enumerate}

The $k$-nearest neighbor method adapts to a wide variety of scenarios by selecting
or by designing a proper distance metric such as the euclidean distance or the
Hamming distance.

\subsection{Decision tree models}
\label{sub:decision-tree-class}

A decision tree model is a hierarchical set of questions leading to a
prediction. The internal nodes, also called test nodes, test the value of a
feature. In Figure~\ref{fig:dt-iris}, the starting node, also called root node,
tests whether the feature ``Petal width'' is bigger or smaller than $0.7cm$.
According to the answer, you follow either the right branch ($>0.7cm$) leading
to another test node or the left branch ($\leq 0.7cm$) leading to an external
node, also called a leaf. To predict an unseen sample, you start at the root
node and follow the tree structure until reaching a leaf labelled with a
prediction. With the decision tree of Figure~\ref{fig:dt-iris}, an iris with
petal width smaller than $0.7cm$ is an iris Setosa.

\begin{figure}[h]
\centering
\begin{tikzpicture}
[level 1/.style={sibling distance=60.0mm},
 level 2/.style={sibling distance=60.0mm},
 level 3/.style={sibling distance=30.0mm},
 level 4/.style={sibling distance=30.0mm},
 leaf/.style={rectangle},
 test/.style={rectangle},
 edge from parent/.style = {draw, -latex, font=\footnotesize},
 level distance=20mm]
\node[test] {Petal width?}
 child {node[leaf] {Setosa} edge from parent node[left] {$0.7cm\leq\,$}}
 child {node[test] {Petal width?}
     child {node[test] {Petal length?}
         child {node[leaf] {Versicolor} edge from parent node[left] {$5.25cm\leq\,$}}
         child {node[leaf] {Virginica} edge from parent node[right] {$\,>5.25cm$}}
         edge from parent node[left] {$1.65cm\leq\,$}}
     child {node[test] {Sepal length?}
         child {node[test] {Sepal length?}
             child {node[leaf] {Virginica} edge from parent node[left] {$5.85cm\leq\,$}}
             child {node[leaf] {Versicolor} edge from parent node[right] {$\,>5.85cm$}}
             edge from parent node[left] {$5.95cm\leq\,$}}
         child {node[leaf] {Virginica} edge from parent node[right] {$\,>5.95cm$}}
         edge from parent node[right] {$\,>1.65cm$}}
     edge from parent node[right] {$\,>0.7cm$}}
 ;
\end{tikzpicture}
\caption{A decision tree classifying iris flowers into its Setosa, Versicolor or
Virginica varieties according to the width and length of its petals and sepals.}
\label{fig:dt-iris}
\end{figure}
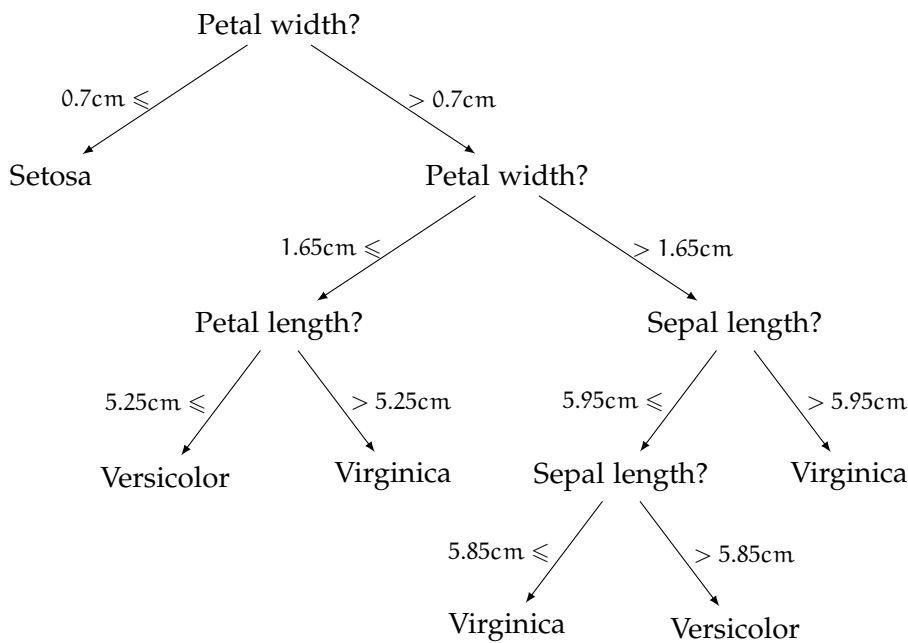

A classification or a regression tree~\cite{breiman1984classification} is built
using all the input-output pairs $((x^i, y^i) \in (\mathcal{X} \times
\mathcal{Y}))_{i=1}^n$ as follows: for each test node, the best split $(S_{r},
S_{l})$ of the local subsample $S$ reaching the node is chosen among the $p$
input features combined with the selection of an optimal cut point. The best
sample split $(S_{r}, S_{l})$ of $S$ minimizes the average reduction of impurity
\begin{align}
&\Delta I((y^i)_{i \in S}, (y^i)_{i \in S_l}, (y^i)_{i \in S_r}) \nonumber \\
&= I((y^i)_{i \in S})
-\frac{|S_l|}{|S|} I((y^i)_{i \in S_l})
-\frac{|S_r|}{|S|} I((y^i)_{i \in S_r}),
\label{eq:impurity_reduction}
\end{align}
\noindent where $I$ is the impurity of the output such as the entropy in
classification or the variance in regression. The decision tree growth continues
until we reach a stopping criterion such as no impurity $I((y^i)_{i \in
S})=0$.

To avoid over-fitting, we can stop earlier the tree growth by adding further
stopping criteria such as a maximal depth or a minimal number of samples to
split a node.

Instead of a single decision tree, we often train an ensemble of such models:
\begin{itemize}

\item Averaging-based ensemble methods grow an ensemble by randomizing the tree
growth. The random forest method~\cite{breiman2001random} trains
decision trees on bootstrap copies of the training set, i.e. by sampling
with replacement from the training dataset, and it randomizes the best split selection
by searching this split among $k$ out of the $p$ features at each nodes ($k \leq
p$).

\item Boosting-based methods~\cite{freund1997decision,friedman2001greedy} build
iteratively a sequence of weak models such as shallow trees which perform only
slightly better than random guessing. Each new model refines the prediction of
the ensemble by focusing on the wrongly predicted training input-output pairs.

\end{itemize}

We further discuss decision tree models in Chapter~\ref{ch:tree} and ensemble
methods in Chapter~\ref{ch:bias-var-ensemble}.

\subsection{From single to multiple output models}
\label{sec:multioutput-strategies}

With multiple outputs supervised learning tasks, we have to infer the values of
a set of $d$ output variables $y_1,\ldots,y_d$ (instead of a single one) from a
set of $p$ input variables $x_1,\ldots,x_p$. We hope to improve the accuracy and /
or computational performance by exploiting the correlation structure between the
outputs. There exist two main approaches to solve multiple output tasks: problem
transformation presented in Section~\ref{sec:mo-problem-transf} and algorithm
adaptation in Section~\ref{sec:mo-algo-adaptation}. We present here a non
exhaustive selection of both approaches. The interested reader will find a
broader review of the multi-label literature
in~\cite{zhang2014review,tsoumakas2009mining,madjarov2012extensive,gibaja2014multi} and of the
multi-output regression literature
in~\cite{spyromitros2012multi,borchani2015survey}.

\subsubsection{Problem transformation}
\label{sec:mo-problem-transf}

The problem transformation approach transforms the original multi-output task
into a set of single output tasks. Each of these single output tasks is then solved
by classical classifiers or regressors. The possible output correlations
are exploited through a careful reformulation of the original task.

\paragraph{Independent estimators} The simplest way to handle multi-output
learning is to treat all outputs in an independent way. We break the prediction of the $d$ outputs
into $d$ independent single output prediction tasks. A model is fitted on each output. At
prediction time, we concatenate the predictions of these $d$ models. This is called
the binary relevance method~\cite{tsoumakas2009mining} in multi-label
classification and the single target method~\cite{spyromitros2012multi} in
multi-output regression. Since we consider the outputs independently, we neglect
the output correlation structure. Some methods may however benefit from sharing identical computations needed for  the different outputs. For instance, the
$k$-nearest neighbor method can share the search for the $k$-nearest neighbors
in the input space, and the ordinary linear least squares  method can share the
computation of $(\mathbf{X}^T\mathbf{X})^{-1}\mathbf{X}^T$ in
Equation~\ref{eq:ols-solution}.

\paragraph{Estimator chain} If the outputs are dependent, the model of a single
output might benefit from the values of the correlated outputs. In the estimator
chain method, we sequentially learn a model for each output by providing the predictions of the
previously learnt models as auxiliary inputs. This is called a classifier
chain~\cite{read2011classifier} in classification and a regressor
chain~\cite{spyromitros2012multi} in regression.

More precisely, the estimator chain method first generates an order $o$ on the
outputs for instance based on prior knowledge, the output density, the output variance or at
random. Then with the training samples and the output order $o$, it sequentially
learns $d$ estimators: the $l$-th estimator $f_{o_l}$ aims at predicting the
$o_l$-th output using as inputs the concatenation of the input vectors with the predictions of the models learnt
for the $l-1$ previous outputs. To
reduce the model variance, we can generate an ensemble of estimator chains by
randomizing the chain order (and / or the underlying base estimator), and then
we average their predictions.

In multi-label classification, \citet{cheng2010bayes} formulates a Bayes optimal
classifier chain by modeling the conditional probability of
$P_{\mathcal{Y}|\mathcal{X}}(y|x)$. Under the chain rule, we have
\begin{equation}
P_{\mathcal{Y}|\mathcal{X}}(y|x) = P_{\mathcal{Y}_1|\mathcal{X}}(y_1|x) \prod_{j=2}^d
P_{\mathcal{Y}_j|\mathcal{X},\mathcal{Y}_1,\ldots,\mathcal{Y}_{j-1}}(y_j|x,y_1,\ldots,y_{j-1}).
\label{eq:chain-rule-cl-chain}
\end{equation}
\noindent Each estimator of the chain approximates a probability factor of the
chain rule decomposition. Using the estimation of $P_{\mathcal{Y}|\mathcal{X}}$
made by the chain and a given loss function $\ell$, we can perform Bayes optimal
prediction:
\begin{equation}
h^*(x) = \arg\min_{y'} E_{\mathcal{Y}|\mathcal{X}} \ell(y', y).
\end{equation}

\paragraph{Error correcting codes} Error correcting codes are techniques from
information and coding theory used to properly deliver a message through a noisy
channel. It first codes the original message, and then corrects the errors made
during the transmission at decoding time. This idea have been applied to
multi-class classification~\cite{dietterich1991error,guruswami1999multiclass},
multi-label
classification~\cite{ferng2011multi,zhang2011multi,kajdanowicz2012multi,kouzani2009multilabel,guo2008error,hsu2009multi,kapoor2012multilabel,cisse2013robust}
and multi-output regression~\cite{tsoumakas2014multi,yu2006multi} tasks by
viewing the predictions made by the supervised learning model(s) as a message
transmitted through a noisy channel. It transforms the original task by encoding
the output values with a binary error correcting code or output projections. One
classifier is then fitted for each bit of the code or output projection. At
prediction time, we concatenate the predictions made by each estimator and
decode them by solving the inverse problem. Note that the output coding might
also have for objective to reduce the dimensionality of the output
space~\cite{hsu2009multi,kapoor2012multilabel}.

\paragraph{Pairwise comparisons} In multi-label tasks, the ranking by pairwise
comparison approach~\cite{hullermeier2008label} aims to generate a ranking of
the labels by making all the pairwise label comparisons. The original tasks is
transformed into $d (d - 1) / 2$ binary classification tasks where we compare if
a given label is more likely to appear than another label. The datasets
comparing each label pair is obtained by collecting all the samples where only
one of the outputs is true, but not both. This approach is similar to the
one-versus-one approach~\cite{park2007efficient} in multi-class classification
task, however we can not directly transform the ranking into a prediction, i.e.
label set. To decrease the prediction time, alternative ranking construction
schemes have been proposed~\cite{mencia2008efficient,mencia2010efficient}
requiring less than $d (d-1)/2$ classifier predictions.

The Calibrated label ranking
method~\cite{brinker2006unified,furnkranz2008multilabel} extends the previous
approach by adding a virtual label which will serve as a split point between the
true and the false labels. For each label, we add a new tasks using all the
samples comparing the label $i$ to the virtual label whose value is the opposite
of the label $i$. To the $d (d-1)/2$ tasks, we effectively add $d$ tasks.

\paragraph{Label power set} For multi-label classification tasks, the label
power set method~\cite{tsoumakas2009mining} encodes each label set in the
training set as a class. It transforms the original task into a multi-class
classification task. At prediction time, the class predicted by the multi-class
classifier is decoded thanks to the one-to-one mapping of the label power set
encoding. The drawback of this approach is to generate a large number of classes
due to the large number of possible label sets. For $n$ samples and $d$ labels,
the maximal number of classes is $\max(2^d, n)$. This leads to accuracy issues
if some label sets are not well represented in the training set. To alleviate
the explosion of classes, rakel~\cite{tsoumakas2007random} generates an ensemble
of multi-class classifiers by subsampling the output space and then applying the
label power set transformation.

\subsubsection{Algorithm adaptation}
\label{sec:mo-algo-adaptation}

The algorithm adaptation approach modifies existing supervised learning
algorithms to handle multiple output tasks. We show here how to extend the
previously presented models classes to multi-output regression and to
multi-label classification tasks.

\paragraph{Linear-based models} Linear-based models have been adapted to
multi-output tasks by reformulated their mathematical formulation using
multi-output losses and (possibly) regularization constraints enforcing
assumptions on the input-output and the output-output correlation structures.
The proposed methods are based for instance on extending least-square
regression~\cite{dayal1997multi,breiman1997predicting,simila2007input,baldassarre2012multi,evgeniou2005learning,zhou2012multi}
(with possibly regularization), canonical correlation
analysis~\cite{izenman1975reduced,van1980multivariate}, support vector
machine~\cite{elisseeff2001kernel,jiang2008calibrated,xu2012efficient,evgeniou2004regularized,evgeniou2005learning},
support vector
regression~\cite{vazquez2003multi,sanchez2004svm,liu2009multi,xu2013multi},
and conditional random
fields~\cite{ghamrawi2005collective}.

\paragraph{(Deep) Artificial neural networks} Neural networks handles
multi-output tasks by having one node on the output layer per output variable.
The network minimizes a global error function defined over all the
outputs~\cite{specht1991general,zhang2006multilabel,ciarelli2009multi,zhang2009ml,nam2014large}.
The output correlation are taken into account by sharing the input and the
hidden layers between all the outputs.

\paragraph{Nearest neighbors} The $k$-nearest neighbors algorithm predicts an
unseen sample $x$ by aggregating the output value of the $k$ nearest neighbors
of $x$. This algorithm is adapted to multi-output tasks by sharing the nearest
neighbors search among all outputs. If we just share the search, this is called
binary relevance of $k$-nearest neighbors in classification and single target of
$k$-nearest neighbors in regression. Multi-output extensions of the $k$-nearest
neighbors modifies how the output values of the nearest neighbors are aggregated
for the predictions for instance it can utilize the maximum a posteriori
principle~\cite{zhang2007ml,younes2011dependent,cheng2009simple} or it can
re-interpret the output aggregation as a ranking
problem~\cite{chiang2012ranking,brinker2007case},

\paragraph{Decision trees} The decision tree model is a hierarchical structure
partitioning the input space and associating a prediction to each partition. The
growth of the tree structure is done by maximizing the reduction of an impurity
measure computed in the output space. When the tree growth is stopped at a leaf,
we associate a prediction to this final partition by aggregating the output
values of the training samples. We adapt the decision tree algorithm to
multi-output tasks in two
steps~\cite{segal1992tree,de2002multivariate,blockeel2000top,clare2001knowledge,zhang1998classification,vens2008decision,noh2004unbiased}:
(i) multi-output impurity measures are used to grow the structure as the sum
over the output space of the entropy or the variance; (ii) the leaf predictions
are obtained by computing a constant minimizing a multi-output loss function
such as the $\ell_2$-norm loss in regression or the Hamming loss in
classification. We discuss in more details how to adapt the decision tree
algorithm to multi-output tasks in Section~\ref{sec:mo-trees}.

Instead of growing a single decision tree, they are often combined together to
improve their generalization performance. Random forest
models~\cite{breiman2001random,geurts2006extremely} averages the predictions of
several randomized decision trees and has been studied in the context of
multi-output
learning~\cite{kocev2007ensembles,segal2011multivariate,kocev2013tree,madjarov2012extensive,joly2014random}.

\paragraph{Ensembles} Ensemble methods aggregate the predictions of multiple
models into a single one so to improve its generalization performance. We
discuss how the averaging and boosting approaches have been adapted to
multi-output supervised learning tasks.

Averaging ensemble methods have been straightforwardly adapted by averaging the
prediction of multi-output models. Instead of averaging scalar predictions, it
averages~\cite{kocev2007ensembles,segal2011multivariate,kocev2013tree,madjarov2012extensive,joly2014random}
the vector predictions of each model of the ensemble. If the learning algorithm
is not inherently multi-output, we could use one the problem transformation
techniques as in rakel~\cite{tsoumakas2007random}, which uses the label power
set transformation, or ensemble of estimator chain~\cite{read2011classifier}.

Boosting ensembles are grown by sequentially adding weak models minimizing a
selected loss, such as the Hamming loss~\cite{schapire2000boostexter}, the
ranking loss~\cite{schapire2000boostexter}, the $\ell_2$-norm
loss~\cite{geurts2006kernelizing} or any differentiable loss function~(see
Chapter~\ref{ch:gbrt-output-projection}).

\section{Evaluation of model prediction performance}
\label{sec:model-eval}

For a given supervised learning model $f$ trained on a set of samples
$\left((x^i, y^i) \in \left(\mathcal{X} \times
\mathcal{Y}\right)\right)_{i=1}^n$, we want a model having good generalization
able to predict unseen samples. Otherwise said, the model $f$ should
have minimal generalization error over the input-output pair distribution,
where the generalization error is defined as:
\begin{equation}
\text{Generalization error} = E_{P_{{\cal X},{\cal Y}}} \{ \ell(f(x), y)\}
\label{eq:model-eval}
\end{equation}
\noindent for a given loss function $\ell : \mathcal{Y} \times \mathcal{Y}
\rightarrow \mathbb{R}^+$.

Evaluating Equation~\ref{eq:model-eval} is generally unfeasible, except in the
rare cases where (i) the input-output distribution $P_{{\cal X},{\cal Y}}$ is
fully known and (ii) for restricted classes of models. In practice, neither of
these conditions are met. However, we still need a principle way to approximate
the generalization error.

A first approach to approximate Equation~\ref{eq:model-eval} is to evaluate the
error of the model $f$ on the training samples $\mathcal{L} = \left((x^i, y^i) \in
\left(\mathcal{X} \times \mathcal{Y}\right)\right)_{i=1}^n$ leading to the
resubstitution error:
\begin{equation}
\text{Resubstitution error} = \frac{1}{|\mathcal{L}|} \sum_{(x, y) \in \mathcal{L}}^n  \ell(f(x), y)
\end{equation}

A model with a high resubstitution error often has a high generalization error
and indeed underfits the data. The linear model shown in
Figure~\ref{subfig:under-fit} underfits the data as it is not complex enough to
fit the non linear data (here a second degree polynomial). Instead, we can fit a
high order polynomial model to have a zero resubstitution error as illustrated
in Figure~\ref{subfig:over-fit}. This complex model has poor generalization
error as it perfectly fits the noisy samples unable to retrieve the second order
parabola. Such overly complex models with zero resubstitution error and non zero
generalization error are said to overfit the data. Since a zero resubstitution
error does not imply a low generalization error, it is a poor proxy of the
generalization error.

\begin{figure}
\centering
\subfloat[Under-fitting]{{\includegraphics[width=0.5\textwidth]{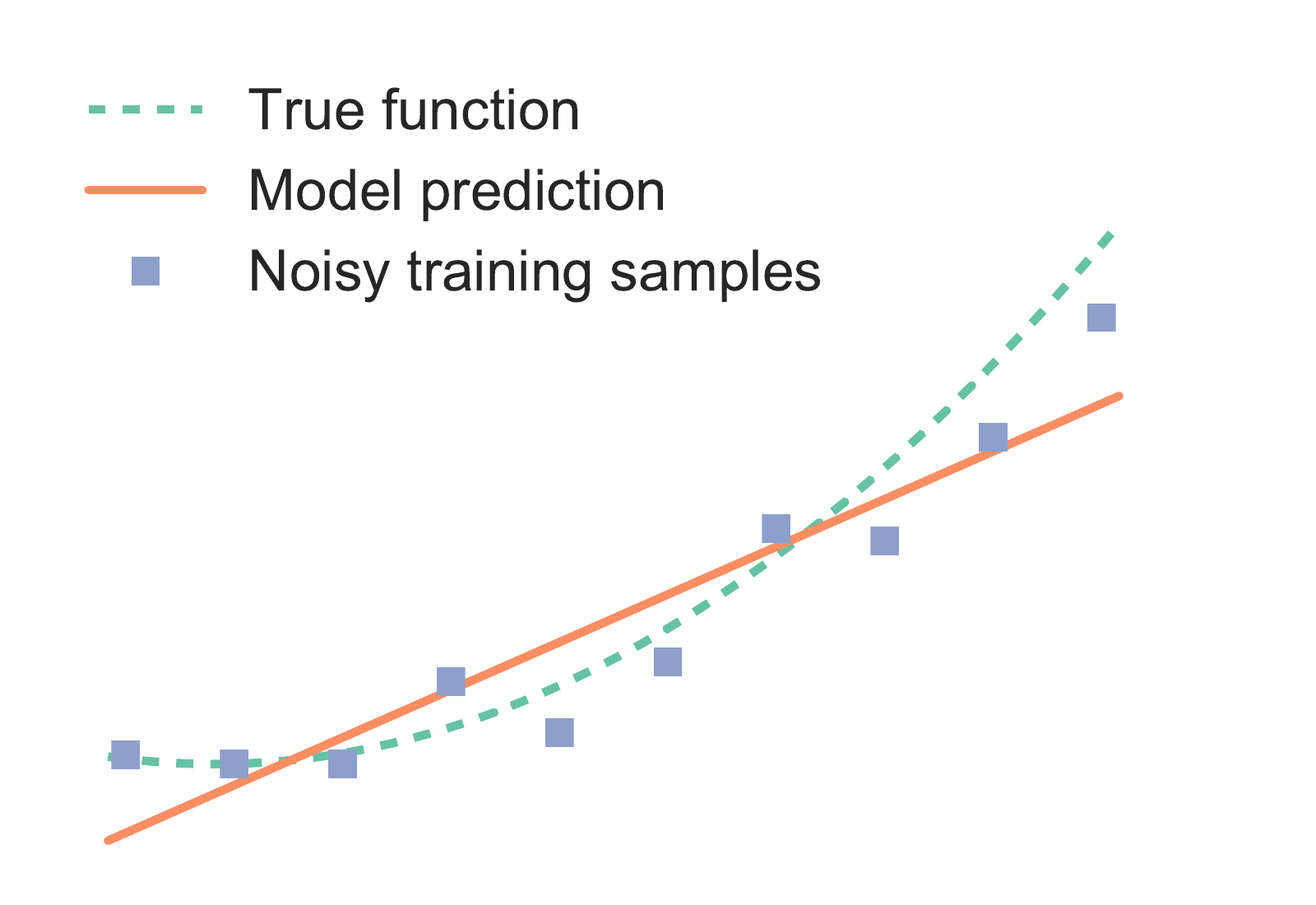}
\label{subfig:under-fit}}}
\subfloat[Over-fitting]{{\includegraphics[width=0.5\textwidth]{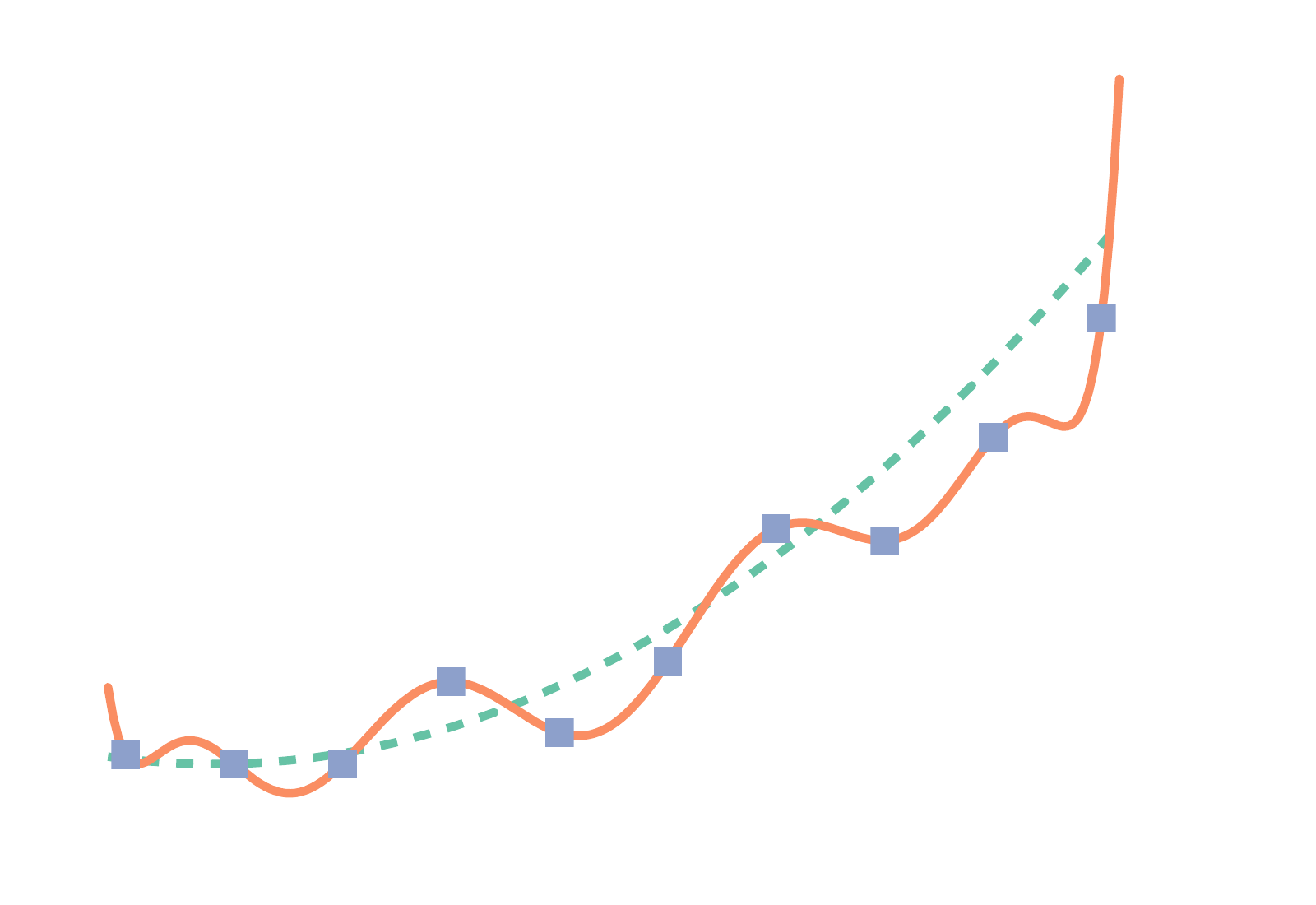}
\label{subfig:over-fit}}}
\caption{The linear model on the left figure underfits the training samples,
while the high order polynomial model on the right overfits the training
samples.}
\label{fig:under-over-fitting}
\end{figure}

Since we assess the quality of the model with the training samples, the
resubstitution error is optimistic and biased. Furthermore, it favors overly
complex models (as depicted in Figure \ref{fig:under-over-fitting}). To improve
the approximation of the generalization error, we need to use techniques which
avoid to use the training samples for performance evaluation. They are either
based on sample partitioning methods, such as hold out methods and cross
validation techniques, or sample resampling methods, such as bootstrap
estimation methods. Since the amount of available data and time are fixed for
both the model training and the model assessment, there is a trade-off between
(i) the quality of the error estimate, (ii) the number of samples available to
learn a model and (iii) the amount of computing time available for the whole
process.

The hold out evaluation method splits the samples into a training set $LS$,
also called learning set, and a testing set $TS$ commonly with a ratio of $2/3$ -
$1/3$. The hold out error is given by
\begin{equation}
\text{Hold out error} = \frac{1}{|TS|} \sum_{(x, y) \in TS}  \ell(f(x), y).
\end{equation}
\noindent This methods requires a high number of samples as a large part of the
data is devoted to the model assessment impeding the model training. If too few
samples are allocated to the testing set, the hold out estimate becomes
unreliable as its confidence intervals widen~\cite{kohavi1995study}. Since the
hold out error is a random number depending on the sample partition, we can
improve the error estimation by (i) generating $B$ random partitions $(LS_b,
TS_b)_{b=1}^B$ of the available samples, (ii) fitting a model $f^b$ on each
learning set $LS_b$ and (iii) averaging the performance of the $B$ models over
their respective testing sets $TS_b$:
\begin{equation}
\text{Random subsampling error} =
\frac{1}{B} \sum_{b=1}^B \frac{1}{|TS_b|} \sum_{(x, y) \in TS_b} \ell(f^b(x), y).
\end{equation}

To improve the data usage efficiency, we can resort to cross-validation methods,
also called rotation estimation, which split the samples into $k$ folds $\{TS_1,
\ldots, TS_k\}$  approximately of the same size. Cross validation methods
average the performance of $k$ models $(f^l)_{l=1}^k$ each tested on one of the
$k$ folds and trained using the $k-1$ remaining folds:
\begin{equation}
\text{CV error} =
\frac{1}{k} \sum_{l=1}^k \frac{1}{|TS_l|} \sum_{(x, y) \in TS_l}  \ell(f^l(x), y).
\end{equation}
\noindent The number of folds $k$ is usually $5$ or $10$. If $k$ is equal to the
number of samples ($k=n$), it is called leave-one-out cross validation.

Given that the folds do not overlap for cross validation methods, we are
tempted to assess the performance over the pooled cross validation estimates
with a given $metric$ obtained by concatenating the predictions made
by each model $f^l$ over each of the $k$-folds
\begin{equation}
\text{Pooled CV error}\hspace*{-1mm}=\hspace*{-1mm}
metric\hspace*{-1mm}\left(\hspace*{-0.5mm}(f^1\hspace*{-1mm}(x), y)_{\hspace*{-0.5mm}(x, y) \in TS_{1}}\hspace*{-1mm}\frown\hspace*{-1mm}\ldots\hspace*{-1mm}\frown\hspace*{-1mm}(f^k(x) ,y)_{\hspace*{-0.5mm}(x, y) \in TS_k}\right)\hspace*{-1mm},
\end{equation}
\noindent where $\frown$ is the concatenation operator. There is no difference
for sample-wise losses such as the square loss. However, this is not the case
for metrics comparing a whole set of predictions to their ground truth.
Depending on the metrics, it has been showed that pooling may or may not biase
the error
estimation~\cite{parker2007stratification,forman2010apples,airola2011experimental}.

We can improve the quality of the estimate by repeating the cross validation
procedures over $B$ different $k$-fold partitions, averaging the performance of
the models $f^{b,l}$ over each associated testing set $TS_{b,l}$:
\begin{equation}
\text{Repeated CV error} =
\frac{1}{Bk} \sum_{b=1}^B \sum_{l=1}^k \frac{1}{|TS_{b,l}|} \sum_{(x, y) \in TS_{b,l}}
\ell(f^{b,l}(x), y).
\end{equation}
\noindent If all combinations are tested exhaustively as in the leave-one-out
case, it is called complete cross validation. Since it is often too
expensive~\cite{kohavi1995study}, we can instead draw several sets of
folds at random.

The bootstrap method~\cite{efron1983estimating} draws $B$ bootstrap datasets
$\{B_1, \ldots, B_B\}$ by sampling with replacement $n$ samples from the
original dataset of size $n$. Each samples has a probability of $1 - (1 -
\frac{1}{n})^n$ to be selected in a bootstrap which is approximately $0.632$ for
large $n$. A first approach to estimate the error is to train a model $f^b$ on
each bootstrap dataset and use the original dataset as a testing set:
\begin{equation}
\text{Bootstrap error} =
\frac{1}{nB} \sum_{b=1}^B \sum_{(x, y) \in B_b} \ell(f^{b}(x), y).
\end{equation}
\noindent This leads to over optimistic results, given the overlap between
the training and the test data.

A better approach (discussed in Chapter~7.11 of~\cite{friedman2001elements}) is
to imitate cross validation methods by fitting on each bootstrap dataset a
model $f^{b}$ and using the unused samples as a testing set. This
approach is called bootstrap leave-one-out:
\begin{equation}
\text{LOO Bootstrap error} =
\frac{1}{n} \sum_{i=1}^n \frac{1}{|C^{-i}|} \sum_{b \in C^{-i}} \ell(f^{b}(x^i), y^i),
\end{equation}
\noindent where $C^{-i}$ gives the bootstrap indices where the sample $i$ was
not drawn. It is similar to a 2-fold repeated cross validation or random
subsampling error with a ratio of 2/3 - 1/3 for the training and testing set.
The estimation is thus biased as it uses approximately $0.632n$ training samples
instead of $n$. We can alleviate this bias due to the sampling
procedure through the ``0.632'' estimator which averages the training error and
LOO Bootstrap error:
\begin{align}
\text{0.632 estimator} =& 0.632 \times \text{LOO Bootstrap error} \nonumber \\
& + 0.368 \times \text{Resubstitution error}.
\end{align}
\noindent Note that with very low sample size, it has been
shown~\cite{braga2004cross} that the bootstrap approach yields better error
estimate than the cross validation approach.

Until now, we have assumed that the samples are independent and identically
distributed. Whenever this is no longer true, such as with time series of
measurements, we have to modify the assessment procedure to avoid biasing the
error estimation. For instance, the hold out estimate would train the model on
the oldest samples and test the model on the more recent samples. Similarly in
the medical context if we have several samples for one patient, we should keep
these samples together in the training set or in the testing set.

Partition-based methods (hold out, cross validation) break the assumption in
classification that the samples from the training set are independent from the
samples in the testing set as they are drawn without replacement from a pool of
samples. The representation of each class in the testing set is thus not
guaranteed to be the same as in the training set. It is
advised~\cite{kohavi1995study} to perform stratified splits by keeping the same
proportion of classes in each set.

\section{Criteria to assess model performance}
\label{sec:model-diagnosis}

Assessing the performance of a model requires evaluation metrics which will
compare the ground truth to a prediction, a score or a probability estimate. The
selection of an appropriate scoring or error measure is essential and is
dependent of the supervised learning task and the goal behind the modeling.

A first approach to assess a model is to define a goal for the model
and to quantify its realization. For instance, a company wants to maximize its
benefits and consider that the revenue must exceed the data analysis cost of
gathering samples, fitting a model and exploiting its predictions.
Unfortunately, this model optimization criterion is hardly expressible into
economical terms. We could instead consider the effectiveness of the model such
as the click-through-rate, used by online advertising companies, which counts
the number of clicks on a link to the number of opportunities that users have to
click on this link. However, it is hard to formulate a model optimizing directly
this score and it requires to put the model into a production setting (or at
least simulate its behavior). Other optimization criteria exit that are more
amenable to mathematical analysis and numerical computation such as the square
loss or the logistic loss. Knowing the properties of such criteria is necessary
to make a proper choice.

We present binary classification metrics in
Section~\ref{subsec:bin-clf-metrics}. Then, we show how to extend these metrics to
multi-class classification tasks in Section~\ref{subsec:multiclass-metrics} and
to multi-label classification tasks in Section~\ref{subsec:multilabel-metrics}.
We introduce metrics for regression tasks and multi-output regression tasks in
Section~\ref{subsec:regression-metrics}.

More details or alternative descriptions of these metrics can be found in the
following
references~\cite{sokolova2009systematic,hossin2015review,ferri2009experimental}.
Note that I made significant contributions to the implementations and the
documentations of these metrics in the scikit-learn
library~\cite{pedregosa2011scikit,buitinck2013api}.

\subsection{Metrics for binary classification }
\label{subsec:bin-clf-metrics}

Given a set of $n$ ground truth values $(y^i \in \{0, 1\})_{i=1}^n$ and their
associated model predictions $(\hat{y}^i \in \{0, 1\})_{i=1}^n$, we can
distinguish in binary classification four categories of predictions (as shown in
Table~\ref{tab:binary-label-kind}). We denote by true positives (TP) and true
negatives (TN) the predictions where the model accurately predicts the target
respectively as true or false:
\begin{align}
TP &= \sum_{i=1}^n 1(y^i=1;\hat{y}^i=1),\\
TN &= \sum_{i=1}^n 1(y^i=0;\hat{y}^i=0).
\end{align}

Whenever the model wrongly predicts the samples, we call
false positives (FP) samples predicted as true while their labels are false and
false negatives (FN) samples predicted as false while their labels are true:
\begin{align}
FN &= \sum_{i=1}^n 1(y^i=1;\hat{y}^i=0), \\
FP &= \sum_{i=1}^n 1(y^i=0;\hat{y}^i=1).
\end{align}

Together, the true positive, true negatives, false negatives and false positives
form the so called confusion or contingency matrix shown in
Table~\ref{tab:binary-label-kind}.

\begin{table}[t]
\caption{For a binary classification task, the prediction of a model is divided
into fours categories leading to a confusion matrix.}
\label{tab:binary-label-kind}
\centering
\begin{tabular}{@{}lcc@{}}
                        & \em{Truly positive} & \em{Truly negative} \\
\em{Predicted positive} & True positive  & False positive \\
\em{Predicted negative} & False negative & True negatives \\
\end{tabular}
\end{table}

Two common metrics to assess classification performance are the error rate, the
average of the $0-1$ loss, and its complement the accuracy:
\begin{align}
\text{Error rate} &= \frac{1}{n} \sum_{i=1}^n 1(y^i\not=\hat{y}^i), \\
\text{Accuracy} &= 1 - \text{Error rate} = \frac{1}{n} \sum_{i=1}^n 1(y^i=\hat{y}^i).
\end{align}

Both metrics can be expressed in term of the confusion matrix:
\begin{align}
\text{Error rate} &= \frac{FP + FN}{n}, \\
\text{Accuracy} &= \frac{TN + TP}{n}.
\end{align}
\noindent The error rate does not distinguish the false negatives from the false
positives. Similarly, the accuracy does not differentiate true positives from
true negatives. Thus, two classifiers may have exactly the same accuracy or
error rate, while leading to a totally different outcome by increasing either
the number of misses (false negatives) or the number of false alarms (false
positives). Furthermore, the error rate and the accuracy can be overly
optimistic whenever there is a high class imbalance. A classification task with
$99.99\%$ of samples in one of the classes would easily lead to an accuracy of
$99.99\%$ (and an error rate of $0.01\%$) by alway predicting the most common
class. The choice of an appropriate metric thus depends on the properties of the
classification task, such as the class imbalance.

To differentiate false positives from false negative, we can assess separately
the proportion of correctly classified positive and negative samples. This leads
to the \emph{true positive rate} (resp. \emph{true negative rate}) which
computes the proportion of correctly classified positive (resp. negative)
samples:
\begin{align}
\text{True positive rate} &= \frac{TP}{TP + FN} \\
\text{True negative rate} &= \frac{TN}{TN + FP}
\end{align}
\noindent The complement of the true positive rate (resp. true negative rate) is
the false negative rate (resp. false positive rate):
\begin{align}
\text{False negative rate} &= 1 - \text{True positive rate} = \frac{FN}{TP + FN},\\
\text{False positive rate} &= 1 - \text{True negative rate} = \frac{FP}{TN + FP}.
\end{align}

The true positive rate is also called \emph{sensitivy} and tests the ability of
the classifier to correctly classify all positive samples as true. A test with
$100\%$ sensitivity implies that all positive samples are correctly classified.
However, this does not imply that all samples are correctly classified.  A
classifier predicting all samples as true leads to $100\%$ sensitivity and
totally neglects false positives. We have to look to the true negative rate,
also called \emph{specifity}, which tests the ability of the classifier to
correctly classify all negative samples as negative. A perfect classifier should
thus have a high sensitivity and a high specifity. In the medical domain, the
sensitivity and the specificity are often used to characterize and to choose the
behavior of diagnosis tests such as pregnancy tests.

The average of the specifity and sensitivity is called the \emph{balanced
accuracy}:
\begin{align}
\text{Balanced accuracy}
&= \frac{\text{True positive rate} + \text{True negative rate}}{2} \\
&= \frac{\text{specifity} + \text{sensitivty}}{2} \\
&= \frac{1}{2} \frac{TP}{TP + FN} + \frac{1}{2} \frac{TN}{TN + FP}.
\end{align}

In the information retrieval context, a user sets a query to an information
system, e.g. a web search engine, to detect which documents are relevant among a
collection of such documents. In such systems, the collection of documents is
often extremely large with only a few relevant documents to a given query. Due
to the small proportion of relevant documents, we want to maximize the
\emph{precision}, the fraction of correctly predicted documents among the
predicted documents. Binary classification tasks with a high class imbalance can
be viewed as an information retrieval problems. In the context of binary
classification tasks, the precision is expressed as
\begin{equation}
\text{Precision} = \frac{TP}{TP + FP},
\end{equation}

To have a perfect precision, one could predict all documents or samples as
negative (as irrelevant documents). In parallel, we want also to maximize the
recall, the proportion of correctly predicted true samples among the true
samples. The recall is a synonym for true positive rate and sensitivity.

The precision and recall are often combined into a single number by computing the
$F_1$ score, the harmonic mean of the precision and recall,
\begin{equation}
F_1 = \frac{2}{\frac{1}{\text{Precision}} + \frac{1}{\text{Recall}}}.
\end{equation}

Some classifiers associate a score or a probability $\hat{f}(x)$ to a sample
instead of a class label. We can threshold these continuous predictions by a
constant $\tau$ to compute the number of true positives, false positives, false
negatives and true negatives:
\begin{align}
TP(\tau) &= \sum_{i=1}^n 1(y^i=1;f(x^i) \geq \tau),\\
TN(\tau) &= \sum_{i=1}^n 1(y^i=0;f(x^i) \leq \tau), \\
FN(\tau) &= \sum_{i=1}^n 1(y^i=0;f(x^i) \geq \tau), \\
FP(\tau) &= \sum_{i=1}^n 1(y^i=1;f(x^i) \leq \tau).
\end{align}

By varying $\tau$, we can first derive performance curves to analyze the
prediction performance of those more models and then select a classifier
performance point with pre-determined classification performance.

The receiver operating characteristic (ROC) curve~\cite{fawcett2006introduction}
plots the true positive rate as a function of the false positive rate by varying
the threshold $\tau$ as shown in Figure~\ref{subfig:roc-curve}. The receiver,
the model user, can indeed choose any point on the curve to operate at a given
model specifity / sensitivity tradeoff. A random estimator has its performance
on the line $((0, 0), (1, 1))$, while a perfect classifier has the points $(0,
1)$ with $0\%$ of false positive rate and $100\%$ of true positive rate on its
curve. Any curve below the random line can be reversed symmetrically to the line
$((0, 0), (1, 1))$ by flipping the classifier prediction. The ROC curve is often
used in the clinical domain~\cite{metz1978basic} and coupled to a cost analysis
to determine the proper threshold $\tau$. The area under the ROC curve can be
interpreted as~\cite{hanley1982meaning} the probability to rank with a higher
score one true sample than one false sample chosen at random.

The precision-recall (PR) curve is the precision as a function of the recall as
shown in Figure~\ref{subfig:pr-curve}. The ROC curve and the PR curves are
linked as there is a one to one mapping between points in the ROC space and in
the precision-recall space~\cite{davis2006relationship}. However conversely to
the ROC curve, the precision recall curve is sensitive to the class imbalance
between the positive and negative classes. Since both the precision and recall
do not take into account the amount of true negatives, the precision-recall
curve (compared to the ROC curve) focuses on how well the estimator is able to
classify correctly the positive class.

\begin{figure}
\centering
\subfloat[ROC curve]{{\includegraphics[width=0.5\textwidth]{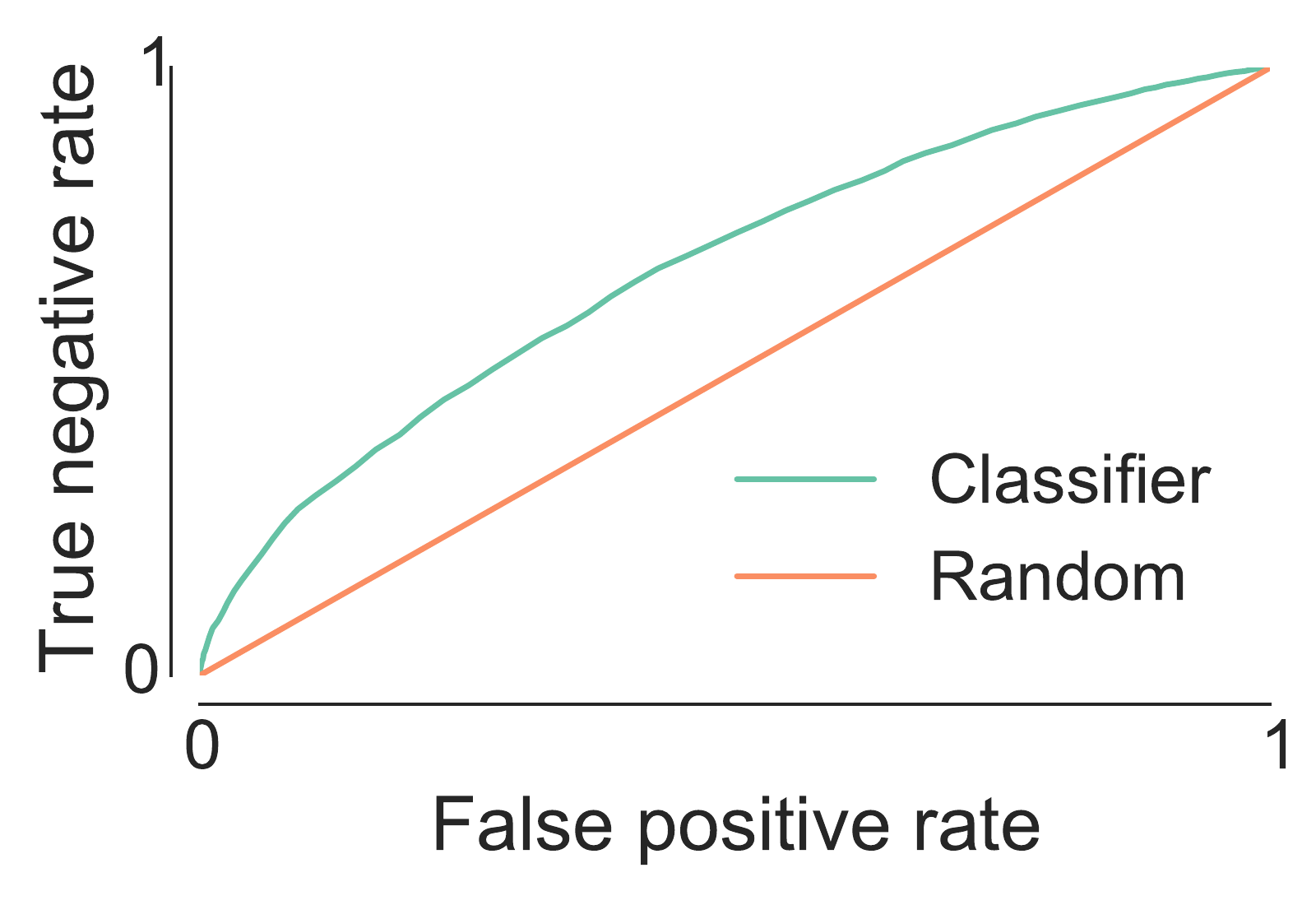}
\label{subfig:roc-curve}}}
\subfloat[PR curve]{{\includegraphics[width=0.5\textwidth]{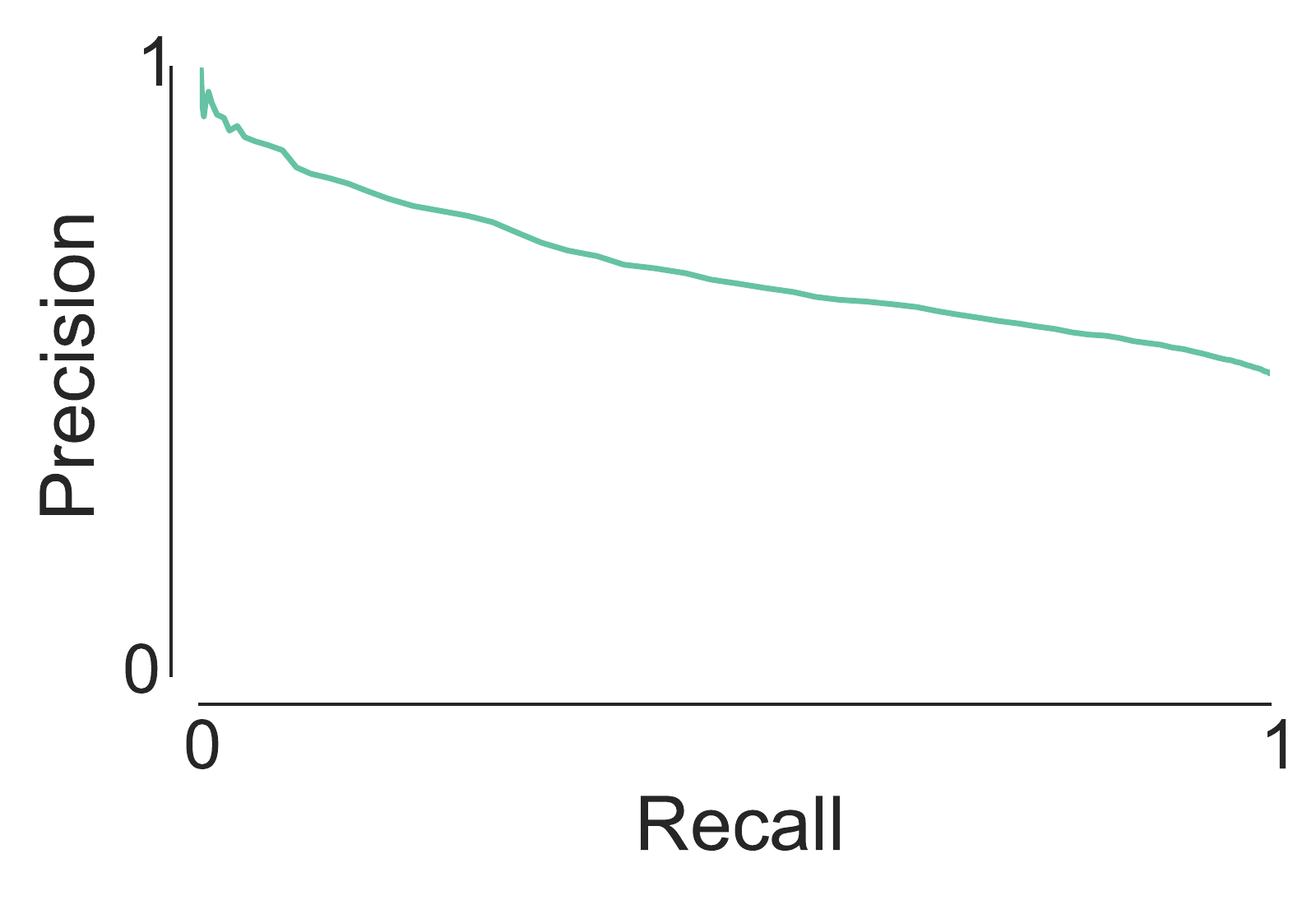}
\label{subfig:pr-curve}}}
\caption{A receiver operating characteristic curve and a precision-recall curve
of a classifier and a random model.}
\label{fig:metric-curves}
\end{figure}

\subsection{Metrics for multi-class classification}
\label{subsec:multiclass-metrics}

From binary classification to multi-class classification, the output value is
no more restricted to two classes and can go up to $k$-classes.
Given the ground truths $(y^i \in \{1, ..., k\})_{i=1}^n$ and the
associated model predictions $(\hat{y}^i \in \{1, ..., k\})_{i=1}^n$, we
can now divide the model predictions into $k^2$ categories leading
to a $k \times k$ confusion matrix:
\begin{equation}
c_{l_1,l_2} = \sum_{i=1}^n 1(y^i=l_1;\hat{y}^i=l_2) \quad \forall l_1, l_2 \in \{1, ..., k\}.
\end{equation}

Metrics such as the accuracy, the error rate or the log loss (see
Table~\ref{tab:common-loss}) naturally extend to multi-class classification
tasks. To extend other binary classification metrics (such as those developed in
Section~\ref{subsec:bin-clf-metrics}), we need to break the $k \times k$
confusion matrix into a set of $2 \times 2$ confusion matrices.

A first approach is to consider that each class $l$ is in turns the positive
class while the remaining labels form together the negative class. We thus have
$k$ confusion matrices whose true positives $TP_l$, true negatives $TN_l$, false
negatives $FN_l$ and false positives $FP_l$ for the class $l \in \{1,\ldots,k\}$
are
\begin{align}
TP_l &= c_{l,l}, \\
TN_l &= \sum_{j=1, j\not=l}^k c_{j,j}, \\
FN_l &= \sum_{j=1, j\not=l}^k c_{j,k}, \\
FP_l &= \sum_{j=1, j\not=l}^k c_{k,j}.
\end{align}

By averaging a metric $M$ computed on each derived confusion matrix, we have the
so called macro-averaged~\cite{sokolova2009systematic} of the corresponding
binary classification metric
\[
macro-M = \frac{1}{k} \sum_{l=1}^k M(TP_l, TN_l, FN_l, FP_l).
\]
\noindent Note that the balanced accuracy in binary classification is thus equal
to the macro-specificity or macro-sensitivity in multi-class classification.

Another useful averaging is the
micro-averaging~\cite{sokolova2009systematic}. It uses as true positives
$TP_\mu$ and true negatives $TN_\mu$ the sum of the diagonal elements of the
confusion matrix and as false negatives $FN_\mu$ (resp. false positives
$FP_\mu$) the sum of the lower (resp. upper) triangular part of the confusion
matrix:
\begin{align}
TP_\mu &= \sum_{l=1}^k c_{l,l}, \\
TN_\mu &= \sum_{l=1}^k c_{l,l}, \\
FN_\mu &= \sum_{l=1}^k \sum_{j=1: j < l}^k c_{l,j}, \\
FP_\mu &= \sum_{l=1}^k \sum_{j=1: j > l}^k c_{l,j}).
\end{align}

Each averaging has its own properties: the macro-averaging considers that each
class has the same importance and the micro-averaging reduces the importance
given to the minority classes.

\subsection{Metrics for multi-label classification and ranking}
\label{subsec:multilabel-metrics}

From binary to multi-label classification, the ground truths $(y^i \in \{0,
1\}^d)_{i=1}^n$ and the model predictions $(\hat{y}^i\in \{0, 1\}^d)_{i=1}^n$
are no longer scalars, but vectors of size $d$ or label sets. Both
representations are interchangeable. Usually, the number of labels associated to
a sample is small compared to the total number of labels.

The accuracy~\cite{ghamrawi2005collective}, also called subset accuracy, has a
direct extension in multi-label classification
\begin{equation}
\text{Accuracy} = \frac{1}{n} \sum_{i=1}^n 1(y^i=\hat{y}^i),
\end{equation}
\noindent and requires for each prediction that the predicted label set matches
exactly the ground truth. This is an overly pessimistic metric, especially for
high dimensional label space, as it penalizes any single mistake made for one
sample. The complement of the subset accuracy is called the subset 0-1
loss~\cite{}.

In information theory, the Hamming distance compares the number of differences
between two coded messages. The Hamming error
metric~\cite{schapire1999improved} averages the Hamming distance between the
ground truth and the model prediction over the samples
\begin{equation}
\text{Hamming error} = \frac{1}{n} \frac{1}{d} \sum_{i=1}^n \sum_{j=1}^d 1(y^i_j\not=\hat{y}^i_j).
\end{equation}
\noindent By contrast to the subset accuracy, the Hamming error is an optimistic
metric when the label space is sparse. For a sufficiently large number of
samples and a label density\footnote{The label density is the average number of
labels per samples on the ground truth divided by the size of the label space.}
$\epsilon \rightarrow 0$, a (useless) model predicting always the presence of a
label if its frequency of apparition is higher than $0.5$ in the training set
will roughly have a Hamming error of $\epsilon$. In some situations, the label
density $\epsilon$ is so small that (more useful) models have hardly an Hamming
error lower than $\epsilon$.

Both the Hamming error and the subset accuracy ignore the sparsity of the label
space leading to either overly optimistic or pessimistic error. Multi-label
metrics should be aware of the label space sparsity.

In statistics, the Jaccard index $J$ or Jaccard similarity coefficient
computes the similarity between two sets. Given two sets $A$ and $B$, the
Jaccard index is defined as
\begin{align}
J(A, B) = \frac{|A \cap B|}{|A \cup B|} & \text{ with } J(\emptyset,\emptyset) = 1.
\end{align}
With label sets encoded as boolean vectors  $x, y \in \{0, 1\}^d$,
the Jaccard index becomes
\begin{align}
J(x, y) = \frac{x^Ty}{1_d^Tx + 1_d^Ty - x^Ty},
\end{align}
\noindent where $1_d$ is a vector of ones of size $d$. The Jaccard similarity
score~\cite{godbole2004discriminative}, also sometimes called accuracy, averages
over the samples the Jaccard index between the ground truths and the model
predictions:
\begin{align}
\text{Jaccard similarity score} = \frac{1}{n} \sum_{i=1}^n J(y^i, \hat{y}^i).
\end{align}

By contrast to the Hamming loss, the Jaccard similarity score puts more emphasis
on the labels in the ground truth and the ones predicted by the models.
Moreover, it totally ignores all the negative labels. The Jaccard similarity
score can be viewed as ``local'' measure of similarity and the Hamming loss a
``global'' measure of distance.

A fitted model $f$ applied to an input vector $x$ can go beyond label prediction
and associate to each label $j$ a score or a probability estimate $f(x)_j$. When
the density of the label space $\epsilon$ is small and the size of the label
space $d$ is very high, it is often hard to correctly predict all labels.
Instead, the classifier can rank or score all the labels. We developed here
metrics for such classifiers with different possible goals, e.g. to predict
correctly the label with the highest score $f(x)_j$.

\emph{Note that in the following, we use indifferently the notation $|\cdot|$ to
express the cardinality of a set or the $\ell_1$-norm of a vector.}

If only the top scored label has to be correctly predicted, we are minimizing
the one error\cite{schapire1999improved} which computes the fraction of labels
with the highest score or probability that are incorrectly predicted:
\begin{equation}
\text{One error} = \frac{1}{n} \sum_{i=1}^n
I\left(y^i_j \not= 1 : j = \arg\max_{j \in \{1, \ldots, d\}} f(x^i)_j\right).
\end{equation}

If we want to discover all the true labels at the expense of some false labels,
the coverage error~\cite{schapire2000boostexter} is the metrics to minimize. It
counts the average number of labels with the highest scores or probabilities to
consider to cover all true labels:
\begin{equation}
\text{Coverage error} = \frac{1}{n} \sum_{i=1}^n
\max_{j: y^i_j \not=1} \left|\left\{k : f(x^i)_k \geq f(x^i)_j \right\}\right|.
\end{equation}
\noindent For a label density of $\epsilon$, the best coverage error is thus
$\epsilon d$ and the worst is $d$.

If we want to ensure that pairwise label comparisons are correctly made by the
classifier, we will minimize the (pairwise) ranking loss
metrics~\cite{schapire1999improved}. It counts for each sample the number of
wrongly made pairwise comparisons divided by the number of true labels and false
labels
\begin{equation}
\text{Ranking loss} = \frac{1}{n} \sum_{i=1}^n
\frac{1}{|y^i|}\frac{1}{d - |y^i|}
\left| \left\{
(k, l)\hspace*{-1mm}:\hspace*{-1mm}f(x^i)_k\hspace*{-1mm}<\hspace*{-1mm}f(x^i)_l, y^i_k\hspace*{-1mm}=\hspace*{-1mm}1, y^i_l\hspace*{-1mm}=\hspace*{-1mm}0
\right\}\right|
\end{equation}
\noindent The ranking loss is between $0$ and $1$. A ranking loss of $0$ (resp.
$1$) indicates that all pairwise comparisons are correct (resp. wrong).

If we want that the classifier gives on average a higher score to true labels,
we will use the label ranking average precision
metric~\cite{schapire2000boostexter} to assess the accuracy of the models. For
each samples $y^i$, it averages over each true labels $j$ the ratio between (i)
the number of true label (i.e. $y^i=1$) with higher scores or probabilities than
the label $j$ to (ii) the number of labels ($y^i$) with higher score $f(x^i)$
than the label $j$. Mathematically, we average the LRAP of all pairs of ground
truth $y^i$ and its associated prediction $f(x^i)$:
\begin{equation}
\text{LRAP}(\hat{f})
= \frac{1}{|TS|} \sum_{i=1}^n \frac{1}{|y^i|} \hspace*{-1mm}\sum_{j \in \{k : y^i_{k}=1\}} \hspace*{-1mm}
\frac{|\mathcal{L}_j^{i}(y^i)|}{|\mathcal{L}_j^{i}(1_d)|},
\end{equation}
\noindent where $$\mathcal{L}^i_{j}(q) = \left\{ k :
q_{k}=1 \mbox{~and~} \hat{f}(x^i)_k \geq \hat{f}(x^i)_j\right\}.$$ The best
possible average precision is thus 1. Note that the LRAP score is equal to
fraction of positive labels if all labels are predicted with the same score or
all negative labels have a score higher than the positive one.

Let us illustrate the computation of the previous metrics with a numerical
example. We compare the ground truth $\mathbf{y}$ of $n=2$ samples in a label of size
$d=5$ to the probability score $\mathbf{f(x)}$ given by the classifier:
\begin{align*}
\mathbf{y} &= \begin{bmatrix}
1 & 0 & 1 & 0 & 0 \\
1 & 0 & 0 & 0 & 0
\end{bmatrix}, \\
\mathbf{f(x)} &=  \begin{bmatrix}
0.75 & 0.6 & 0.1 & 0.8 & 0.15 \\
0.25 & 0.8 & 0.1 & 0.15 & 0.3
\end{bmatrix}.
\end{align*}
\noindent Thresholding $\mathbf{f(x)}$ at 0.5 yields the prediction
$\mathbf{\hat{y}}$ of the classifier:
\begin{align*}
\hat{y} &=  f(x) \leq 0.5 \begin{bmatrix}
1 & 1 & 0 & 1 & 0 \\
0 & 1 & 0 & 0 & 0
\end{bmatrix}
\end{align*}
Here, you will find the detailed computation of all previous metrics:
\begin{align*}
\text{Accuracy} &= 0 + 0 = 0, \\
\text{Hamming loss} &= \frac{1}{2}\frac{1}{5} 5 = 0.5, \\
\text{Jaccard similarity score} &= \frac{1}{2}\left(\frac{1}{4}+\frac{0}{2}\right) = 0.125, \\
\text{Top error} &= \frac{1}{2} \left(1 + 1\right) = 2, \\
\text{Coverage error} &= \frac{5 + 3}{2} = 4\\
\text{Ranking loss} &= \frac{1}{2} \left( \frac{1}{2}\frac{1}{3} (1 + 3) + \frac{1}{4}\frac{1}{1} 2 \right)  \approx 0.583, \\
\text{LRAP} &= \frac{1}{2} \left(
\frac{1}{2}(\frac{1}{2} + \frac{2}{5}) + \frac{1}{1}\frac{1}{3}
\right)\approx 0.392.
\end{align*}


While the previous metrics are suited to assess multi-label classification
models, we can complement these metrics with those developed for binary
classification tasks, e.g. specifity, precision, ROC AUC,\ldots (see
Section~\ref{subsec:bin-clf-metrics}). They are well understood in their
respective domains and have attractive properties such as a good handling of
class imbalance. We extend those metrics in three steps: (i) we break the ground
truth and the model prediction vectors into its elements, (ii) we concatenate
the elements into groups such as all predictions associated to a given sample or
all samples associated to a given label and (iii) we average the binary
classification metrics over each group. We will focus here on three averaging
methods: macro-averaging, micro-averaging and sample-averaging. Each averaging
method stems from a vision and different sets of assumptions.

If we view the multi-label classification task as a set of independent
binary classification tasks, we compute the metrics $M$ over each output
separately and average the performance over all $d$ labels leading the
\emph{macro-averaging} version~\cite{yang1999evaluation} of the metrics $M$:
\begin{equation}
\text{macro-}M((y^i)_{i=1}^n, (\hat{y}^i)_{i=1}^n)
= \frac{1}{d} \sum_{j=1}^d M((y^i_j)_{i=1}^n, (\hat{y}^i_j)_{i=1}^n)).
\end{equation}

If instead we view each sample as the result of a query (like in a search
engine), we want to evaluate the quality of each query (or sample) separately.
Under this perspective, the \emph{sample-averaging}
approach~\cite{godbole2004discriminative} computes and averages the metric $M$
over each sample separately:
\begin{equation}
\text{sample-}M((y^i)_{i=1}^n, (\hat{y}^i)_{i=1}^n)
= \frac{1}{n} \sum_{i=1}^n M(y^i, \hat{y}^i).
\end{equation}

The \emph{micro-averaging} approach~\cite{yang1999evaluation} views all label-sample
pairs as forming an unique binary classification task. It compute the metric $M$
as if all label predictions were independent:
\begin{equation}
M\text{-micro}((y^i)_{i=1}^n, (\hat{y}^i)_{i=1}^n)
= M((y^i_j)_{i,j=(1,\ldots,n)},
    (\hat{y}^i_k)_{i,j=(1,\ldots,n)})).
\end{equation}

\subsection{Regression metrics}
\label{subsec:regression-metrics}

Given a set of $n$ ground truths $(y^i \in \mathbb{R})_{i=1}^n$ and their
associated model predictions $(\hat{y}^i\in \mathbb{R})_{i=1}^n$, regression
tasks are often assessed using the mean square error (MSE), the average of
the square loss, expressed by
\begin{equation}
\text{MSE} = \frac{1}{n} \sum_{i=1}^n \left(y^i - \hat{y}^i\right)^2.
\end{equation}

From the mean square error, we can derive the $r^2$ score, also called
the coefficient of determination. It is the fraction of variance explained
by the model:
\begin{align}
r^2
&= 1 - \frac{\text{MSE}}{\text{Output variance}}\\
&= 1 - \frac{\sum_{i=1}^n (y^i - \hat{y}^i)^2}{\sum_{i=1}^n (y^i - \frac{1}{n}\sum_{l=1}^n y^l)^2}
\end{align}
\noindent The $r^2$ score is normally between 0 and 1. A $r^2$ score of zero
indicates that the models is no better than a constant, while a $r^2$ of one
indicates that the model perfectly explains the output given the
inputs. A negative $r^2$ score might occur and it indicates that the model is
worse than a constant model.

Square-based metrics are highly sensitive to the presence of outliers with
abnormally high prediction errors. The mean absolute error (MAE), the average of the
absolute loss, is often suggested as a robust replacement of the MSE:
\begin{equation}
\text{MAE} = \frac{1}{n} \sum_{i=1}^n |y^i - \hat{y}^i|.
\end{equation}

These single output metrics naturally extend to multi-output regression tasks.
The multi-output mean squared error and mean absolute error for an output space
size $d$ is given by
\begin{align}
\text{MSE} &= \frac{1}{n} \frac{1}{d} \sum_{i=1}^n ||y^i - \hat{y}^i||_{\ell_2}^2,\\
\text{MAE} &= \frac{1}{n} \frac{1}{d} \sum_{i=1}^n ||y^i - \hat{y}^i||_{\ell_1}.
\end{align}
\noindent These measures average the metrics over all outputs assuming they
are independent.

Similarly, averaging the $r^2$ score over each output leads to the macro-$r^2$
score:
\begin{align}
\text{macro-}r^2 = 1 -
\frac{1}{d} \sum_{j=1}^d
\frac{\sum_{i=1}^n (y^i_j - \hat{y}^i_j)^2}{\sum_{i=1}^n (y^i_j - \frac{1}{n}\sum_{l=1}^n y^l_j)^2}.
\end{align}

An alternative extension of the $r^2$ score is to consider the total fraction of
the output variance, or more strictly the sum of the variance over each output,
explained by the model
\begin{align}
\text{variance-}r^2
&= 1 - \frac{MSE}{\text{Total output variance}}  \\
&= 1 - \frac{\sum_{i=1}^n ||y^i - \hat{y}^i||_{\ell_2}^2}{\sum_{i=1}^n ||y^i - \frac{1}{n}\sum_{l=1}^n y^l||_{\ell_2}^2},
\end{align}
\noindent which is equal to 1 minus the fraction of explained
variance~\cite{bakker2003task}.

The variance-$r^2$ is a variance weighted average of the $r^2$ score. We can
reformulate the variance-$r^2$ as:
\begin{align}
\text{variance-}r^2
&= 1 - \frac{\sum_{i=1}^n \sum_{j=1}^d (y^i_j - \hat{y}^i_j)^2}{\text{Total output variance}} \\
&= 1 - \sum_{j=1}^d w_j \frac{\sum_{i=1}^n (y^i_j - \hat{y}^i_j)}{\sum_{i=1}^n (y^i_j - \frac{1}{n}\sum_{l=1}^n y^l)^2}
\label{eq:variance-weight-sum}
\end{align}
\noindent with
$w_j=\frac{\sum_{i=1}^n (y^i_j - \frac{1}{n}\sum_{l=1}^n y^l_j)^2}{\text{Total
output variance}}$. By contrast, the macro-$r^2$ score would have uniform
weights $w_j= \frac{1}{d}\, \forall j$ in Equation~\ref{eq:variance-weight-sum}.

\section{Hyper-parameter optimization}
\label{sec:model-select}

Supervised learning algorithms can be viewed as a function $A: (\mathcal{X}
\times \mathcal{Y})_{i=1}^n \times \mathcal{A} \rightarrow \mathcal{H}$ taking as
input a learning set $\mathcal{L} = \left((x^i, y^i) \in \left(\mathcal{X} \times
\mathcal{Y}\right)\right)_{i=1}^n$ and a set of hyper-parameters $\alpha \in
\mathcal{A}$ and outputting a function $f$ in a hypothesis space $\mathcal{H}
\subset \mathcal{Y}^\mathcal{X}$. The hypothesis space $\mathcal{A}$ is defined
through one or several hyper-parameter variables that can be either discrete,
like the number of neighbors for a nearest neighbors model, or continuous, like
the multiplying constant of a penalty loss in penalized linear models.

We need hyper-parameter tuning methods to find the best hyper-parameter set
$\alpha^* \in \mathcal{A}$ that minimizes the expectation of some loss function
$\ell : \mathcal{Y} \times \mathcal{Y} \rightarrow \mathbb{R}^+$ over the joint
distribution of input / output pairs $P_{{\cal X},{\cal Y}}$:
\begin{equation}
\alpha^* = \arg \min_{\alpha \in \mathcal{A}} \E_{P_{{\cal X},{\cal Y}}} \{ \ell(A(\mathcal{L},\alpha)(x), y) \}.
\label{eq:model-search}
\end{equation}

Directly optimizing Equation~\ref{eq:model-search} is in general not possible as
it consists in minimizing the generalization error over unseen samples. Thus, we
resort to validation techniques to split the samples into one (or more)
validation set(s) $S^{\text{valid}}$ to estimate the generalization error (see
Section~\ref{sec:model-eval}) and to select the best set of hyper-parameter
$\alpha^{*}$:
\begin{equation}
\alpha^* \approx  \arg \min_{\alpha \in \mathcal{A}} \sum_{(x, y) \in S^{\text{valid}}} \{ \ell(A(\mathcal{L},\alpha)(x), y) \}.
\label{eq:model-search-cv}
\end{equation}
\noindent Note that we can optimize a metric defined over a set of samples
instead a loss as the area under the ROC curve.

In its simplest form, the hyper-parameter search assesses all possible
hyper-parameter sets $\alpha \in \mathcal{A}$. While it is optimal on the
validation set(s), this is impractical as the size of the hyper-parameter space
$\mathcal{A}$ is often unbounded. The hyper-parameter space often consists of
continuous hyper-parameter variables leading to an infinite number of possible
hyper-parameter sets. Whenever the number of hyper-parameter sets is finite
($|\mathcal{A}|<\infty$), we are limited by computational budget constraints.
Instead, we resort to evaluate a subset of the hyper-parameter space
$\mathcal{A}^- \subset \mathcal{A}$:
\begin{equation}
\alpha^* \approx  \arg \min_{\alpha \in \mathcal{A}^-} \sum_{(x, y) \in S^{\text{valid}}} \{ \ell(A(\mathcal{L},\alpha)(x), y) \}.
\label{eq:model-search-cv-real}
\end{equation}

The classical approach to design a finite and reduced subspace $\mathcal{A}^-$
is to sample the hyper-parameter space $\mathcal{A}$ through a manually defined
grid. A too coarse grid will miss the optimum hyper-parameter set $\alpha^*$,
while a too fine grid will be very costly. In~\cite{hsu2003practical},
\citeauthor{hsu2003practical} suggests a two-stage approach: (i) a coarse
parameter grid first identifies regions of interest in the hyper-parameter
space, and then a finer grid locates the optimum.  Nevertheless, we might still
miss the optimal hyper-parameter set $\alpha^*$ since the objective function of
Equation~\ref{eq:model-search-cv-real} is not necessarily convex nor concave

\begin{remark}{How to wrongly optimize and / or to wrongly validate a model?}
Given a set of samples $S$ and a supervised learning algorithm $A$, one wants
simultaneously to find the hyper-parameter set $\alpha^*$ and estimate the
generalization error of the associated model $f$.

A wrong approach would be to use directly one of the validation techniques
presented in Section~\ref{sec:model-eval} dividing the sample set $S$ into
(multiple) pair(s) of a training set and a test set $(S^\text{train},
S^\text{test})$. If we select the best hyper-parameter set $\alpha^*$ based on
the test set(s) $S^\text{test}$, then the approximation of the generalization
error on $S^\text{test}$ is biased: the hyper-parameter set $\alpha^*$ has been
selected on the same test set(s). Another approach would be to repeat
independently the described process using different partitions of the sample set
$S$ to first select the best model and then to estimate the generalization
error. However, the generalization error is still biased: we might use the same
samples to train, to select or to validate the model.

The correct approach is to use \emph{nested validation techniques}. We first
divide the sample set into (multiple) pair(s) of a test set $S^\text{test}$ and
training-validation set $S^\text{train-valid}$. Then we again apply a validation
technique to split the training-validation set into (multiple) pair(s) of a
training set $S^\text{train}$ and a validation set $S^\text{valid}$. The models
$f$ with hyper-parameter $\alpha$ are first trained on $S^\text{train}$, then we
select the best hyper-parameter set $\alpha^*$ on $S^\text{valid}$. We finally
estimate the generalization error of the overall model training and selection
procedure by re-training a model on $S^\text{train-valid}$ using the best
hyper-parameter set $\alpha^*$ on the testing set $S^\text{test}$.

Proper validation is necessary and comes at the expense of the sample efficiency
and computing time. Note that nested validation methods are not needed if we
want solely either to select the best model or to estimate the generalization
error of a given model.
\end{remark}

In the grid search approach, we first sample each hyper-parameter variable and
then build all possible combinations of hyper-parameter sets. However, some of
these hyper-parameter variables have no or small influence on the performances
of the models. In these conditions, large hyper-parameter grids are doomed to
fail due to the explosion of hyper-parameter sets. Random search
techniques~\cite{solis1981minimization} tackles such optimization problems by
(i) defining a distribution over the optimization variables, (ii) drawing random
values from this distribution and (iii) selecting the best one out of these
values. \citep{bergstra2012random} have shown that random
hyper-parameter search scales better than grid search as the search is not
affected by the hyper-parameter variables having few or no influence on the
model performance. As an illustration, let us consider a model with one
parameter and one without impact on its generalization error. Sampling 9 random
hyper-parameter sets would yield more information than making a $3 \times 3$
grid as we evaluate 9 different values of the dependent variable in the random
search instead of 3 in the grid.

For a continuous loss and a continuous hyper-parameter space, Bayesian
hyper-parameter
optimization~\cite{snoek2012practical,bergstra2011algorithms,hutter2011sequential}
goes beyond random search and models the performance of a supervised learning
algorithm $A$ with hyper-parameters $\alpha$. Starting from an initial Gaussian
prior over the hyper-parameter space, it refines a posterior distribution of the
model error with each new tested sets of hyper-parameters. New hyper-parameter
sets are drawn to minimize the overall uncertainty and the model error.

\section{Unsupervised projection methods}
\label{sec:dimensionality-reduction}

Supervised learning aims at finding the best function $f$ which maps the input
space $\mathcal{X}$ to the output space $\mathcal{Y}$ given a set of $n$ samples
$\left((x^i, y^i) \in \left(\mathcal{X} \times
\mathcal{Y}\right)\right)_{i=1}^n$. However with very high dimensional input
space, we need a very high number of samples $n$ to find an accurate function
$f$. This is the so-called curse of dimensionality. Another problem arises if
the model $f$ is unable to model the input-output relationship because the model
classes $\mathcal{H}$ is too restricted, for instance a linear model will fail
to model quadratic data.

Unsupervised projection methods lift the original space $\mathcal{X}$ of size
$p$ to another space $\mathcal{Z}$ of size $q$. If the projection lowers the
size of the original space ($q<p$), this is a dimensionality reduction
technique. In the context of supervised learning, we hope to break the curse of
dimensionality with such projection methods while speeding up the model
training.  If the projections perform non linear transformations of the input
space, it might also improve the model performance. For instance, a linear
estimator will be able to fit quadratic data if we enrich the input variables
with their quadratic and bilinear forms. Note that projecting the input space to
two or three dimensions ($q \in \{2, 3\}$) is an opportunity to get insights on
the data through visualization.

We present three popular unsupervised projection methods and discuss their
properties: (i) the principal component analysis approach in
Section~\ref{subsec:pca}, which aims to find a subspace maximizing the total
variance of the data; (ii) random projection methods in Section~\ref{sub:rp},
which project the original space onto a lower dimensional space while
approximately preserving pairwise euclidean distances, and (iii) kernel
functions in Section~\ref{subsec:kernels}, which compute pairwise sample
similarities lifting the original space to a non linear one.

\subsection{Principal components analysis}
\label{subsec:pca}

The principal component analysis (PCA) method~\cite{jolliffe2002principal} is a
technique to find from a set of samples $(x^i \in \mathcal{X})_{i=1}^n$ an
orthogonal linear transformation $Z$ which maximizes the variance along each
axis of the transformed space as shown in Figure~\ref{fig:pca-cloud}.

\begin{figure}
\centering
\includegraphics[width=0.75\textwidth]{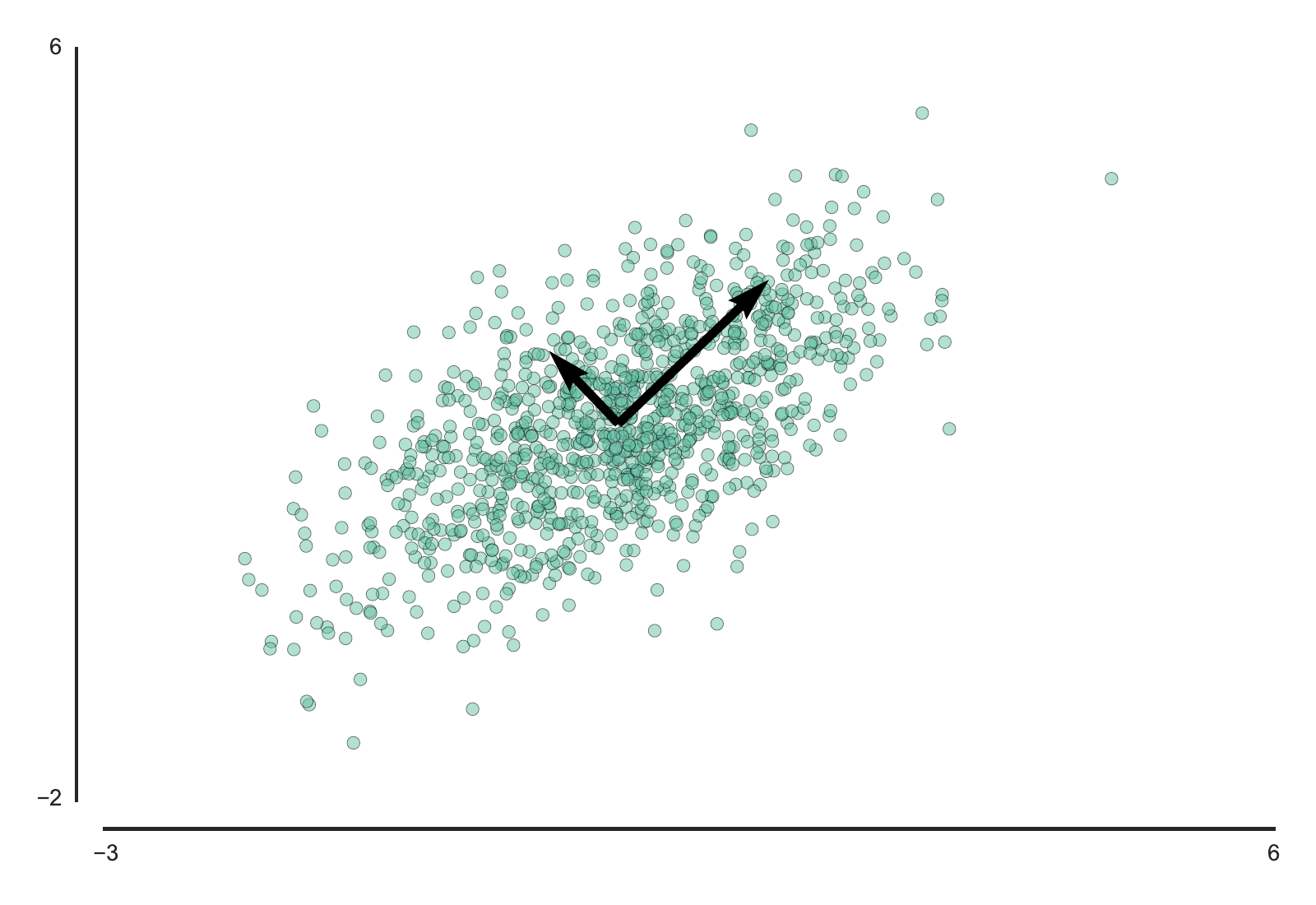}
\caption{The principal components, shown as black arrows, are the orthogonal
vectors maximizing the variance of the samples drawn here from a multivariate
Gaussian distribution.}
\label{fig:pca-cloud}
\end{figure}

Principal component analysis reduces the dimensionality of the dataset by
keeping a fraction of the principal components vectors which totalize a large
amount of the total variance. If we keep only two components, PCA allows to
visualize high dimensional datasets as illustrated in
Figure~\ref{fig:pca-cloud-digits} with digits.

\begin{figure}
\centering
\includegraphics[width=0.75\textwidth]{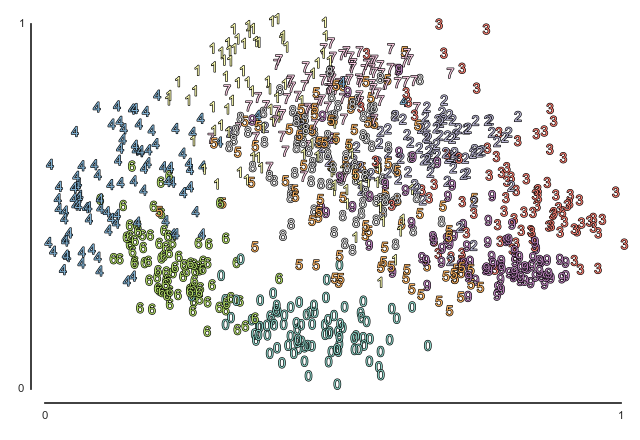}
\caption{We project the digits dataset~\cite{Lichman:2013} from its $8 \times 8$ pixel space on the
two principal components with the largest variance ($29\%$ of the total
variance). Digits such as $4$ or $0$ form well defined clusters on this two
dimensional space.}
\label{fig:pca-cloud-digits}
\end{figure}

Mathematically, we want to find the first principal component vector $u^1$ which
maximizes the variance along its direction:
\begin{align}
u^1 = &\arg\max_{u} \sum_{i=1}^n ||u^T x^i - u^T \sum_{l=1}^n  \frac{x^l}{n} ||^2_{\ell_2} \nonumber \\
\text{s.t. }& u^T u = 1.
\end{align}

Given that the covariance matrix $C$ is given by
\[
C = \sum_{i=1} (x^i - \sum_{l=1}^n \frac{x^l}{n})(x^i - \sum_{l=1}^n  \frac{x^l}{n})^T,
\] we have
\begin{equation}
u^1 = \arg\max_{u} u^T C u + \lambda_1 (1 -  u^T u),
\end{equation}
\noindent where $\lambda_1$ is the Lagrange multiplier of the normalization
constraint.

By derivating with respect to $u$ and setting the first derivative to zero, we
have that the maximum is indeed an eigen vector of the covariance matrix:
\begin{align}
C u^1  = \lambda_1 u^1
\end{align}
\noindent We also note that the variance along $u^1$ is given  by ${u^1}^T C u^1 =
\lambda_1$. Thus $u^1$ is the eigenvector with the highest eigen value.

The following vector $u^{m+1}$ maximizing the variance are obtained by
imposing that the $m+1$-th vector is orthogonal to the $m$ previous one:
\begin{align}
& u^{m+1} = \arg\max_{u^{m+1}} {u^{m+1}}^T C u^{m+1} \nonumber \\
\text{s.t. } & {u^{m+1}}^T u^{m+1} = 1,  \nonumber\\
             & {u^{m+1}}^T u^l = 0 \quad \forall l \in \{1,\ldots, m\},
\end{align}
\noindent or alternatively in Lagrangian form
\begin{equation}
\arg\max_{u^{m+1}} {u^{m+1}}^T C u^{m+1} +
\lambda_{m+1}(1 - {u^{m+1}}^T u^{m+1}) +
\sum_{l=1}^m \mu_{l} {u^{m+1}}^T u^{l}.
\end{equation}

By differentiating with respect to $u^l$ and multiplying by $u^{m+1}$, we have that
the Lagrangian constants of the orthogonality constraints are equal to zero
$\mu_l=0 \, \forall l \in \{1,\ldots, m\}$. And it follows that the $m+1$-th
principal component is the $m+1$-th eigen vector with the $m+1$-th largest eigen
value $\lambda_{m+1}$ since
\begin{equation}
C u^{m+1}  = \lambda_{m+1} u^{m+1},
{u^{m+1}}^T C u^{m+1} = \lambda_{m+1}.
\end{equation}

\subsection{Random projection}
\label{sub:rp}

Random projection is a dimensionality reduction method which projects the space
onto a smaller random space. For randomly projection, the Jonhson-Lindenstrauss
lemma gives the conditions of existence such that the distance between pairs of
points is approximately preserved.

\begin{remark}{Johnson-Lindenstrauss lemma~\cite{johnson1984extensions}}
\label{lemma:jl-lemma}
Given $\epsilon > 0$ and an integer $n$, let $q$ be a positive integer such
that  $q \geq 8 \epsilon^{-2} \ln {n}$. For any sample $(y^i)_{i=1}^{n}$ of $n$ points
in $\mathbb{R}^d$ there exists a matrix $\Phi \in \mathbb{R}^{q \times d}$
such that for all $i, j  \in \{1, \ldots , n\}$
\begin{equation}
(1 \hspace*{-0.3mm} - \hspace*{-0.3mm}\epsilon) ||y^i \hspace*{-0.3mm}- \hspace*{-0.3mm}y^j||^2 \leq || \Phi y^i \hspace*{-0.3mm}- \hspace*{-0.3mm}\Phi y^j ||^2
                           \leq (1\hspace*{-0.3mm} +\hspace*{-0.3mm} \epsilon) || y^i \hspace*{-0.3mm}- \hspace*{-0.3mm}y^{j}||^2.
\label{eqn:js}
\end{equation}
\end{remark}

Moreover, when $d$ is sufficiently large, several random matrices satisfy
Equation~\ref{eqn:js} with high probability. In particular, we can consider
Gaussian matrices whose elements are drawn {\em i.i.d.} in $\mathcal{N}(0, 1 /
q)$, as well as (sparse) Rademacher matrices whose elements are drawn in the
finite set $\left\{ -\sqrt{\frac{s}{q}}, 0, \sqrt{\frac{s}{q}} \right\}$ with
probability $\left\{ \frac{1}{2s}, 1 - \frac{1}{s} ,\frac{1}{2s}\right\}$, where
$1 / s \in (0,1]$ controls the sparsity of $\Phi$. If $s=3$, we will say that
those projections are Achlioptas random
projections~\cite{achlioptas2003database}. When the size of the original space
is $p$ and $s=\sqrt{p}$, then we will say that we have sparse random projection
as in~\cite{li2006very}. Note a random sub-space~\cite{ho1998random} is also a
random projection scheme~\cite{candes2011probabilistic}: the projection matrix
$\Phi$ is obtained by sub-sampling with or without replacement the identity
matrix. Theoretical results proving~\ref{eqn:js} with high probability for each
random projection matrix can be found in the relevant paper.

The choice of the number of random projections $q$ is a trade-off between the
quality of the approximation and the size of the resulting embedding as illustrated
in Figure~\ref{fig:rp-distortion}.

\begin{figure}
\centering
\includegraphics[width=0.75\textwidth]{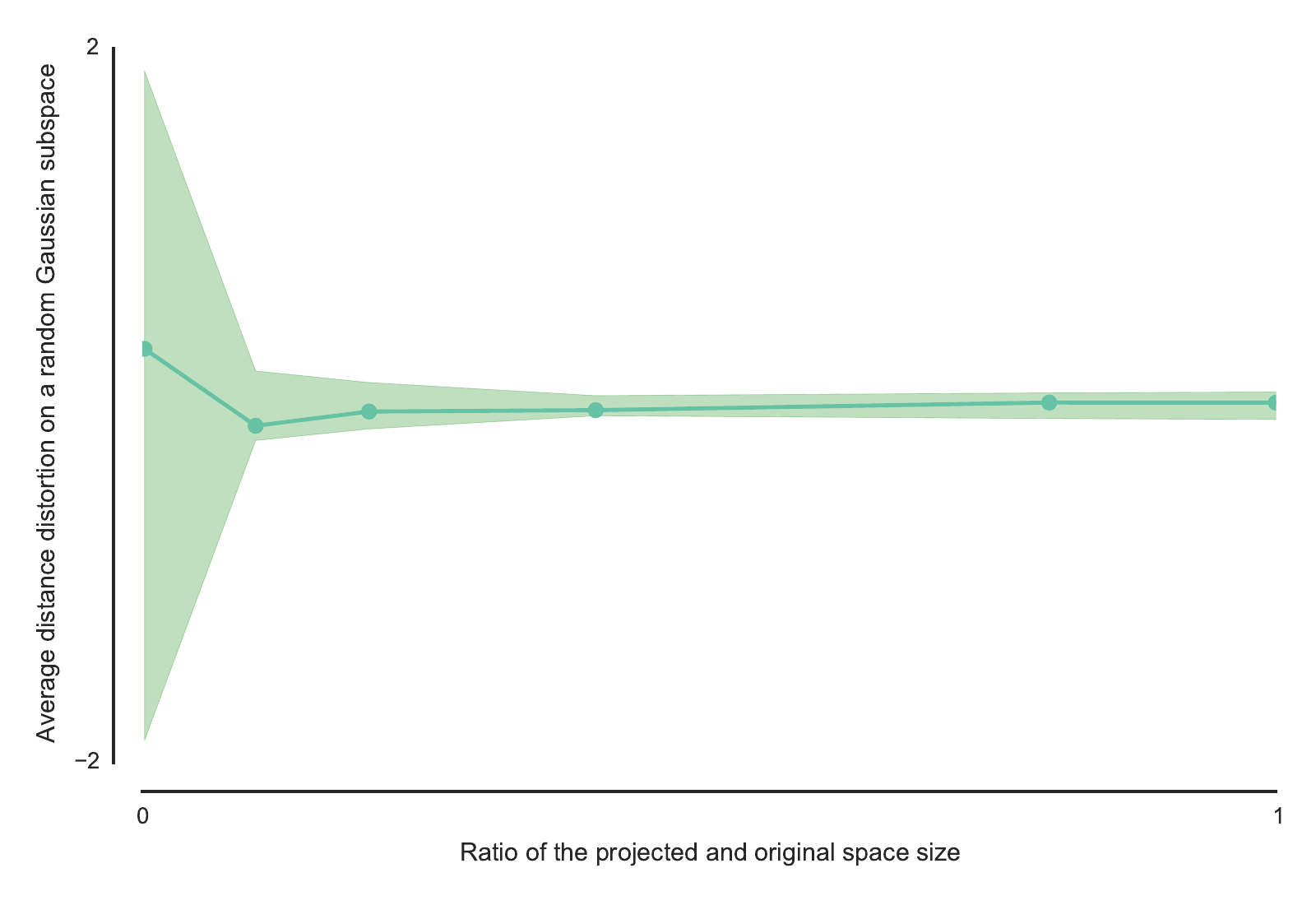}
\caption{Given a set of samples $(x^i)_{i=0}^{2000} \in \mathbb{R}^{500}$ drawn
from a Gaussian distribution, a few random projections preserve on average the
distance between pairs of points up to a distortion $\epsilon$.}
\label{fig:rp-distortion}
\end{figure}

\subsection{Kernel functions}
\label{subsec:kernels}

A kernel function $k: \mathcal{X} \times \mathcal{X} \rightarrow \mathbb{R}$
computes the similarity between pairs of samples (usually in the input space).
Machine learning algorithms relying solely on dot products, such as support
vector machine~\cite{cortes1995support} or principal components
analysis~\cite{jolliffe2002principal}, are indeed using the linear kernel
$k(x^i, x^j) = {x^i}^T x^j$. We can kernelize these algorithms by replacing
their dot product with a kernel presented in Table~\ref{tab:kernels}. This is
the so called kernel trick~\cite{scholkopf2001learning}. Kernel functions define
non linear projection schemes lifting the original space to the one defined by
the chosen kernel. It has been used in classification~\cite{hsu2003practical}, in
regression~\cite{jaakkola1999probabilistic} and in
clustering~\cite{scholkopf1997kernel}. Random
kernels~\cite{rahimi2007random,rahimi2009weighted} can be used to compress the
input space.

\begin{table}
\caption{Some common kernel functions between two vectors $x^i$ and $x^j$.}
\label{tab:kernels}
\centering
\begin{tabular}{@{}ll@{}}
\toprule
Kernels & \\
\midrule
Linear kernel & $k(x^i, x^j) = {x^i}^T x^j$ \\
Polynomial & $k(x^i, x^j) = ({x^i}^T x^j + r)^d$ \\
Gaussian radial basis function & $k(x^i, x^j) = \exp(-\gamma ||x^i - x^j||^2_2)$ \\
Hyperbolic tangent & $k(x^i, x^j) = \tanh(\kappa {x^i}^T x^j + r)$ \\
\bottomrule
\end{tabular}
\end{table}

The task shown in Figure~\ref{subfig:kernel-linear} requires to classify points
belonging to one of two interleaved moons. Given the non linearities, we can not
linearly separate both classes as illustrated in
Figure~\ref{subfig:kernel-linear} with a (linear) support vector machine. By
lifting the linear kernel to the radial basis function kernel, the support
vector machine algorithm now separates both classes as shown in
Figure~\ref{subfig:kernel-rbf}. Effectively, kernel functions enable machine
learning algorithms to handle a wide varieties of structured and unstructured
data such as sequences, graphs, texts or vectors through an appropriate choice
and design of kernel functions.

\begin{figure}
\centering
\subfloat[Classification task]{{\includegraphics[width=0.5\textwidth]{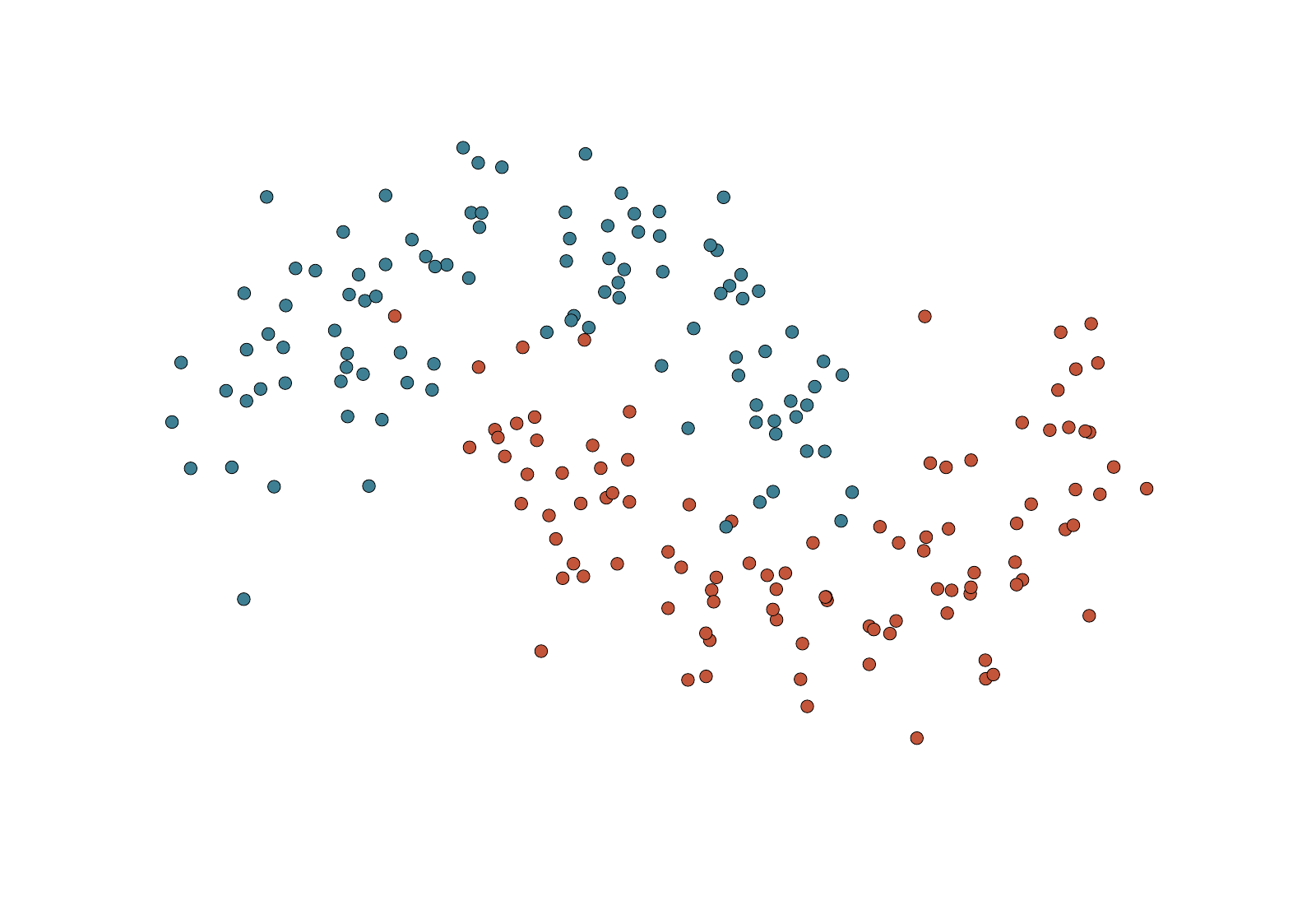}
\label{subfig:kernel-task}}} \\
\subfloat[Linear SVM]{{\includegraphics[width=0.5\textwidth]{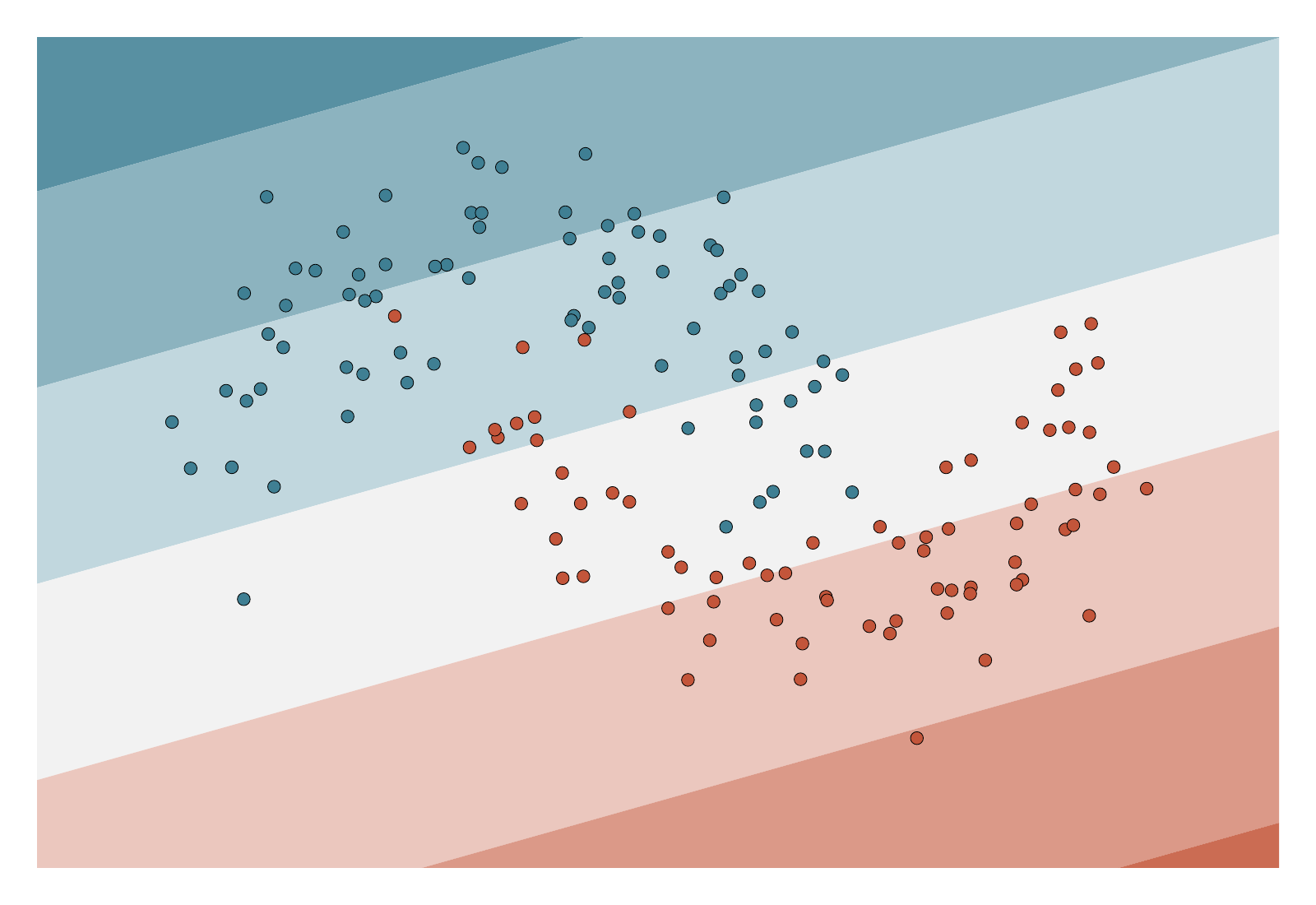}
\label{subfig:kernel-linear}}}
\subfloat[Radial basis function SVM]{{\includegraphics[width=0.5\textwidth]{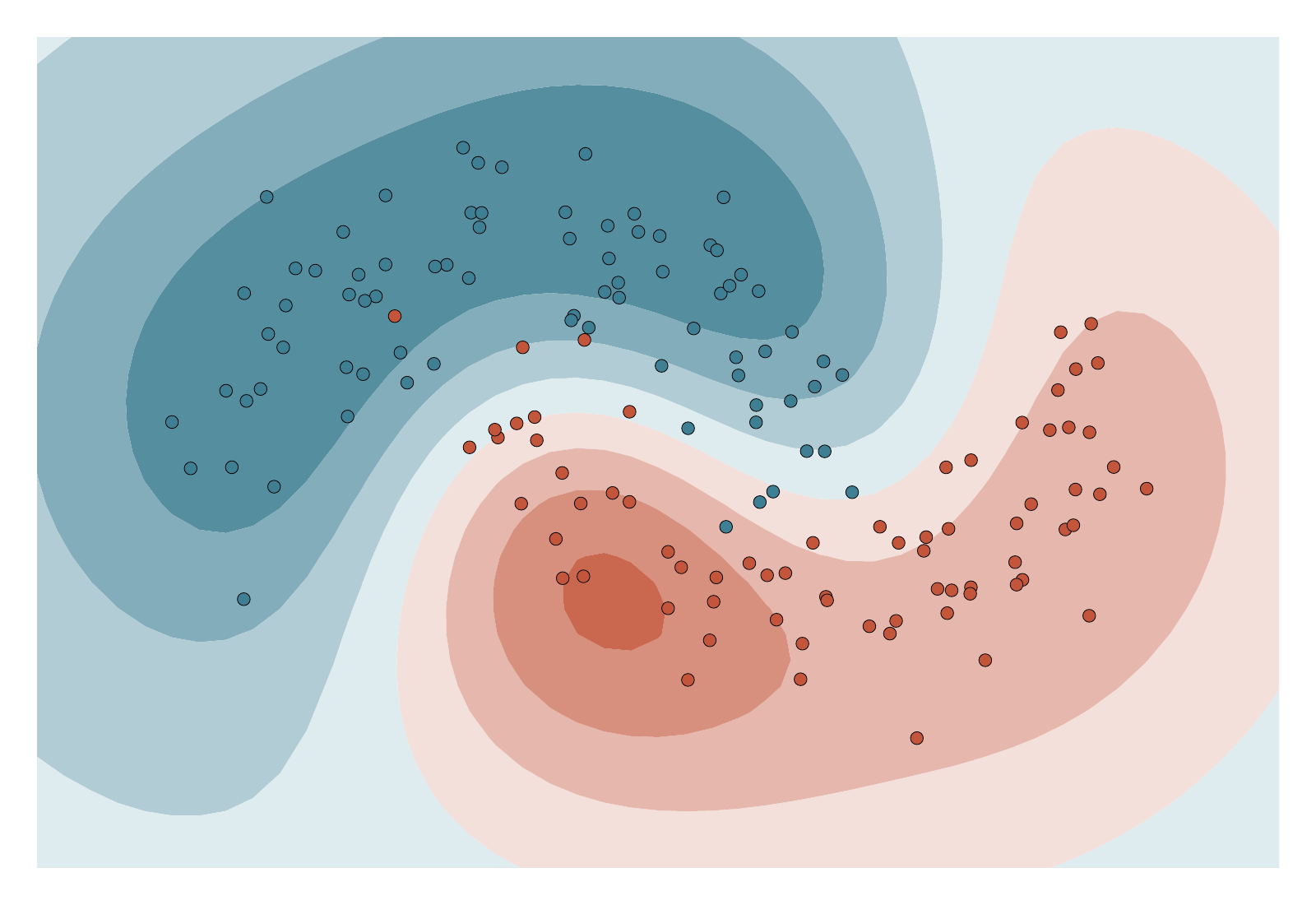}
\label{subfig:kernel-rbf}}}
\caption{The classification task consists in discriminating points belonging to
one of the two interleaved moons (the blue or the red dots). Given the non
linearities of the data, a linear support vector machine is not able to find a
hyperplane separating both classes as shown in
Figure~\ref{subfig:kernel-linear}. By lifting the original input
space through the radial basis function kernel, we are now able to separate both
classes as illustrated in Figure~\ref{subfig:kernel-rbf} retrieving the
interleaved moons.}
\label{fig:kernels}
\end{figure}


\chapter{Decision trees}\label{ch:tree}

\begin{remark}{Outline}
Decision trees are non parametric supervised learning models mapping the input
space to the output space. The model is a hierarchical structure made of test
and leaf nodes. Starting at the root node, the top of the tree, the test nodes
lead the reasoning through the tree structure until reaching a leaf node
outputting a prediction. In this chapter, we first describe the decision tree
model and show how to predict unseen samples. Then, we present the methods and
the techniques to grow and to prune these tree structures. We also introduce how
to interpret a decision tree model to gain insights on the studied systems and
phenomena.
\end{remark}

A decision tree is comparable to a human reasoning organized through a
hierarchical set of yes/no questions. As in medical diagnosis, an expert (here
the doctor) diagnoses the patient state (``Is the patient healthy or sick?'') by
screening the patient body and by retrieving important past patient history through
a directed set of questions. We can view each step of the reasoning as a branch
of a decision tree structure. Each answer leads either to another question
refining a set of hypotheses or finally to a conclusion, a prediction.

The binary questions at test nodes can target binary variables, like ``Do you
smoke?'', categorical variables, like ``Do you prefer pear or apple to orange or
lemon?'', or numerical variables, like ``Is the outside temperature below 25
degree Celsius (77 degree Fahrenheit)?''. Note that we can formulate multi-way
questions as a set of binary questions. For instance, the multi-way question
``Do you want to eat a pear, a peach or an apple?'' is equivalent to ask
sequentially ``Do you want to eat a pear or one fruit among peach and apple?'',
then you would also ask ``Do you want to eat a peach or an apple?'' if you
answered ``a peach or an apple''.

With only numerical input variables, questions that are typically asked are in
the form ``Is the value of variable $x$ lower than a given value?''. The
decision tree is then a geometric structure partionning the input space into a
recursive set of $p$-dimensional (hyper)rectangles (also called $p$-orthotopes). The root node first divides
the whole input space into two half-spaces. Each of those
may further be divided into smaller (hyper)rectangles. The
partition structure highlights the hierarchical nature of a decision tree. An
artistic example of such partitioning in a two dimensional space is the
``Composition II in Red, Blue, and Yellow'' by Piet Mondrian\footnote{Piet
Mondrian (1872-1944) is a painter famous for his grid-based paintings
partitioning the tableau through black lines into colored rectangles usually
blue, red, yellow and white} shown in Figure~\ref{fig:mondrian-comp-2}. Here,
Piet Mondrian divides hierarchically the whole painting into colored rectangles
through heavy thick black lines. Each black line is conceptually a testing node
of a decision tree, while the colored rectangles would be the predictions of
leaves node.

\begin{figure}
\centering
\begin{tikzpicture}[line width=0cm, scale=0.5]
\definecolor{mondrianblue}{RGB} {10,70,135}
\definecolor{mondrianred}{RGB} {210,20,25}
\definecolor{mondrianyellow}{RGB} {240,220,95}

\draw[fill=black] (0,0) rectangle (11.37, 11.37);

\draw[fill=mondrianblue] (0,0) rectangle (2.4, 2.87);
\draw[fill=white] (0,3.16) rectangle (2.4,7.3);
\draw[fill=white] (0,8.0) rectangle (2.4,11.37);

\draw[fill=mondrianred] (2.77,3.17) rectangle (11.37,11.37);

\draw[fill=white] (2.77,0) rectangle (10,2.87);

\draw[fill=mondrianyellow] (10.45,0) rectangle (11.37,1.21);
\draw[fill=white] (10.45,1.73) rectangle (11.37,2.87);
\end{tikzpicture}
\caption{Reproduction of ``Composition II in Red, Blue, and Yellow'' of Piet
Mondrian.}
\label{fig:mondrian-comp-2}
\end{figure}


Decision trees are popular machine learning model, because
of several nice properties:
\begin{itemize}

\item  The hierarchical nature of the model takes into account non linear
effects between the inputs and outputs, as well as conditional dependencies among inputs and outputs.

\item Growing a decision tree is computationally fast.

\item Decision tree techniques work with heterogeneous data combining binary,
categorical or numerical input variables.

\item Decision trees are interpretable models giving insights on the
relationship between the inputs and the outputs.

\item The tree training algorithm can be adapted to handle missing input
values~\cite{friedman1977recursive,breiman1984classification,quinlan1989unknown}.

\end{itemize}

In Section~\ref{sec:dt-model}, we present the structure of such models and how
to exploit these to predict unseen samples. We show in
Section~\ref{sec:dt-growth} how to train a decision tree model. In
Section~\ref{sec:dt-pruning}, we describe  the techniques used to prune a fully
grown decision tree to the right size: too shallow trees tend to under-fit the
data as they might lack predicting power, while large trees might overfit the
data as they are too complex. In Section~\ref{sec:dt-interpret}, we show how to
interpret a decision tree to gain insights over the input-output relationships:
(i) through input variable importance measures and (ii) through the conversion
of the tree structure to a set of rules. In Section~\ref{sec:mo-trees}, we show
how to extend decision trees to handle multi-output tasks.

\section{Decision tree model}\label{sec:dt-model}


The binary decision tree model is a  tree structure built by recursively
partitioning the input space. The root node is the node at the top of the tree.
We distinguish two types of nodes: (i) the test nodes, also called internal
nodes or branching nodes, and (ii) the leaves outputting predictions, also
called external nodes or terminal nodes. A test node $N_t$ has two children
called the left child and the right child; it furthermore has a splitting rule
$s_t$ testing whether or not a sample belongs to its left or right child. For a
continuous or categorical ordered input, the splitting rules are typically of
the form $s_t(x) = x_{F_t} \leq \tau_t$ testing whether or not the input value
$x_{F_t}$ is smaller or equal to a constant $\tau_t$. For a binary or
categorical input, the splitting rule is of the form $s_t(x) = x_{F_t} \in B_t$
testing whether or not the input value $x_{F_t}$ belongs to the subset of values
$B_t$.

The decision tree of Figure~\ref{fig:dt-structure} first splits  the input space
into two disjoint partitions $A_2 = \{x: x_{F_1} \leq \tau_1\}$ and $A_3 = \{x:
x_{F_2} > \tau_1\}$ at the root node $N_1$. The node $N_1$ has two children:
$N_2$ the left child and $N_3$ the right child. The node $N_2$ is a leaf and
thus a terminal partition. The test node $N_3$ further splits the input space
based on a categorical set $B_3$ into partitions $A_4 = \{x \in A_3, x_{F_3} \in
B_3\}$ and $A_5 = \{x \in A_3, x_{F_3} \in B_3\}$ with $A_3 = A_4 \cup A_5$. The
input space is further split with 3 more testing nodes with two continuous
splitting rules ($N_4$, $N_5$) or one categorical splitting rule $N_8$. The
remaining nodes $N_6$, $N_7$, $N_9$, $N_{10}$ and $N_{11}$ are leaf nodes. In
total, the decision tree defines a partition of the input space into 11
(hyper)rectangles ($A_1,\ldots,A_{11}$). A one to one relationship exists
between the leaf nodes and the subsets of this input space partition. Note that
all partitions of the input space are not expressible as a decision tree
structure.

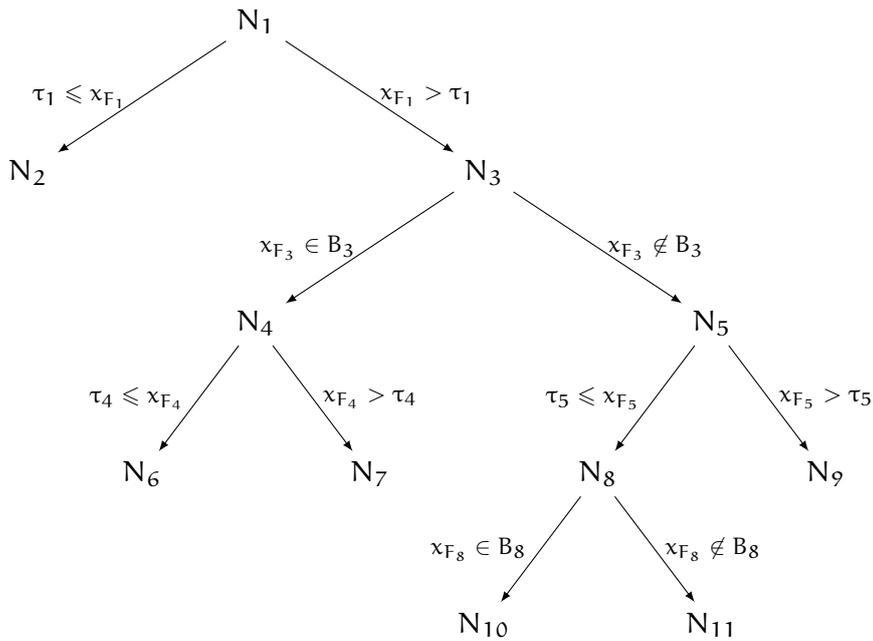
\begin{figure}[h]
\centering
\begin{tikzpicture}
[level 1/.style={sibling distance=60.0mm},
 level 2/.style={sibling distance=60.0mm},
 level 3/.style={sibling distance=30.0mm},
 level 4/.style={sibling distance=30.0mm},
 leaf/.style={rectangle},
 test/.style={rectangle},
 edge from parent/.style = {draw, -latex, font=\footnotesize},
 level distance=20mm]
\node[test] {$N_1$}
 child {node[leaf] {$N_2$} edge from parent node[left] {$\tau_1 \leq x_{F_1}\,$}}
 child {node[test] {$N_3$}
     child {node[test] {$N_4$}
         child {node[leaf] {$N_6$} edge from parent node[left] {$\tau_4 \leq x_{F_4}\,$}}
         child {node[leaf] {$N_7$} edge from parent node[right] {$x_{F_4} > \tau_4$}}
         edge from parent node[left] {$x_{F_3}\in B_3\,$}}
     child {node[test] {$N_5$}
         child {node[test] {$N_8$}
             child {node[leaf] {$N_{10}$} edge from parent node[left] {$x_{F_8} \in B_8\,$}}
             child {node[leaf] {$N_{11}$} edge from parent node[right] {$x_{F_8} \not\in B_8$}}
             edge from parent node[left] {$\tau_5 \leq x_{F_5}\,$}}
         child {node[leaf] {$N_9$} edge from parent node[right] {$x_{F_5} > \tau_5$}}
         edge from parent node[right] {$x_{F_3} \not\in B_3$}}
     edge from parent node[right] {$x_{F_1} > \tau_1$}}
 ;
\end{tikzpicture}
\caption{A binary decision tree structure containing 11 nodes: 5 test nodes
and 6 leaves nodes.}
\label{fig:dt-structure}
\end{figure}

A decision tree predicts the output of an unseen sample by following the tree
structure as described by Algorithm~\ref{algo:tree-predict}. The recursive
process starts at the root node, then the splitting rules of the testing nodes
send the sample further down the tree structure. We traverse the tree structure
until reaching a leaf, a terminal node, outputting its associated prediction
value. A decision tree model $\hat{f}: \mathcal{X} \rightarrow \mathcal{Y}$ is
then expressible as a sum of indicator functions, denoted by $1$, over the $|T|$
tree nodes:
\begin{equation}
\hat{f}(x) = \sum_{t=1}^{|T|} \beta_t 1(x \in A_t) 1(t\text{ is a leaf})
\end{equation}
\noindent where $\beta_t$ is the prediction associated to a node $N_t$.
The computational complexity of predicting an unseen sample is thus proportional
to the depth of the followed branch.

\begin{algorithm}
\caption{Predict a sample $x$ with a decision tree.}
\label{algo:tree-predict}
\begin{algorithmic}[1]
\Function{tree\_predict}{$tree$, $x$}
    \State $t \leftarrow $ Index of the root node of the tree.
    \While{the node $N_t$ is not a leaf.}
        \If{The splitting rule $s_t(x)$ of node $N_t$ is true}
            \State $t \leftarrow$ Index of the left child of node $N_t$.
        \Else
            \State $t \leftarrow$ Index of the right child node $N_t$.
        \EndIf
    \EndWhile
    \State \Return $\beta_t$.
\EndFunction
\end{algorithmic}
\end{algorithm}

\begin{remark}{Binary versus multi-way partitions}
Decision trees do not have to respect a binary tree structure. Each one of their internal nodes could have more
than two children with multi-way splitting rules. However such multi-way partitions
are equivalent to a set of consecutive binary partitions. Furthermore, multi-way
splits tends to quickly fragment the training data during the decision tree
growth impeding its generalization performance. In practice, decision trees
are therefore most of the time restricted to be binary.~\cite{friedman2001elements}
\end{remark}

\section{Growing decision trees}\label{sec:dt-growth}

We grow a decision tree using a set of samples of input-output pairs. Tree
growth starts at the root node and divides recursively the input space through
splitting rules until we reach a stopping criterion such as a maximal tree depth
or minimum sample size in a node. For each new testing node, we search for the
best splitting rule to divide the sample set at that node into two subsets. We
hope to make partitions ``purer'' at each new division. The decision tree
growing procedure has three main elements:
\begin{itemize}
\item a splitting rule search algorithm (see Section~\ref{subsec:node-split});
\item stop splitting criteria (see Section~\ref{subsec:stop-rules}) which dictate
whenever we stop the development of a branch;
\item a leaf labelling rule (see Section~\ref{subsec:leaves-labelling})
determining the output value of a terminal partition.
\end{itemize}
\noindent Putting all those key elements together leads to the decision tree
growing algorithm shown in Algorithm~\ref{algo:tree-grow}.

\begin{algorithm}
\caption{Grow a decision tree using the sample set
         $\mathcal{L}=\{(x^i,y^i)\in\mathcal{X}\times\mathcal{Y}\}_{i=1}^n$.}
\label{algo:tree-grow}
\begin{algorithmic}[1]
\Function{grow\_tree}{$\mathcal{L}$}
    \State $q = $\Call{EmptyQueue}{\null}
    \State Initialize the tree structure with the root node ($N_1$)
    \State $q$.\Call{enqueue}{$(1,\mathcal{L})$}.
    \While{$q$ is not empty \label{alg:line-stop-rules}}
        \State $(t, \mathcal{L}_t) \leftarrow q$.\Call{dequeue}{~}
        \If{Node $N_t$ satisfies one stopping criterion}
            \State Label node $t$ as a \emph{leaf} using samples $\mathcal{L}_t$
                   \label{alg-line:leaf-labelling}
        \Else
            \State Search for the best splitting rule $s_t$ using samples
                   $\mathcal{L}_t$. \label{alg-line:split-search}
            \State Split $\mathcal{L}_t$ into $\mathcal{L}_{t,r}$ and
                   $\mathcal{L}_{t,l}$ using the splitting rule $s_t$.
                   \label{alg-line:sample-partitionning}
            \State Label node $t$ as a \emph{test node} with the splitting rule $s_t$.
            \State $q$.\Call{enqueue}{($2t,\mathcal{L}_{t,l}$)}.
            \State $q$.\Call{enqueue}{($2t+1,\mathcal{L}_{t,r}$)}.
        \EndIf
    \EndWhile
    \State \Return The grown decision $tree$.
\EndFunction
\end{algorithmic}
\end{algorithm}

\subsection{Search among node splitting rules}
\label{subsec:node-split}

During tree growing, we recursively partition the input space $\mathcal{X}$ and
the sample set $\mathcal{L}=\{(x^i, y^i) \in \mathcal{X} \times
\mathcal{Y}\}_{i=1}^n$. At each testing node $t$, we split the sample set
$\mathcal{L}_t$ reaching node $t$ into two smaller subsets $\mathcal{L}_{t,l}$
and $\mathcal{L}_{t,r}$ using a binary splitting rule $s_t: \mathcal{X}
\rightarrow \{0, 1\}$ as shown in Figure~\ref{fig:node-splitting-rule}. This
raises two questions (i) what is the set of available binary and axis-wise
splitting rules $\Omega(\mathcal{L}_t)$ given a sample set $\mathcal{L}_t$ and
(ii) how to select the best one among all of them so as to make the descendants
``purer'' than the parent node.

\begin{figure}
\centering
\begin{tikzpicture}[auto, scale=0.7]

\tikzstyle{node} = [very thick, fill=white];
\tikzstyle{child} = [dashdotted, gray];
\tikzstyle{arrow} = [very thick, ->];
\tikzstyle{positive} = [fill=orange];
\tikzstyle{negative} = [fill=black];


\draw[arrow] (6,7.5) -- ({3.75}, {4.75});
\draw (3,5.5) node {$s_t(x)=1$};

\draw[arrow] (6,7.5) -- ({8.25}, {4.75});
\draw ({3 + 2*(6 - 3)}, 5.5) node {$s_t(x)=0$};

\draw[node] (6cm,7.5cm) circle (2.5cm);
\draw (5.5, 9.25) node {$\mathcal{L}_{t}$};

\foreach \i in {0,...,4}
    \foreach \j in {0,...,2}
        \draw[positive]
            ({4.25+0.75*\i},{8.375 - 0.75*\j})
            rectangle
            ({4.25+0.5+0.75*\i},{8.375 + 0.5 - 0.75*\j});

\foreach \i in {0,...,2}
    \foreach \j in {2,...,2}
        \draw[positive]
            ({4.25+0.75*\i},{8.375 - 0.75*\j})
            rectangle
            ({4.25+0.5+0.75*\i},{8.375 + 0.5 - 0.75*\j});

\foreach \i in {2,...,4}
    \foreach \j in {2,...,2}
        \draw[negative]
            ({4.25+0.75*\i},{8.375 - 0.75*\j})
            rectangle
            ({4.25+0.5+0.75*\i},{8.375 + 0.5 - 0.75*\j});

\foreach \i in {0,...,4}
    \foreach \j in {3,...,3}
        \draw[negative]
            ({4.25+0.75*\i},{8.375 - 0.75*\j})
            rectangle
            ({4.25+0.5+0.75*\i},{8.375 + 0.5 - 0.75*\j});

\draw[child,arrow] (2.5,2.5) -- ({2.5-(6-3.75)}, {2.5-(7.5-4.75)});
\draw[child,arrow] (2.5,2.5) -- ({2.5-(6-8.25)}, {2.5-(7.5- 4.75)});

\draw[child,node] (2.5cm,2.5cm) circle (2.5cm);
\draw ({2.5 - (6 - 5.5)}, {2.5 + (9.25 - 7.5)}) node {$\mathcal{L}_{t,l}$};

\foreach \i in {0,...,4}
    \foreach \j in {0,...,0}
        \draw[positive]
            ({0.75+0.75*\i},{3.375 - 0.75*\j})
            rectangle
            ({0.75+0.5+0.75*\i},{3.375 + 0.5 - 0.75*\j});

\foreach \i in {0,...,3}
    \foreach \j in {1,...,1}
        \draw[positive]
            ({0.75+0.75*\i},{3.375 - 0.75*\j})
            rectangle
            ({0.75+0.5+0.75*\i},{3.375 + 0.5 - 0.75*\j});

\foreach \i in {4,...,4}
    \foreach \j in {1,...,1}
        \draw[negative]
            ({0.75+0.75*\i},{3.375 - 0.75*\j})
            rectangle
            ({0.75+0.5+0.75*\i},{3.375 + 0.5 - 0.75*\j});

\foreach \i in {0,...,0}
    \foreach \j in {2,...,2}
        \draw[negative]
            ({0.75+0.75*\i},{3.375 - 0.75*\j})
            rectangle
            ({0.75+0.5+0.75*\i},{3.375 + 0.5 - 0.75*\j});

\draw[child,arrow] (9.5,2.5) -- ({9.5-(6-3.75)}, {2.5-(7.5-4.75)});
\draw[child,arrow] (9.5,2.5) -- ({9.5-(6-8.25)}, {2.5-(7.5- 4.75)});

\draw[child,node] (9.5cm,2.5cm) circle (2.5cm);
\draw ({9.5 - (6 - 5.5)}, {2.5 + (9.25 - 7.5)}) node {$\mathcal{L}_{t,r}$};

\foreach \i in {0,...,2}
    \foreach \j in {0,...,0}
        \draw[positive]
            ({7.75+0.75*\i},{3.375 - 0.75*\j})
            rectangle
            ({7.75+0.5+0.75*\i},{3.375 + 0.5 - 0.75*\j});

\foreach \i in {2,...,4}
    \foreach \j in {0,...,0}
        \draw[negative]
            ({7.75+0.75*\i},{3.375 - 0.75*\j})
            rectangle
            ({7.75+0.5+0.75*\i},{3.375 + 0.5 - 0.75*\j});

\foreach \i in {0,...,2}
    \foreach \j in {1,...,1}
        \draw[negative]
            ({7.75+0.75*\i},{3.375 - 0.75*\j})
            rectangle
            ({7.75+0.5+0.75*\i},{3.375 + 0.5 - 0.75*\j});

\end{tikzpicture}
\caption{During tree growing (here for a binary classification task with
orange and black classes), we search for the best splitting rule $s_t$ to divide
the sample set $\mathcal{L}_t$ reaching node $t$ into a left $\mathcal{L}_{t,l}$
and a right $\mathcal{L}_{t,r}$ subsets.}
\label{fig:node-splitting-rule}
\end{figure}
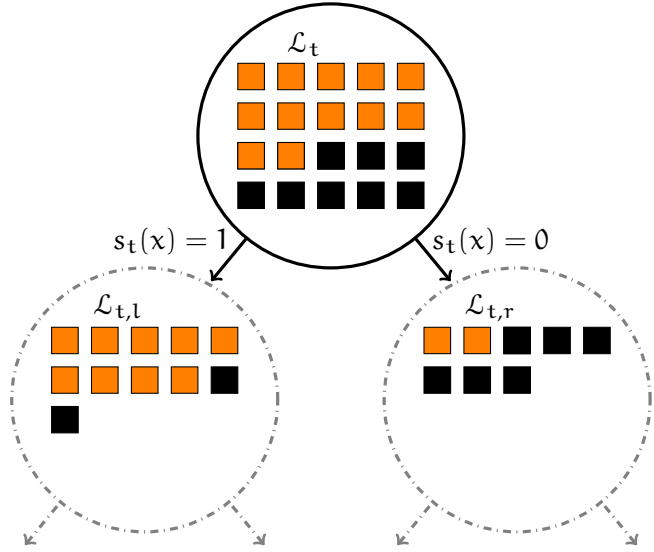

For a variable $x_j \in \mathcal{X}_{j}$ of cardinality $c_j$, the associated family  $Q(x_j)$ of
splitting rules consists of all possible subsets of
 $\mathcal{X}_j$:
\begin{equation}
Q(x_j) = \left\{s_{t}(x) \equiv 1(x_{j} \in \mathcal{X}') :  \mathcal{X}' \subset \mathcal{X}_j\right\}.
\end{equation}
\noindent The size of the splitting rule family is increasing exponentially with
the total number of possible values ($|Q(x_j)| = 2^{|\mathcal{X}_j|-1}$).

If the possible values of the variable $x_j$ are also ordered, we can reduce the
size of the splitting rule family $Q(x_j)$ from an exponential number of
candidates to a linear number of splitting rules ($|Q(x_j)| =
|\mathcal{X}_j|-1$):
\begin{align}
Q(x_j) =&\Bigg\{s_{t}(x) \equiv  1(x_j \leq \tau) : \tau \in \mathcal{X}_j \Bigg\}.
\label{eq:family-split-rules}
\end{align}

With a numerical variable $x_j \in \mathbb{R}$, the number of possible
splitting rules is infinite. However, the training set is of finite size. We
consider a family of splitting rules similar to
Equation~\ref{eq:family-split-rules} with the possible values $\tilde{\mathcal{X}}_j$ available in
the training set.

The selected splitting rule $s_t$ should split the sample set $\mathcal{L}_t$
such that the following conditions hold: the sample sets $\mathcal{L}_{t,r}$ and
$\mathcal{L}_{t,l}$ are non empty ($\mathcal{L}_{t,r} \not= \emptyset$,
$\mathcal{L}_{t,l} \not= \emptyset$) forming a disjoint ($\mathcal{L}_{t,l} \cap
\mathcal{L}_{t,r} = \emptyset$) and non overlapping partition ($\mathcal{L}_t =
\mathcal{L}_{t,l} \cup \mathcal{L}_{t,r}$) of the original sample set
$\mathcal{L}_t$. During the expansion of a test node $t$ into a left child and a
right child, we thus select a splitting rule $s_t$ among all possible splitting
rules $\Omega(\mathcal{L}_t)$:
\begin{align}
s_t \in \Omega(\mathcal{L}_t) = &
\Bigg\{s : s \in \bigcup_{j\in\{1,\ldots,p\}} Q(x_j), \nonumber \\
& \quad \mathcal{L}_{t,l} = \{(x,y)\in\mathcal{L}_{t}: s(x) = 1\}, \nonumber \\
& \quad \mathcal{L}_{t,r} = \{(x,y)\in\mathcal{L}_{t}: s(x) = 0\}, \nonumber \\
& \quad \mathcal{L}_{t,l}\not=\emptyset, \mathcal{L}_{t,r}\not=\emptyset \Bigg\}.
\end{align}

We strive to select the ``best'' possible local splitting rule $s_t$ for the
split at node $t$ leading ideally to good generalization performance. However,
it is impossible to minimize directly the generalization error. Thus instead, we
are going to minimize the resubstitution error, the error over the training set.
However, obtaining such a tree is trivial and it has poor generalization
performance. A more meaningful criterion is to search for the smallest tree
minimizing the resubstitution error. However, this optimization problem is a
NP-complete~\cite{hyafil1976constructing}. Instead, we greedily grow the tree by
maximizing the reduction of an impurity measure function $I : (\mathcal{X}
\times \mathcal{Y}) \times \ldots \times (\mathcal{X} \times \mathcal{Y})
\rightarrow \mathbb{R}$. Mathematically, we define the impurity reduction
$\Delta I$ obtained by dividing the sample set $\mathcal{L}_t$ into two
partitions $(\mathcal{L}_{t,r}, \mathcal{L}_{t,l})$ as

\begin{equation}
\Delta I(\mathcal{L}_t, \mathcal{L}_{t,l}, \mathcal{L}_{t,r}) =
I(\mathcal{L}_t)
- \frac{|\mathcal{L}_{t,l}|}{|\mathcal{L}_t|} I(\mathcal{L}_{t,l})
- \frac{|\mathcal{L}_{t,r}|}{|\mathcal{L}_t|} I(\mathcal{L}_{t,r}).
\label{eq:impurity-reduction-2}
\end{equation}

The splitting rule selection problem (line \ref{alg-line:split-search} of the
tree growing Algorithm~\ref{algo:tree-grow}) is thus written as
\begin{equation}
s_t = \arg\max_{s \in \Omega(\mathcal{L}_t)}
\Delta I(\mathcal{L}_t, \mathcal{L}_{t,l}, \mathcal{L}_{t,r})
\end{equation}

Intuitively, the impurity $I$ should be minimal whenever all samples have the
same output value. The node is then said to be ``pure''

Given the additivity of the impurity reduction, the best split at node $t$
according to the impurity reduction computed locally is also the best split at
this node in terms of global impurity. The remaining tree impurity $Imp(T)$ of a
tree $T$ is the sum of the remaining impurities of all leaf nodes:
\begin{equation}
Imp(T) = \sum_{t \in T} p(t) I(\mathcal{L}_t) 1(t \text{ is a leaf})
\end{equation}
\noindent with $p(t) = |\mathcal{L}_t| / |\mathcal{L}|$ the proportion of
samples reaching node $t$. If we develop the leaf node $t$ into a test node, it
leads to a new tree $T'$ with a new splitting rule $s_t$  having a left $t_l$
and a right $t_r$ children node. The overall impurity decreases from the
original tree $T$ to the bigger tree $T'$ and the impurity decrease is given by
\begin{align}
Imp(T) - Imp(T') &= p(t) I(\mathcal{L}_t) - p(t_l) I(\mathcal{L}_{t,l})
- p(t_r) I(\mathcal{L}_{t,r}) \\
&= p(t) \Delta I(\mathcal{L}_t,\mathcal{L}_{t,l},\mathcal{L}_{t,r})
\end{align}
\noindent Thus, the decision tree growing procedure is a repeated process aiming
at decreasing the total impurity as quickly as possible by suitably choosing the local splitting rules.

In classification, a node is pure if all samples have the same class. Given a
sample set $\mathcal{L}_t$ reaching node $t$, we will denote by
$p(y=l|t)=\frac{1}{|\mathcal{L}_t|}\sum_{(x,y)\in \mathcal{L}_t} 1(y = l)$ the
proportion of samples reaching node $t$ having the class $l$. A node will be
pure if $p(y=l|t)$ is equal to 1 for a class $l$ and zero for the others. The
node impurity should increase whenever samples with different classes are mixed
together. We require that the impurity measure $I$ in classification satisfies
the following properties:
\begin{enumerate}
\item $I$ is minimal only whenever the node is pure $p(y=l|t)=1$ and
$p(y=m|t)=0 \quad \forall m \in \{1,\ldots,l-1, l+1,\ldots, k\}$;
\item $I$ is maximal only whenever $p(y=l|t)=1/k \quad \forall l \in \{1,\ldots,k\}$;
\item $I$ is a symmetric function with respect to the class proportions
$p(y=1|t),\ldots, p(y=k|t)$ so as not to favor any class.
\end{enumerate}

A first function satisfying those three properties is the misclassification
error rate:
\begin{equation}
\text{Error rate}(\mathcal{L}_t) = 1 - \max_{l \in \{1,\ldots,k\}} p(y=l|t).
\end{equation}
\noindent However, this is not an appropriate criterion. In practice, many
candidate splitting rules have often the same error rate reduction, especially
in the multi-class classification where only the number of samples of the
majority class matters.

Consider the following split selection problem, we have a binary classification
task with 500 negative and 500 positive samples. The first splitting rule leads
to a left child with 125 positive and 375 negative samples, while the right
child has 375 positive and 125 negative samples. The misclassification error
reduction is thus given by:
\[
\frac{500}{1000}-2\frac{500}{1000}(1-\frac{375}{500})=0.25.
\]
Now, let's consider a second splitting rule leading to a pure node with 250
positive samples and another node with 250 positive and 500 negative samples.
This second split has the same impurity reduction score leading to a tie:
\[
\frac{500}{1000}-\frac{750}{1000}(1-\frac{500}{750})-\frac{250}{1000}(1-\frac{250}{250})= 0.25.
\]

The misclassification does not discriminate enough node purity as it varies
linearly with the fraction of the majority class. To solve this issue, we add a
fourth required properties to classification impurity
functions~\cite{breiman1984classification}:
\begin{enumerate}
\setcounter{enumi}{3}
\item $I$ must be a strictly concave function with respect to the class proportion
$p(y=l|t)$.
\end{enumerate}
\noindent This fourth property will increase the granularity of impurity
reduction scores leading fewer ties in splitting rule scores. It will reduce the
tree instability with respect to the training set.

Two more suitable classification impurity criteria satisfying all four
properties are the Gini index, a statistical dispersion measure, and the
entropy measure, an information theory measure.
\begin{align}
Gini(\mathcal{L}_t) &= \sum_{l=1}^k p(y=l|t) (1 - p(y=l|t) ) \\
Entropy(\mathcal{L}_t) &= - \sum_{l=1}^k p(y=l|t)  \log{p(y=l|t) }
\end{align}
\noindent By minimizing the Gini index, we minimize the class dispersion. While
selecting the splitting rule based on the entropy measure minimizes the
unpredictability of the target, the remaining unexplained information of the
output variable.

Given the strict concavity of the Gini index and entropy, we can now
discriminate the two splits of the previous example. The first split would have
an impurity reduction with the Gini index of $0.0625$ and the entropy of
$\approx0.131$. The second split with a pure node would have an impurity
reduction of $\approx0.083$ with the Gini index and of $\approx0.216$ with the
entropy. Based on these criteria, both measures would choose the second split.

In regression tasks, we consider a node as pure if the dispersion of the output
values is zero. We require the impurity regression criterion to be zero only if
all output values have the same value. A common dispersion measure used to grow
regression trees is the empirical variance
\begin{align}
Variance(\mathcal{L}_t) =
\frac{1}{|\mathcal{L}_t|}\sum_{(x,y)\in\mathcal{L}_t}
\left(y - \bar{y}\right)^2,
\text{ with } \bar{y} = \frac{1}{|\mathcal{L}_t|}\sum_{(x,y)\in\mathcal{L}_t} y
\end{align}
\noindent By maximizing the variance reduction, we are searching for a splitting
rule minimizing the square loss $\ell(y, y') = \frac{1}{2}(y - y')^2$.
Note that the Gini index and the empirical variance lead to the same impurity
measure for binary classification tasks and multi-label classification tasks
with output classes encoded with $\{0, 1\}$ numerical variables.

\subsection{Leaf labelling rules}
\label{subsec:leaves-labelling}

When tree growing is stopped by the activation of a stop splitting criterion,
the newly created leaf needs to be labeled by an output value (see
line~\ref{alg-line:leaf-labelling} of Algorithm~\ref{algo:tree-grow}). It is a
constant $\beta_t$ chosen to minimize a given loss function $\ell$ over the
samples $\mathcal{L}_t = \{(x, y) \in (\mathcal{X}, \mathcal{Y})\}$ reaching
the node $t$:
\begin{align}
\beta_{t} = \arg\min_{\beta}  \sum_{(x, y) \in \mathcal{L}_t} \ell(y, \beta).
\end{align}

In regression tasks, we want to find the constant $\beta_t$ minimizing
the square loss in single output regression :
\begin{align}
\beta_{t} = \arg\min_{\beta} \sum_{(x, y) \in \mathcal{L}_t} \frac{1}{2}(y - \beta_t)^2
\end{align}
\noindent By setting the first derivative to zero, we have
\begin{align}
\sum_{(x, y) \in \mathcal{L}_t} (y - \beta) = 0 \\
\beta_t = \frac{1}{|\mathcal{L}|_t} \sum_{(x, y) \in \mathcal{L}_t} y.
\end{align}
\noindent The constant leaf model minimizing the square loss is
the average output value of the samples reaching node $t$.

In classification tasks, the constant $\beta_t$ minimizing the
$0-1$ loss ($\ell_{0-1}(y, y')) = 1(y\not=y')$) is
the most frequent class. The decision tree can also be a class probability
estimator by outputting the proportion $p(l|t)$ of samples of class $l$
reaching node $t$ from the sample set $\mathcal{L}_t$.

\begin{remark}{Beyond constant leaf modeling}
The leaf labelling rule can go beyond a constant model with supervised learning
models such as linear
models~\cite{quinlan1992learning,wang1996induction,frank1998using,landwehr2005logistic},
kernel-based methods~\cite{torgo1997functional}, probabilistic
models~\cite{kohavi1996scaling} or even tree-ensemble
models~\cite{matthew2015bayesian}. It increases the modeling power of the
decision tree at the expense of computing time and new hyper-parameters.
\end{remark}

\subsection{Stop splitting criterion}
\label{subsec:stop-rules}

The tree growth at a node $t$ naturally stops if all the samples $\mathcal{L}_t$
reaching the node $t$ (i) share the same output value (zero impurity) or (ii)
share the same input values (but not necessarily the same output) as in this
case we can not find a valid split of the data. In both cases, we can not find a
splitting rule to grow the tree further due to a lack of data.

A tree developed in such ways is then said to be fully developed. The question
is ``Should we stop sooner the tree growth?''. A testing node $t$ splits the
data $\mathcal{L}_t$ into two partitions $(\mathcal{L}_{t,l},
\mathcal{L}_{t,r})$ leading to a left child node $t_l$ and a right child node
$t_r$. If we denote by $\hat{f}_t$, $\hat{f}_{t,r}$ and $\hat{f}_{t,l}$ the leaf
models that would be assigned to the nodes $t$, $t_{r}$ or $t_{l}$, we have that
the resubstition error reduction $\Delta{}Err$ associated to a loss function
$\ell$ is given by:
\begin{align}
\Delta{}Err&=
\sum_{(x,y)\in\mathcal{L}_t} \ell(y, \hat{f}_t(x)) -
\sum_{(x,y)\in\mathcal{L}_{t,r}} \ell(y, \hat{f}_{t,r}(x)) \nonumber\\
&\quad - \sum_{(x,y)\in\mathcal{L}_{t,l}} \ell(y, \hat{f}_{t,l}(x))  \\
&=\sum_{(x,y)\in\mathcal{L}_{t,r}} \left[
\ell(y, \hat{f}_t(x)) - \ell(y, \hat{f}_{t,r}(x))
\right] \nonumber\\
&\quad + \sum_{(x,y)\in\mathcal{L}_{t,l}} \left[
\ell(y, \hat{f}_t(x)) - \ell(y, \hat{f}_{t,l}(x))
\right]
\end{align}
\noindent Since we choose $\hat{f}_{t,r}$ and $\hat{f}_{t,l}$ so as to minimize
the resubstitution error on their respective training data ($\mathcal{L}_{t,l}$
and $\mathcal{L}_{t,r}$), the resubstitution error never increases through node
splitting. With only ``natural'' splitting rule, decision trees are optimally
fitting the training data.

Stop splitting criteria avoid over-fitting by stopping earlier the tree growth.
They are either based on (i) structural properties or on (ii) data statistics.
Criteria computed on the left and the right children can also discard splitting
rules, for example requiring a minimal number of samples in the left and right
children to split a node.

Structural-based stop splitting criteria regularize the tree growth by explicitly
limiting the tree complexity, by restricting for example:
\begin{itemize}
\item branch depths or
\item the total number of nodes.
\end{itemize}
\noindent In the second case, the order in which the tree nodes are split starts
to matter and can be chosen so as to maximize the total impurity reduction.

Data-based stop splitting criteria stop the tree growth if some statical
properties computed on the data used to split the node are below a threshold
such as
\begin{itemize}
\item the number of samples reaching the node,
\item the number of samples reaching the left and right children obtained after
splitting,
\item the impurity reduction or
\item the p-value of a significance test, such as a Chi-square test, testing
the independence of the split and the output variable.
\end{itemize}

\section{Right decision tree size}
\label{sec:dt-pruning}

To find the right decision tree size, there are two main families of complexity
reduction techniques, also called pruning techniques: (i) pre-pruning techniques
stop the tree growth before the tree is fully developed
(line~\ref{alg:line-stop-rules} of Algorithm~\ref{algo:tree-grow} and presented
in Section~\ref{subsec:stop-rules}) and (ii) post-pruning techniques remove tree
nodes a posteriori setting a trade-off between the tree size and the
resusbstitution error. Both approaches lead to smaller decision trees aiming to
improve generalization performance and to simplify decision tree interpretation.

Pre-pruning criteria are straightforward tools to control the decision tree
size. However, it is unclear which pruning level (or hyper-parameter values)
leads to the best generalization performance. Too ``strict'' stop splitting
criteria will grow shallow trees under-fitting the data. While too ``loose''
stop splitting criteria have the opposite effect, i.e., growing overly complex
trees over-fitting the data.

While pre-pruning techniques select the tree complexity a priori, post-pruning
techniques select the optimal complexity a posteriori. A naive approach to
post-pruning would be to build independently a sequence of decision trees with
different complexity by varying the stop splitting criteria, and then to select
the one minimizing an approximation of the generalization error such as the
error on a hold out sample set. However, this is not computationally efficient
as it re-grows each time a (new) decision tree.

Post-pruning techniques first grow a single decision tree $T$ with very loose or
no stop splitting criterion. This decision tree clearly overfits the training
data. Then, they select a posteriori a subtree $T^* \subseteq T$ among all
possible subtrees of $T$. The original decision tree is thus pruned by
collapsing nodes from the original tree into new leaf nodes. The post-pruning
method minimizes a tradeoff between the decision tree error over a sample set
$\mathcal{S}$ and a function measuring the decision tree complexity such as the
number of nodes:
\begin{equation}
T^*(\lambda) = \arg\min_{\check{T} \subset T}
\text{Error}(\mathcal{\mathcal{S}}|\check{T}) + \lambda \text{Complexity}(\check{T}).
\label{eq:tree-post-pruning}
\end{equation}

The cost complexity pruning method~\cite{breiman1984classification}, also known
as the weakest link pruning, implements Equation~\ref{eq:tree-post-pruning}
through a complexity coefficient $C_\alpha$ measuring a tradeoff between the
resubstitution error of a tree $\check{T}$ and its complexity $|\check{T}|$
defined by the number of leaves:
\begin{equation}
C_{\alpha}(\check{T}) = \text{resubstitution error}(\mathcal{L}|\check{T}) +
\alpha |\check{T}|.
\end{equation}
\noindent For each $\alpha$, there exists a unique tree $\check{T}_\alpha$
minimizing the cost complexity coefficient $C_\alpha$. Large values of $\alpha$
lead to small trees, while conversely small values of $\alpha$ allow bigger
sub-trees. For the extreme case $\alpha=0$ (resp. $\alpha=\infty$), we have the
original decision tree $T_0$ (resp. the subtree containing only the root node).
One can show~\cite{breiman1984classification} that we can sequentially obtain
the $\check{T}_\alpha$ from the original tree $T$ by removing the node with the
smallest increase in resubtitution error. We select the optimal subtree $T^*$
among all subtrees $\check{T}_\alpha \subseteq T$ by minimizing an approximation
of the generalization error using for instance cross-validation methods.

The reduced error pruning method~\cite{quinlan1987simplifying}, another post
pruning technique, splits the learning set into a training set and a
pruning set. It grows on the training set a large decision tree. During
the pruning phase, it first computes the error reduction of pruning each node and
its descendants on the pruning set, then removes greedily the node reducing
the most the error. It repeats this two steps procedure until the error on the
pruning set starts increasing.

Other post pruning methods have been developed with pruning criteria based on
statistical procedures~\cite{quinlan1987simplifying,quinlan1993c4.5} or on cost
complexity criteria based on information
theory~\cite{quinlan1989inferring,mehta1995mdl,wallace1993coding}. Instead of
relying on greedy processes,
authors~\cite{bohanec1994trading,almuallim1996efficient} have proposed dynamic
programming algorithms to find an optimal sub-tree sequence minimizing the
resubtistution error with increasing tree complexity at the expense of
computational complexity.

\section{Decision tree interpretation}\label{sec:dt-interpret}

A strength of the decision tree model is its interpretability. A closer
inspection reveals that we can convert a decision tree model to a set of
mutually exclusive classification or regression rules.  We get these rules by
following the path from each leaf to the root node. We have converted the
decision tree shown in Figure~\ref{fig:dt-iris-chap3} to three sets of
predicting rules, one for each class of iris flower (Versicolor, Virginica and
Setosa):
\begin{enumerate}
\item ``If Petal width $\leq$ 0.7cm, then Setosa''
\item ``If Petal width > 0.7cm and Petal width $\leq$ 1.65cm and Petal length
$\leq$ 5.25cm, then Versicolor.''
\item ``If Petal width > 0.7cm and Petal width $>$ 1.65cm and Sepal length
$\leq$ 5.95cm and Sepal length $>$ 5.85cm, then Versicolor.''
\item ``If Petal width > 0.7cm and Petal width $\leq$ 1.65cm and Petal length
$>$ 5.25cm, then Virginica.''
\item ``If Petal width > 0.7cm and Petal width $>$ 1.65cm and Sepal length
$\leq$ 5.95cm and Sepal length $\leq$ 5.85, then Virginica.''
\item ``If Petal width > 0.7cm and Petal width $>$ 1.65cm and Sepal length
$>$ 5.95cm, then Virginica.'' \label{en:redundant-cond}
\end{enumerate}
\noindent Remark that given the binary hierarchical structure, some rules are
redundant and can be further simplified. For instance, we can collapse the
constraints ``Petal width > 0.7cm and Petal width $>$ 1.65cm'' into ``Petal
width $>$ 1.65cm'' for the \ref{en:redundant-cond}-th rule.

\begin{figure}[h]
\centering
\begin{tikzpicture}
[level 1/.style={sibling distance=60.0mm},
 level 2/.style={sibling distance=60.0mm},
 level 3/.style={sibling distance=30.0mm},
 level 4/.style={sibling distance=30.0mm},
 leaf/.style={rectangle},
 test/.style={rectangle},
 edge from parent/.style = {draw, -latex, font=\footnotesize},
 level distance=20mm]
\node[test] {Petal width?}
 child {node[leaf] {Setosa} edge from parent node[left] {$0.7cm\leq\,$}}
 child {node[test] {Petal width?}
     child {node[test] {Petal length?}
         child {node[leaf] {Versicolor} edge from parent node[left] {$5.25cm\leq\,$}}
         child {node[leaf] {Virginica} edge from parent node[right] {$\,>5.25cm$}}
         edge from parent node[left] {$1.65cm\leq\,$}}
     child {node[test] {Sepal length?}
         child {node[test] {Sepal length?}
             child {node[leaf] {Virginica} edge from parent node[left] {$5.85cm\leq\,$}}
             child {node[leaf] {Versicolor} edge from parent node[right] {$\,>5.85cm$}}
             edge from parent node[left] {$5.95cm\leq\,$}}
         child {node[leaf] {Virginica} edge from parent node[right] {$\,>5.95cm$}}
         edge from parent node[right] {$\,>1.65cm$}}
     edge from parent node[right] {$\,>0.7cm$}}
 ;
\end{tikzpicture}
\caption{A decision tree is an interpretable set of rules organized
hierarchically. For instance, we can recognize a Versicolor iris with two sets
of rules. Among the four input  variables (petal width, petal length, sepal
width and sepal length), the decision tree shows that only one input
variable is necessary to classify the Setosa iris variety.}
\label{fig:dt-iris-chap3}
\end{figure}
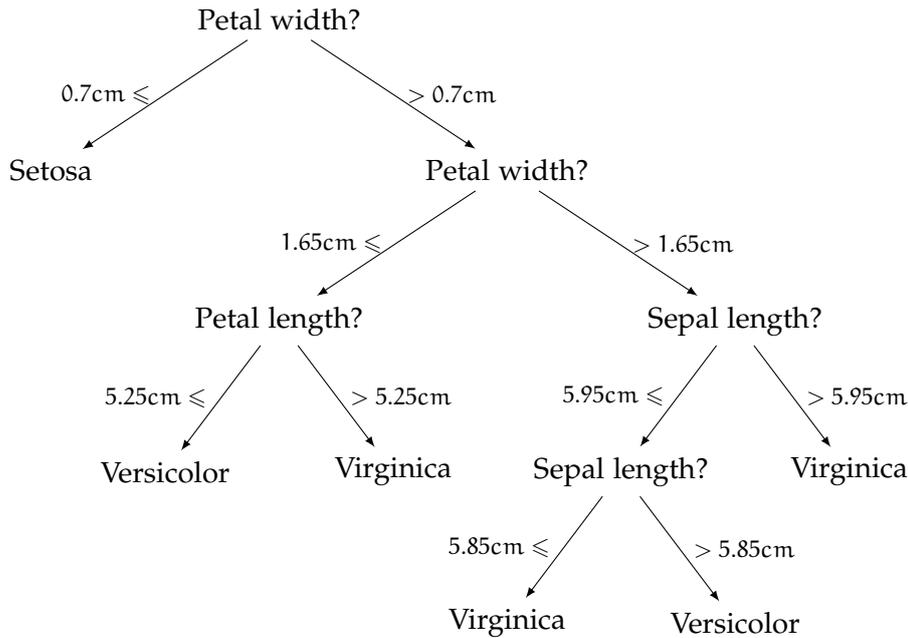

The decision tree model also shows which input variables $x_1, \ldots,
x_p$ are important to predict the output variable(s) $y$.
During the decision tree growth, we select at each node $t$ an axis-wise
splitting rules $s_t$  minimizing the reduction of an impurity measure
$\Delta{}I$ dividing the samples $\mathcal{L}_t$ reaching the node into two
subsets $(\mathcal{L}_{t,l}, \mathcal{L}_{t,r})$. The mean decrease of impurity
(MDI) of a variable $x_j$~\cite{breiman1984classification} sums, over the nodes
of a decision tree $T$ where $x_j$ is used to split, the total reduction of
impurity associated to the split weighted by the probability of reaching that
node over the training set $\mathcal{L}$:
\begin{equation}
\text{MDI}(x_j) =
\sum_{\left\{t \in T: x_j\text{ is tested by }s_t\right\}}
\frac{|\mathcal{L}_t|}{|\mathcal{L}|}
\Delta{}I(\mathcal{L}_t, \mathcal{L}_{t,l}, \mathcal{L}_{t,r}).
\end{equation}

The mean decrease of impurity scores and ranks the input variables according to
their importances during the decision tree growth process as illustrated in
Figure~\ref{fig:cost-complexity}. It takes into account variable correlations,
multivariate and non linear effects. As such, decision trees are often used as
pre-processing tools to select a fraction of the top most important variables.

\begin{figure}
\centering
\includegraphics[width=0.9\textwidth]{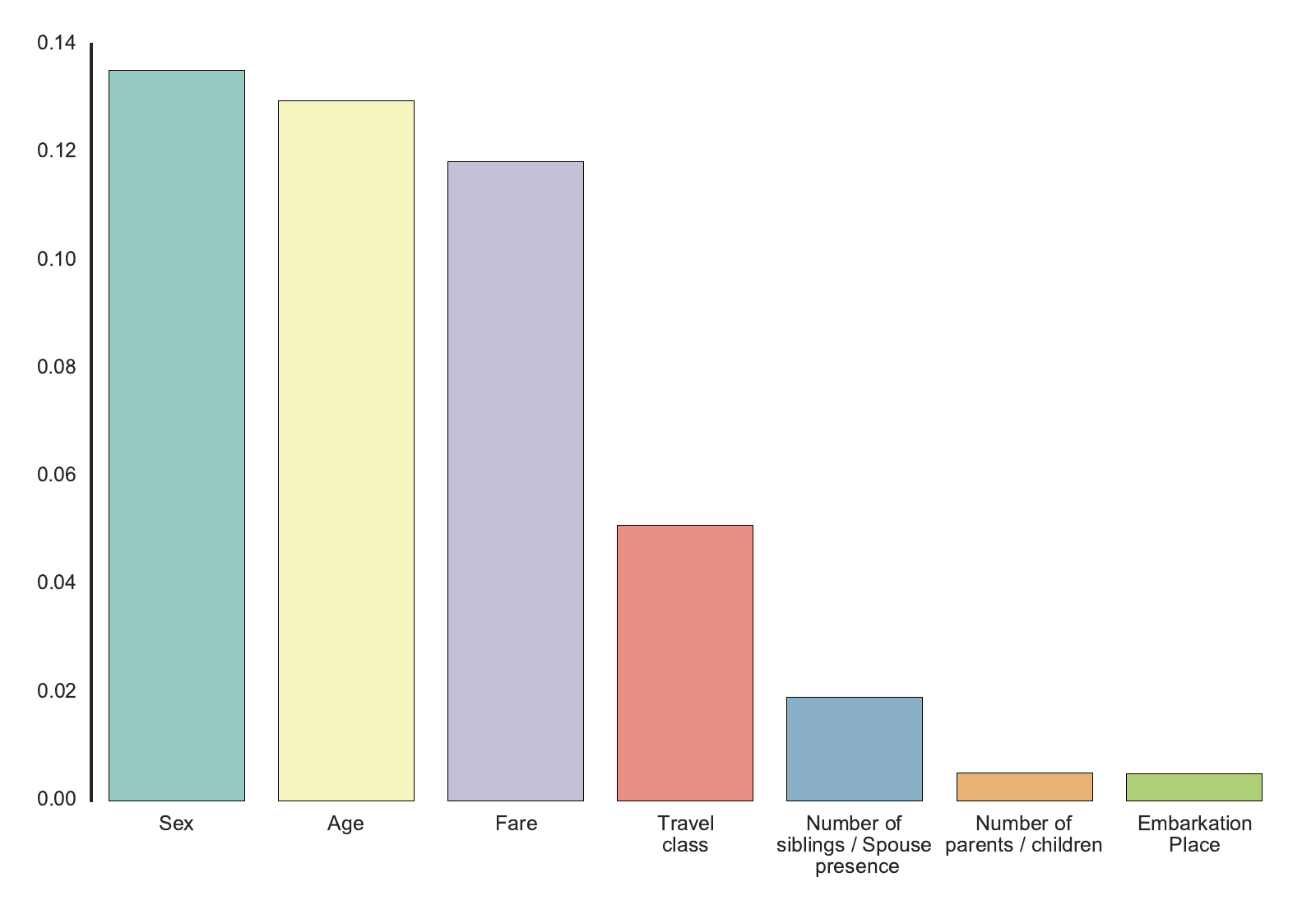}
\caption{Mean impurity decrease for each input variable of the Titanic
dataset for a decision tree whose complexity was chosen to maximize the accuracy
score on a hold out pruning sample set.}
\label{fig:cost-complexity}
\end{figure}

\section{Multi-output decision trees}
\label{sec:mo-trees}

Decision trees naturally extend from single output tasks to multiple output
tasks~\cite{segal1992tree,de2002multivariate,blockeel2000top,clare2001knowledge,zhang1998classification,vens2008decision,noh2004unbiased,siciliano2000multivariate}
such as multi-output regression, multi-label classification or multi-class
classification. No core modification are needed. Instead, we need appropriate
impurity measures and leaf labelling rules for the tree prediction
Algorithm~\ref{algo:tree-predict} and the tree growth
Algorithm~\ref{algo:tree-grow}. Note that a multi-output decision tree can
still be pruned~\cite{struyf2005constraint}.

\paragraph{Multi-output impurity measures} During the decision tree growth, we
aim to select a splitting rule dividing the sample set $\mathcal{L}_t$ reaching
the node $t$ into a left and a right sample sets $(\mathcal{L}_{t,l},
\mathcal{L}_{t,r})$. The best multi-output splitting rule is the one maximizing
the reduction of a multi-output impurity measure $I$. We can use native
multi-output impurity measures such as the variance in
regression~\cite{segal1992tree}:
\begin{align}
Variance(\mathcal{L}_t) =
\frac{1}{|\mathcal{L}_t|}\sum_{(x,y)\in\mathcal{L}_t}
\left|y - \bar{y}\right|^2_2
\text{ with } \bar{y} = \frac{1}{|\mathcal{L}_t|}\sum_{(x,y)\in\mathcal{L}_t} y.
\end{align}
\noindent or any impurity criterion derived from an appropriate distance
measure~\cite{blockeel2000top}.

We can also extend known impurity measures, such as the Gini index or the
entropy (see Section~\ref{subsec:node-split}), by summing the impurity measures
over each output~\cite{de2002multivariate}:
\begin{equation}
I_{mo}(\mathcal{S}) = \sum_{j=1}^d I(\{(x, y_j) \in \mathcal{S}\}),
\end{equation}
\noindent where $S = \{(x^i,y^i) \in (\mathcal{X} \times \mathcal{Y})\}_{i=1}^n$
is a sample set.

Since we can define an impurity measure on any set of outputs, we can derive the
mean decrease of impurity MDI (see Section~\ref{sec:dt-interpret}) either on all
or a subset of the outputs.

\paragraph{Leaf labelling and prediction rule} In the multi-output context, the
leaf prediction $\beta_t$ of a node $t$ is a constant vector of output values
chosen so as to minimize a multi-output loss function $\ell$ over the samples
$\mathcal{L}_t = \{(x^i, y^i) \in (\mathcal{X}, \mathcal{Y})\}_{i=1}^n$ reaching
the node:
\begin{align}
\beta_{t} = \arg\min_{\beta}  \sum_{(x, y) \in \mathcal{L}_t} \ell(y, \beta).
\end{align}

In multi-output regression, the loss $\ell$  is commonly the $\ell_2$-norm loss
$\ell_2(y, y') = \frac{1}{2} ||y - y'||_{2}^2$ the multi-output
extension of the square loss. The constant $\beta_t$ minimizing the
$\ell_2$-norm loss is the average output vector
\begin{equation}
\beta_t = \frac{1}{|\mathcal{L}_t|}\sum_{(x, y) \in \mathcal{L}_t} y.
\end{equation}

Whenever we extend the label rule assignment to multi-label and to multi-output
classification tasks, there are two common possibilities either minimizing the
subset $0-1$ loss which is equal to zero if only if all outputs are correctly
predicted $\ell_{\text{subset }0-1}(y, y') = 1(y\not=y')$ and
the Hamming loss which counts among the $d$ outputs the number of wrongly
predicted outputs $\ell_{\text{Hamming}}(y, y') = \sum_{j=1}^d 1(y_j
\not=y'_j)$.

Minimizing the subset $0-1$ loss takes into account output correlations. The
constant vector $\beta_t$ minimizing this loss is the most frequent output value
combination. Note that the constant $\beta_t$ might not be unique as we might
have several output combinations with the same frequency of appearance in the
sample set reaching the leaf.

The Hamming loss makes the assumption that all outputs are independent. The
constant $\beta_t$ minimizing this loss is the vector containing the most
frequent class of each output.

When the trees are fully developed with only pure leaves, minimizing the
Hamming loss or the subset $0-1$ loss leads to identical leaf prediction rules.


\chapter{Bias-variance and ensemble methods}\label{ch:bias-var-ensemble}

\begin{remark}{Outline}
Ensemble methods combine supervised learning models together so as to improve
generalization performance. We present two families of ensemble methods:
averaging methods and boosting methods. Averaging ensembles grow independent
unstable estimators and average their predictions. Boosting methods increase
sequentially their total complexity by adding biased and stable estimators. In
this chapter, we first show how to decompose the generalization error of
supervised learning estimators into their bias, variance and irreducible error
components. Then we show how to exploit averaging techniques to reduce variance
and boosting techniques to sequentially decrease bias.
\end{remark}

Ensemble methods fit several supervised learning models instead of a single one
and combine their predictions. The goal is to reduce the generalization error
by solving the same supervised learning task multiple times. We hope that the
errors made by the different models will compensate and thereby improve the overall accuracy
whenever we consider them together.

Real life examples of ``ensemble methods'' in the human society are democratic
elections. Each eligible person is asked to cast its vote for instance to choose
between political candidates. This approach considers each person of the
committee as an independent expert and averages simultaneously their opinions.
In supervised learning, these kinds of voting mechanism are called ``averaging
methods''.

Instead of querying all experts independently, we can instead collect their
opinions sequentially. We ask to each new expert to refine the predictions made
by the previous ones. The expert sequence is chosen so that each element of the
sequence improves the accuracy focusing on the unexplained phenomena. For
instance in medical diagnosis, a person itself is the first one to assess its
health status. The next expert in the line is the general practitioner followed
by a series of specialists. We call these ensemble methods ``boosting methods''.

The ``averaging'' approach aims to reduce the variability in the expert pool by
averaging their predictions. At the other end, the ``boosting'' approach
carefully refines its predictions by cumulating the predictions of each expert.

In Section~\ref{sec:bias-variance}, we show how to decompose the error made by
supervised learning models into three terms: a variance term due to the
variability of the model with respect the learning sample set, a bias term due
to a lack of modeling power of the estimator and an irreducible error term due
to the nature of the supervised learning task. Averaging methods presented in
Section~\ref{sec:averaging} are variance reducing techniques growing
independently supervised learning estimators. In Section~\ref{sec:boosting}, we
show how to learn sequentially a series of estimators through boosting methods
increasing the overall ensemble complexity and reducing the ensemble model bias.

\section{Bias-variance error decomposition}
\label{sec:bias-variance}

The expected error or generalization error $Err$ associated to a loss
$\ell:\mathcal{Y} \times \mathcal{Y} \rightarrow \mathbb{R}^+$ of a supervised
learning algorithm is a random variable depending on the learning samples
$\mathcal{L} = \{(x^i,y^i) \in \mathcal{X} \times \mathcal{Y}\}_{i=1}^n$ drawn
independently and identically from a distribution $P_{{\cal X},{\cal Y}}$ and
used to fit a model $f_\mathcal{L}:\mathcal{X}\rightarrow\mathcal{Y}$ in a
hypothesis space $\mathcal{H}\subset \mathcal{Y}^\mathcal{X}$. We want here to
analyze the expectation of the generalization error $Err$ over the distribution
of learning samples defined as follows:
\begin{align}
\E_{\mathcal{L}}(Err) &
= \E_{\mathcal{L}}\E_{P_{\mathcal{X},\mathcal{Y}}}\left\{\ell(f_\mathcal{L}(x), y)\right\}\\
&= \E_{\mathcal{L}}\E_{P_{\mathcal{X}}}
        \E_{P_{\mathcal{Y}|\mathcal{X}}}\left\{\ell(f_\mathcal{L}(x), y)\right\}
\end{align}

Let us denote the Bayes model by $f_\text{Bayes}(x) \in
\mathcal{Y}^\mathcal{X}$, the best model possible minimizing the generalization
error:
\begin{equation}
f_\text{Bayes}(x) = \arg\min_{f \in \mathcal{Y}^\mathcal{X}} \E_{P_{\mathcal{X},\mathcal{Y}}} \left\{\ell(f(x), y)\right\}.
\end{equation}

For the squared loss $\ell(y,y')=\frac{1}{2}(y - y')^2$, we can
decompose the expected error over the learning set $\E_{\mathcal{L}}(Err)$ into
three terms (see~\cite{geman1992neural} for the proof):
\begin{align}
\E_{\mathcal{L}}(Err)
& \triangleq \E_{\mathcal{L}}\E_{P_{\mathcal{X},\mathcal{Y}}}\left\{(f_\mathcal{L}(x) - y)^2\right\} \\
& = \E_{P_{\mathcal{X}}}\left\{
\Var_\mathcal{L}\left\{f_\mathcal{L}(x)\right\} +
\Bias^2(f_\mathcal{L}(x)) +
\Var_{P_{\mathcal{Y}|\mathcal{X}}}\left\{y\right\}\right\},
\label{eq:bias-var-decom}
\end{align}
\noindent where
\begin{align}
\Var_\mathcal{L}\left\{f_\mathcal{L}(x)\right\} &= \E_\mathcal{L}\left\{
\left(f_\mathcal{L}(x) - f_\text{avg}(x)\right)^2
\right\}, \\
\Bias^2(f_\mathcal{L}(x)) &= \left(\E_{\mathcal{L}}\left\{f_\mathcal{L}(x)\right\}(x) - f_\text{Bayes}(x)\right)^2.
\end{align}

We interpret each term of Equation~\ref{eq:bias-var-decom} as follows:
\begin{itemize}

\item The variance of a supervised learning algorithm
$\E_{P_{\mathcal{X}}}\Var_\mathcal{L}\left\{f_\mathcal{L}(x)\right\}$ describes
the variability of the model with a randomly drawn learning sample set
$\mathcal{L}$. Supervised learning algorithms with a high variance have often a
high complexity which makes them overfit the data.

\item  The square bias
$\E_{P_{\mathcal{X}}}\left\{\Bias^2(f_\mathcal{L}(x))\right\}$ is the distance
between the average model and the Bayes model. Biased models are not complex
enough to model the input-output function. They are indeed underfitting the
data.

\item The irreducible error
$\E_{P_{\mathcal{X}}}\Var_{P_{\mathcal{Y}|\mathcal{X}}}\left\{y\right\}$ is the
variance of the target around its true mean. It is the minimal attainable error
on a supervised learning problem.

\end{itemize}

The bias-variance decomposition allows to analyze and to interpret the effect of
hyper-parameters. It highlights and gives insights on their effects on the bias
and the variance. In Figure~\ref{fig:error-decomp}, we have fitted decision tree
models with an increasing number of leaves on the
Friedman1 dataset~\cite{friedman1991multivariate}, a simulated regression
dataset. We first assess the resubstitution error over 300 samples and an
approximation of the generalization error, the hold out error, computed on an
independent testing set of 20000 samples. We compute these errors (see
Figure~\ref{subfig:ch4-under-over}) by averaging the performance of decision
tree models over 100 learning sets $\mathcal{L}$ drawn from the same
distribution $P_{\mathcal{X}, \mathcal{Y}}$. By increasing the number of leaves,
the resubstitution error decreases up to zero with fully developed trees. On
the other hand, the hold out error starts increasing beyond 20 leaves indicating
that the model is under-fitting with less than 20 leaves and over-fitting with
more than 20 leaves. By increasing the number of leaves, we decrease the bias as
we grow more complex models as shown in Figure~\ref{subfig:ch4-bias-var}. It
also increases the variance as the tree structures become more unstable with the
learning set $\mathcal{L}$ .

\begin{figure}
\centering
\subfloat[]{{\includegraphics[width=0.5\textwidth]{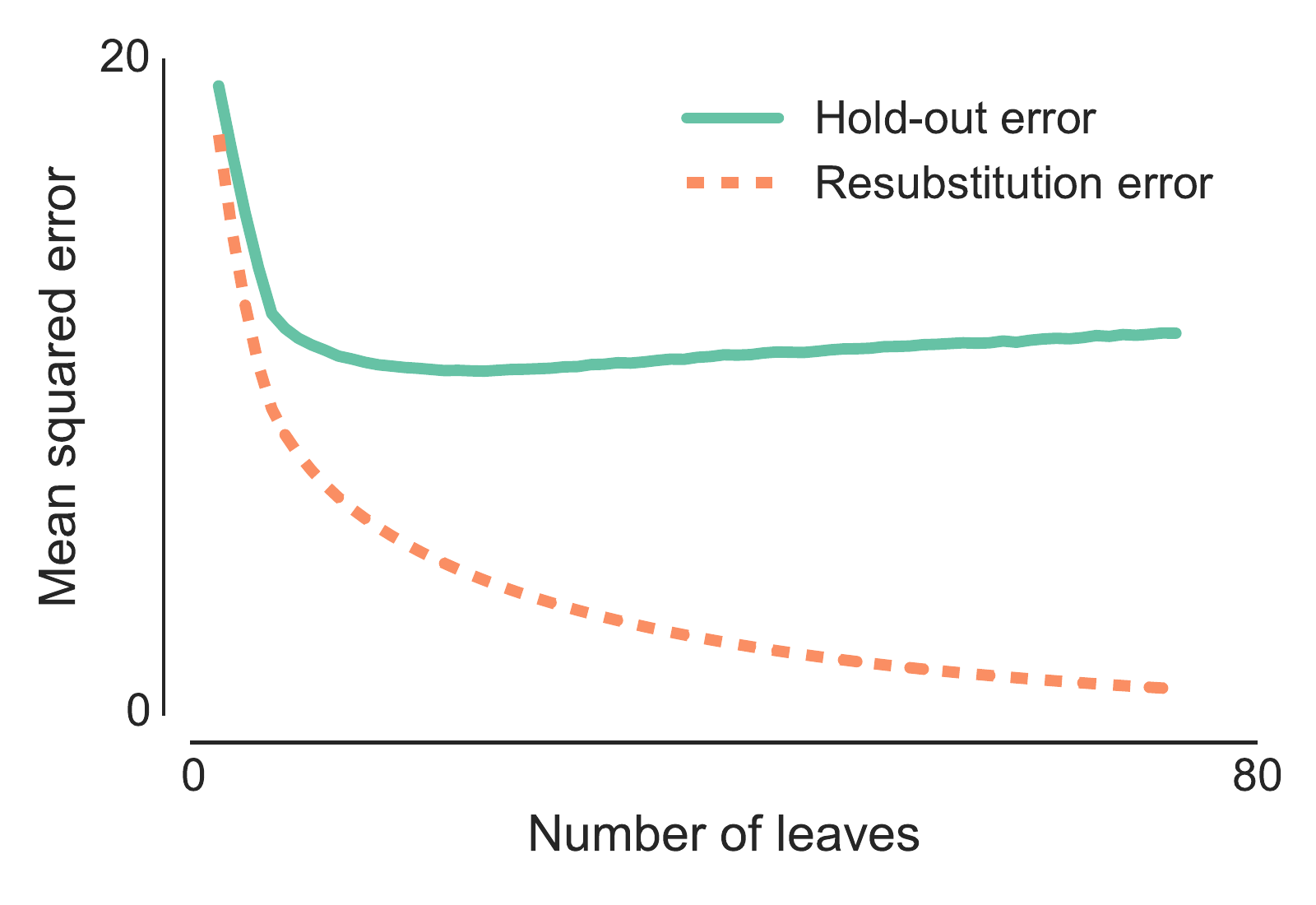}
\label{subfig:ch4-under-over}}}
\subfloat[]{{\includegraphics[width=0.5\textwidth]{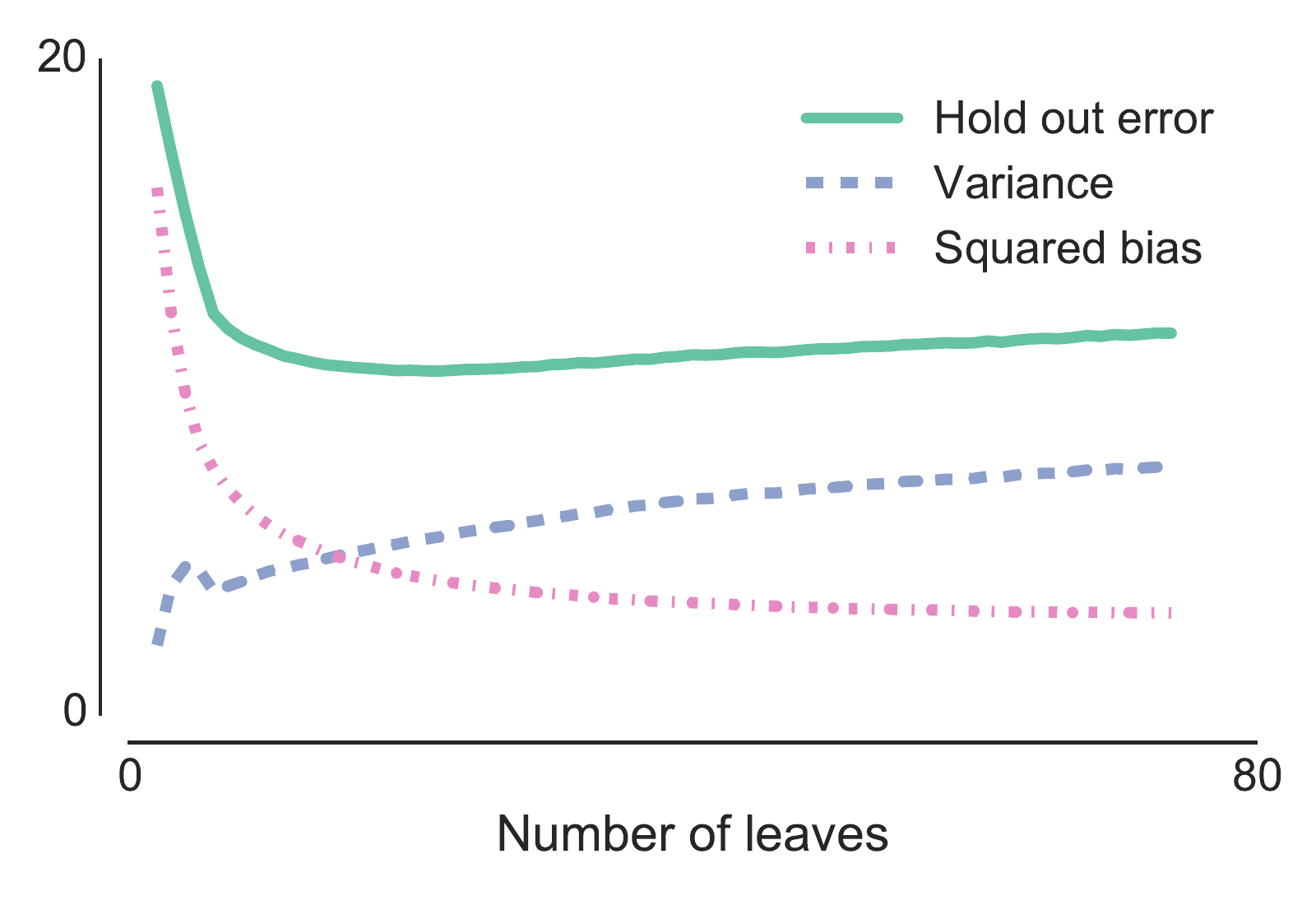}
\label{subfig:ch4-bias-var}}}
\caption{Bias-variance decomposition of decision tree models of increasing
complexity on the Friedman1 dataset.}
\label{fig:error-decomp}
\end{figure}

In general, we have the following trends for a single decision tree model. Large
decision trees overfit and are unstable with respect to the learning set
$\mathcal{L}$ which corresponds to a high variance and a small bias. Shallow
decision trees, on the other hand, underfit the learning set $\mathcal{L}$
and have stable structures, which corresponds to a small variance and a high
bias. The pruning technique presented in Section~\ref{sec:dt-pruning} allows to
select a tradeoff between the variance and the bias of the algorithm by
adjusting the tree complexity.

The bias-variance decomposition of the square loss is by far the most studied
decomposition, but there nevertheless exist similar decompositions for other
losses, e.g. see~\cite{domingos2000unified} for a decomposition of the
polynomial loss $\ell(y, y') = |y - y'|^p$,
see~\cite{friedman1997bias,kohavi1996bias,tibshirani1996bias,domingos2000unified}
for the $0-1$ loss, or see~\cite{james2003variance} for losses in general.

\section{Averaging ensembles}\label{sec:averaging}

An averaging ensemble model $f_{\theta_1,\ldots,\theta_M}$ builds a set of $M$
supervised learning models $\{f_{\theta_m}\}_{\theta_m=1}^M$, instead of a
single one. Each model $f_{\theta_m}$ of the ensemble is different as we
randomize and perturb the original supervised learning algorithm at fitting
time. We describe entirely the induced randomization of one model $f_{\theta_m}$
by drawing i.i.d. a random vector of parameters $\theta_m$ from a distribution
of model parameters $P_\theta$.

In regression, the averaging ensemble predicts an unseen sample by averaging the
predictions of each model of the ensemble:
\begin{equation}
f_{\theta_1,\ldots,\theta_M}(x) = \sum_{m=1}^M  f_{\theta_m}(x).
\end{equation}
\noindent It minimizes the square loss (or its extension the $\ell_2$-norm loss)
between the ensemble model and its members:
\begin{equation}
f_{\theta_1,\ldots,\theta_M}(x) = \arg\min_{y \in \mathcal{Y}} \sum_{m=1}^M (y - f_{\theta_m}(x))^2.
\end{equation}

In classification, the averaging ensemble combines the predictions of
its members to minimize the 0-1 loss by a majority vote of all its members:
\begin{equation}
f_{\theta_1,\ldots,\theta_M}(x) =
\arg\min_{c \in \mathcal{Y}} \sum_{m=1}^M 1(f_{\theta_m}(x) \neq c).
\end{equation}

An alternative approach, called soft voting, is to classify according to the average of the probability
estimates $\hat{P}_{f_{\theta_m}(x)}$ provided by the ensemble members:
\begin{equation}
f_{\theta_1,\ldots,\theta_M}(x) = \arg\max_{c \in \mathcal{Y}}
\sum_{m=1}^M \hat{P}_{f_{\theta_m}(x)}(Y=c).
\end{equation}

Both approaches have been studied and yield almost exactly the same result, but
soft voting provides smoother probability class estimates than majority
vote~\cite{breiman1996bagging,zhou2012ensemble}. The multi-output extension to
ensemble predictions often minimizes the Hamming loss applying either soft-voting
or majority voting to each output independently. Minimizing the subset $0-1$
loss for multi-label tasks would lead to predict the most frequent label set.

\begin{remark}{Ambiguity decomposition}
The ambiguity decomposition~\cite{krogh1995neural} of the square loss shows that
the generalization error of an ensemble $f_{\theta_1,\ldots,\theta_M}$ of $M$
models $\{f_{\theta_m}\}_{m=1}^M$ is always lower or equal than the average
generalization error $\bar{E}$ of its members:
\begin{align}
\E_{P_{\mathcal{X}}}\E_{P_{\mathcal{Y}|\mathcal{X}}}\left\{(y - f_{\theta_1,\ldots,\theta_M}(x))^2\right\}=
\bar{E}-\bar{A}\leq\bar{E}
\label{eq:ambiguity-decomp}
\end{align}
\noindent with
\begin{align}
\bar{E}&=\frac{1}{M}\sum_{m=1}^M
\E_{P_{\mathcal{X}}}\E_{P_{\mathcal{Y}|\mathcal{X}}}\left\{(y - f_{\theta_m}(x))^2\right\} \\
\bar{A}&=\frac{1}{M}\sum_{m=1}^M
\E_{P_{\mathcal{X}}}\left\{(f_{\theta_m}(x) - f_{\theta_1,\ldots,\theta_M}(x))^2\right\}
\end{align}
\noindent The ambiguity term $\bar{A}$ is the variance of the ensemble around
its average model $f_{\theta_1,\ldots,\theta_M}$. The equality occurs only if
all average models are identical $f_{\theta_1,\ldots,\theta_M} = f_{\theta_m}
\quad \forall m \in \{1,\ldots,M\}$.
\end{remark}

Averaging ensemble models are obtained by first perturbing supervised learning
models and then combining them. They aim to reduce the generalization error of
the ensemble compared to the original single model by reducing the variance of
the learning algorithm. Let us illustrate the effects of an averaging method
called bagging on the bias and variance of fully grown decision trees. The
bagging method fits each estimator of the ensemble on a bootstrap copy of the
learning set. In Figure~\ref{fig:error-decomp-bagging}, we show the variance and
the bias as function of the number of fully grown decision trees in the bagging
ensemble. With a single decision tree, the variance is the dominating error
component. By increasing the size of the Bagging ensemble, the variance and the
hold out error are reduced, while leaving the bias mostly unchanged.

\begin{figure}
\centering
\includegraphics[width=0.75\textwidth]{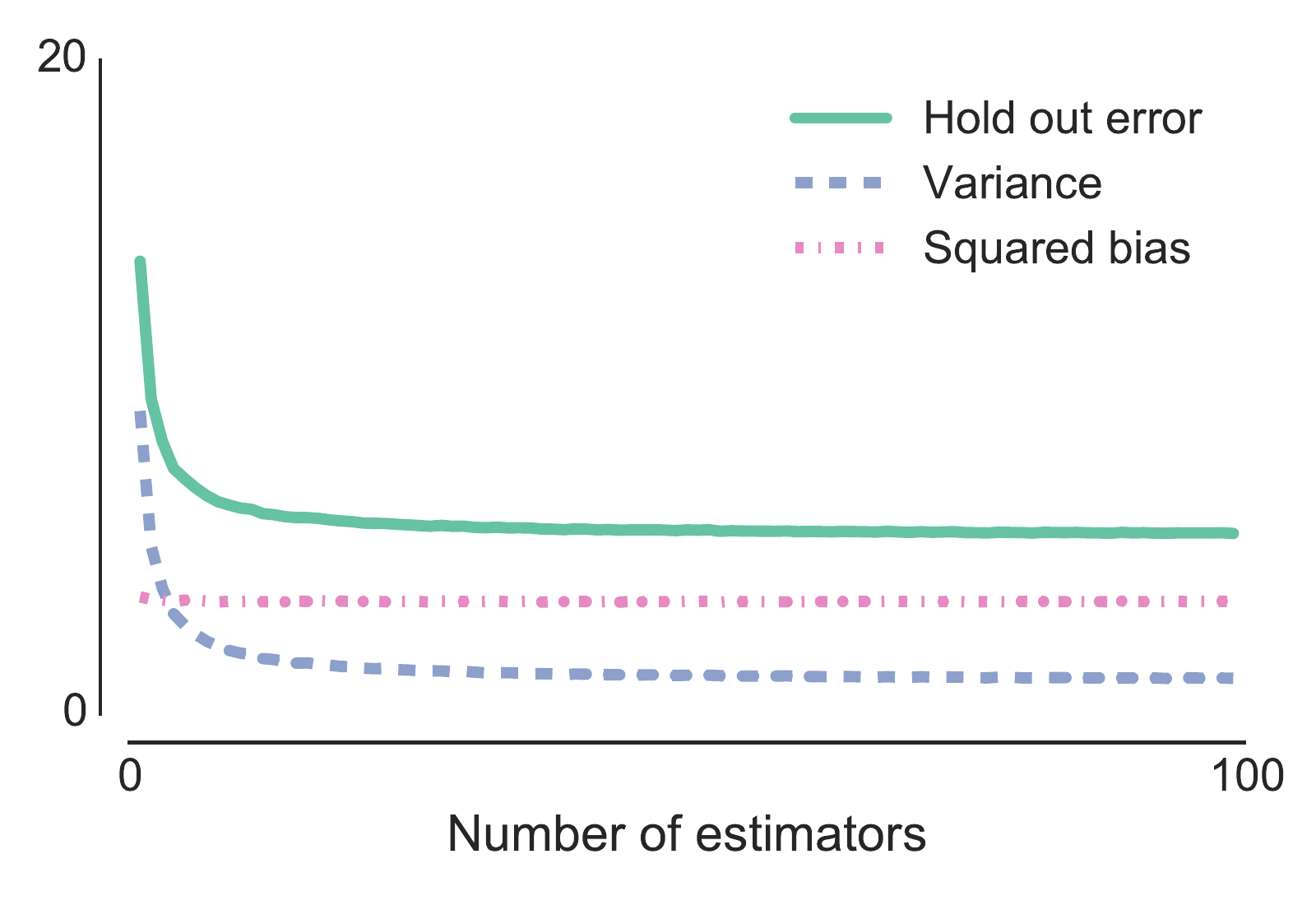}
\caption{Bias-variance decomposition of an ensemble of randomized and fully
developed decision tree models fitted on bootstrap copies of the learning set
(bagging method) for increasing ensemble size on the Friedman1 dataset. The hold
out error, variance and squared bias are averaged over 100 independent learning
and testing sets of respective size 300 and 20000 samples.}
\label{fig:error-decomp-bagging}
\end{figure}

In Section~\ref{subsec:variance-decomp-ensemble}, we show how the bias-variance
decomposition of a randomized supervised learning algorithm is affected by the
ensemble averaging method. In Section~\ref{subsec:randomization-induction}, we
present how to induce randomization without modifying the original supervised
learning algorithm. In Section~\ref{subsec:random-forest-model}, we describe
specific randomization schemes for decision tree methods leading to random
forest models.

\subsection{Variance reduction}
\label{subsec:variance-decomp-ensemble}

Let us first study the bias-variance decomposition for a model
$f_{\mathcal{L},\theta}$ trained on a learning set $\mathcal{L}$ whose
randomness is entirely captured by a random vector of parameters $\theta$. The
model $f_{\mathcal{L},\theta}$ is thus a function of two random variables
$\theta$ and the learning set $\mathcal{L}$. It admits the following
bias-variance decomposition of the generalization error for the average square
loss~\cite{geurts2002contributions}:
\begin{equation}
\E_{\mathcal{L},\theta}(Err) =\E_{P_{\mathcal{X}}}\left\{\hspace*{-0.8mm}
\Var_{\mathcal{L},\theta}\left\{f_{\mathcal{L},\theta}(x)\right\}\hspace*{-0.8mm}+\hspace*{-0.8mm}
\text{Bias}^2(f_{\mathcal{L},\theta}(x))\hspace*{-0.8mm}+\hspace*{-0.8mm}
\Var_{P_{\mathcal{Y}|\mathcal{X}}}\left\{y\right\}\hspace*{-0.8mm}\right\},
\label{eq:bias-var-random-model}
\end{equation}
\noindent where
\begin{align}
\Var_{\mathcal{L},\theta}\left\{f_{\mathcal{L},\theta}(x)\right\} &= \E_{\mathcal{L},\theta}\left\{
\left(f_{\mathcal{L},\theta}(x) - \E_{\mathcal{L}}\E_\theta\left\{f_\theta(x)\right\}\right)^2
\right\}, \\
\Bias^2(f_{\mathcal{L},\theta}(x)) &= \left(\E_{\mathcal{L}}\E_\theta\left\{f_{\mathcal{L},\theta}(x)\right\} - f_\text{Bayes}(x)\right)^2.
\end{align}

By comparison to the bias-variance decomposition of an unperturbed model (see
Equation~\ref{eq:bias-var-random-model}), we have two main differences:
\begin{enumerate}

\item The squared bias is now the distance between the Bayes model
$f_\text{Bayes}$ and the average model
$\E_{\mathcal{L}}\E_\theta\left\{f_{\mathcal{L},\theta}(x)\right\}$ over both
the learning set $\mathcal{L}$ and the randomization parameter $\theta$. Note
that the average model of the randomized algorithm is different from the
non-randomized model $\E_{\mathcal{L}}\left\{f_\mathcal{L}(x)\right\}$. The
randomization of the original algorithm might increase the squared bias.

\item The variance $\Var_{\mathcal{L},\theta}\left\{f_{\mathcal{L},\theta}(x)\right\}$ of the
algorithm now depends on the two random variables $\mathcal{L}$ and $\theta$.
With the law of total variance, we can further decompose the variance term into
two terms:
\begin{equation}
\hspace*{-6mm}\Var_{\mathcal{L},\theta}\left\{f_{\mathcal{L},\theta}(x)\right\}
= \Var_{\mathcal{L}}\left\{\E_{\theta|\mathcal{L}}\left\{f_{\mathcal{L},\theta}(x)\right\}\right\}
+ \E_{\mathcal{L}} \Var_{\theta|\mathcal{L}}\left\{f_{\mathcal{L},\theta}(x)\right\}
\label{eq:var-single-random-model-decomp}
\end{equation}
The first term is the variance brought by the learning sets of the average model over
all parameter vectors $\theta$. The second term describes the variance brought
by the parameter vector $\theta$ averaged over all learning sets $\mathcal{L}$.
\end{enumerate}

Now, we can study the bias-variance decomposition of the generalization error
for an ensemble model $f_{\mathcal{L},\theta_1,\ldots,\theta_m}$ whose
constituents $\{f_{\mathcal{L},\theta_m}\}_{m=1}^M$ depend each on the learning
set $\mathcal{L}$ and a random parameter vector $\{\theta_m\}_{m=1}^M$
capturing the randomness of the models. The bias-variance decomposition of the
ensemble model $f_{\theta_1,\ldots,\theta_m}$ is given by
\begin{align}
\E_{\mathcal{L},\theta_1,\ldots,\theta_M}(Err)=&
\E_{P_{\mathcal{X}}} \Var_{\mathcal{L}}\left\{\E_{\theta_1,\ldots,\theta_M|\mathcal{L}}\left\{f_{\mathcal{L},\theta_1,\ldots,\theta_M}(x)\right\}\right\} \nonumber\\
&+ \E_{P_{\mathcal{X}}} \E_{\mathcal{L}} \Var_{\theta_1,\ldots,\theta_M|\mathcal{L}}\left\{f_{\mathcal{L},\theta_1,\ldots,\theta_M}(x)\right\} \nonumber\\
&+ \E_{P_{\mathcal{X}}} \Bias^2(f_{\mathcal{L},\theta_1,\ldots,\theta_M}(x)) \nonumber\\
&+ \E_{P_{\mathcal{X}}}\Var_{P_{\mathcal{Y}|\mathcal{X}}}\left\{y\right\}.
\label{eq:decom-ensemble-model}
\end{align}

Let us compare the decomposition for a single random model
(Equations~\ref{eq:bias-var-random-model}-\ref{eq:var-single-random-model-decomp})
to the decomposition for an ensemble of random models
(Equation~\ref{eq:decom-ensemble-model}). We are going to expand the
bias-variance decomposition using the ensemble prediction formula
$f_{\theta_1,\ldots,\theta_m}(x)=\frac{1}{M}\sum_{m=1}^M f_{\theta_m}(x)$ (the
demonstration follows~\cite{geurts2002contributions}). As previously, the
variance $\Var_{P_{\mathcal{Y}|\mathcal{X}}}(y)$ is irreducible as this term
does not depend on the supervised learning model.

The average ensemble model of an ensemble of randomized models is equal to the
average model of a single model $f_{\mathcal{L},\theta}$ of random parameter
vector $\theta$:
\begin{align}
\E_{\mathcal{L},\theta_1,\ldots,\theta_M}
\left\{
f_{\mathcal{L},\theta_1,\ldots,\theta_M}(x)
\right\}
&= \frac{1}{M}\sum_{m=1}^M \E_\mathcal{L} \E_{\theta_m} \left\{f_{\mathcal{L},\theta_m}(x)\right\},\\
&= \E_\mathcal{L} \E_\theta \left\{f_{\mathcal{L},\theta}(x)\right\}.
\end{align}
\noindent The squared bias of the ensemble is thus unchanged compared to a
single randomized model.

Now, let us consider the two variance terms. The first one depends
on the variability of the learning set $\mathcal{L}$. With an ensemble
of randomized models, it becomes:
\begin{align}
&\E_{P_{\mathcal{X}}} \Var_{\mathcal{L}}\left\{\E_{\theta_1,\ldots,\theta_M|\mathcal{L}}\left\{f_{\mathcal{L},\theta_1,\ldots,\theta_M}(x)\right\}\right\} \nonumber\\
&= \E_{P_{\mathcal{X}}} \Var_{\mathcal{L}}\left\{
\frac{1}{M}\sum_{m=1}^M\E_{\theta_m|\mathcal{L}}\left\{f_{\mathcal{L},\theta_m}(x)\right\}
\right\}, \\
&= \E_{P_{\mathcal{X}}} \Var_{\mathcal{L}}\left\{
\E_{\theta|\mathcal{L}}\left\{f_{\mathcal{L},\theta}(x)\right\}.
\right\}.
\end{align}
\noindent The variance of the ensemble of randomized model with respect to the
learning set $\mathcal{L}$ drawn from the input-output pair distribution
$P_{\mathcal{X},\mathcal{Y}}$ is not affected by the averaging and is equal
to the variance of a single randomized model.

Let us developed the second variance term of the decomposition describing the
variance with respect to the set of random parameter vectors
$\theta_1,\ldots,\theta_M$:
\begin{align}
&\E_{P_{\mathcal{X}}} \E_{\mathcal{L}}
\Var_{\theta_1,\ldots,\theta_m|\mathcal{L}}\left\{f_{\mathcal{L},\theta_1,\ldots,\theta_M}(x)\right\}
\nonumber \\
&=\E_{P_{\mathcal{X}}} \E_{\mathcal{L}}
\Var_{\theta_1,\ldots,\theta_M|\mathcal{L}}\left\{
\frac{1}{M}\sum_{m=1}^M f_{\mathcal{L},\theta_m}(x)
\right\}, \\
&= \frac{1}{M^2} \E_{P_{\mathcal{X}}} \E_{\mathcal{L}} \left\{
\sum_{m=1}^M \Var_{\theta_1,\ldots,\theta_M|\mathcal{L}}\left\{f_{\mathcal{L},\theta_m}(x)\right\}
\right\}, \\
&= \frac{1}{M^2} \E_{P_{\mathcal{X}}} \E_{\mathcal{L}} \left\{
\sum_{m=1}^M \Var_{\theta_m|\mathcal{L}}\left\{f_{\mathcal{L},\theta_m}(x)\right\}
\right\}, \\
&= \frac{1}{M} \E_{P_{\mathcal{X}}} \E_{\mathcal{L}}
\Var_{\theta|\mathcal{L}}\left\{f_{\mathcal{L},\theta}(x)\right\},
\label{eq:var-ensemble-theta}
\end{align}
\noindent where we use the following properties: (i) $\Var\{a x\} =
a^2 \Var\{x\}$ where $a$ is a constant, (ii) at a fixed learning set
$\mathcal{L}$, the models $f_{\mathcal{L},\theta_m}\,\forall m$ are independent,
(iii) the variance of a sum of independent random variables is equal to the sum
of the variance of each independent random variables ($\Var\left\{\sum_{i=1}^n
x_i\right\} = \sum_{i=1}^n \Var\{x_i\}$).

Putting all together the bias-variance decomposition of
Equation~\ref{eq:decom-ensemble-model} becomes~\cite{geurts2002contributions}:
\begin{align}
\E_{\mathcal{L},\theta_1,\ldots,\theta_m}(Err)=
& \E_{P_{\mathcal{X}}} \Var_{\mathcal{L}}
    \left\{ \E_{\theta|\mathcal{L}}\left\{f_{\mathcal{L},\theta}(x)\right\}\right\} \nonumber \\
&+ \frac{1}{M} \E_{P_{\mathcal{X}}} \E_{\mathcal{L}}
\Var_{\theta|\mathcal{L}}\left\{f_{\mathcal{L},\theta}(x)\right\} \nonumber \\
&+ \E_{P_{\mathcal{X}}} \left(\E_\mathcal{L} \E_\theta \left\{f_{\mathcal{L},\theta}(x)\right\} - f_\text{Bayes}(x)\right)^2 \nonumber\\
&+ \E_{P_{\mathcal{X}}}\Var_{P_{\mathcal{Y}|\mathcal{X}}}\left\{y\right\}.
\label{eq:bias-var-decom-ensemb-final}
\end{align}

The bias variance decomposition of an ensemble of randomized models
$f_{\mathcal{L},\theta_1,\ldots,\theta_M}$ (see
Equation~\ref{eq:bias-var-decom-ensemb-final}) shows that averaging $M$ models
reduces the variance related to the randomization $\theta$ by a factor $1 / M$
over a single randomized model $f_{\mathcal{L},\theta}$ (see
Equation~\ref{eq:bias-var-random-model}) without modifying the other terms. Note
that we can not compare the bias variance decomposition of an ensemble of
randomized models $f_{\mathcal{L},\theta_1,\ldots,\theta_M}$ to its non
randomized counterparts $f_{\mathcal{L}}$(see Equation~\ref{eq:bias-var-decom}).
The bias and variance terms are indeed not comparable.

In practice, we first perturb the learning algorithm which increases the
variance of the models and then we combine them through averaging. The variance
reduction effect is expected to be higher than the added variance at training
time. The bias is either unaffected or increased through the randomization
induction. Perturbing the algorithm is thus a tradeoff between the reduction in
variance and the increase in bias. An ensemble of randomized models
$f_{\mathcal{L},\theta_1,\ldots,\theta_M}$  will have better performance than
its non perturbed counterparts $f_{\mathcal{L}}$ if the bias increase is
compensated by the variance reduction. In~\cite{louppe2014understanding}, the
authors have shown that the generalization error is reduced if the randomization
induction decorrelates the models of the ensemble.

The previous decomposition does not apply to the $0-1$ loss in classification.
However, the main conclusions remains
valid~\cite{breiman1996bagging,domingos2000unified,geurts2002contributions}.

\subsection{Generic randomization induction methods}
\label{subsec:randomization-induction}

In this section, we first discuss generic randomization methods to perturb a
supervised learning algorithm without modifying the original algorithm through
(i) the learning sample set $\mathcal{L} = \{(x^i, y^i) \in \mathcal{X} \times
\mathcal{Y}\}_{i=1}^n$ available at fitting time, (ii) the input space
$\mathcal{X}$ or (iii) the output space $\mathcal{Y}$. Those three perturbation
principles can be either applied separately or together.

We present in succession these three model agnostic randomization principles
(perturbing $\mathcal{L}$, $\mathcal{X}$ or $\mathcal{Y}$).

\subsubsection{Sampling-based randomization}

One of the earliest randomization method, called
bagging~\cite{breiman1996bagging}, fits independent models on bootstrap copies
of the learning set. A bootstrap copy~\cite{efron1979bootstrap} is obtained by
sampling with replacement $|\mathcal{L}|$ samples from the learning set
$\mathcal{L}$. The original motivation was a first theoretical development and
empirical experiments showing that bagging reduces the error of an unstable
estimator such as a decision tree. Bootstrap sampling totally ignores the class
distribution in the original sample set $\mathcal{L}$ and might lead to highly
unbalanced bootstraps. A partial solution is to use stratified bootstraps or to
bootstrap~\cite{chen2004using} separately the minority and majority classes. In
the bagging approach only a fraction of the dataset is provided as training set
to each estimator, the wagging approach~\citep{bauer1999empirical} fits instead
each estimator on the entire training set with random weights. Instead of using
bootstrap copies of training set, \citet{buchlmann2002analyzing} proposes to
subsample the training set, i.e. to sample without replacement the training set.

To take into account the input space structure, \citet{kuncheva2007classifier}
proposes to generate random input space partition with a random hyperplane. An
estimator is then build for each partition. To get an ensemble,
\citet{kuncheva2007classifier} repeats this process multiple times.

\subsubsection{Input-based randomization}

Input-based randomization techniques are often based on dimensionality reduction
techniques. The random subspace method~\cite{ho1998random} builds each model on
a random subset of the input space $\mathcal{X}$ obtained by sub-sampling inputs
without replacement. It was later combined with the bagging method
in~\cite{panov2007combining}, by bootstrapping the learning set $\mathcal{L}$
before learning an estimator, and with sub-sampling techniques with/without
replacement in~\cite{louppe2012ensembles} generating random (sample-input)
patches of the data. Note that while we reduce the input space size, we can also
over-sample the learning set. For instance, \citet{maree2005random} apply a
supervised learning algorithm on random sub-windows extracted from an image,
which effectively (i) increases the sample size available to train each model;
(ii) reduces the input space size, (iii) and takes into account spatial (pixel)
correlation in the images.

Since decision tree made their split orthogonally to the input space, authors
have proposed to randomize such ensembles by randomly projecting the input
space. Rotation forest~\cite{rodriguez2006rotation} is an ensemble method
combining bagging with principal component analysis. For each bootstrap copy, it
first slices the $p$ input variables into $q$ subsets, then projects each subset
of inputs of size $\frac{p}{q}$ on its principal components and finally grows
$q$ models (one on each subset). \citet{kuncheva2007experimental} further
compares three input dimensionality reduction techniques (described in
Section~\ref{sec:dimensionality-reduction}): (i) the PCA approach
of~\citet{rodriguez2006rotation}, (ii) Gaussian random projections and (iii)
sparse Gaussian random projections. On their benchmark, they find that the
PCA-based rotation matrices yield the best results and also that sparse random
projections are strictly better than dense random projections. The idea of using
dense Rademacher or Gaussian random projections was again re-discovered
by~\citet{schclar2009random}. Similarly, \citet{blaser2015random}
proposed to make ensembles through random rotation of the input space.

\subsubsection{Output-based randomization}

Output-based randomization methods directly perturb the output space
$\mathcal{Y}$ of each member of the ensemble.

In regression, we can induce randomization to an output variable through the
addition of an independent Gaussian noise~\cite{breiman2000randomizing}. We fit
each model of the ensemble on the perturbed output $y' = y + \epsilon_m$ with
$\epsilon_m \sim \mathcal{N}(0; \sigma)$.

In classification, we perturb the output of each model of the ensemble by
having a non zero probability to randomly switch the class associated to each
sample~\cite{breiman2000randomizing,martinez2005switching,martinez2008class}.

For multi-label tasks and multi-output tasks, supervised learning algorithms,
such as random k-label subset (see also
Section~\ref{sec:multioutput-strategies})~\cite{tsoumakas2007random}, randomizes
the ensemble by building each model of the ensemble on a subset of the output
space or the label sets present in the learning set.

\subsection{Randomized forest model}
\label{subsec:random-forest-model}

The decision tree algorithm has a high variance, due to the instability of its
structure. Large decision trees, such as fully developed trees, are often very
unstable, especially at the bottom of the tree. The selected splitting rules
depend on the samples reaching those nodes. Small changes in the learning set
might lead to very different tree structures. Authors have proposed
randomization schemes to perturb the search and selection of the best splitting
rule improving the generalization error through averaging methods.

One of the first propositions to perturb the splitting rule
search~\cite{dietterich1995machine} was to select randomly at each node one
splitting rule among the top $k$ splitting rules with the highest impurity
reduction. The variance of the algorithm increases with the number $k$ of
splitting rule candidates, leaving the bias unchanged. Later in the context of
digit recognition, \citet{amit1997joint} randomized the tree growth by
restricting the splitting rule search at each node to a random subset of $k$
input variables out of the $p$ available. The original motivation was to
drastically reduce the splitting rule search space as the number of input
variables is very high in digit recognition tasks. This randomization scheme
increases more the variance of the algorithm than the one
of~\citeauthor{dietterich1995machine}, at the expense of increasing the bias.
Note the similarity with the random subspace approach~\cite{ho1998random} which
subsamples the input space prior fitting a new estimator in an averaging
ensemble.

\citeauthor{breiman2001random}~got inspired by the work
of~\citeauthor{amit1997joint} and combined its bagging
method~\cite{breiman1996bagging} with the input variable sub-sampling leading to
the well known\footnote{The random forest method usually refers to the the
algorithm of \citeauthor{breiman2001random}, however any averaging ensemble of
randomized trees is also a random forest.} ``random forest''
algorithm~\cite{breiman2001random}. The combination of both randomization
schemes has led to one of the best of the shelf estimator for many supervised
learning tasks~\cite{caruana2008empirical,fernandez2014we}.

Later on, \citet{geurts2006extremely} randomized the cut point and input
variable selection of the splitting rules. At each test node, it draws one
random splitting rule for $k$ randomly selected input variables (without
replacement) and then selects the best one. This randomized tree ensemble is
called extremely randomized trees or extra trees. For a splitting rule $s(x) =
'x_j \leq \tau'$ associated to an ordered variable, the algorithm draws
uniformly at random the threshold $\tau$ between the minimum and maximum of the
possible cut point values. Similarly for an unordered variable, the algorithm
draws a non empty subset $B$ among the possible values to generate a splitting
rule of the form $s(x) = 'x_j \in B'$. Empirically, it has been
shown~\cite{geurts2002contributions} that the variance of the decision tree
algorithm is due to the variability of the cut point selection with respect to
the learning set. We can view the perturbation of the cut point selection as a
way to transfer the variance due to the learning set to the variance due to the
randomization of the cut point selection. The hyper-parameter $k$ controls the
trade-off between the bias and variance of the algorithm.

Besides perturbing the binary and axis wise splitting rules, they have been some
research to make splitting rule through random hyper-planes.
\citet{breiman2001random} proposed to select the best splitting rule obtained
from random sparse linear input combinations with non zero values drawn
uniformly in $[-1, 1]$. \citet{tomita2015randomer} proposed to use sparse random
projection where non zero elements are drawn uniformly in $[-1,1]$. Those
approaches increase the variance, while also trying to reduce the bias by
allowing random oblique splits. However, \citet{menze2011oblique} have shown
that those random sparse hyper-planes are inferior to deterministic linear
models such as ridge regressors or a linear discriminant analysis (LDA) models.

For supervised learning tasks with many outputs, we can also perturb the output
space of each decision tree by randomly projecting the output
space~\cite{joly2014random} onto a lower dimensional subspace or through random
output sub-sampling. The leaves are later re-labelled on the original output
space. This approach is developed in Chapter~\ref{ch:rf-output-projections} of
this thesis.

\section{Boosting ensembles}
\label{sec:boosting}

Boosting methods originate from the following question: ``How can we combine a
set of weak models together, each one doing slightly better than random
guessing, so as to get one stronger model having good generalization
performance?''. A boosting model $f$ answers this question through a weighted
combination of $M$ weak models $\{f_m\}_{m=1}^M$ leading to
\begin{equation}
f(x) = \sum_{m=1}^M \alpha_m f_m(x)
\end{equation}
\noindent where the coefficients $\{\alpha_m \in \mathbb{R}\}_{m=1}^M$ highlight
the contribution of each model $f_m(x)$ to the ensemble.

For a boosting ensemble, we usually want to minimize a loss function $\ell:
\mathcal{Y} \times \mathcal{Y} \rightarrow \mathbb{R}^+$ over a learning set
$\mathcal{L} = \{(x^i,y^i) \in \mathcal{X} \times \mathcal{Y}\}_{i=1}^n$:
\begin{equation}
\min_{\left\{(\alpha_m, f_m) \in \mathbb{R} \times \mathcal{H}\right\}_{m=1}^M}
\sum_{(x,y) \in \mathcal{L}} \ell\left(y, \sum_{m=1}^M \alpha_m f_m(x)\right)
\label{eq:boosting-general}
\end{equation}
\noindent where we select each model $f_m$  over a hypothesis space
$\mathcal{H}$. Solving this equation for many loss functions and models
is either intractable or numerically too intensive for practical purpose.
However, we can solve easily Equation~\ref{eq:boosting-general} for a single
model ($M=1$).

So boosting methods develop iterative and tractable schemes to solve
Equation~\ref{eq:boosting-general} by adding sequentially models to the
ensemble. A new model $f_m(x)$ builds over the work done by the previous
$m-1$ models to yield better predictions. It further minimizes the loss $\ell$
averaged over the training data:
\begin{equation}
\min_{\alpha_m, f_m \in \mathbb{R} \times \mathcal{H}}
\sum_{(x,y) \in \mathcal{L}}
\ell\left(y, \sum_{l=1}^{m-1} \alpha_l f_l(x) + \alpha_m f_m(x)\right).
\label{eq:boosting-optim}
\end{equation}
\noindent To improve the predictions made by the $m-1$ models, the new model
$f_m$ with coefficient $\alpha_m$ concentrates its efforts on the wrongly
predicted samples.

From a bias-variance perspective, each newly added model aims to reduce the bias
while leaving the variance term unmodified if possible. We choose the base model
so that it has a high bias and a small variance such as a stump, i.e. a decision
tree with only one testing node, or such as a linear model with only one non-zero
coefficient. In Figure~\ref{fig:error-decomp-gbrts}, we sequentially fit stumps
to decrease the least square loss $\ell(y, y') = \frac{1}{2} (y -y')^2$.
With a few stumps, the squared bias component of the generalization error
dominates with a low variance. By adding more stumps to the ensemble, we
drastically decrease the generalization error by diminishing the squared bias.
The best performance is a trade-off between the bias reduction and the increase
in variance.

\begin{figure}
\centering
\includegraphics[width=0.75\textwidth]{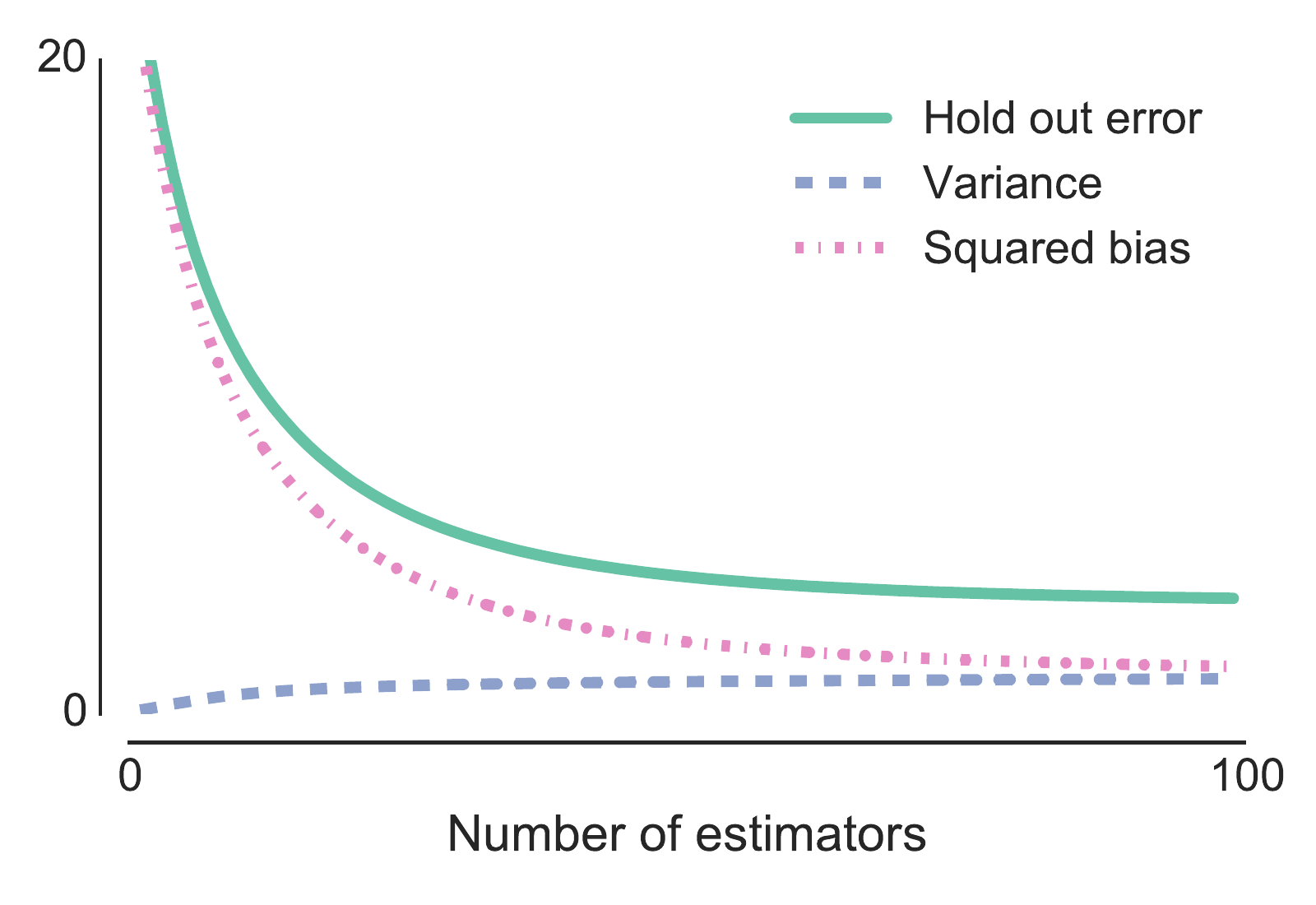}
\caption{Bias-variance decomposition of a boosting ensemble minimizing the
square loss with an increasing number of weak models on the Friedman1 dataset.
The hold out error, variance and squared bias are averaged over 100 independent
learning and testing sets of respective size 300 and 20000 samples.}
\label{fig:error-decomp-gbrts}
\end{figure}

We present the adaptive boosting and its variants in
Section~\ref{subsec:weight-boosting}, which directly solve
Equation~\ref{eq:boosting-optim}, and the functional gradient boosting approach
in Section~\ref{subsec:gradient-boosting}, which approximately solves
Equation~\ref{eq:boosting-optim} through the loss gradient.

\subsection{Adaboost and variants}
\label{subsec:weight-boosting}

One of the most popular and influential boosting algorithms is the ``AdaBoost''
algorithm~\cite{freund1997decision}. This supervised learning algorithm aims to
solve binary classification tasks with $\mathcal{Y} = \{-1, 1\}$. The algorithm
generates iteratively an ensemble of estimators $\{f_m\}_{m=1}^M$ by minimizing
the exponential loss function:
\begin{equation}
\ell_{\exp}(y, y') = \exp(-y y')),
\end{equation}
\noindent assuming a binary response of the weak models $f_m(x) \in \{-1, 1\}\,
\forall m$.

The prediction of an unseen sample $f(x)$ by an AdaBoost ensemble is a
majority vote from its members:
\begin{equation}
f(x) = \sign\left(\sum_{m=1}^M \alpha_m f_m(x)\right),
\end{equation}
\noindent where the $\{\alpha_m\}_{m=1}^M$ are constant weights indicating the
contribution of a model $f_m$ to solve the binary classification task. The
$\sign$ operator transforms the sum into an appropriate output value
($\mathcal{Y} = \{-1, 1\}$).

Given a learning set $\mathcal{L} = ((x^i, y^i) \in \mathcal{X} \times \{-1,
1\})_{i=1}^n$, we iteratively fit a weak model $f_m$ over the learning set
$\mathcal{L}$ by making the weak learner focuses on each sample with a weight
$(w^i)_{i=1}^n$. The higher the value of $w^i$, the more the algorithm will
concentrate to predict correctly the $i$-th sample. To design this algorithm, we
need to answer to the following questions: (i) how to assess the contribution
$\alpha_m$ of the $m$-th model $f_m$ to the ensemble and (ii) how to update the
weight $w^i$ to reduce iteratively the exponential loss.

We can write the resubtitution error of the exponential loss as
\begin{align}
&\frac{1}{n}\sum_{i=1}^n\ell_{\exp}
\left(y^i, \sum_{l=1}^{m-1} \alpha_l f_l(x^i) + \alpha_m f_m(x^i)\right) \nonumber \\
&=\frac{1}{n}\sum_{i=1}^n
\exp\left(-y^i \left(\sum_{l=1}^{m-1} \alpha_l f_l(x^i) + \alpha_m f_m(x^i)\right)\right) \\
&=\frac{1}{n}\sum_{i=1}^n
\exp\left(-y^i \left(\sum_{l=1}^{m-1} \alpha_l f_l(x^i))\right)\right) \exp\left(- y^i \alpha_m f_m(x^i)\right) \\
&=\frac{1}{n}\sum_{i=1}^n
\ell_{\exp}\left(y, \sum_{l=1}^{m-1} \alpha_l f_l(x^i)\right)
\exp(- y^i \alpha_m f_m(x^i)) \\
&=\sum_{i=1} w^i \exp(- y^i \alpha_m f_m(x^i))
\end{align}
\noindent with
\begin{equation}
w^i = \frac{1}{n} \ell_{\exp}\left(y, \sum_{l=1}^{m-1} \alpha_l f_l(x^i)\right) \forall i.
\label{eq:exp-loss-w-direct}
\end{equation}
\noindent Note that the weight computation is expressible as a recursive
equation starting with $w^i=1/n$:
\begin{equation}
w^i \leftarrow w^i \exp\left(\alpha_m 1(y^i \not= f_m(x^i)) \right).
\end{equation}

The sample weight $w^i$ highlights how well the $i$-th sample is predicted by
the $m-1$ first estimators of the boosting ensemble. A zero weight $w^i$ means
that the $i$-th sample is perfectly predicted. The $m$-th estimators should thus
focus on the sample with high weight $w^i$ to reduce the  resubtitution error.
Otherwise, it should minimize the weighted resubstitution error.

Let us now separate the resubtitution error of the correctly classified
points from the misclassified ones:
\begin{align}
&\sum_{i=1}^n w^i \exp(-y^i \alpha_m f_m(x^i)) \nonumber \\
=& \sum_{i=1}^n w^i  \exp(-\alpha_m) 1(y^i=f_m(x^i)) +  \exp(\alpha_m) 1(y^i\not=f_m(x^i))
\end{align}

By derivating the last equation with respect to $\alpha_m$ and setting the
derivative to zero, the $\alpha_m$ minimizing the resubtitution error of the
exponential loss is
\begin{equation}
\alpha_m = \log\left(\frac{1 - \text{err}_m}{\text{err}_m}\right)
\end{equation}
\noindent with
\begin{equation}
\text{err}_m = \frac{1}{2}\frac{\sum_{i=1}^n w^i 1(y^i \not= f_m(x^i))}{\sum_{i=1}^n w^i}.
\end{equation}
\noindent The optimization of the constant $\alpha_m$ means that the
resubstitution error is upper bounded and can not increase with the size of the
ensemble on the learning set.

Putting everything together, we obtain the AdaBoost  algorithm (see
Algorithm~\ref{alg:adaboost}). Many extensions and enhancements of this
fundamental idea have been proposed. If the weak model is able to predict a
probability estimate, \citet{friedman2000additive} have proposed an appropriate
extension called ``Real Adaboost'' by contrast to Algorithm~\ref{alg:adaboost}
which they call ``Discrete Adaboost''.

\begin{algorithm}
\caption{AdaBoost.M1 for binary classification $\mathcal{Y} = \{-1, 1\}$.}
\label{alg:adaboost}
\begin{algorithmic}[1]
\Function{Adaboost}{$\mathcal{L} = \{x^i, y^i\in\mathcal{X}\times\mathcal{Y}\}_{i=1}^n$}
\State Initialize the sample weights $w^i \leftarrow 1/n \, \forall i \in \{1,\ldots,n\}$.
\For{$m$ = 1 to $M$}
    \State Fit a model $f_m(x)$ to the learning set $\mathcal{L}$
           and $(w^i)_{i=1}^n$.
    \State Compute the weighted error rate
    \[ \qquad
        \text{err}_m \leftarrow
        \frac{\sum_{i=1}^n w^i 1(y^i \not= f_m(x^i))}{\sum_{i=1}^n w^i}.
    \]
    \State Compute $\alpha_m \leftarrow  \frac{1}{2}\log\left(\frac{1-\text{err}_m}{\text{err}_m}\right)$.
    \label{alg-line:ada-alpha}
    \State Update the weights
           \[ \qquad
           w^i \leftarrow w^i \exp\left(\alpha_m 1(y^i \not= f_m(x^i)) \right).
           \]
\EndFor
\State \Return $\hat{f}(x) = \sign\left(\sum_{m=1}^M \alpha_m f_m(x) \right)$
\EndFunction
\end{algorithmic}
\end{algorithm}

A direct multi-class extension, called AdaBoost.M1, of the AdaBoost algorithm is
to use a multi-class weak learner instead of a binary one. The AdaBoost.M1
ensemble predicts a new sample through:
\begin{equation}
f(x) = \arg\max_{k\in\mathcal{Y}} \sum_{m=1}^M \alpha_m 1(f_m(x)=k).
\end{equation}
\noindent An improvement over this approach is to directly minimize the
multi-class exponential loss as in the SAMME algorithm~\cite{zhu2009multi}.
It replaces the line~\ref{alg-line:ada-alpha} of Algorithm~\ref{alg:adaboost}
by
\begin{equation}
\alpha_m \leftarrow \frac{1}{2}\log\left(\frac{1-\text{err}_m}{\text{err}_m}\right) + \log(|\mathcal{Y}| - 1)
\end{equation}

We can minimize other losses than the exponential loss during the ensemble
growth, such as the logistic loss $\ell_{logistic}(y, y')= \log(1 +
\exp(-2yy'))$ with the LogitBoost algorithm~\cite{collins2002logistic} for
binary classification tasks; the Hamming loss $\ell_\text{Hamming}(y, y') =
\sum_{j=1}^n 1(y_j\not=y'_j)$ leading to the AdaBoost.MH
algorithm~\cite{schapire2000boostexter} and the pairwise ranking loss
$\ell_\text{ranking}$ \[
\ell_\text{ranking}(y, y')=\frac{1}{|y^i|}\frac{1}{d - |y^i|} \left| \left\{ (k, l) : y'^i_k < y'^i_l,
y^i_k=1, y^i_l=0 \right\}\right|
\] leading to the AdaBoost.MR
algorithm~\cite{schapire2000boostexter} for multi-label classification tasks and
also a wide range of regression losses as proposed
in~\cite{drucker1997improving} for regression tasks.

\begin{remark}{How to take into account sample weights in supervised
learning algorithms?}
For a set of learning samples $\left((x^i, y^i) \in \left(\mathcal{X} \times
\mathcal{Y}\right)\right)_{i=1}^n$ and a set of weights $(w^i \in
\mathbb{R}^+)_{i=1}^n$, the weighted resubtitution error is given by
\begin{equation}
\text{Resubstitution error} =
\frac{\sum_{i=1}^n w^i \ell(f(x^i), y^i)}{\sum_{i=1}^n w^i}.
\end{equation}

Extending supervised learning algorithms to support sample weights means that we
have to modify the learning algorithm so as to minimize the weighted resubtitution
error:
\begin{itemize}
\item For linear models, we will minimize the weighted average of a given loss $\ell:
\mathcal{Y} \times \mathcal{Y} \rightarrow \mathbb{R}^+$ over the learning set
$\mathcal{L} = \{(x^i, y^i) \in \mathcal{X} \times \mathcal{Y}\}_{i=1}^n$
\begin{equation}
\min_{\beta_0, \beta} \sum_{i=1}^n w^i L(y^i, \beta_0 + \beta^T x),
\end{equation}
\noindent where the $w^i \in \mathbb{R}^+$ are the weight associated to each
sample. There is an analytical solution in the case of a $\ell_2$ norm regularization
penalty and algorithms for a $\ell_1$ regularization penalty can be easily
extended to accommodate for the weights.

\item For a decision tree, we will use a weighted impurity criterion and a
weight-aware leaf labelling rule assignment procedure. It also allows new
stopping rule based on sample-weight, such as a minimal total weight to split
a node.

\item  For a $k$-nearest neighbors, we will store the sample weight during fit
and we will predict an unseen sample through a weighted aggregation of the
nearest neighbors.

\end{itemize}

Conversely, to support unweighted supervised learning task with a weight-aware
implementation, we can set the sample weights to a constant such as $w^i =
1/n\,\forall i$ prior the model training.
\end{remark}

\subsection{Functional gradient boosting}
\label{subsec:gradient-boosting}

The AdaBoost algorithm has an analytical and closed-form solution to
Equation~\ref{eq:boosting-optim} with the exponential loss. However, we would
like to build boosting ensembles when such closed-form solutions are not
available. Functional gradient boosting, a forward stagewise additive approach,
approximately solves Equation~\ref{eq:boosting-optim} for a given loss $\ell :
\mathcal{Y} \times \mathcal{Y} \rightarrow \mathbb{R}^+$ by sequentially adding
new basis function $f_m$, a regression model, with a weight $\alpha_m$ without
modifying the previous models.

If we want to add a new model $f_m$ to a boosting ensemble with $m-1$ models
while minimizing the square loss, the loss for a sample $(x, y)$ is given by
\begin{align}
\ell(y, f(x))
&= \frac{1}{2} (y - f(x))^2 \\
&= \frac{1}{2} (y - \sum_{l=1}^{m-1} \alpha_l f_l(x) - \alpha_m f_m(x))^2 \\
&= \frac{1}{2} (r_m(x) - \alpha_m f_m(x))^2
\end{align}
\noindent where $r_m(x) = y - \sum_{l=1}^{m-1} \alpha_l f_l(x)$ is the remaining
residual of the $m-1$ models to predict a sample $x$. Thus for the square loss,
we can add a new models $f_m$ by fitting the new model on the residuals left by
the $m-1$ previous models. This approach is called least square regression
boosting.

Solving Equation~\ref{eq:boosting-optim} is difficult for general loss
functions. It requires to be able to expand a new basis function $f_m$ while
minimizing the chosen loss function. For instance in the context of decision
trees, it would require a specific splitting criterion and a leaf labelling
rule minimizing the chosen loss.

Instead of solving Equation~\ref{eq:boosting-optim},
\citet{friedman2001greedy} proposed a fast approximate solution for arbitrary
differentiable losses inspired from numerical optimization. We can re-write the
loss function minimization as
\begin{equation}
\hat{f} = \arg\min_{f} \ell(f) = \min_{f} \sum_{(x,y)\in\mathcal{L}} \ell(y, f(x)).
\label{eq:boosting-numerical-optim}
\end{equation}
\noindent with the constraint that $f$ is a sum of supervised learning models.
Ignoring this constraint, the Equation~\ref{eq:boosting-numerical-optim} is an
unconstrained minimization problem with $f \in \mathbb{R}^n$ being a
$n$-dimensional vector. Iterative solvers solve such minimization problems
by correcting an initial estimate through a recursive equation.
The final solution is a sum of vectors
\begin{equation}
\hat{f} = \sum_{m=0}^M h_m, \quad h_m \in \mathbb{R}^n,
\end{equation}
\noindent where $h_0$ is the initial estimate. The construction of the sequence
of $h_0,\ldots,h_m$ depends on the chosen optimization algorithm.

The gradient boosting algorithm~\cite{friedman2001greedy} uses the same approach
as the gradient descent method. The update rule of the gradient descent
algorithm $h_m$ is of the form
\begin{equation}
h_m = - \rho_m g_m
\end{equation}
\noindent where $\rho_m$ is a scalar and $g_m \in \mathbb{R}^n$ is the
gradient of $L(f)$ with respect to $f$ evaluated at the current approximate
solution $\hat{f} = \sum_{l=1}^{m-1} \rho_l h_l$:
\begin{equation}
g^{i}_m = \left[
\frac{\partial}{\partial y'} \ell(y^i, y')
\right]_{y' = \hat{f}}.
\end{equation}
\noindent The scalar $\rho_m$ is the step length in the negative loss gradient direction
$-g_m$ chosen so as to minimize the objective function $\ell(f)$:
\begin{equation}
\rho_m = \arg\min_{\rho \in \mathbb{R}} \ell(\hat{f} - \rho_m g_m).
\end{equation}

Back to supervised learning, we can only compute the loss gradient for the
training samples. To generalize to unseen data, the idea is to approximate the
direction of the negative gradient using a regression model $g_m$ selected
within a hypothesis space $\mathcal{H}$ of weak base-learners minimizing the
square loss on the training data:
\begin{equation}
g_m = \arg\min_{g \in \mathcal{H}}\sum_{i=1}^n (-g_m^i - g(x^i))^2.
\end{equation}

The gradient boosting approach can be summarized as follows: start at an initial
constant estimate $\rho_0 \in \mathbb{R}$, then iteratively follows the negative
gradient of the loss $\ell$ as estimated by a regression model $g_m$ fitted over
the training samples and make an optimal step length $\rho_m$ minimizing the
loss $\ell$. The gradient boosting ensemble predicts an unseen sample through
\begin{equation}
f(x) = \rho_0 + \sum_{m=1}^M \rho_m g_m(x).
\end{equation}

The whole procedure is given in Algorithm~\ref{alg:so-gradient-boosting}. The
algorithm is completely defined once we have (i) a starting model, usually the
constant minimizing the chosen loss (line~\ref{alg-line:boosting-gradient}) and
(ii) the gradient of the loss (line~\ref{alg-line:gb-starting-model}).

\begin{algorithm}
\caption{Gradient boosting algorithm}\label{alg:so-gradient-boosting}
\begin{algorithmic}[1]
\Function{GradientBoosting}{$\mathcal{L} = \{(x^i, y^i)\in\mathcal{X}\times\mathcal{Y}\}_{i=1}^n; \ell; \mathcal{H}; M$}
\State $f_0(x) = \rho_0 = \arg\min_{\rho \in \mathbb{R}} \sum_{i=1}^n \ell(y^i, \rho)$.
       \label{alg-line:gb-starting-model}
\For{$m$ = 1 to $M$}
    \State Compute the loss gradient for the training set points
           \[\quad g_m^{i} =
            \left[
            \frac{\partial}{\partial y'}\ell(y^i, y')
            \right]_{y' = f_{m-1}(x)} \forall i \in \left\{1, \ldots, n\right\}.
            \]
            \label{alg-line:boosting-gradient}
    \State Find a correlated direction to the loss gradient
           \[\quad
           g_m = \arg\min_{g \in \mathcal{H}} \sum_{i=1}^n (- g_m^i  - g(x^i))^2.
           \]
    \State Find an optimal step length in the direction $g_m$
            \[
            \quad \rho_m = \arg\min_{\rho \in \mathbb{R}}
              \sum_{i=1}^n \ell\left(y^i, f_{m-1}(x^i) + \rho g_m(x^i)\right).
            \]
          \label{alg-line:gb-step-length}
    \State $f_m(x) = f_{m-1}(x)  + \mu \rho_m g_m(x)$.
\EndFor
\State \Return $f_M(x)$
\EndFunction
\end{algorithmic}
\end{algorithm}

We compute the optimal step length (line~\ref{alg-line:gb-step-length} of
Algorithm~\ref{alg:so-gradient-boosting}) either analytically as for the square
loss or numerically using, e.g., the Brent's method~\cite{brent2013algorithms},
a robust root-finding method allowing to minimize single unconstrained
optimization problem, as for the logistic loss. \citet{friedman2001greedy}
advises to use one step of the Newton–Raphson method. However, the
Newton–Raphson algorithm might not converge if the first and second derivative
of the loss are small. These conditions occurs frequently in highly imbalanced
supervised learning tasks.

A learning rate $\mu \in (0, 1]$ is often added to shrink the size of the
gradient step $\rho_m$ in the residual space in order to avoid overfitting the
the training samples. Another possible modification is to induce randomization,
e.g. by subsampling without replacement the samples available (from all learning
samples) at each iteration~\cite{friedman2002stochastic}.

Table~\ref{tab:common-loss-boosting} gives an overview of regression and
classification losses with their gradients, while
Table~\ref{tab:constant-loss-minimizer} gives the starting constant models
minimizing losses. The square loss in regression and the exponential loss in
classification leads to nice gradient boosting algorithm (respectively the least
square regression boosting algorithm and the exponential classification boosting
algorithm~\cite{zhu2009multi}). However, these losses are not robust to outlier.
More robust losses can be used such as the absolute loss in regression and the
logistic loss or the hinge loss in classification.

\begin{table}
\caption{Regression loss ($\mathcal{Y} = \mathbb{R}$) and binary classification
loss ($\mathcal{Y} = \{-1, 1\}$)  their derivative with respect to a basis function
$f(x)$.}
\label{tab:common-loss-boosting}
\centering
\begin{tabular}{@{}lcc@{}}
\toprule
Regression & $\ell(y, y')$ & $-\partial{}\ell(y, y')/\partial{}y'$  \\
\midrule
Square  & $\frac{1}{2} (y - y')^2$ & $y - y'$ \\
Absolute & $|y - y'|$ & $\sign(y - y')$ \\
\midrule
Classification & $\ell(y, y')$ & $-\partial{}\ell(y, y')/\partial{}y'$  \\
\midrule
Exponential & $\exp(-yy')$ & $-y \exp(-yy')$ \\
Logistic & $\log(1+\exp(-2 y y'))$ & $\frac{2y}{1+\exp(2yy')}$\\
Hinge & $\max(0, 1-yy')$ & $-y 1(yy' < 1)$\\
\bottomrule
\end{tabular}
\end{table}

\begin{table}
\caption{Constant minimizers of regression losses ($\mathcal{Y} = \mathbb{R}$) and
binary classification losses ($\mathcal{Y} = \{-1, 1\}$) given a set of samples
$\mathcal{L} = \{(x^i,y^i)\in\mathcal{X}\times\mathcal{Y}\}_{i=1}^n$.}
\label{tab:constant-loss-minimizer}
\centering
\begin{tabular}{@{}ll@{}}
\toprule
Regression &  \\
\midrule
Square & $f_0(x) = \frac{1}{n}\sum_{i=1}^n y^i$ \\
Absolute & $f_0(x) = \text{median}(\{y^i\}_{i=1}^n)$ \\
\midrule
Classification &  \\
\midrule
Exponential & $f_0(x) = \log\left(\frac{\sum_{i=1}^n 1(y^i=1)}{\sum_{i=1}^n 1(y^i=-1)}\right)$\\
Logistic & $f_0(x) = \log\left(\frac{\sum_{i=1}^n 1(y^i=1)}{\sum_{i=1}^n 1(y^i=-1)}\right)$\\
Hinge & $f_0(x) = \sign\left(\frac{1}{n}\sum_{i=1}^n 1(y^i=1) - \frac{1}{2}\right)$ \\
\bottomrule
\end{tabular}
\end{table}

\part{Learning in compressed space through random projections}
\label{part:tree-rp}


\chapter[Random projections of the output space]{Random forests with random projections of the output space for high
         dimensional multi-label classification}
\label{ch:rf-output-projections}

\begin{remark}{Outline}
We adapt the idea of random projections applied to the output space, so as
to enhance tree-based ensemble methods in the context of multi-label
classification. We show how learning time complexity can be reduced without
affecting computational complexity and accuracy of predictions. We also show
that random output space projections may be used in order to reach different
bias-variance tradeoffs, over a broad panel of benchmark problems, and that
this may lead to improved accuracy while reducing significantly the
computational burden of the learning stage.

\textit{This chapter is based on previous work published in}
\begin{quote}
Arnaud Joly, Pierre Geurts, and Louis Wehenkel. Random forests with random
projections of the output space for high dimensional multi-label classification.
In Machine Learning and Knowledge Discovery in Databases, pages 607–622.
Springer Berlin Heidelberg, 2014.
\end{quote}
\end{remark}

Within supervised learning, the goal of multi-label classification is to train
models to annotate objects with a subset of labels taken from a set of candidate
labels. Typical applications include the determination of topics addressed in a
text document, the identification of object categories present within an image,
or the prediction of biological properties of a gene. In many applications, the
number of candidate labels may be very large, ranging from hundreds to hundreds
of thousands~\cite{agrawal2013multi} and often even exceeding the
sample size~\cite{dekel2010multiclass}. The very large scale nature of the
output space in such problems poses both statistical and computational
challenges that need to be specifically addressed.

A simple approach to  multi-label classification problems, called binary
relevance, is to train independently a binary classifier for each label. Several
more complex schemes have however been proposed to take into account the
dependencies between the labels (see Section~\ref{sec:multioutput-strategies}).
In the context of tree-based methods, one way is to train multi-output trees
(see Section~\ref{sec:mo-trees}), i.e. trees that can predict multiple outputs
at once. With respect to binary relevance, the multi-output tree approach has
the advantage of building a single model for all labels. It can thus potentially
take into account label dependencies and reduce memory requirements for the
storage of the models. An extensive experimental
comparison~\cite{madjarov2012extensive} shows that this approach compares
favorably with other approaches, including non tree-based methods, both in terms
of accuracy and computing times. In addition, multi-output trees inherit all
intrinsic advantages of tree-based methods, such as robustness to irrelevant
features, interpretability through feature importance scores, or fast
computations of predictions, that make them very attractive to address
multi-label problems. The computational complexity of learning multi-output
trees is however similar to that of the binary relevance method. Both approaches
are indeed $O(p d n\log n)$, where $p$ is the number of input features, $d$ the
number of candidate output labels, and $n$ the sample size; this is a limiting
factor when dealing with large sets of candidate labels.

One generic approach to reduce computational complexity is to apply some
compression technique prior to the training stage to reduce the number of
outputs to a number $q$ much smaller than the total number $d$ of labels. A
model can then be trained to make predictions in the compressed output space and
a prediction in the original label space can be obtained by decoding the
compressed prediction. As multi-label vectors are typically very sparse, one can
expect a drastic dimensionality reduction by using appropriate compression
techniques. This idea has been explored for example in~\cite{hsu2009multi} using
compressed sensing, and in~\cite{cisse2013robust} using bloom filters, in both
cases using regularized linear models as base learners. The approach obviously
reduces computing times for training the model.
Random projections are also exploited in \cite{tsoumakas2014multi} for
multi-target regression. In this latter work however, they are not used to
improve computing times by compression but instead to improve predictive
performance. Indeed, more (sparse) random projections are computed than there are
outputs and they are used each as an output to train some single target
regressor. As in \cite{hsu2009multi,cisse2013robust}, the predictions of the
regressors need to be decoded at prediction time to obtain a prediction in the
original output space. This is achieved in \cite{tsoumakas2014multi} by solving
an overdetermined linear system.

In this chapter, we explore the use of random output space projections for
large-scale multi-label classification in the context of tree-based ensemble
methods. We first explore the idea proposed for linear models
in~\cite{hsu2009multi} with random forests: a (single) random projection of the
multi-label vector to a $q$-dimensional random subspace is computed and then a
multi-output random forest is grown based on score computations using the
projected outputs. We exploit however the fact that the approximation provided
by a tree ensemble is a weighted average of output vectors from the training
sample to avoid the decoding stage: at training time all leaf labels are
directly computed  in the original multi-label space. We show theoretically and
empirically that when $q$ is large enough, ensembles grown on such random output
spaces are equivalent to ensembles grown on the original output space. When $d$
is large enough compared to $n$, this idea hence may reduce computing times at
the learning stage without affecting accuracy and computational complexity of
predictions.

Next, we propose to exploit the randomization inherent to the projection of the
output space as a way to obtain randomized trees in the context of ensemble
methods: each tree in the ensemble is thus grown from a different randomly
projected subspace of dimension $q$. As previously, labels at leaf nodes are
directly computed in the original output space to avoid the decoding step. We
show, theoretically, that this idea can lead to better accuracy than the first
idea and, empirically, that best results are obtained on many problems with very
low values of $q$, which leads to significant computing time reductions at the
learning stage. In addition, we study the interaction between input
randomization (\`a la Random Forests) and output randomization (through random
projections), showing that there is an interest, both in terms of predictive
performance and in terms of computing times, to optimally combine these two ways
of randomization. All in all, the proposed approach constitutes a very
attractive way to address large-scale multi-label problems with tree-based
ensemble methods.

The rest of the chapter is structured as follows: Section~\ref{sec:methods}
presents the proposed algorithms and their theoretical properties;
Section~\ref{sec:biasvar} analyses the proposed algorithm from a bias-variance
perspective; Section~\ref{sec:rf-rp-experiments} provides the empirical validations,
whereas Section~\ref{sec:rf-rp-conclusions} discusses our work and provides further
research directions. 

\section{Methods}
\label{sec:methods}

We first present how we propose to exploit random projections to reduce the
computational burden of learning single multi-output trees in very
high-dimensional output spaces. Then we present and compare two ways to exploit
this idea with ensembles of trees.

\subsection{Multi-output regression trees in randomly projected output spaces}
\label{sec:theoretical-analysis}

The multi-output single tree algorithm described in
Chapter~\ref{ch:tree} requires the computation of the sum of impurity
criterion, such as the variance (or Gini), at each tree node and for each
candidate split. When $\mathcal{Y}$ is very high-dimensional, this computation
constitutes the main computational bottleneck of the algorithm. We thus propose
to approximate variance computations by using random projections of the output
space. The multi-output regression tree algorithm is modified as follows
(denoting by $\mathcal{L}$ the  learning sample $\mathcal{L} =
((x^i,y^i) \in \mathcal{X}\times\mathcal{Y})_{i=1}^n$):

\begin{itemize}

\item First, a projection matrix $\Phi$ of dimension $q\times d$ is randomly
generated.

\item A new dataset $\mathcal{L}_m=((x^i,\Phi y^i))_{i=1}^n$ is constructed by
projecting each learning sample output using the projection matrix $\Phi$.

\item A tree (structure) $\mathcal{T}_m$ is grown using the projected learning sample
$\mathcal{L}_m$.

\item Predictions $\hat{y}$ at each leaf of $\cal T$ are computed using the
corresponding outputs in the original output space.

\end{itemize}

The resulting tree is exploited in the standard way to make predictions: an
input vector $x$ is propagated through the tree until it reaches a leaf from
which a prediction $\hat{y}$ in the original output space is directly retrieved.

If $\Phi$ satisfies the Jonhson-Lindenstrauss lemma (Equation~\ref{eqn:js}), the
following theorem shows that variance computed in the projected subspace is an
$\epsilon$-approximation of the variance computed over the original space.

\begin{theorem}
\label{thm:var-jl-lemma}
Given $\epsilon > 0$, a sample $(y^i)_{i=1}^{n}$ of $n$ points $y \in
\mathbb{R}^d$, and a projection matrix $\Phi\in \mathbb{R}^{q\times d}$ such
that for all $i, j  \in \{1, \ldots , n\}$ the condition given by
Equation~\ref{eqn:js} holds, we have also:
\begin{equation}
(1 - \epsilon) \Var((y^i)_{i=1}^n)  \leq \Var((\Phi y^i)_{i=1}^n)
                                    \leq (1 + \epsilon) \Var((y^i)_{i=1}^n).
\label{eq:var-ineq}
\end{equation}
\end{theorem}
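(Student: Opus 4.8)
The plan is to reduce the claim to the Johnson-Lindenstrauss condition applied term by term, by first rewriting the (multi-output) variance purely in terms of pairwise squared distances between the sample points. The crucial observation is that the variance of a point cloud admits a representation that never references the mean explicitly, so that each summand is itself a squared Euclidean distance to which Equation~\ref{eqn:js} directly applies.

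First I would recall the definition of the multi-output variance used in Section~\ref{sec:mo-trees},
\[
\Var((y^i)_{i=1}^n) = \frac{1}{n}\sum_{i=1}^n ||y^i - \bar{y}||^2, \qquad \bar{y} = \frac{1}{n}\sum_{i=1}^n y^i.
\]
The key step is to establish the identity
\[
\Var((y^i)_{i=1}^n) = \frac{1}{2n^2}\sum_{i=1}^n\sum_{j=1}^n ||y^i - y^j||^2,
\]
which expresses the variance as an average of pairwise squared distances. This is obtained by expanding $||y^i - y^j||^2 = ||y^i||^2 - 2\, {y^i}^T y^j + ||y^j||^2$, summing over $i$ and $j$, and recognizing that the cross terms collapse into $2n^2 ||\bar{y}||^2$, which cancels against the double-counted norm terms.

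With this representation in hand the argument is immediate. The same identity applied to the projected cloud gives
\[
\Var((\Phi y^i)_{i=1}^n) = \frac{1}{2n^2}\sum_{i=1}^n\sum_{j=1}^n ||\Phi y^i - \Phi y^j||^2.
\]
I would then invoke the hypothesis, namely that Equation~\ref{eqn:js} bounds each individual term $||\Phi y^i - \Phi y^j||^2$ between $(1-\epsilon)||y^i - y^j||^2$ and $(1+\epsilon)||y^i - y^j||^2$. Since every pairwise term is nonnegative and the prefactor $\frac{1}{2n^2}$ is positive, summing these termwise inequalities and dividing preserves their direction, which yields Equation~\ref{eq:var-ineq} directly.

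The step I expect to require the most care is the pairwise-distance identity for the variance; once that is in place, the remainder is just linearity of the double sum and positivity of the coefficients. Notably, no property of $\Phi$ beyond the assumed distance-distortion bound is used, so the conclusion holds for \emph{any} matrix satisfying Equation~\ref{eqn:js}, independently of how it was generated (Gaussian, Rademacher, sparse, or a random subspace selection).
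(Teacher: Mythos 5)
Your proposal is correct and follows essentially the same route as the paper's proof: both rewrite the variance as $\frac{1}{2n^2}\sum_{i,j}||y^i - y^j||^2$ via the same algebraic expansion of the cross terms, then apply the Johnson--Lindenstrauss bound of Equation~\ref{eqn:js} termwise and sum. The only cosmetic difference is the direction in which you verify the pairwise-distance identity.
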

\begin{proof}
The sum of the variances of $n$ observations drawn from a random vector
$y \in \mathbb{R}^d$ can be interpreted as a sum of squared euclidean distances
between the pairs of observations
\begin{equation}
\Var((y^i)_{i=1}^n) = \frac{1}{2n^2} \sum_{i=1}^n \sum_{j=1}^n ||y^i - y^j||^2. \label{sum-of-dist}
\end{equation}

Starting from the defition of the variance, we have
\begin{align}
& \Var((y^i)_{i=1}^n) \nonumber \\
&\stackrel{\mbox{\tiny def}}{=} \frac{1}{n} \sum_{i=1}^n ||y^i - \frac{1}{n} \sum_{j=1}^n y^j||^2 \\
&= \frac{1}{n} \sum_{i=1}^n
      (y^i - \frac{1}{n} \sum_{j=1}^n y^j)^T
      (y^i - \frac{1}{n} \sum_{k=1}^n y^k) \\
&= \frac{1}{n} \sum_{i=1}^n
      \left(
      {y^i}^T y^i -
      \frac{2}{n}  \sum_{j=1}^n {y^i}^T y^j +
      \frac{1}{n^2} \sum_{j=1}^n \sum_{k=1}^n {y^j}^T y^k
      \right)\\
&=  \frac{1}{n} \sum_{i=1}^n {y^i}^T y^i
    - \frac{2}{n^2} \sum_{i=1}^n \sum_{j=1}^n {y^i}^T y^j
    + \frac{1}{n^2} \sum_{j=1}^n \sum_{k=1}^n {y^j}^T y^k \\
&= \frac{1}{n} \sum_{i=1}^n {y^i}^T y^i
   - \frac{1}{n^2} \sum_{i=1}^n \sum_{j=1}^n {y^i}^T y^j \\
&=   \frac{1}{2n} \sum_{i=1}^n {y^i}^T y^i
   + \frac{1}{2n} \sum_{j=1}^n {y^j}^T y^j
   - \frac{1}{n^2} \sum_{i=1}^n \sum_{j=1}^n {y^i}^T y^j \\
&=  \frac{1}{2 n^2} \sum_{i=1}^n \sum_{j=1}^n {y^i}^T\hspace*{-1mm} y^i
   \hspace*{-1mm}+ \hspace*{-1mm}\frac{1}{2 n^2} \sum_{i=1}^n \sum_{j=1}^n {y^j}^T\hspace*{-1mm} y^j
   \hspace*{-1mm}- \hspace*{-1mm}\frac{1}{n^2} \sum_{i=1}^n \sum_{j=1}^n {y^i}^T\hspace*{-1mm} y^j \\
&= \frac{1}{2n^2} \sum_{i=1}^n \sum_{j=1}^n
      \left(
      {y^i}^T y^i + {y^j}^T y^j - 2 {y^i}^T y^j
      \right) \\
&= \frac{1}{2n^2} \sum_{i=1}^n \sum_{j=1}^n ||y^i - y^j||^2.
\end{align}

From the Johnson-Lindenstrauss Lemma we have for any $i,j$
\begin{equation}
(1 - \epsilon) ||y^i - y^j||^2 \leq || \Phi y^i - \Phi y^j ||^2
                               \leq (1 + \epsilon) || y^i - y^j||^2. \label{ineq-vect}
\end{equation}

By summing the three terms of Equation~\ref{ineq-vect} over all pairs $i,j$ and
dividing by $1 / (2n^2)$ and by then using Equation~\ref{sum-of-dist}, we get
Equation~\ref{eq:var-ineq}.

\end{proof}

As a consequence, any split score approximated from the randomly projected
output space will be $\epsilon$-close to the unprojected scores in any subsample
of the complete learning sample. Thus, if the condition given by
Equation~\ref{eqn:js}) is satisfied for a sufficiently small $\epsilon$ then the
tree grown from the projected data will be identical to the tree grown from the
original data\footnote{Strictly speaking, this is only the case when the optimum
scores of test splits as computed over the original output space are isolated,
i.e. when there is only one single best split, no tie.}.

For a given size $q$ of the projection subspace, the complexity is reduced from
$O(d n)$ to $O(q n)$ for the computation of one split score and thus from $O(d p
n \log n)$ to $O(q p n\log n)$ for the construction of one full (balanced) tree,
where one can expect $q$ to be much smaller than $d$ and at worst of
$O(\epsilon^{-2} \log{n})$. The whole procedure requires to generate the
projection matrix and to project the training data. These two steps are
respectively $O(d q)$ and $O(n d q)$ but they can often be significantly
accelerated by exploiting the sparsity of the projection matrix and/or of the
original output data, and they are called only once before growing the tree.

All in all, this means that when $d$ is sufficiently large, the random
projection approach may allow us to significantly reduce tree building
complexity from $O(d p n \log n)$ to $O(q p n \log n + n d q)$, without
impact on predictive accuracy (see Section~\ref{sec:rf-rp-experiments}, for empirical
results).

\subsection{Exploitation in the context of tree ensembles}

The idea developed in the previous section can be directly exploited  in the
context of ensembles of randomized multi-output regression trees. Instead of
building a single tree from the projected learning sample, one can grow a
randomized ensemble of them. This ``shared subspace'' algorithm is described in
pseudo-code in Algorithm~\ref{alg:output-fix-subspace-tree-ensemble}.

\begin{algorithm}
\caption{Grow $t$ decision trees on a single shared subspace $\Phi$
using learning samples $\mathcal{L} = ((x^i, y^i) \in (\mathbb{R}^p \times
\mathbb{R}^d))_{i=1}^n$ }
\label{alg:output-fix-subspace-tree-ensemble}
\begin{algorithmic}[1]
\Function{GrowForestSharedOutputSubspace}{$\mathcal{L}$,$t$}
    \State{Generate a sub-space $\Phi \in \mathbb{R}^{q \times d}$;}
    \For{$j = 1$ to $t$}
        \State{Build a tree structure $\mathcal{T}_{j}$ using $((x^i, \Phi y^i))_{i=1}^n$;}
        \State{Label the leaves of $\mathcal{T}_{j}$ using $((x^i, y^i))_{i=1}^n$;}
        \State{Add the labelled tree $\mathcal{T}_{j}$ to the ensemble;}
    \EndFor
\EndFunction
\end{algorithmic}
\end{algorithm}

Another idea is to exploit the random projections used so as to introduce a
novel kind of diversity among the different trees of an ensemble. Instead of
building all the trees of the ensemble from a same shared output-space
projection, one could instead grow each tree in the ensemble from a different
output-space projection. Algorithm~\ref{alg:output-subspace-tree-ensemble}
implements this idea in pseudo-code. The randomization introduced by the output
space projection can of course be combined with any existing randomization
scheme to grow ensembles of trees. In this chapter, we will consider the
combination of random projections with the randomizations already introduced in
Random Forests and Extra Trees. The interplay between these different
randomizations will be discussed theoretically in the next subsection by a
bias/variance analysis and empirically in Section~\ref{sec:rf-rp-experiments}. Note
that while when looking at single trees or shared ensembles, the size $q$ of the
projected subspace should not be too small so that condition
(Equation~\ref{eqn:js}) is satisfied, the optimal value of $q$ when projections
are randomized at each tree is likely to be smaller, as suggested by the
bias/variance analysis in the next section.

\begin{algorithm}
\caption{Grow $t$ decision trees on individual random subspaces $(\Phi_{j})_{j=1}^t$
using learning samples $\mathcal{L} = ((x^i, y^i) \in (\mathbb{R}^p \times
\mathbb{R}^d))_{i=1}^n$}
\label{alg:output-subspace-tree-ensemble}

\begin{algorithmic}[1]
\Function{GrowForestOutputSubspace}{$\mathcal{L}$,$t$}
    \For{$j = 1$ to $t$}
        \State{Generate a sub-space $\Phi_{j} \in \mathbb{R}^{q \times d}$;}
        \State{Build a tree structure $\mathcal{T}_{j}$ using $((x^i, \Phi_{j} y^i))_{i=1}^n$;}
        \State{Label the leaves  of $\mathcal{T}_{j}$  using $((x^i, y^i))_{i=1}^n$;}
        \State{Add the labelled tree $\mathcal{T}_{j}$  to the ensemble;}
    \EndFor
\EndFunction
\end{algorithmic}
\end{algorithm}

From the computational point of view, the main difference between these two ways
of transposing random-output projections to ensembles of trees is that in the
case of Algorithm~\ref{alg:output-subspace-tree-ensemble}, the generation of the
projection matrix $\Phi$ and the computation of projected outputs is carried out
$t$ times, while it is done only once for the case of
Algorithm~\ref{alg:output-fix-subspace-tree-ensemble}. These aspects will be
empirically evaluated in Section~\ref{sec:rf-rp-experiments}.

\section{Bias/variance analysis}\label{sec:biasvar}

In this section, we adapt the bias/variance analysis carried out in
Section~\ref{subsec:variance-decomp-ensemble} to take into account random output
projections. The details of the derivations are reported in
Section~\ref{subsec:single-rp-tree} for a single tree and in
Section~\ref{subsec:single-rp-forest} for an ensemble of $t$ randomized trees.

Let us denote by $f_{\mathcal{L},\phi,\sigma}(.):{\cal X}\rightarrow \mathbb{R}^d$ a
single multi-output tree obtained from a projection matrix $\phi$ (below we use
$\Phi$ to denote the corresponding random variable), where $\sigma$ is the
value of a random variable $\varsigma$ capturing the random perturbation
scheme used to build this tree (e.g., bootstrapping and/or random input space
selection). The square error of this model at some point $x\in{\cal X}$ is
defined by:
$$Err(f_{\mathcal{L},\phi,\sigma}(x))\stackrel{\text{def}}{=}\E_{Y|x}\{||Y-f_{\mathcal{L},\phi,\sigma}(x))||^2\},$$
and its average can decomposed in its residual error, (squared) bias, and
variance terms denoted:
$$\E_{\mathcal{L},\Phi,\varsigma}\{Err(f_{\mathcal{L}, \phi, \varsigma}(x))\}=\sigma^2_R(x)+B^2(x)+V(x)$$
where the variance term $V(x)$ can be further decomposed as the sum of the
following three terms:
\begin{eqnarray*}
V_{\mathcal{L}}(x)&=&\Var_{\mathcal{L}}\{\E_{\Phi,\varsigma|\mathcal{L}}\{f_{\mathcal{L}, \phi, \varsigma}(x)\}\}\\
V_{Algo}(x)&=&\E_{\mathcal{L}}\{\E_{\Phi|\mathcal{L}}\{\Var_{\varsigma|\mathcal{L},\Phi}\{f_{\mathcal{L}, \phi, \varsigma}(x)\}\}\},\\
V_{Proj}(x)&=&\E_{\mathcal{L}}\{\Var_{\Phi|\mathcal{L}}\{\E_{\varsigma|\mathcal{L},\Phi}\{f_{\mathcal{L}, \phi, \varsigma}(x)\}\}\},
\end{eqnarray*}
\noindent that measure errors due to the randomness of, respectively, the learning sample,
the tree algorithm, and the output space projection
(see Section~\ref{subsec:single-rp-tree}).

Approximations computed respectively by
Algorithm~\ref{alg:output-fix-subspace-tree-ensemble} and
Algorithm~\ref{alg:output-subspace-tree-ensemble} take the following
forms:\vspace*{-2mm}
\begin{itemize}
\item $f_{1;\mathcal{L},\sigma^t,\phi}(x)=\frac{1}{t}\sum_{i=1}^t f_{\mathcal{L},\phi, \sigma_i}(x)$
\item $f_{2;\mathcal{L},\sigma^t,\phi^t}(x)=\frac{1}{t}\sum_{i=1}^t f_{\mathcal{L},\phi_i, \sigma_i}(x),$\vspace*{-2mm}
\end{itemize}
where $\sigma^t=(\epsilon_1,\ldots,\epsilon_{t})$ and
$\phi^t=(\phi_1,\ldots,\phi_t)$ are vectors of i.i.d. values of the random
variables $\varsigma$ and $\Phi$ respectively.

We are interested in comparing the average errors of these two algorithms,
where the average is taken over all random parameters (including the
learning sample). We show that these can be decomposed as
follows (see Section~\ref{subsec:single-rp-forest}):
\begin{eqnarray*}
&&\hspace*{-5mm}\E_{\mathcal{L},\Phi,\varsigma^t}\{Err(f_{1;\mathcal{L},\Phi,\varsigma^t}(x))\}\\
&&\hspace{-0.2cm}=\sigma^2_R(x)+B^2(x)+V_{\mathcal{L}}(x)+\frac{V_{Algo}(x)}{t}+V_{Proj}(x),\\
&&\hspace*{-5mm}\E_{\mathcal{L},\Phi^t,\varsigma^t}\{Err(f_{2;\mathcal{L},\Phi^t,\varsigma^t}(x))\}\\
&&\hspace{-0.2cm} =\sigma^2_R(x)+B^2(x)+V_{\mathcal{L}}(x)+\frac{V_{Algo}(x)+V_{Proj}(x)}{t}.
\end{eqnarray*}
From this result, it is hence clear that
Algorithm~\ref{alg:output-subspace-tree-ensemble} can not be worse, on the
average, than Algorithm~\ref{alg:output-fix-subspace-tree-ensemble}. If the
additional computational burden needed to generate a different random projection
for each tree is not problematic, then
Algorithm~\ref{alg:output-subspace-tree-ensemble} should always be preferred to
Algorithm~\ref{alg:output-fix-subspace-tree-ensemble}.

For a fixed level of tree randomization ($\varsigma$), whether the additional
randomization brought by random projections could be beneficial in terms of
predictive performance remains an open question that will be addressed
empirically in the next section. Nevertheless, with respect to an ensemble
grown from the original output space, one can expect that the
output-projections will always increase the bias term, since they disturb the algorithm
in its objective of reducing the errors on the learning sample. For small
values of $q$, the average error will therefore decrease (with a sufficiently
large number $t$ of trees) only if the increase in bias is
compensated by a decrease of variance.

The value of $q$, the dimension of the projected subspace, that will lead to the
best tradeoff between bias and variance will hence depend both on the level of tree
randomization and on the learning problem. The more (resp. less)
tree~randomization, the higher (resp. the lower) could be the optimal value of $q$,
since both randomizations affect bias and variance in the same direction.

\subsection{Single random trees.}
\label{subsec:single-rp-tree}

Let us denote by $f_{\mathcal{L}, \phi, \sigma} :{\cal X}\rightarrow
\mathbb{R}^d$ a single multi-output (random) tree obtained from a projection
matrix $\phi$ (below we use $\Phi$ to denote the corresponding random variable),
where $\sigma$ is the value of a random variable $\varsigma$ capturing the
random perturbation scheme used to build this tree (e.g., bootstrapping and/or
random input space selection). Denoting by $Err(f_{\mathcal{L}, \phi,
\sigma}(x))$ the square error of this model at some point $x\in{\cal X}$ defined
by:
\begin{equation}
\E_{P_{\mathcal{Y}|\mathcal{X}}}\{||y-f_{\mathcal{L}, \phi, \sigma}(x)||^2\}.
\end{equation}

The average of this square error can decomposed as follows:
\begin{eqnarray*}
&&\E_{\mathcal{L},\Phi,\varsigma}\{Err(f_{\mathcal{L}, \phi, \sigma}(x))\}\\
&=&\sigma^2_R(x)+||f_\text{Bayes}(x)-\bar{f}(x)||^2+\Var_{\mathcal{L},\Phi,\varsigma}\{f_{\mathcal{L}, \phi, \sigma}(x)\},
\end{eqnarray*}
where
\begin{align*}
\bar{f}(x)&\stackrel{\text{def}}{=}\E_{\mathcal{L},\Phi,\varsigma}\{f_{\mathcal{L}, \phi, \varsigma}(x)\} \\
f_\text{Bayes}(x)&=E_{Y|x}\{Y\} \\
\Var_{\mathcal{L},\Phi,\varsigma}\{f_{\mathcal{L}, \phi, \varsigma}(x)\}&\stackrel{\text{def}}{=} E_{\mathcal{L},\Phi,\varsigma}\{||f_{\mathcal{L}, \phi, \varsigma}(x)-\bar{f}(x)||^2\}.
\end{align*}
\noindent The three terms of this decomposition are respectively the residual error,
the bias, and the variance of this estimator (at $x$).

The variance term can be further decomposed as follows using the law of total
variance:
\begin{eqnarray}
\Var_{\mathcal{L},\Phi,\varsigma}\{f_{\mathcal{L}, \phi, \varsigma}(x)\}\hspace*{-4cm} &\hspace{1cm}&\notag\\
&=&\Var_{\mathcal{L}}\{E_{\Phi,\varsigma|\mathcal{L}}\{f_{\mathcal{L}, \phi, \varsigma}(x)\}\}\notag\\
&&+E_{\mathcal{L}}\{\Var_{\Phi,\varsigma|\mathcal{L}}\{f_{\mathcal{L}, \phi, \varsigma}(x)\}\}.\label{eqn:totalvar1}
\end{eqnarray}

The first term is the variance due to the learning sample randomization and the
second term is the average variance (over $\mathcal{L}$) due to both the random
forest randomization and the random output projection. By using the law of total
variance a second time, the second term of Equation~\ref{eqn:totalvar1}) can be
further decomposed as follows:
\begin{align}
&\E_{\mathcal{L}}\{\Var_{\Phi,\varsigma|\mathcal{L}}\{f_{\mathcal{L}, \phi, \varsigma}(x)\}\} \notag\\
&= \E_{\mathcal{L}}\{\Var_{\Phi|\mathcal{L}}\{\E_{\varsigma|\mathcal{L},\Phi}\{f_{\mathcal{L}, \phi, \varsigma}(x)\}\}\}\notag\\
&\quad + \E_{\mathcal{L}}\{\E_{\Phi|\mathcal{L}}\{\Var_{\varsigma|\mathcal{L},\Phi}\{f_{\mathcal{L}, \phi, \varsigma}(x)\}\}\}.
\label{eqn:totalvar2}
\end{align}

The first term of this decomposition is the variance due to the random choice of
a projection and the second term is the average variance due to the random
forest randomization.  Note that all these terms are non negative. In what
follows, we will denote these three terms respectively $V_{\mathcal{L}}(x)$,
$V_{Algo}(x)$, and $V_{proj}(x)$. We thus have:
$$\Var_{\mathcal{L},\Phi,\varsigma}\{f_{\mathcal{L}, \phi,
\varsigma}(x)\}=V_{\mathcal{L}}(x)+V_{Algo}(x)+V_{Proj}(x),$$ with
\begin{eqnarray*}
V_{\mathcal{L}}(x)&=&\Var_{\mathcal{L}}\{\E_{\Phi,\varsigma|\mathcal{L}}\{f_{\mathcal{L}, \phi, \varsigma}(x)\}\}\\
V_{Algo}(x)&=&\E_{\mathcal{L}}\{\E_{\Phi|\mathcal{L}}\{\Var_{\varsigma|\mathcal{L},\Phi}\{f_{\mathcal{L}, \phi, \varsigma}(x)\}\}\},\\
V_{Proj}(x)&=&\E_{\mathcal{L}}\{\Var_{\Phi|\mathcal{L}}\{\E_{\varsigma|\mathcal{L},\Phi}\{f_{\mathcal{L}, \phi, \varsigma}(x)\}\}\},
\end{eqnarray*}

\subsection{Ensembles of $t$ random trees.}
\label{subsec:single-rp-forest}

When the random projection is fixed for all $t$ trees in the ensemble
(Algorithm~\ref{alg:output-fix-subspace-tree-ensemble}), the algorithm computes
an approximation, denoted $f_1(x;\mathcal{L},\phi, \sigma^t)$, that takes the
following form:
$$f_{1;\mathcal{L},\phi,\sigma^t}(x)=\frac{1}{t}\sum_{i=1}^t
f_{\mathcal{L},\phi, \sigma_i}(x),$$
where $\sigma^t=(\sigma_1,\ldots,\sigma_{t})$ is a vector of i.i.d. values of the
random variable $\varsigma$. When a different random projection is chosen for
each tree (Algorithm~\ref{alg:output-subspace-tree-ensemble}), the algorithm
computes an approximation, denoted by $f_2(x;\mathcal{L},\phi^t,\sigma^t)$, of
the following form:
$$f_{2;\mathcal{L},\phi^t,\sigma^t}(x)=\frac{1}{t}\sum_{i=1}^t
f_{\mathcal{L},\phi_i, \sigma_i}(x),$$ where $\phi^t=(\phi_1,\ldots,\phi_t)$ is
also a vector of i.i.d. random projection matrices.

We would like to compare the average errors of these two algorithms with the
average errors of the original single tree method, where the average is taken
for all algorithms over their random parameters (that include the learning
sample).

Given that all trees are grown independently of each other, one can show that
the average models corresponding to each algorithm are equal:
\begin{eqnarray*}
\bar{f}(x)&=&\E_{\mathcal{L},\Phi,\varsigma^t}\{f_{1;\mathcal{L},\Phi,\varsigma^t}(x)\}\\
&=&\E_{\mathcal{L},\Phi^t,\varsigma^t}\{f_{2;\mathcal{L},\Phi^t,\varsigma^t}(x)\}.
\end{eqnarray*}
They thus all have the exact same bias (and residual error) and differ only in
their variance.

Using the same argument, the first term of the variance decomposition
in (\ref{eqn:totalvar1}), ie. $V_{\mathcal{L}}(x)$, is the same for all three
algorithms since:
\begin{eqnarray*}
&&\E_{\Phi,\varsigma|\mathcal{L}}\{f_{\mathcal{L}, \phi, \varsigma}(x)\}\\
&=&\E_{\Phi,\varsigma^t|\mathcal{L}}\{f_{1;\mathcal{L},\Phi,\varsigma^t}(x)\}\\
&=&\E_{\Phi^t,\varsigma^t|\mathcal{L}}\{f_{2;\mathcal{L},\Phi^t,\varsigma^t}(x)\}.
\end{eqnarray*}
Their variance thus only differ in the second term of
Equation~\ref{eqn:totalvar1}.

Again, because of the conditional independence of the ensemble terms given the
learning set $\mathcal{L}$ and the projection matrix $\phi$,
Algorithm~\ref{alg:output-fix-subspace-tree-ensemble}, which keeps the output
projection fixed for all trees, is such that
$$\E_{\sigma^t|\mathcal{L},\Phi}\{f_{1;\mathcal{L},\Phi,\varsigma^t}(x)\} =
\E_{\sigma|\mathcal{L},\Phi}\{f_{\mathcal{L}, \phi, \varsigma}(x)\}$$ and
\[
\Var_{\varsigma^t|\mathcal{L},\Phi}\{f_{1;\mathcal{L},\Phi,\varsigma^t}(x)\} =
\frac{1}{t}\Var_{\varsigma|\mathcal{L},\Phi}\{f_{\mathcal{L}, \phi, \varsigma}(x)\}. \]

It thus divides the second term of Equation~\ref{eqn:totalvar2} by the number $t$ of
ensemble terms. Algorithm~\ref{alg:output-subspace-tree-ensemble}, on the other
hand, is such that:
\[
\Var_{\Phi^t,\varsigma^t|\mathcal{L}}\{f_{2;\mathcal{L},\Phi,\varsigma^t}(x)\}
= \frac{1}{t}\Var_{\Phi,\varsigma|\mathcal{L}}\{f_{\mathcal{L}, \phi, \varsigma}(x)\},
\]
and thus divides the second term of Equation~\ref{eqn:totalvar1} by $t$.

Putting all these results together one gets that:
\begin{eqnarray*}
&&\hspace*{-5mm}\E_{\mathcal{L},\Phi,\varsigma}\{Err(f_{\mathcal{L},\Phi,\varsigma^t}(x))\}\\
&&\hspace{-0.2cm}=\sigma^2_R(x)+B^2(x)+V_{\mathcal{L}}(x)+V_{Algo}(x)+V_{Proj}(x),\\
&&\hspace*{-5mm}\E_{\mathcal{L},\Phi,\varsigma^t}\{Err(f_{1;\mathcal{L},\Phi,\varsigma^t}(x))\}\\
&&\hspace{-0.2cm}=\sigma^2_R(x)+B^2(x)+V_{\mathcal{L}}(x)+\frac{V_{Algo}(x)}{t}+V_{Proj}(x),\\
&&\hspace*{-5mm}\E_{\mathcal{L},\Phi^t,\varsigma^t}\{Err(f_{2;\mathcal{L},\Phi^t,\varsigma^t}(x))\}\\
&&\hspace{-0.2cm} =\sigma^2_R(x)+B^2(x)+V_{\mathcal{L}}(x)+\frac{V_{Algo}(x)+V_{Proj}(x)}{t}.
\end{eqnarray*}

Given that all terms are positive, this result clearly shows that
Algorithm~\ref{alg:output-subspace-tree-ensemble} can not be worse than
Algorithm~\ref{alg:output-fix-subspace-tree-ensemble}.

\section{Experiments} \label{sec:rf-rp-experiments}

\subsection{Effect of the size $q$ of the Gaussian output space}
\label{sec:empirical-convergence-delicious}

\begin{figure}[htb]
\centering
\includegraphics[width=0.75\textwidth]{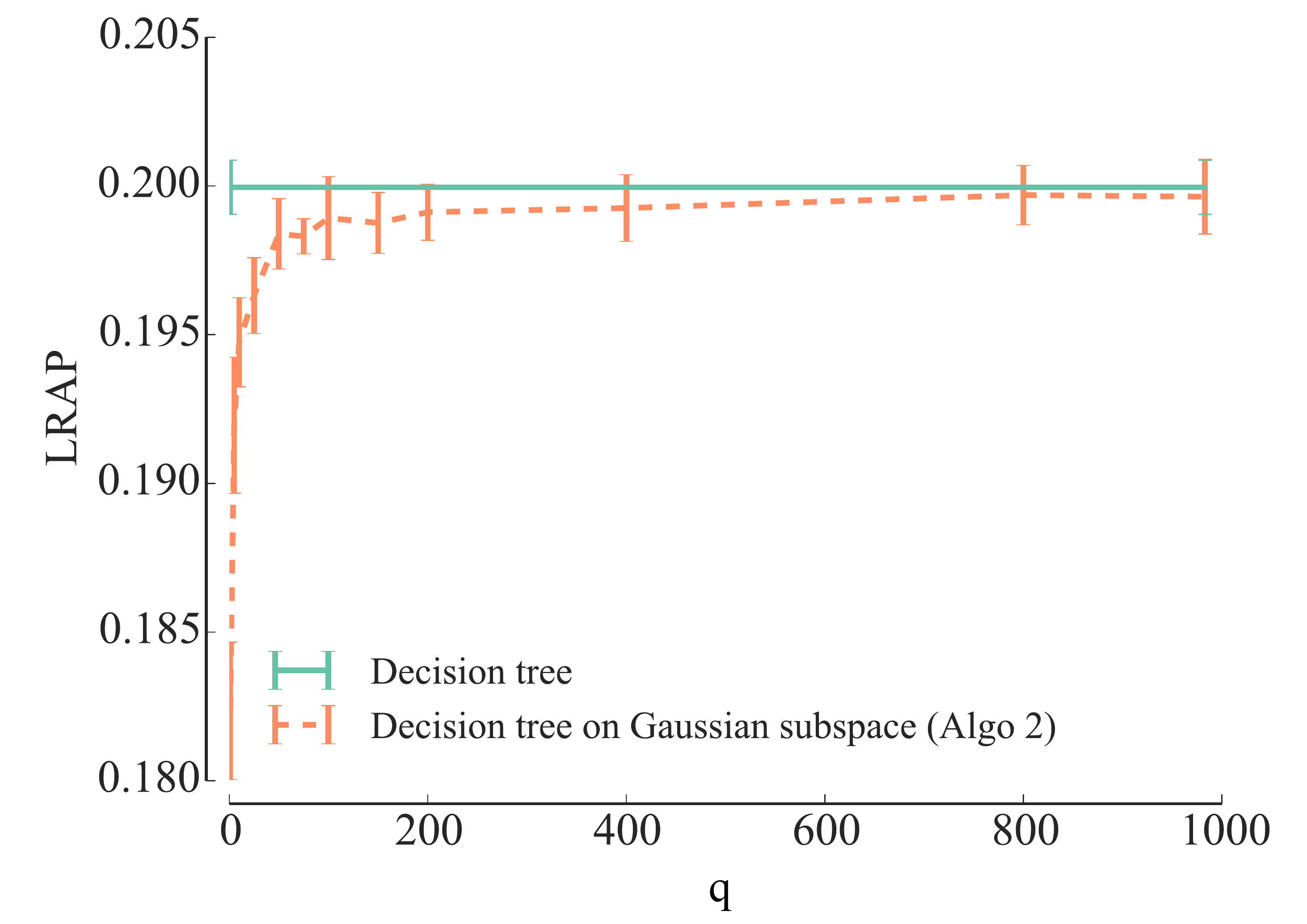}\\
\includegraphics[width=0.75\textwidth]{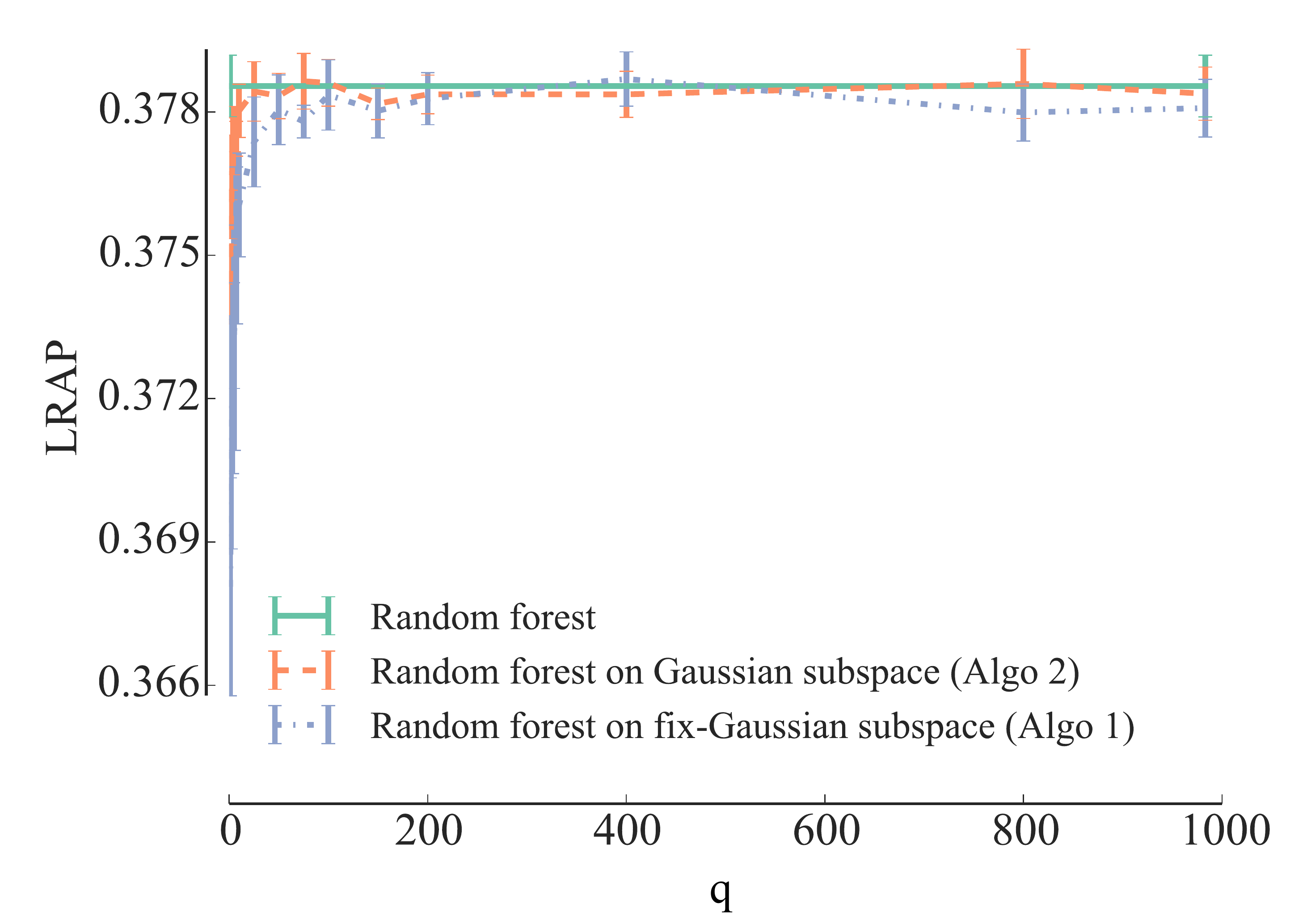}
\caption{Models built for the ``Delicious'' dataset ($d=983$) for growing
         numbers $q$ of Gaussian projections. Top: single unpruned CART
         trees ($n_{\min}=1$); Bottom: Random Forests ($k=\sqrt{p}$, $t=100$,
         $n_{\min}=1$). The curves represent average values (and standard
         deviations) obtained from 10 applications of the randomized algorithms
         over a same single $LS/TS$ split.}
\label{fig:delicious_vs_n_proj}
\end{figure}

To illustrate the behaviour of our algorithms, we first focus on the
``Delicious''  dataset \cite{tsoumakas2008effective}, which has a large number of labels ($d=983$), of input
features ($p=500$), and of training ($n_{LS} = 12920$) and testing
($n_{TS}=3185$) samples.

The top part of figure~\ref{fig:delicious_vs_n_proj} shows, when Gaussian
output-space projections are combined with the standard CART algorithm
building a single tree, how the precision converges
(cf Theorem~\ref{thm:var-jl-lemma}) when $q$ increases towards $d$.
We observe that in this case, convergence is reached around $q=200$
at the expense of a slight decrease of accuracy, so that a
compression factor of about 5 is possible with respect to the original output
dimension $d=983$.

The bottom part of figure~\ref{fig:delicious_vs_n_proj} shows, on the same
dataset, how the method behaves when combined with Random Forests. Let us first
notice that the Random Forests  grown on the original output space (green line)
are significantly more accurate than the single trees, their accuracy being
almost twice as high.
We also observe that Algorithm~\ref{alg:output-subspace-tree-ensemble}
(orange curve) converges much more rapidly than
Algorithm~\ref{alg:output-fix-subspace-tree-ensemble} (blue curve) and
slightly outperforms the Random Forest grown on the original output space.
It needs only about $q=25$ components to converge, while
Algorithm~\ref{alg:output-fix-subspace-tree-ensemble} needs about $q=75$ of
them. These results are in accordance with the analysis of
Section~\ref{sec:biasvar}, showing that
Algorithm~\ref{alg:output-subspace-tree-ensemble} can't be inferior to
Algorithm~\ref{alg:output-fix-subspace-tree-ensemble}. In the rest of this
chapter we will therefore focus on Algorithm~\ref{alg:output-subspace-tree-ensemble}.

\subsection{Systematic analysis over 24 datasets}
\label{sec:empirical-convergence-jl}

To assess our methods, we have collected 24 different multi-label
classification datasets from the literature (see Section~D of the supplementary
material, for more information and bibliographic references to these datasets)
covering a broad spectrum of application domains and ranges of the output
dimension ($d \in [6 ; 3993]$, see Table~\ref{tab:random-forest}). For 21 of
the datasets, we made experiments where the dataset is split randomly into a
learning set of size $n_{LS}$, and a test set of size $n_{TS}$, and are
repeated 10 times (to get average precisions and standard deviations), and for
3 of them we used a ten-fold cross-validation scheme (see Table
\ref{tab:random-forest}).

Table~\ref{tab:random-forest} shows our results on the 24 multi-label datasets,
by comparing Random Forests learnt on the original output space with those
learnt by Algorithm~\ref{alg:output-subspace-tree-ensemble} combined with
Gaussian subspaces of size $q\in \left\{1, d, \ln{d}\right\}$\footnote{$\ln d$
is rounded to the nearest integer value; in Table~\ref{tab:random-forest} the
values of $\ln{d}$ vary between 2 for $d=6$ and 8 for $d=3993$.}. In these
experiments, the three parameters of Random Forests are set respectively to
$k=\sqrt{p}$, $n_{\min}=1$ (default values, see
\cite{geurts2006extremely}) and $t=100$ (reasonable computing budget).
Each model is learnt ten times on a different shuffled train/testing split,
except for the 3 EUR-lex datasets where we kept the original 10 folds of
cross-validation.

We observe that for all datasets (except maybe SCOP-GO), taking $q=d$ leads to
a similar average precision to the standard Random Forests, i.e. no difference
superior to one standard deviation of the error.  On 11 datasets, we see that
$q=1$ already yields a similar average precision (values not underlined in
column $q=1$). For the 13 remaining datasets, increasing $q$ to $\ln{d}$
significantly decreases the gap with the Random Forest baseline and 3 more
datasets reach this baseline. We also observe that on several datasets such as
``Drug-interaction'' and ``SCOP-GO'', better performance on the Gaussian
subspace is attained with high output randomization
($q=\left\{1,\ln{d}\right\}$) than with $q=d$. We thus conclude that the
optimal level of output randomization (i.e. the optimal value of the ratio
$q/d$) which maximizes accuracy performances, is dataset dependent.

While our method is intended for tasks with very high dimensional output
spaces, we however notice that even with relatively small numbers of labels,
its accuracy remains comparable to the baseline, with suitable $q$.

\begin{table}[h!]
\centering
\caption{High output space compression ratio is possible, with no or negligible
  average precision reduction ($t=100$, $n_{\min}=1$, $k=\sqrt{p}$). Each
  dataset has $n_{LS}$ training samples, $n_{TS}$ testing samples, $p$ input
  features and $d$ labels. Label ranking average precisions are displayed in
  terms of their mean values and standard deviations over 10 random $LS/TS$
  splits, or over the 10 folds of cross-validation. Mean scores in the last
  three columns are underlined if they show a difference with respect to the
  standard Random Forests of more than one standard deviation.}
\renewcommand{\tabcolsep}{1.3mm}
{\small \begin{tabular}{@{} l  r lll ll ll ll  @{}}
\toprule
Datasets                                       & \multicolumn{2}{c}{Random}      & \multicolumn{6}{c}{Random Forests on Gaussian sub-space } \\
\cmidrule(r){6-7}\cmidrule(r){8-9} \cmidrule(r){10-11}
Name     &      \multicolumn{2}{c}{Forests}              & \multicolumn{2}{c}{$q=1$} & \multicolumn{2}{c}{$q\hspace*{-0.9mm}=\hspace*{-0.9mm}\lfloor\hspace*{-0.4mm} 0.5\hspace*{-0.8mm} +\hspace*{-0.8mm} \ln{d}\hspace*{-0.4mm}\rfloor$} & \multicolumn{2}{c}{$q=d$}\\
\midrule
emotions            & $0.800   $ &$\hspace*{-1.5mm} \pm 0.014$ & {$0.800$} & $\hspace*{-1.5mm} \pm 0.010 $ & $0.810  $ & $\hspace*{-1.5mm} \pm 0.014$ & $0.810  $ & $\hspace*{-1.5mm} \pm 0.016$ \\
scene               & $0.870 $ & $\hspace*{-1.5mm} \pm 0.003$ & {$0.875$} & $\hspace*{-1.5mm} \pm 0.007$ & $0.872 $ & $\hspace*{-1.5mm} \pm 0.004$ & $0.872 $ & $\hspace*{-1.5mm} \pm 0.004$ \\
yeast               & $0.759 $ & $\hspace*{-1.5mm} \pm 0.008$ & \dotuline{$ 0.748 $} & $\hspace*{-1.5mm} \pm 0.006$ & {$0.755$} & $\hspace*{-1.5mm} \pm 0.004$ & $0.758 $ & $\hspace*{-1.5mm} \pm 0.005$ \\
tmc2017             & $0.756 $ & $\hspace*{-1.5mm} \pm 0.003$ & \dotuline{$ 0.741 $} & $\hspace*{-1.5mm} \pm 0.003$ & \dotuline{$ 0.748 $} & $\hspace*{-1.5mm} \pm 0.003$ & $0.757 $ & $\hspace*{-1.5mm} \pm 0.003$ \\
genbase             & $0.992 $ & $\hspace*{-1.5mm} \pm 0.004$ & {$0.994$} & $\hspace*{-1.5mm} \pm 0.002$ & $0.994 $ & $\hspace*{-1.5mm} \pm 0.004$ & $0.993 $ & $\hspace*{-1.5mm} \pm 0.004$ \\
reuters             & $0.865 $ & $\hspace*{-1.5mm} \pm 0.004$ & {$0.864$} & $\hspace*{-1.5mm} \pm 0.003$ & $0.863 $ & $\hspace*{-1.5mm} \pm 0.004$ & $0.862 $ & $\hspace*{-1.5mm} \pm 0.004$ \\
medical             & $0.848 $ & $\hspace*{-1.5mm} \pm 0.009$ & \dotuline{$ 0.836 $} & $\hspace*{-1.5mm} \pm 0.011$ & {$0.842$} & $\hspace*{-1.5mm} \pm 0.014$ & $0.841 $ & $\hspace*{-1.5mm} \pm 0.009$ \\
enron               & $0.683 $ & $\hspace*{-1.5mm} \pm 0.009$ & {$0.680$} & $\hspace*{-1.5mm} \pm 0.006$ & $0.685 $ & $\hspace*{-1.5mm} \pm 0.009$ & $0.686 $ & $\hspace*{-1.5mm} \pm 0.008$ \\
mediamill           & $0.779 $ & $\hspace*{-1.5mm} \pm 0.001$ & \dotuline{$ 0.772 $} & $\hspace*{-1.5mm} \pm 0.001$ & {$0.777$} & $\hspace*{-1.5mm} \pm 0.002$ & $0.779 $ & $\hspace*{-1.5mm} \pm 0.002$ \\
Yeast-GO            & $0.420  $ & $\hspace*{-1.5mm} \pm 0.010 $ & \dotuline{$ 0.353 $} & $\hspace*{-1.5mm} \pm 0.008$ & \dotuline{$ 0.381 $} & $\hspace*{-1.5mm} \pm 0.005$ & $0.420  $ & $\hspace*{-1.5mm} \pm 0.010$ \\
bibtex              & $0.566 $ & $\hspace*{-1.5mm} \pm 0.004$ & \dotuline{$ 0.513 $} & $\hspace*{-1.5mm} \pm 0.006$ & \dotuline{$ 0.548 $} & $\hspace*{-1.5mm} \pm 0.007$ & $0.564 $ & $\hspace*{-1.5mm} \pm 0.008$ \\
CAL500              & $0.504 $ & $\hspace*{-1.5mm} \pm 0.011$ & {$0.504$} & $\hspace*{-1.5mm} \pm 0.004$ & $0.506 $ & $\hspace*{-1.5mm} \pm 0.007$ & $0.502 $ & $\hspace*{-1.5mm} \pm 0.010$ \\
WIPO                & $0.490  $ & $\hspace*{-1.5mm} \pm 0.010 $ & \dotuline{$ 0.430  $} & $\hspace*{-1.5mm} \pm 0.010 $ & \dotuline{$ 0.460  $} & $\hspace*{-1.5mm} \pm 0.010 $ & $0.480  $ & $\hspace*{-1.5mm} \pm 0.010$ \\
EUR-Lex (subj.)     & $0.840  $ & $\hspace*{-1.5mm} \pm 0.005$ & \dotuline{$ 0.814 $} & $\hspace*{-1.5mm} \pm 0.004$ & \dotuline{$ 0.828 $} & $\hspace*{-1.5mm} \pm 0.005$ & $0.840  $ & $\hspace*{-1.5mm} \pm 0.004$ \\
bookmarks           & $0.453$ & $\hspace*{-1.5mm} \pm 0.001$& \dotuline{$ 0.436 $} & $\hspace*{-1.5mm} \pm 0.002$ & \dotuline{$ 0.445 $} & $\hspace*{-1.5mm} \pm 0.002$ & $0.453 $ & $\hspace*{-1.5mm} \pm 0.002$ \\
diatoms             & $0.700   $ & $\hspace*{-1.5mm} \pm 0.010 $ & \dotuline{$ 0.650  $} & $\hspace*{-1.5mm} \pm 0.010 $ & \dotuline{$ 0.670  $} & $\hspace*{-1.5mm} \pm 0.010$  & $0.710  $ & $\hspace*{-1.5mm} \pm 0.020$ \\
corel5k             & $0.303 $ & $\hspace*{-1.5mm} \pm 0.012$ & {$0.309$} & $\hspace*{-1.5mm} \pm 0.011$ & $0.307 $ & $\hspace*{-1.5mm} \pm 0.011$ & $0.299 $ & $\hspace*{-1.5mm} \pm 0.013$ \\
EUR-Lex (dir.)      & $0.814 $ & $\hspace*{-1.5mm} \pm 0.006$ & \dotuline{$ 0.782 $} & $\hspace*{-1.5mm} \pm 0.008$ & \dotuline{$ 0.796 $} & $\hspace*{-1.5mm} \pm 0.009$ & $0.813 $ & $\hspace*{-1.5mm} \pm 0.007$ \\
SCOP-GO             & $0.811 $ & $\hspace*{-1.5mm} \pm 0.004$ & {$0.808$} & $\hspace*{-1.5mm} \pm 0.005$ & $0.811 $ & $\hspace*{-1.5mm} \pm 0.004$ & \dotuline{$ 0.806 $} & $\hspace*{-1.5mm} \pm 0.004$ \\
delicious           & $0.384 $ & $\hspace*{-1.5mm} \pm 0.004$ & {$0.381$} & $\hspace*{-1.5mm} \pm 0.003$ & $0.382 $ & $\hspace*{-1.5mm} \pm 0.002$ & $0.383 $ & $\hspace*{-1.5mm} \pm 0.004$ \\
drug-interaction    & $0.379 $ & $\hspace*{-1.5mm} \pm 0.014$ & {$0.384$} & $\hspace*{-1.5mm} \pm 0.009$ & $0.378 $ & $\hspace*{-1.5mm} \pm 0.013$ & $0.367 $ & $\hspace*{-1.5mm} \pm 0.016$ \\
protein-interaction & $0.330  $ & $\hspace*{-1.5mm} \pm 0.015$ & {$0.337$} & $\hspace*{-1.5mm} \pm 0.016$ & $0.337 $ & $\hspace*{-1.5mm} \pm 0.017$ & $0.335 $ & $\hspace*{-1.5mm} \pm 0.014$ \\
Expression-GO       & $0.235 $ & $\hspace*{-1.5mm} \pm 0.005$ & \dotuline{$ 0.211 $} & $\hspace*{-1.5mm} \pm 0.005$ & \dotuline{$ 0.219 $} & $\hspace*{-1.5mm} \pm 0.005$ & $0.232 $ & $\hspace*{-1.5mm} \pm 0.005$ \\
EUR-Lex (desc.)     & $0.523 $ & $\hspace*{-1.5mm} \pm 0.008$ & \dotuline{$ 0.485 $} & $\hspace*{-1.5mm} \pm 0.008$ & \dotuline{$ 0.497 $} & $\hspace*{-1.5mm} \pm 0.009$ & $0.523    $ & $\hspace*{-1.5mm} \pm 0.007   $ \\
\bottomrule
\end{tabular}}
\label{tab:random-forest}
\end{table}

To complete the analysis, let's carry out the same experiments with a
different base-learner combining Gaussian random projections (with
$q\in\{1,\ln{d},d\}$) with the Extra Trees method of~\cite{geurts2006extremely}.
Results on 23 datasets are compiled in
Table~\ref{tab:extra_trees_results_summary}.

Like for Random Forests, we observe that for all 23 datasets taking
$q=d$ leads to a similar average precision to the standard Random
Forests, ie. no difference superior to one standard deviation of the
error. This is already the case with $q=1$ for 12 datasets and with
$q=\ln{d}$ for 4 more datasets. Interestingly, on 3 datasets with
$q=1$ and 3 datasets with $q=\ln{d}$, the increased randomization
brought by the projections actually improves average precision with
respect to standard Random Forests (bold values in
Table~\ref{tab:extra_trees_results_summary}).

\begin{table*}[h!]
\renewcommand{\tabcolsep}{1.3mm}

\centering
\caption{Experiments with Gaussian projections and Extra Trees
  (($t=100$, $n_{\min}=1$, $k=\sqrt{p}$). Mean scores in the last three columns
  are underlined if they show a negative difference with respect to the
  standard Random Forests of more than one standard deviation. Bold
  values highlight improvement over standard RF of more than one standard
  deviation.}
{\small \begin{tabular}{@{} lll ll ll ll  @{}}
\toprule
Datasets            & \multicolumn{2}{c}{Extra trees}      & \multicolumn{6}{c}{Extra trees on Gaussian sub-space } \\
\cmidrule(r){4-5} \cmidrule(r){6-7}\cmidrule(r){8-9}
                   &          &             & \multicolumn{2}{c}{$q=1$} & \multicolumn{2}{c}{$q\hspace*{-0.9mm}=\hspace*{-0.9mm}\lfloor\hspace*{-0.4mm} 0.5\hspace*{-0.8mm} +\hspace*{-0.8mm} \ln{d}\hspace*{-0.4mm}\rfloor$} & \multicolumn{2}{c}{$q=d$}\\
\midrule
emotions            & $0.81  $ & $\hspace*{-1.5mm} \pm 0.01$  & $0.81  $ & $\hspace*{-1.5mm} \pm 0.014$ & $0.80  $ & $\hspace*{-1.5mm} \pm 0.013$ & $0.81 $ & $\hspace*{-1.5mm} \pm 0.014$ \\
scene               & $0.873 $ & $\hspace*{-1.5mm} \pm 0.004$ & $0.876 $ & $\hspace*{-1.5mm} \pm 0.003$ & $0.877 $ & $\hspace*{-1.5mm} \pm 0.007$ & $0.878 $ & $\hspace*{-1.5mm} \pm 0.006$ \\
yeast               & $0.757 $ & $\hspace*{-1.5mm} \pm 0.008$ & \dotuline{$0.746 $} & $\hspace*{-1.5mm} \pm 0.004$ & $0.752 $ & $\hspace*{-1.5mm} \pm 0.009$ & $0.757 $ & $\hspace*{-1.5mm} \pm 0.01$ \\
tmc2017             & $0.782 $ & $\hspace*{-1.5mm} \pm 0.003$ & \dotuline{$0.759 $} & $\hspace*{-1.5mm} \pm 0.004$ & \dotuline{$0.77  $} & $\hspace*{-1.5mm} \pm 0.002$ & $0.779 $ & $\hspace*{-1.5mm} \pm 0.002$ \\
genbase             & $0.987 $ & $\hspace*{-1.5mm} \pm 0.005$ & $0.991 $ & $\hspace*{-1.5mm} \pm 0.004$ & $0.992 $ & $\hspace*{-1.5mm} \pm 0.001$ & $0.992 $ & $\hspace*{-1.5mm} \pm 0.005$ \\
reuters             & $0.88  $ & $\hspace*{-1.5mm} \pm 0.003$ & $0.88  $ & $\hspace*{-1.5mm} \pm 0.003$ & $0.878 $ & $\hspace*{-1.5mm} \pm 0.004$ & $0.88  $ & $\hspace*{-1.5mm} \pm 0.004$ \\
medical             & $0.855 $ & $\hspace*{-1.5mm} \pm 0.008$ & $\mathbf{0.867}$ & $\hspace*{-1.5mm} \pm 0.009$ & $\mathbf{0.872}$ & $\hspace*{-1.5mm} \pm 0.006$ & $0.862 $ & $\hspace*{-1.5mm} \pm 0.008$ \\
enron               & $0.66  $ & $\hspace*{-1.5mm} \pm 0.01$  & $0.65  $ & $\hspace*{-1.5mm} \pm 0.01 $ & $0.663 $ & $\hspace*{-1.5mm} \pm 0.008$ & $0.66  $ & $\hspace*{-1.5mm} \pm 0.01$ \\
mediamill           & $0.786 $ & $\hspace*{-1.5mm} \pm 0.002$ & \dotuline{$0.778 $} & $\hspace*{-1.5mm} \pm 0.002$ & \dotuline{$0.781 $} & $\hspace*{-1.5mm} \pm 0.002$ & $0.784 $ & $\hspace*{-1.5mm} \pm 0.001$ \\
Yeast-GO            & $0.49  $ & $\hspace*{-1.5mm} \pm 0.009$ & \dotuline{$0.47  $} & $\hspace*{-1.5mm} \pm 0.01 $ & $0.482 $ & $\hspace*{-1.5mm} \pm 0.008$ & $0.48  $  & $\hspace*{-1.5mm} \pm 0.01$ \\
bibtex              & $0.584 $ & $\hspace*{-1.5mm} \pm 0.005$ & \dotuline{$0.538 $} & $\hspace*{-1.5mm} \pm 0.005$ & \dotuline{$0.564 $} & $\hspace*{-1.5mm} \pm 0.004$ & $0.583 $ & $\hspace*{-1.5mm} \pm 0.004$ \\
CAL500              & $0.5   $ & $\hspace*{-1.5mm} \pm 0.007$ & $0.502 $ & $\hspace*{-1.5mm} \pm 0.008$ & $0.499 $ & $\hspace*{-1.5mm} \pm 0.007$ & $0.503 $ & $\hspace*{-1.5mm} \pm 0.009$ \\
WIPO                & $0.52  $ & $\hspace*{-1.5mm} \pm 0.01 $ & \dotuline{$0.474 $} & $\hspace*{-1.5mm} \pm 0.007$ & \dotuline{$0.49  $} & $\hspace*{-1.5mm} \pm 0.01 $ & $0.515 $ & $\hspace*{-1.5mm} \pm 0.006$ \\
EUR-Lex (subj.)     & $0.845 $ & $\hspace*{-1.5mm} \pm 0.006$ & \dotuline{$0.834 $} & $\hspace*{-1.5mm} \pm 0.004$ & \dotuline{$0.838 $} & $\hspace*{-1.5mm} \pm 0.003$ & $0.845 $ & $\hspace*{-1.5mm} \pm 0.005$ \\
bookmarks           & $0.453 $ & $\hspace*{-1.5mm} \pm 0.002$ & \dotuline{$0.436 $} & $\hspace*{-1.5mm} \pm 0.002$ & \dotuline{$0.444 $} & $\hspace*{-1.5mm} \pm 0.003$ & $0.452 $ & $\hspace*{-1.5mm} \pm 0.002$ \\
diatoms             & $0.73  $ & $\hspace*{-1.5mm} \pm 0.01 $ & \dotuline{$0.69  $} & $\hspace*{-1.5mm} \pm 0.01$  & \dotuline{$0.71  $} & $\hspace*{-1.5mm} \pm 0.01 $ & $0.73  $ & $\hspace*{-1.5mm} \pm 0.01 $ \\
corel5k             & $0.285 $ & $\hspace*{-1.5mm} \pm 0.009$ & $\mathbf{0.313}$ & $\hspace*{-1.5mm} \pm 0.011$ & $\mathbf{0.309}$ & $\hspace*{-1.5mm} \pm 0.009$ & $0.285 $ & $\hspace*{-1.5mm} \pm 0.011$ \\
EUR-Lex (dir.)      & $0.815 $ & $\hspace*{-1.5mm} \pm 0.007$ & \dotuline{$0.805 $} & $\hspace*{-1.5mm} \pm 0.006$ & \dotuline{$0.807 $} & $\hspace*{-1.5mm} \pm 0.009$ & $0.815 $ & $\hspace*{-1.5mm} \pm 0.007$ \\
SCOP-GO             & $0.778 $ & $\hspace*{-1.5mm} \pm 0.005$ & $0.782 $ & $\hspace*{-1.5mm} \pm 0.004$ & $0.782 $ & $\hspace*{-1.5mm} \pm 0.006$ & $0.778 $ & $\hspace*{-1.5mm} \pm 0.005$ \\
delicious           & $0.354 $ & $\hspace*{-1.5mm} \pm 0.003$ & $\mathbf{0.36}$ & $\hspace*{-1.5mm} \pm 0.004$ & $\mathbf{0.358}$ & $\hspace*{-1.5mm} \pm 0.004$ & $0.355 $ & $\hspace*{-1.5mm} \pm 0.003$ \\
drug-interaction    & $0.353 $ & $\hspace*{-1.5mm} \pm 0.011$ & $\mathbf{0.375}$ & $\hspace*{-1.5mm} \pm 0.017$ & $0.364 $ & $\hspace*{-1.5mm} \pm 0.014$ & $0.355 $ & $\hspace*{-1.5mm} \pm 0.016$ \\
protein-interaction & $0.299 $ & $\hspace*{-1.5mm} \pm 0.013$ & $0.307 $ & $\hspace*{-1.5mm} \pm 0.009$ & $0.305 $ & $\hspace*{-1.5mm} \pm 0.012$ & $0.306 $ & $\hspace*{-1.5mm} \pm 0.017$ \\
Expression-GO       & $0.231 $ & $\hspace*{-1.5mm} \pm 0.007$ & \dotuline{$0.218 $} & $\hspace*{-1.5mm} \pm 0.005$ & $0.228 $ & $\hspace*{-1.5mm} \pm 0.005$ & $0.235 $ & $\hspace*{-1.5mm} \pm 0.005$ \\
\bottomrule
\end{tabular}}
\label{tab:extra_trees_results_summary}
\end{table*}

\clearpage

\subsection{Input vs output space randomization}
\label{sec:input-output-bias-variance-tradeoff}

We study in this section the interaction of the additional randomization of
the output space with that concerning the input space already built in the Random Forest method.

To this end, we consider the ``Drug-interaction'' dataset ($p=660$ input
features and $d=1554$ output
labels~\cite{yamanishi2011extracting}), and we study the
effect of parameter $k$ controlling the input space randomization of the Random
Forest method with the randomization of the output space by Gaussian
projections controlled by the parameter $q$. To this end, Figure
\ref{fig:max-features-drug-interaction} shows the evolution of the accuracy for
growing values of $k$ (i.e. decreasing strength of the input space
randomization), for three different quite low values of $q$ (in this case $q
\in \left\{1, \ln{d}, 2 \ln{d}\right\}$).  We observe that Random Forests
learned on a very low-dimensional Gaussian subspace (red, blue and pink curves)
yield essentially better performances than Random Forests on the original
output space, and also that their behaviour with respect to the parameter $k$
is quite different. On this dataset, the output-space randomisation makes the
method completely immune to the `over-fitting' phenomenon observed for high
values of $k$ with the baseline method (green curve).

\begin{figure}[htb]
\centering
\includegraphics[width=0.75\textwidth]{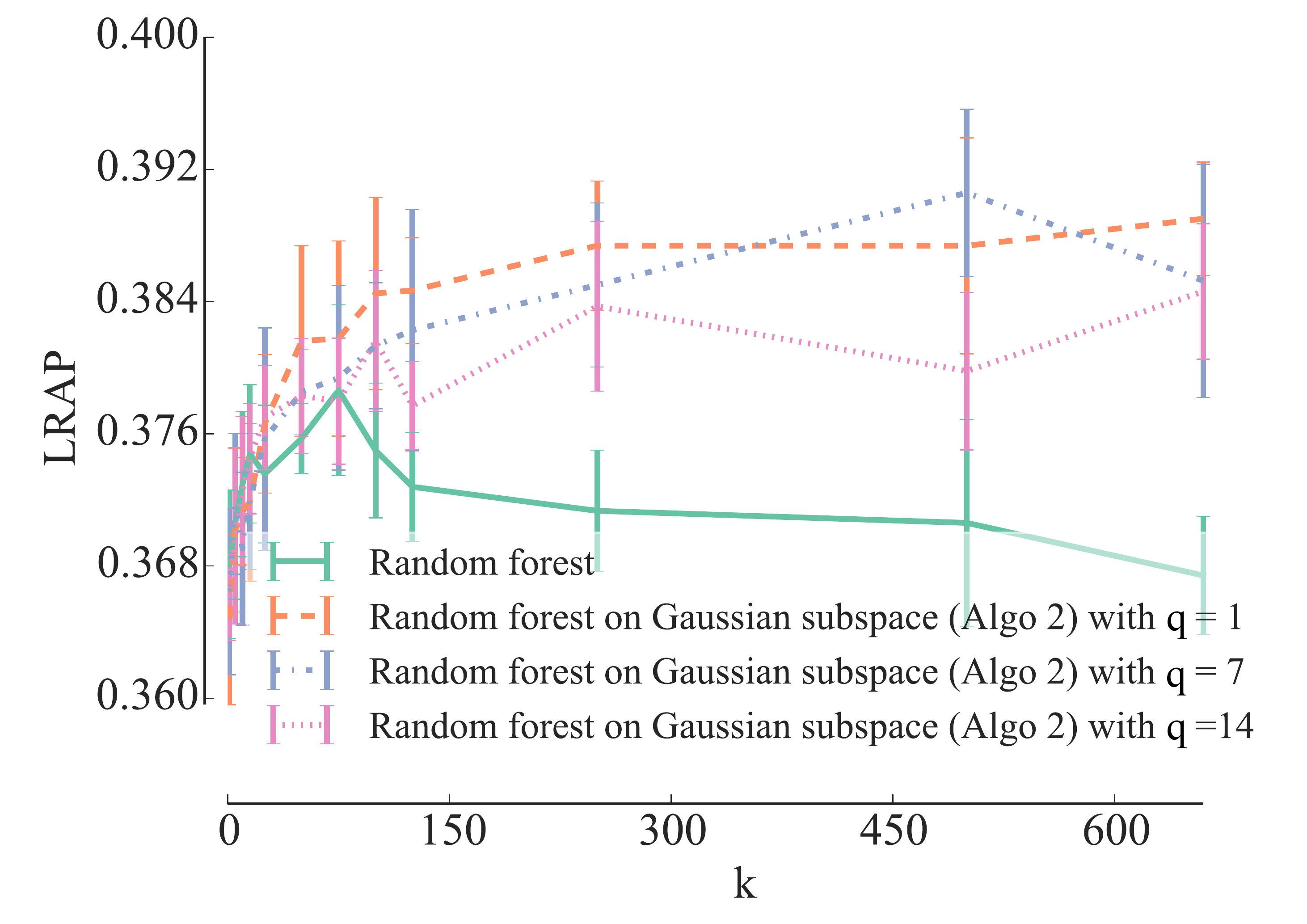}\vspace*{-3mm}
\caption{Output randomization with Gaussian projections yield better average
         precision than the original output space on the ``Drug-Interaction''
         dataset ($n_{\min}=1$ , $t=100$).}
\label{fig:max-features-drug-interaction}
\end{figure}

We carry out the same experiment, but on the ``Delicious''
dataset. Figure~\ref{fig:k_delicious_gaussian} shows the evolution of
the accuracy for growing values of $k$ (i.e. decreasing strength of
the input space randomization), for three different values of $q$ (in
this case $q \in \left\{1, \ln{d}, 2 \ln{d}\right\}$) on a Gaussian
output space.

Like on ``Drug-interaction'' (see
Figure~\ref{fig:max-features-drug-interaction}), using low-dimensional output
spaces makes the method more robust with respect to over-fitting as $k$
increases. However, unlike on ``Drug-interaction'', it is not really possible to
improve over baseline Random Forests by tuning jointly input and output
randomization. This shows that the interaction between $q$ and $k$ may be
different from one dataset to another.

\begin{figure}[htb]
\centering
\includegraphics[width=0.75\textwidth]{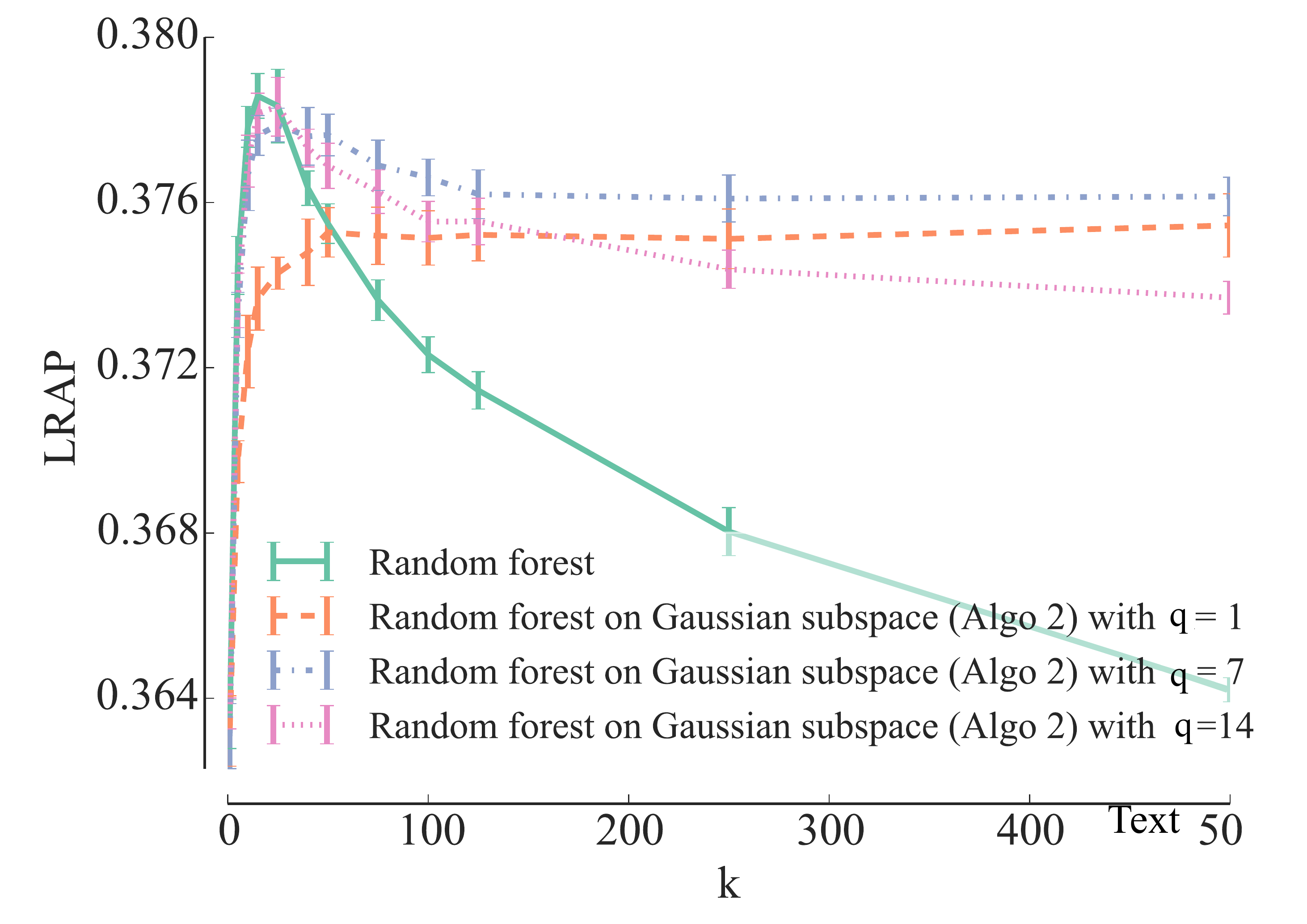}
\caption{``Delicious''  dataset: $n_{\min}=1$; $t=100$.}
\label{fig:k_delicious_gaussian}
\end{figure}

It is thus advisable to jointly optimize the value of $q$ and $k$, so as to
maximise the tradeoff between accuracy and computing times in a problem and
algorithm specific way.

\subsection{Alternative output dimension reduction techniques}
\label{sec:alternative-output-transformation}

In this section, we study Algorithm~\ref{alg:output-subspace-tree-ensemble}
when it is combined with alternative output-space dimensionality reduction
techniques. We focus again on the ``Delicious'' dataset,
but similar trends could be observed on other datasets.

Figure~\ref{fig:m_delicious_gaussian_rademacher_hadamard} first compares
Gaussian random projections with two other dense projections: Rademacher
matrices with $s=1$ (cf. Section~2.2) and compression matrices obtained by
sub-sampling (without replacement) Hadamard
matrices~\cite{candes2011probabilistic}.  We observe that Rademacher and
subsample-Hadamard sub-spaces behave very similarly to Gaussian random
projections.

In a second step, we compare Gaussian random projections with two (very) sparse
projections: first, sparse Rademacher sub-spaces obtained by setting the
sparsity parameter $s$ to $3$ and $\sqrt{d}$, selecting respectively about 33\%
and 2\% of the original outputs to compute each component, and second,
sub-sampled identity subspaces, similar to \cite{tsoumakas2007random}, where
each of the $q$ selected components corresponds to a randomly chosen original
label and also preserve sparsity. Sparse projections are very interesting from a
computational point of view as they require much less operations to compute the
projections but the number of components required for condition (\ref{eqn:js})
to be satisfied is typically higher than for dense projections
\cite{li2006very,candes2011probabilistic}.
Figure~\ref{fig:delicious_vs_n_proj_unitary} compares these three projection
methods with standard Random Forests on the ``delicious'' dataset. All three
projection methods converge to plain Random Forests as the number of components
$q$ increases but their behaviour at low $q$ values are very different.
Rademacher projections converge faster with $s=3$ than with $s=1$ and
interestingly, the sparsest variant ($s=\sqrt{d}$) has its optimum at $q=1$ and
improves in this case over the Random Forests baseline. Random output subspaces
converge slower but they lead to a notable improvement of the score over
baseline Random Forests. This suggests that although their theoretical
guarantees are less good, sparse projections actually provide on this problem a
better bias/variance tradeoff than dense ones when used in the context of
Algorithm~\ref{alg:output-subspace-tree-ensemble}.

Another popular dimension reduction technique is the principal component
analysis (PCA). In Figure~\ref{fig:delicious_vs_n_proj_pca}, we repeat the same
experiment to compare PCA with Gaussian random projections. Concerning PCA, the
curve is generated in decreasing order of eigenvalues, according to their
contribution to the explanation of the output-space variance. We observe that
this way of doing is far less effective than the random projection techniques
studied previously.

\begin{figure}[h!]
\centering
\subfloat[Computing the impurity criterion on a dense Rademacher or on a
           subsample-Hadamard output sub-space is another efficient way to learn
           tree ensembles.]{
              \includegraphics[width=0.75\textwidth]{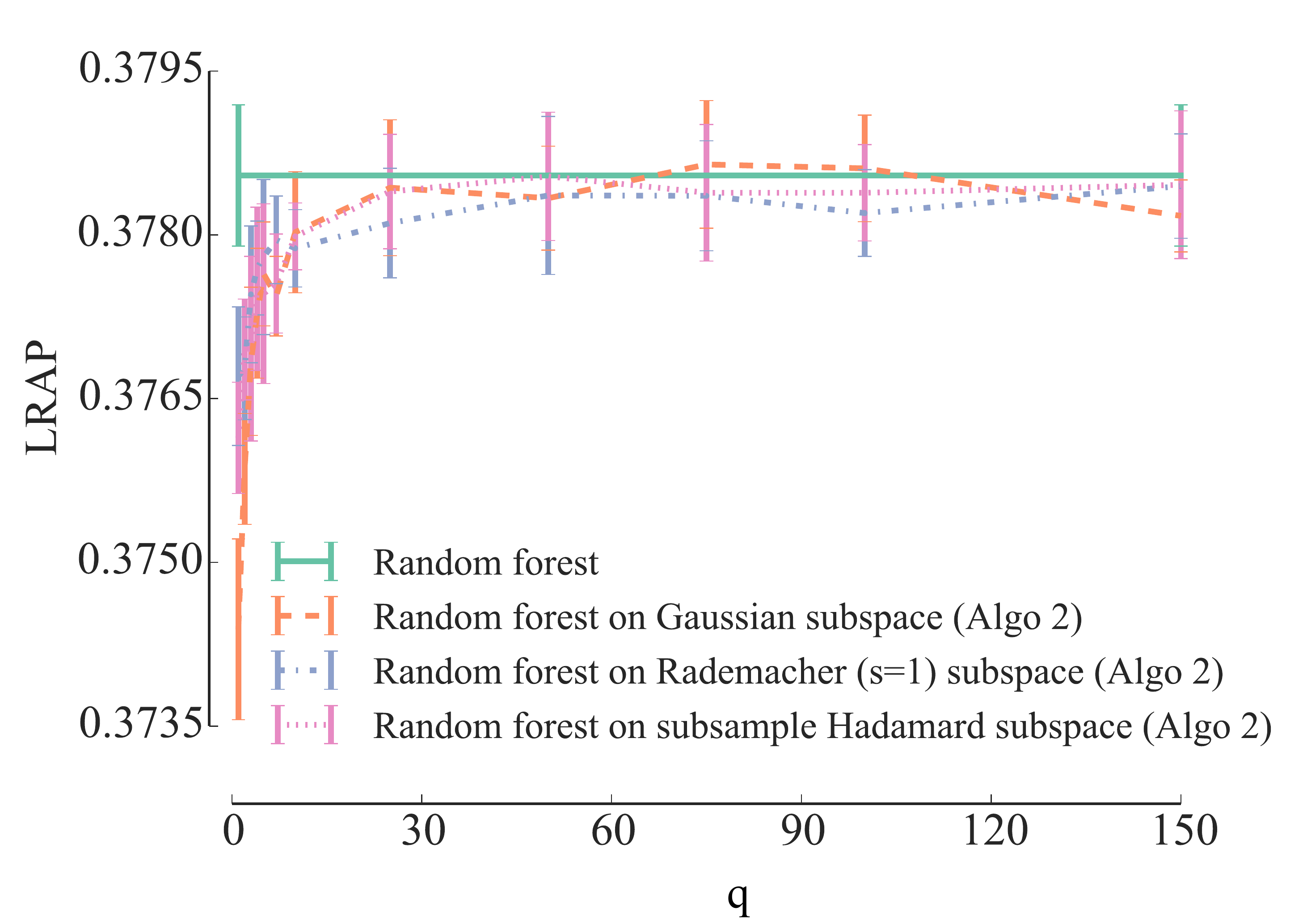}
              \label{fig:m_delicious_gaussian_rademacher_hadamard}
          } \\
\subfloat[Sparse random projections output sub-space yield better average precision
           than on the original output space.]{
              \includegraphics[width=0.75\textwidth]{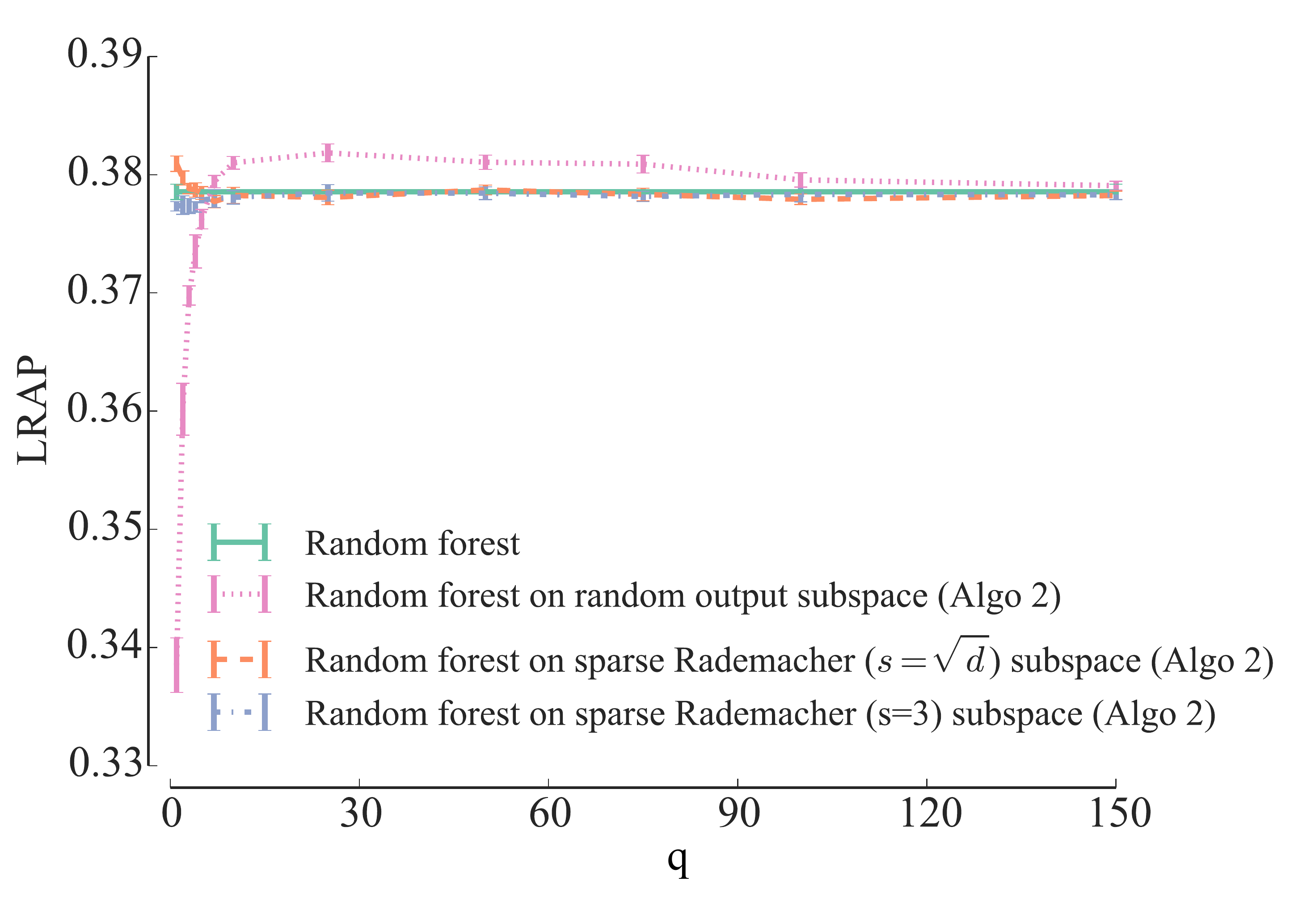}
              \label{fig:delicious_vs_n_proj_unitary}
          } \\
\subfloat[PCA compared with Gaussian subspaces.]{
              \includegraphics[width=0.75\textwidth]{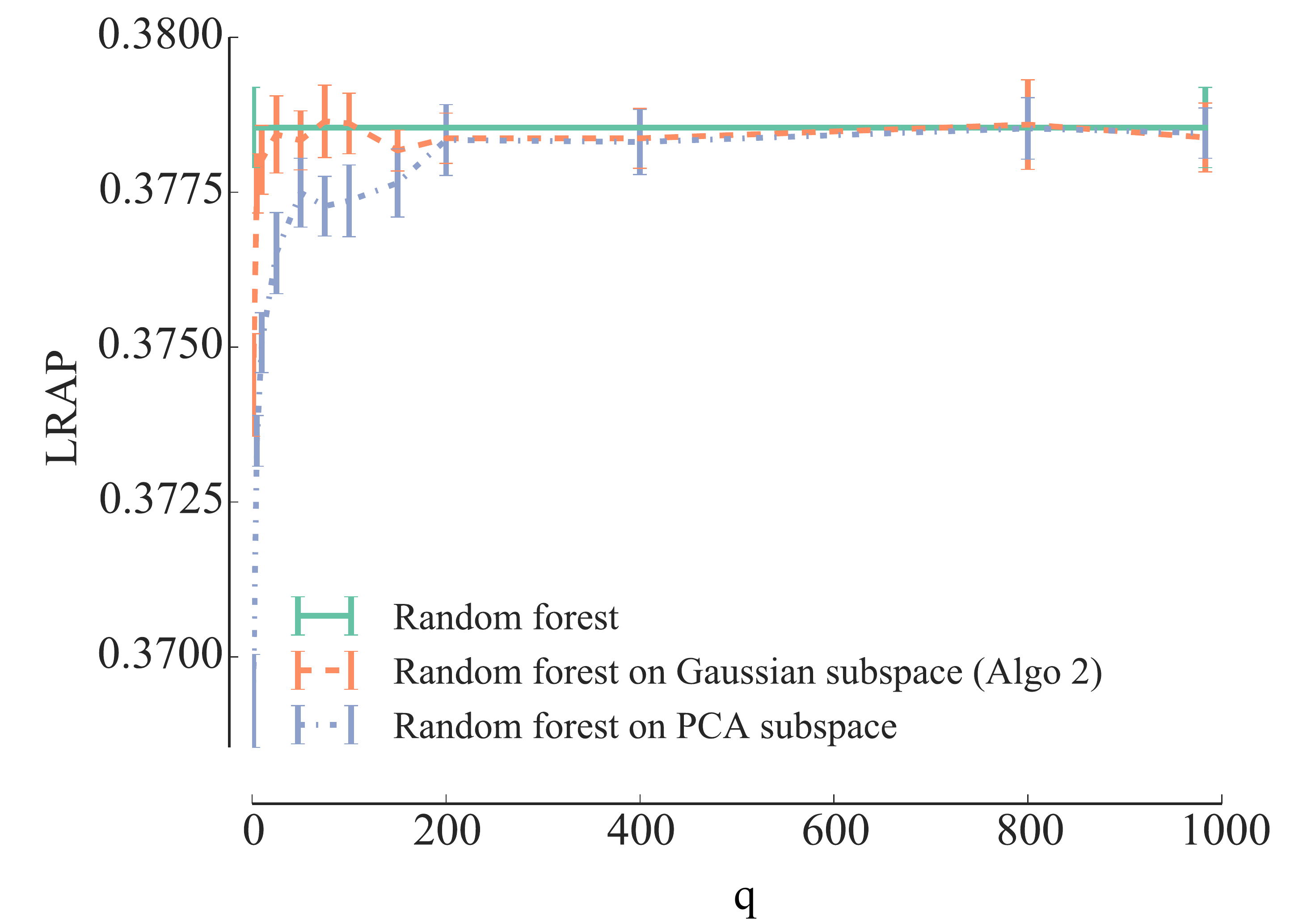}
              \label{fig:delicious_vs_n_proj_pca}
          }

\caption{``Delicious'' dataset, $t=100$, $k=\sqrt{p}$, $n_{\min}=1$.}
\end{figure}

\clearpage

\subsection{Learning stage computing times}\label{sec:benchmarking}

Our implementation of the learning algorithms  is based on the
\textit{scikit-learn} Python package version 0.14-dev
\cite{pedregosa2011scikit,buitinck2013api}. To fix ideas about computing times, we report
these obtained on a Mac Pro 4.1 with a dual Quad-Core Intel Xeon processor at
2.26 GHz, on the ``Delicious'' dataset.
Matrix operation, such as random projections, are performed with the
BLAS and the LAPACK from the Mac OS X \textit{Accelerate} framework. Reported
times are obtained by summing the user and sys time of the UNIX \textit{time}
utility.

The reported timings correspond to the following operation: (i) load
the dataset in memory, (ii) execute the algorithm. All methods use the
same code to build trees. In these conditions, learning a random
forest on the original output space ($t=100$, $n_{\min}=1$,
$k=\sqrt{d}$) takes 3348~ s; learning the same model on a Gaussian
output space of size $q=25$ requires 311~s, while $q=1$ and $q=250$
take respectively 236~s and 1088~s. Generating a
Gaussian sub-space of size $q=25$ and projecting the output data of
the training samples is done in less than 0.25~s, while $q=1$ and
$q=250$ takes around 0.07~s and 1~s respectively. The time needed to
compute the projections is thus negligible with respect to the time
needed for the tree construction.

We see that a speed-up of an order of magnitude could be obtained,
while at the same time preserving accuracy with respect to the
baseline Random Forests method. Equivalently, for a
fixed computing time budget, randomly projecting the output space
allows to build more trees and thus to improve predictive
performances with respect to standard Random Forests.

\section{Conclusions} \label{sec:rf-rp-conclusions}

This chapter explores the use of random output space projections combined with
tree-based ensemble methods to address large-scale multi-label classification
problems. We study two algorithmic variants that either build a tree-based
ensemble model on a single shared random subspace or build each tree in the
ensemble on a newly drawn random subspace. The second approach is shown
theoretically and empirically to always outperform the first in terms of
accuracy. Experiments on 24 datasets show that on most problems, using gaussian
projections allows to reduce very drastically the size of the output space, and
therefore computing times, without affecting accuracy. Remarkably, we also show
that by adjusting jointly the level of input and output randomizations and
choosing appropriately the projection method, one could also improve predictive
performance over the standard Random Forests, while still improving very
significantly computing times. As future work, it would be very interesting to
propose efficient techniques to automatically adjust these parameters, so as to
reach the best tradeoff between accuracy and computing times on a given
problem.

To the best of our knowledge, our work is the first to study random output
projections in the context of multi-output tree-based ensemble methods. The
possibility with these methods to relabel tree leaves with predictions in the
original output space makes this combination very attractive. Indeed, unlike
similar works with linear models~\cite{hsu2009multi,cisse2013robust},
our approach only relies on Johnson-Lindenstrauss lemma, and not on any output
sparsity assumption, and also does not require to use any output reconstruction
method. Besides multi-label classification, we would like to test our method on
other, not necessarily sparse, multi-output prediction problems.


\chapter[Random output space projections for gradient boosting]{Gradient boosting with random output projections for
         multi-label and multi-outputs regression tasks}
\label{ch:gbrt-output-projection}

\begin{remark}{Outline}
We first formally adapt the gradient boosting ensemble method for multi-output
supervised learning tasks such as multi-output regression and multi-label
classification. We then propose to combine single random projections of the
output space with gradient boosting on such tasks to adapt automatically to the
output correlation structure. The idea of this method is to train each weak
model on a single random projection of the output space and then to exploit the
predictions of the resulting model to approximate the gradients of all other
outputs. Through weak model sharing and random projection of the output space,
we implicitly take into account the output correlations. We perform extensive
experiments with these methods both on artificial and real problems using
tree-based weak learners. Randomly projecting the output space shows to
provide a better adaptation to different output correlation patterns and is
therefore competitive with the best of the other methods in most settings. Thanks to
the model sharing, the convergence speed is also faster, reducing the computing
times to reach a specific accuracy.

\emph{This contribution is a joint work with Pierre Geurts and Louis
Wehenkel from the University of Li\`ege.}
\end{remark}

\section{Introduction}\label{sec:introduction}

Multi-output supervised learning aims to model the input-output relationship of
a system from observations of input-output pairs whenever the output space is a
vector of random variables. Multi-output classification and regression tasks
have numerous applications in domains ranging from biology to multimedia.

The most straightforward way to address multi-output tasks is to apply standard
single output methods separately and independently on each output. Although
simple, this method, called binary relevance~\cite{tsoumakas2009mining} in
multi-label classification or single target~\cite{spyromitros2012multi} in
multi-output regression, is often suboptimal as it does not exploit potential
correlations that might exist between the outputs. For this reason, several
approaches have been proposed in the literature that improve over binary
relevance by exploiting output correlations. These approaches include for
example the explicit construction of the output dependency
graph~\cite{dembczynski2010label,gasse2015optimality,zhang2010multi} or the
sharing of models learnt for one output to the other
outputs~\cite{huang2012multi,yan2007model,read2011classifier}. Our contribution
falls into the latter category.

Classification and regression trees~\cite{breiman1984classification}
are popular supervised learning methods that provide state-of-the-art
accuracy when exploited in the context of ensemble methods, namely
Random forests~\cite{breiman2001random} and gradient
boosting~\cite{friedman2001greedy}. Classification and regression
trees have been extended by several authors to the joint prediction of
multiple outputs~\citep[see,
  e.g.,][]{segal1992tree,blockeel2000top}). These extensions build a
single tree to predict all outputs at once. They adapt the score
measure used to assess splits during the tree growth to take into
account all outputs and label each tree leaf with a vector of values,
one for each output (see Section~\ref{sec:mo-trees} for more
information). Like standard classification or regression trees,
multiple output trees can be exploited in the context of random
forests~\cite{barutcuoglu2006hierarchical,joly2014random,kocev2007ensembles,kocev2013tree,segal2011multivariate}
or boosting~\cite{geurts2007gradient} ensembles, which often offer
very significant accuracy improvements with respect to single
trees. Multiple output trees have been shown to be competitive with
other multiple output methods~\cite{madjarov2012extensive}, but, to
the best of our knowledge, it has not been studied as extensively in
the context of gradient boosting.

Binary relevance / single target of single output tree models and multiple
output tree models represent two extremes in terms of tree structure learning:
the former builds a separate tree ensemble structure for each output, while the
latter builds a single tree ensemble structure for all outputs. Building
separate ensembles for each output may be rather inefficient when the outputs
are strongly correlated. Correlations between the outputs could indeed be
exploited either to reduce model complexity (by sharing the tree structures
between several outputs) or to improve accuracy by regularization. Trying to fit
a single tree structure for all outputs seems however counterproductive when the
outputs are independent. Indeed, in the case of independent outputs,
simultaneously fitting all outputs with a single tree structure may require a
much more complex tree structure than the sum of the individual tree
complexities required to fit the individual outputs. Since training a more
complex tree requires a larger learning sample, multiple output trees are
expected to be outperformed by binary relevance / single target in this
situation.

In this chapter, we first formally adapt gradient boosting to multiple output
tasks. We then propose a new method that aims at circumventing the limitations
of both binary relevance / single target and multiple output methods, in the
specific context of tree-based base-learners. Our method is an extension of
gradient tree boosting that can adapt itself to the presence or absence of
correlations between the outputs. At each boosting iteration, a single
regression tree structure is grown to fit a single random projection of the
outputs, or more precisely, of their residuals with respect to the previously
built models. Then, the predictions of this tree are fitted linearly to the
current residuals of all the outputs (independently). New residuals are then
computed taking into account the resulting predictions and the process repeats
itself to fit these new residuals. Because of the linear fit, only the outputs
that are correlated with the random projection at each iteration will benefit
from a reduction of their residuals, while outputs that are independent of the random
projection will remain mostly unaffected. As a consequence, tree structures will
only be shared between correlated outputs as one would expect. Another variant
that we explore consists in replacing the linear global fit by a relabelling of
all tree leaves for each output in turn.

The chapter is structured as follows. We show how to extend the gradient
boosting algorithms to multi-output tasks in Section~\ref{sec:gb-mo-method}. We
provide for these algorithms a convergence proof on the training data and
discuss the effect of the random projection of the output space. We study
empirically the proposed approach in Section~\ref{sec:experiments}. Our first
experiments compare the proposed approaches to binary relevance / single target
on artificial datasets where the output correlation is known. We also highlight
the effect of the choice and size of the random projection space. We finally
carry out an empirical evaluation of these methods on 21 real-world multi-label
and 8 multi-output regression tasks. We draw our conclusions in
Section~\ref{sec:conclusions}.

\section{Gradient boosting with multiple outputs}
\label{sec:gb-mo-method}

Starting from a multi-output loss, we show in Section~\ref{subsec:gb-extend-mo}
how to extend the standard gradient boosting algorithm to solve multi-output
tasks, such as multi-output regression and multi-label classification, by
exploiting existing weak model learners suited for  multi-output prediction. In
Section~\ref{subsec:projections}, we then propose to combine single random
projections of the output space with gradient boosting to automatically adapt to
the output correlation structure on these tasks. We discuss and compare the
effect of the random projection of the output space in
Section~\ref{sec:effect-rp-gb}. We give a convergence proof on the training data
for the proposed algorithms in Section~\ref{sec:convergence-proof}.

\subsection{Standard extension of gradient boosting to multi-output tasks}
\label{subsec:gb-extend-mo}

A loss function $\ell(y,y') \in \mathbb{R}^{+}$ computes the difference between
a ground truth $y$ and a model prediction $y'$. It compares scalars with single
output tasks and vectors with multi-output tasks. The two most common regression
losses are the square loss $\ell_{square}(y, y')= \frac{1}{2} (y-y')^2$ and the
absolute loss $\ell_{absolute}(y, y) = |y - y'|$. Their multi-output extensions
are the $\ell_2$-norm and $\ell_1$-norm losses:
\begin{align}
\ell_2(y, y') &= \frac{1}{2}||y - y'||_{\ell_2}^2,\\
\ell_1(y, y') &= ||y - y'||_{\ell_1}.
\end{align}

In classification, the most commonly used loss to compare a ground truth $y$ to
the model prediction $f(x)$ is the $0-1$ loss $\ell_{0-1}(y,y') = 1(y\not=y')$,
where $1$ is the indicator function. It has two standard multiple output
extensions (i) the Hamming loss $\ell_{Hamming}$ and (ii) the subset $0-1$  loss
$\ell_{\text{subset }0-1}$:
\begin{align}
\ell_{Hamming}(y, y') &= \sum_{j=1}^d 1(y_j \not= y'_{j}), \\
\ell_{\text{subset }0-1} (y, y') &= 1(y\not=y').
\end{align}

Since these losses are discrete, they are
non-differentiable and difficult to optimize. Instead, we propose to extend the
logistic loss $\ell_{logistic}(y,y') =  \log(1 + \exp(-2 y y'))$ used for binary
classification tasks to the multi-label case, as follows:
\begin{equation}
\ell_{logistic}(y, y') = \sum_{j=1}^d \log(1 + \exp(-2 y_j y'_{j})),
\end{equation}
\noindent where we suppose that the $d$ components $y_{j}$ of the target output
vector belong to $\{-1, 1\}$, while the $d$ components $y'_{j}$ of the
predictions may belong to $\mathbb{R}$.

Given a training set  $\mathcal{L} = \left((x^i,y^i) \in \mathcal{X} \times
\mathcal{Y}\right)_{i=1}^n$ and one of these multi-output losses $\ell$, we want to learn
a model $f_{M}$ expressed in the following form
\begin{equation}
f_{M}(x) = \rho_0 + \sum_{m=1}^M \rho_m \odot g_m(x),
\label{eq:pred-mo-gbrt}
\end{equation}
\noindent where the terms $g_{m}$ are selected within a hypothesis space
$\mathcal{H}$ of weak multi-output base-learners, the coefficients $\{\rho_m \in
\mathbb{R}^d\}_{m=0}^M$ are $d$-dimensional vectors highlighting the
contributions of each term $g_m$ to the ensemble, and where the symbol $\odot$
denotes the Hadamard product.  Note that the prediction $f_{M}(x) \in
\mathbb{R}^d$ targets the minimization of the chosen loss $\ell$, but a
transformation might be needed to have a prediction in $\mathcal{Y}$, e.g. we
would apply the $\logit$ function to each output for the multi-output logistic
loss to get a probability estimate of the positive classes.

The gradient boosting method builds such a model in an iterative fashion, as
described in Algorithm~\ref{algo:gb-mo}, and discussed below.

\begin{algorithm}
\caption{Gradient boosting with multi-output regressor weak models.}
\label{algo:gb-mo}
\begin{algorithmic}[1]
\Function{GB-mo}{$\mathcal{L} = \left((x^i, y^i)\right)_{i=1}^n; \ell; \mathcal{H}; M$}
\State $f_{0}(x) = \rho_0 = \arg\min_{\rho \in \mathbb{R}^d} \sum_{i=1}^n \ell(y^i, \rho)$
\For{$m$ = 1 to $M$}
    \State Compute the loss gradient for the learning set samples
            \[g_m^i \in \mathbb{R}^d = \left[\nabla_{y'} \ell(y^i, y') \right]_{y' = f_{m-1}(x^i)}
            \forall i \in \left\{1, \ldots, n\right\}.\]
    \State Fit the negative loss gradient
           \[
           g_m = \arg\min_{g \in \mathcal{H}} \sum_{i=1}^n \left|\left|-g_m^i - g(x^i)\right|\right|_{\ell_2}^2.
           \]
    \State Find an optimal step length in the direction of $g_m$
            \[
            \rho_m = \arg\min_{\rho \in \mathbb{R}^d} \sum_{i=1}^n \ell\left(y^i, f_{m-1}(x^i) + \rho \odot g_m(x^i)\right).
            \]
    \State $f_{m}(x)= f_{m-1}(x)  + \rho_m \odot g_m(x)$
\EndFor
\State \Return $f_{M}(x)$
\EndFunction
\end{algorithmic}
\end{algorithm}

To build the ensemble model, we first initialize it with the constant model
defined by the vector $\rho_0 \in \mathbb{R}^d$ minimizing the multi-output loss
$\ell$ (line 2):
\begin{equation}
\rho_0 = \arg\min_{\rho \in \mathbb{R}^d} \sum_{i=1}^n \ell(y^i, \rho).
\end{equation}

At each subsequent iteration $m$, the multi-output gradient boosting approach
adds a new multi-output weak model $g_m(x)$ with a weight $\rho_m$ to the
current ensemble model by approximating the minimization of the
multi-output loss $\ell$:
\begin{equation}
(\rho_m, g_m) = \arg{}\hspace*{-3mm}\min_{(\rho, g) \in \mathbb{R}^{d}\times \mathcal{H}}
\sum_{i=1}^{n} \ell\left(y^{i}, f_{m-1}(x^i) + \rho \odot g(x^{i})\right).
\label{eq:boosting-exact-fsadm}
\end{equation}

To approximate Equation~\ref{eq:boosting-exact-fsadm}, it first fits a
multi-output weak model $g_m$ to model the negative gradient $g_m^i$
of the multi-output loss $\ell$
\begin{equation}
g_m^i \in\mathbb{R}^d = \left[ \nabla_{y'}  \ell(y^i, y') \right]_{y' = f_{m-1}(x^i)}
\end{equation}
\noindent associated to each sample $i \in \mathcal{L}$ of the training set, by minimizing
the $\ell_2$-loss:
\begin{equation}
g_m = \arg\min_{{g} \in \mathcal{H}} \sum_{i=1}^n \left|\left|-g_m^i - {g}(x^i)\right|\right|_{\ell_2}^2.
\end{equation}

It then computes an optimal step length vector $\rho_m \in \mathbb{R}^d$ in the
direction of the weak model $g_m$ to minimize the multi-output
loss $\ell$:
\begin{equation}
\rho_m = \arg\min_{\rho \in \mathbb{R}^d} \sum_{i=1}^n \ell\left(y^i, f_{m-1}(x^i) + \rho \odot g_m(x^i)\right).
\end{equation}

\subsection{Adapting to the correlation structure in the output-space} \label{subsec:projections}

Binary relevance / single target of gradient boosting models and gradient
boosting of multi-output models (Algorithm~\ref{algo:gb-mo}) implicitly target
two extreme correlation structures. On the one hand, binary relevance / single
target predicts all outputs independently,  thus assuming that outputs are not
correlated. On the other hand, gradient boosting of multi-output models handles
them all together, thus assuming that they are all correlated. Both approaches
thus exploit the available dataset in a rather biased way. To remove this bias,
we propose a more flexible approach that can adapt itself automatically to the
correlation structure among output variables.

Our idea is that a weak learner used at some step of the gradient boosting
algorithm could be fitted on a single random projection of the output space,
rather than always targeting simultaneously all outputs or always targeting a
single a priori fixed output.

We thus propose to first generate at each iteration of the boosting algorithm
one random projection vector of size $\phi_m \in \mathbb{R}^{1 \times d}$. The
weak learner is then fitted on the projection of the current residuals according
to $\phi_m$ reducing dimensionality from $d$ outputs to a single output. A
weight vector $\rho_m \in \mathbb{R}^d$ is then selected to minimize the
multi-output loss $\ell$. The whole approach is described in
Algorithm~\ref{algo:gbrt-rp}. If the loss is decomposable, non zero components
of the weight vector $\rho_m$ highlight the contribution of the current $m$-th
model to the overall loss decrease. Note that sign flips due to the projection
are taken into account by the additive weights $\rho_m$. A single output
regressor can now handle multi-output tasks through a sequence of single random
projections.

The prediction of an unseen sample $x$ by the model produced by Algorithm~\ref{algo:gbrt-rp}
is now given by
\begin{equation}
f(x) = \rho_0 + \sum_{m=1}^M \rho_m g_m(x),
\end{equation}
\noindent where $\rho_{0}\in\mathbb{R}^d$ is a constant prediction, and the
coefficients $\{\rho_m \in \mathbb{R}^d\}_{m=1}^M$ highlight the contribution of
each model $g_m$ to the ensemble. Note that it is different from
Equation~\ref{eq:pred-mo-gbrt} (no Hadamard product), since here the weak models
$g_{m}$ produce single output predictions.

\begin{algorithm}
\caption{Gradient boosting on randomly projected residual spaces.}
\label{algo:gbrt-rp}
\begin{algorithmic}[1]
\Function{GB-rpo}{$\mathcal{L} = \left((x^i, y^i)\right)_{i=1}^n; \ell; \mathcal{H}; M$}
\State $f_{0}(x) = \rho_0 = \arg\min_{\rho \in \mathbb{R}^d} \sum_{i=1}^n \ell(y^i, \rho)$
\For{$m$ = 1 to $M$}
    \State Compute the loss gradient for the learning set samples
    \[g_m^i \in \mathbb{R}^d = \left[\nabla_{y'} \ell(y^i, y') \right]_{y' = f_{m-1}(x^i)}
    \forall i \in \left\{1, \ldots, n\right\}.\]
    \State Generate a random projection $\phi_m \in \mathbb{R}^{1 \times d}$.
    \State Fit the projected loss gradient
    \[
    g_m = \arg\min_{{g} \in \mathcal{H}} \sum_{i=1}^n  \left(-\phi_m  g_m^i - {g}(x^i)\right)^2.
    \]
    \State Find an optimal step length in the direction of $g_m$.
    \[
    \rho_m = \arg\min_{\rho \in \mathbb{R}^d} \sum_{i=1}^n \ell\left(y^i, f_{m-1}(x^i) + \rho g_m(x^i)\right),
    \] \label{alg-line:rho-optim}
    \State $f_{m}(x) = f_{m-1}(x)  + \rho_m g_m(x)$
\EndFor
\State \Return $f_{M}(x)$
\EndFunction
\end{algorithmic}
\end{algorithm}

Whenever we use decision trees as models, we can grow the tree structure on any
output space and then (re)label it in another one as in
Chapter~\ref{ch:rf-output-projections} Section~\ref{sec:theoretical-analysis}
by (re)propagating the training samples in the tree structure. This idea of
leaf relabelling could be readily applied to Algorithm~\ref{algo:gbrt-rp}
leading to Algorithm~\ref{algo:gbrt-rp-relabel}. After fitting the decision
tree on the random projection(s) and before optimizing the additive weights
$\rho_m$, we relabel the tree structure in the original residual space (line
7). More precisely, each leaf is labelled by the average unprojected residual
vector of all training examples falling into that leaf.
The predition of an unseen sample is then obtained with
Equation~\ref{eq:pred-mo-gbrt} as for Algorithm~\ref{algo:gb-mo}. We will
investigate whether it is better or not to relabel the decision tree structure
in the experimental section. Note that Algorithm~\ref{algo:gbrt-rp-relabel} can
be straightforwardly used in a multiple random projection context ($q \geq 1$)
using a random projection matrix $\phi_m \in \mathcal{R}^{q \times d}$. The
resulting algorithm with arbitrary $q$ corresponds to the application to
gradient boosting of the idea explored in Chapter 5 in the context of random
forests. We will study in Section~\ref{subsec:multiple-rp} and
Section~\ref{sec:exp-rp-size} the effect of the size of the projected space
$q$.

\begin{algorithm}
\caption{Gradient boosting on randomly projected residual spaces
         with relabelled decision trees as weak models.}
\label{algo:gbrt-rp-relabel}
\begin{algorithmic}[1]
\Function{GB-relabel-rpo}{$\mathcal{L} = \left((x^i, y^i)\right)_{i=1}^n; \ell; \mathcal{H}; M; q$}
\State $f_{0}(x) = \rho_0 = \arg\min_{\rho \in \mathbb{R}^d} \sum_{i=1}^n \ell(y^i, \rho)$
\For{$m$ = 1 to $M$}
    \parState{Compute the loss gradient for the learning set samples
            \[g_m^i \in \mathbb{R}^d = \left[\nabla_{y'} \ell(y^i, y') \right]_{y' = f_{m-1}(x^i)}
            \forall i \in \left\{1, \ldots, n\right\}.\]}
    \parState{Generate a random projection $\phi_m \in \mathbb{R}^{q \times d}$.}
    \parState{Fit a single-output tree $g_{m}$ on the projected negative loss gradients
           \[
           g_m = \arg\min_{g \in \mathcal{H}} \sum_{i=1}^n \left|\left|-\phi_m g_m^i - {g}(x^i)\right|\right|_{\ell_2}^2.
           \]}
    \parState{Relabel each leaf of the tree $g_m$ in the original (unprojected) residual space, by
    averaging at each leaf the $g_m^i$ vectors of all examples falling into that leaf.}
    \parState{Find an optimal step length in the direction of $g'_{m}$.}
            \[
            \rho_m = \arg\min_{\rho \in \mathbb{R}^d} \sum_{i=1}^n \ell\left(y^i, f_{m-1}(x^i) + \rho \odot g'_m(x^i)\right).
            \] \label{alg-line:rho-optim2}
    \parState{$f_{m}(x) = f_{m-1}(x)  + \rho_m \odot g'_m(x)$}
\EndFor
\State \Return $f_{M}(x)$
\EndFunction
\end{algorithmic}
\end{algorithm}

To the three presented algorithms, we also add a constant learning rate $\mu \in (0,
1]$ to shrink the size of the gradient step $\rho_m$ in the residual space. Indeed, for
a given weak model space $\mathcal{H}$ and a loss $\ell$, optimizing both the
learning rate $\mu$ and the number of steps $M$ typically  improves
generalization performance.

\subsection{Effect of random projections}
\label{sec:effect-rp-gb}

Randomly projecting the output space in the context of the gradient boosting
approach has two direct consequences: (i) it strongly reduces the size of the
output space, and (ii) it randomly combines several outputs. We will consider
here the following random projection matrices $\phi \in \mathbb{R}^{q \times
d}$ ordered from the sparsest to the densest ones:
\begin{itemize}
\item \textbf{Random output subsampling matrices} is obtained by sampling random
lines from the identity matrix.

\item \textbf{(Sparse) Rademacher matrices} is obtained by drawing its elements
in $\left\{ -\sqrt{\frac{s}{q}}, 0, \sqrt{\frac{s}{q}} \right\}$ with
probability $\left\{ \frac{1}{2s}, 1 - \frac{1}{s} ,\frac{1}{2s}\right\}$, where
$1 / s \in (0,1]$ controls the sparsity of $\phi$. With $s=1$, we have (dense)
\textbf{Rademacher random projections}. If $s=3$, we will call them \textbf{Achlioptas random
projections}~\cite{achlioptas2003database}. When $s=\sqrt{d}$, we will say that
we have \textbf{sparse random projections} as in~\cite{li2006very}.

\item  \textbf{Gaussian matrices} are obtained by drawing their elements
\emph{i.i.d.} in $\mathcal{N}(0, 1 / q)$.

\end{itemize}

We discuss in more details the random sub-sampling projection in
Section~\ref{subsec:output-sampling} and the impact of the density of random
projection matrices in Section~\ref{subsec:rp-density}.  We study the benefit to
use more than a single random projection of the output space ($q > 1$) in
Section~\ref{subsec:multiple-rp}. We highlight the difference in model
representations between tree ensemble techniques, \emph{i.e.} the gradient tree
boosting approaches and the random forest approaches, in
Section~\ref{subsec:gb-rp-discussions}.

\subsubsection{$\ell_2$-norm loss and random output sub-sampling}
\label{subsec:output-sampling}


The gradient boosting method has an analytical solution when the loss is the
square loss or its extension the $\ell_2$-norm loss $\ell_2(y, y') =
\frac{1}{2}||y - y'||^2$:
\begin{itemize}

\item The constant model $f_0$ minimizing this loss is the average output value
of the training set given by
\begin{equation}
f_0(x) = \rho_0 = \arg\min_{\rho \in \mathbb{R}^d} \sum_{i=1}^n \frac{1}{2}||y^i - \rho||_{\ell_2}^2
= \frac{1}{n} \sum_{i=1}^n y^i.
\end{equation}

\item The gradient of the $\ell_2$-norm loss for the $i$-th sample is the
difference between the ground truth $y^i$ and the prediction of the ensemble
$f$ at the current step $m$ ($\forall i \in \left\{1, \ldots, n\right\}$):
\begin{align}
g_m^i &= \left[\nabla_{y'} \ell(y^i, y') \right]_{y' = f_{m-1}(x^i)}
 = y^i - f_{m-1}(x^i).
\end{align}

\item Once a new weak estimator $g_m$ has been fitted on the loss gradient
$g_m^i$ or the projected gradient $\phi_m g_m^i$ with or without relabelling, we
have to optimize the multiplicative weight vector $\rho_m$ of the new weak model
in the ensemble. For Algorithm~\ref{algo:gb-mo} and
Algorithm~\ref{algo:gbrt-rp-relabel} that exploit multi-output weak learners,
this amounts to
\begin{align}
\rho_m &= \arg\min_{\rho \in \mathbb{R}^d} \sum_{i=1}^n \frac{1}{2}\left|\left|y^i -  f_m(x^i) - \rho \odot g_m(x^i)\right|\right|^2 \\
&= \arg\min_{\rho \in \mathbb{R}^d} \sum_{i=1}^n \frac{1}{2}\left|\left|g_m^i - \rho \odot g_m(x^i)\right|\right|^2
\end{align}
\noindent
which has the following solution:
\begin{equation}
\rho_{m, j} = \frac{\sum_{i=1}^n g_{m,j}^i g_m(x^i)_j}{\sum_{i=1}^n g_m(x^i)_j} \forall j \in \{1,\ldots,d\}.
\label{eq:loss-weight-sub}
\end{equation}

For Algorithm~\ref{algo:gbrt-rp}, we have to solve
\begin{align}
\rho_m &= \arg\min_{\rho \in \mathbb{R}^d} \sum_{i=1}^n \frac{1}{2}\left|\left|y^i -  f_m(x^i) - \rho g_m(x^i)\right|\right|^2 \\
&= \arg\min_{\rho \in \mathbb{R}^d} \sum_{i=1}^n \frac{1}{2}\left|\left|g_m^i - \rho g_m(x^i)\right|\right|^2
\end{align}
\noindent which has the following solution
\begin{equation}
\rho_{m, j} = \frac{\sum_{i=1}^n g_{m,j}^i g_m(x^i)}{\sum_{i=1}^n g_m(x^i)} \forall j \in \{1,\ldots,d\}.
\label{eq:loss-weight-sub-2}
\end{equation}

\end{itemize}

From Equation~\ref{eq:loss-weight-sub} and Equation~\ref{eq:loss-weight-sub-2},
we have that the weight $\rho_{m, j}$ is proportional to the correlation
between the loss gradient of the output $j$ and the weak estimator $g_m$. If
the model $g_m$ is independent of the output $j$, the weight $\rho_{m,j}$
will be close to zero and $g_m$ will thus not contribute to the prediction of
this output. On the opposite, a high magnitude of $|\rho_{m,j}|$ means that the
model $g_m$ is useful to predict the output $j$.

If we subsample the output space at each boosting iteration
(Algorithm~\ref{algo:gbrt-rp} with random output sub-sampling), the weight
$\rho_{m, j}$ is then proportional to the correlation between the model fitted
on the sub-sampled output and the output $j$. If correlations exist between the
outputs, the optimization of the constant $\rho_m$ allows to share the trained
model at the $m$-th iteration on the sub-sampled output to all the other
outputs. In the extreme case where all outputs are independent given the
inputs, the weight $\rho_m$ is expected to be nearly zero for all outputs
except for the sub-sampled output, and Algorithm~\ref{algo:gbrt-rp} would be
equivalent to the binary relevance / single target approach. If all outputs are
strictly identical, the elements of the constant vector $\rho_m$ would have the
same value, and Algorithm~\ref{algo:gbrt-rp} would be equivalent to the
multi-output gradient boosting approach
(Algorithm~\ref{algo:gb-mo}). Algorithm~\ref{algo:gbrt-rp} would also produce
in this case the exact same model as binary relevance / single target approach
asymptotically but it would require $d$ times less trees to reach similar
performance, as each tree would be shared by all $d$ outputs.

Algorithm~\ref{algo:gbrt-rp-relabel} with random output sub-sampling is a gradient
boosting approach fitting one decision tree at each iteration on a random output
space and relabelling the tree in the original output space. The leaf
relabelling procedure minimizes the $\ell_2$-norm loss over the training
samples by averaging the output values of the samples reaching the corresponding
leaves. In this case, the optimization of the weight $\rho_m$ is unnecessary, as it would
lead to an all ones vector. For similar reasons if the multi-output gradient
boosting method (Algorithm~\ref{algo:gb-mo}) uses decision trees as weak
estimators, the weight $\rho_m$ is also an all ones vector as the leaf
predictions already minimize the $\ell_2$-norm loss. The difference between
these two algorithms is that Algorithm~\ref{algo:gbrt-rp-relabel} grows trees
using a random output at each iteration instead of all of them with
Algorithm~\ref{algo:gb-mo}.



\subsubsection{Density of the random projections}
\label{subsec:rp-density}


In Chapter~\ref{ch:rf-output-projections}, we have combined the random forest
method with a wide variety of random projection schemes. While the algorithms presented
 in this chapter were originally devised with random output
sub-sampling in mind (see Section~\ref{subsec:output-sampling}), it seems
natural to also combine the proposed approaches with random projection schemes such
as Gaussian random projections or (sparse) Rademacher random projections.

With random output sub-sampling, the projection matrix $\phi_m \in
\mathbb{R}^{1 \times d}$ is extremely sparse as only one element is non zero.
With denser random projections, the weak estimators of
Algorithm~\ref{algo:gbrt-rp} and Algorithm~\ref{algo:gbrt-rp-relabel} are fitted
on the projected gradient loss $\{(x^i, \phi_m g_m^i\}_{i=1}^n$. It means that
a weak estimator $g_m$ is trying to model the direction of a weighted
combination of the gradient loss.

Otherwise said, the weak model fitted at the $m$-th step approximates a
projection $\phi_m$ of the gradient losses given the input vector. We can
interpret the weight $\rho_{m,j}$ when minimizing the $\ell_2$-norm loss as the
correlation between the output $j$ and a weighted approximation of the output
variables $\phi_m$. With an extremely sparse projection having only one non zero
element, we have the situation described in the previous section. If we have two
non zero elements, we have the following extreme cases: (i) both combined
outputs are identical and (ii) both combined outputs are independent given the
inputs. In the first situation, the effect is identical to the case where we
sub-sample only one output. In the second situation, the weak model makes a
compromise between the independent outputs given by $\phi_m$. Between those two
extremes, the loss gradient direction $\phi_m$ approximated by the weak model is
useful to predict both outputs. The random projection of the output space will
indeed prevent over-fitting by inducing some variance in the learning process.
The previous reasoning can be extended to more than two output variables.

Dense random projection schemes, such as Gaussian random projection, consider a
higher number of outputs together and is hoped to speed up convergence by
increasing the correlation between the fitted tree in the projected space and
the residual space. Conversely, sparse random projections, such as random output
sub-sampling, make the weak model focus on few outputs.


\subsubsection{Gradient tree boosting and multiple random projections}
\label{subsec:multiple-rp}


The gradient boosting multi-output strategy combining random projections and tree
relabelling (Algorithm~\ref{algo:gbrt-rp-relabel}) can use random projection
matrices $\phi_m \in \mathbb{R}^{q \times d}$ with more than one line
($q\geq 1$).

The weak estimators are multi-output regression trees using the
variance as impurity criterion to grow their tree structures. With an increasing
number of projections $q$, we have the theoretical guarantee~(see
Chapter~\ref{ch:rf-output-projections}) that the variance computed in the
projected space is an approximation of the variance in the original output
space.

When the projected space is of infinite size $q\rightarrow\infty$, the decision
trees grown on the original space or on the projected space are identical as the
approximation of the variance is exact. We thus have that
Algorithm~\ref{algo:gbrt-rp-relabel} is equivalent to the gradient boosting with
multi-output regression tree method (Algorithm~\ref{algo:gb-mo}).

Whenever the projected space is of finite size ($q < \infty$),
Algorithm~\ref{algo:gbrt-rp-relabel} is thus an approximation of
Algorithm~\ref{algo:gb-mo}. We study empirically the effect of the number of
projections $q$ in Algorithm~\ref{algo:gbrt-rp-relabel} in
Section~\ref{sec:exp-rp-size}.

\subsubsection{Representation bias of decision tree ensembles}
\label{subsec:gb-rp-discussions}


Random forests and gradient tree boosting build an ensemble of  trees
either independently or sequentially, and thus offer different bias/variance tradeoffs.
The predictions of all these ensembles can be expressed as a weighted combination of
the ground truth outputs of the training set samples. In the present section, we discuss the differences
between single tree models, random forest models and gradient tree boosting
models in terms of the representation biases of the obtained models. We also
highlight the differences between single target models and multi-output tree
models.


\paragraph{Single tree models.}  The prediction of a regression tree learner can
be written as a weighted linear combination of the training samples
$\mathcal{L}=\{(x^i,y^i) \in \mathcal{X} \times \mathcal{Y}\}_{i=1}^n$. We
associate to each training sample $(x^i, y^i)$ a weight function $w^i:
\mathcal{X} \rightarrow \mathbb{R}$ which gives the contribution of a ground
truth $y^i$ to predict an unseen sample $x$. The prediction of a \emph{single output
 tree} $f$ is given by
\begin{equation}
f(x) = \sum_{i=1}^n w^i(x) y^i.
\label{eq:so-dt-pred}
\end{equation}
\noindent The weight function $w^i(x)$ is non zero if both the samples $(x^i,y^i)$
and the unseen sample $x$ reach the same leaf of the tree. If both
$(x^i,y^i)$ and $x$ end up in the same leaf of the  tree,
$w^i(x)$ is equal to the inverse of the number of training samples reaching that leaf.
The weight $w^i(x)$ can thus be rewritten as $k(x^i, x)$ and the function $k(\cdot, \cdot)$ is actually a positive semi-definite
kernel~\cite{geurts2006extremely}.

We can also express multi-output models as a weighted sum of the training
samples. With a \emph{single target regression tree}, we have an
independent weight function $w^i_j$ for each sample of the training set and
each output as we fit one model per output. The prediction of this model for output $j$ is
given by:
\begin{equation}
f(x)_j = \sum_{i=1}^n w^i_j(x) y^i_j.
\label{eq:br-dt-pred}
\end{equation}

With a \emph{multi-output regression tree}, the decision tree structure is shared
between all outputs so we have a single weight function $w^i$ for each training
sample:
\begin{equation}
f(x)_j = \sum_{i=1}^n w^i(x) y^i_j.
\label{eq:mo-dt-pred}
\end{equation}

\paragraph{Random forest models.} If we have a \emph{single target
random forest model}, the prediction of the $j$-th output combines the predictions
of the $M$ models of the ensemble in the following way:
\begin{equation}
f(x)_j = \frac{1}{M}\sum_{m=1}^M \sum_{i=1}^n w^{i}_{m,j}(x) y^i_j,
\label{eq:br-rf-pred}
\end{equation}
with one weight function $w^{i}_{m,j}$ per tree, sample and output. We note that
we can combine the weights of the individual trees into a single one per sample and per output
\begin{equation}
w^i_j(x) = \frac{1}{M} \sum_{m=1}^M w^{i}_{m,j}(x).
\end{equation}
\noindent The prediction of the $j$-th output for an ensemble of independent
models has the same form as a single target regression tree
model:
\begin{equation}
f(x)_j = \frac{1}{M}\sum_{m=1}^M \sum_{i=1}^n w^{i}_{m,j}(x) y^i_j = \sum_{i=1}^n w^i_j(x) y^i_j.
\end{equation}

We can repeat the previous development with a \emph{multi-output random forest
model}. The prediction for the $j$-th output of an unseen sample $x$ combines
the predictions of the $M$ trees:
\begin{align}
f(x)_j &= \frac{1}{M}\sum_{m=1}^M \sum_{i=1}^n w^i_m(x) y^i_j = \sum_{i=1}^n w^i(x) y^i_j
\label{eq:mo-rf-pred}
\end{align}
\noindent with
\begin{equation}
w^i(x) = \frac{1}{M} \sum_{m=1}^M w^i_m(x).
\end{equation}
\noindent With this framework, the prediction of an ensemble model has the same
form as the prediction of a single constituting tree.

\paragraph{Gradient tree boosting models.} The prediction of a \emph{single output
gradient boosting tree ensemble} is given by
\begin{equation}
f(x) = \rho_0 + \sum_{m=1}^M \mu \rho_m g_m(x),
\end{equation}
\noindent but also as
\begin{align}
f(x) = \sum_{m=1}^M \sum_{i=1}^n w^i_m(x) y^i = \sum_{i=1}^n w^i(x) y^i,
\end{align}
\noindent where the weight $w^i(x)$ takes into account the learning rate $\mu$,
the prediction of all tree models $g_m$ and the associated $\rho_m$. Given the
similarity between gradient boosting prediction and random forest model, we
deduce that the \emph{single target gradient boosting tree ensemble} has the
form of Equation~\ref{eq:br-dt-pred} and that \emph{multi-output gradient tree
boosting} (Algorithm~\ref{algo:gb-mo}) and \emph{gradient boosting tree with
projection of the output space and relabelling}
(Algorithm~\ref{algo:gbrt-rp-relabel}) has the form of
Equation~\ref{eq:mo-dt-pred}.

However, we note that the prediction model of the \emph{gradient tree boosting with
random projection of the output space} (Algorithm~\ref{algo:gbrt-rp}) is not
given by Equation~\ref{eq:br-dt-pred} and Equation~\ref{eq:mo-dt-pred} as the
prediction of a single output $j$ can combine the prediction of all $d$ outputs.
More formally, the prediction of the $j$-th output is given by:
\begin{equation}
f(x)_j = \sum_{m=1}^M \sum_{i=1}^n \sum_{k=1}^d w^i_{m,j,k}(x) y^i_k,
\end{equation}
\noindent where the weight function $w^i_{m,j,k}$ takes into account the contribution
of the $m$-th model fitted on a random projection $\phi_m$ of the output space
to predict the $j$-th output using the $k$-th outputs and the $i$-th sample. The triple
summation can be simplified by using a single weight to summarize the contribution of
all $M$ models:
\begin{align}
f(x)_j &= \sum_{m=1}^M \sum_{i=1}^n \sum_{k=1}^d w^i_{m,j,k}(x) y^i_k = \sum_{i=1}^n \sum_{k=1}^d w^i_{j,k}(x) y^i_k.
\end{align}

Between the studied methods, we can distinguish three groups of multi-output
tree models. The first one considers that all outputs are independent as with
binary relevance / single target trees, random forests or gradient
tree boosting models. The second group with multi-output random forests, gradient
boosting of multi-output tree and gradient boosting with random projection of
the output space and relabelling share the tree structures between all outputs,
but the leaf predictions are different for each output. The last and most
flexible group is the gradient tree boosting with random projection of the
output space sharing both the tree structures and the leaf predictions. We will
highlight in the experiments the impact of these differences in representation biases.

\subsection{Convergence when $M \rightarrow \infty$} 
\label{sec:convergence-proof}


Similarly to~\cite{geurts2007gradient}, we can prove the convergence of the
training-set loss of the gradient boosting with multi-output models
(Algorithm~\ref{algo:gb-mo}), and gradient boosting on randomly projected spaces
with (Algorithm~\ref{algo:gbrt-rp}) or without relabelling
(Algorithm~\ref{algo:gbrt-rp-relabel}).

Since the loss function is lower-bounded by $0$, we merely need to show that the loss $\ell$ is
non-increasing on the training set at each step $m$ of the gradient boosting
algorithm.


For Algorithm~\ref{algo:gb-mo} and Algorithm~\ref{algo:gbrt-rp-relabel}, we
note that
\begin{align}
\sum_{i=1}^n \ell(y^i, f_{m}(x^i))
&= \min_{\rho \in \mathbb{R}^{d}} \sum_{i=1}^n \ell\left(y^i, f_{m-1}(x^i) + \rho\odot g_m(x^i)\right) \nonumber \\
&\leq \sum_{i=1}^n \ell\left(y^i, f_{m-1}(x^i) \right).\label{thm:conv-rho}
\end{align}
and the learning-set loss is hence non increasing with $M$ if we use a learning
rate $\mu=1$. If the loss $\ell(y,y')$ is convex in its second
argument $y'$ (which is the case for those loss-functions that we use in
practice), then this convergence property actually holds for any value
$\mu\in(0;1]$ of the learning rate. Indeed, we have
\begin{eqnarray*}
& & \sum_{i=1}^n \ell\left(y^i, f_{m-1}(x^i) \right)\\
& \geq &(1-\mu) \sum_{i=1}^n \ell\left(y^i, f_{m-1}(x^i) \right) + \mu \sum_{i=1}^n \ell\left(y^i, f_{m-1}(x^i) + \rho_m \odot g_m(x^i)\right)\\
& \geq & \sum_{i=1}^n \ell\left(y^i, f_{m-1}(x^i) + \mu \rho_m \odot g_m(x^i)\right).
\end{eqnarray*}
\noindent given Equation~\ref{thm:conv-rho} and the convexity property.

For Algorithm~\ref{algo:gbrt-rp}, we have a weak estimator $g_m$ fitted on a
single random projection of the output space $\phi_m$ with a multiplying
constant vector $\rho_m \in \mathbb{R}^d$, and we have:
\begin{align}
\sum_{i=1}^n \ell(y^i, f_{m}(x^i))
&=  \min_{\rho \in \mathbb{R}^{d}} \sum_{i=1}^n \ell\left(y^i, f_{m-1}(x^i) + \rho g_m(x^i)\right) \nonumber \\
&\leq \sum_{i=1}^n \ell\left(y^i, f_{m-1}(x^i)\right).
\end{align}
and the error
is also non increasing for Algorithm~\ref{algo:gbrt-rp}, under the same conditions as above.


The previous development shows that Algorithm~\ref{algo:gb-mo},
Algorithm~\ref{algo:gbrt-rp} and Algorithm~\ref{algo:gbrt-rp-relabel} are
converging on the training set for a given loss $\ell$. The binary relevance /
single target of gradient boosting regression trees admits a similar convergence
proof. We expect however  the convergence speed of the binary relevance /
single target to be lower assuming that it fits one weak estimator for each
output in a round robin fashion.

\section{Experiments}
\label{sec:experiments}


We describe the experimental protocol in
Section~\ref{sec:experimental-protocol}. Our first experiments in
Section~\ref{subsec:synthetic-exp} illustrate the multi-output gradient boosting
methods on synthetic datasets where the output correlation structure is known.
The effect of the choice and / or the number of random projections of the output
space is later studied for Algorithm~\ref{algo:gbrt-rp} and
Algorithm~\ref{algo:gbrt-rp-relabel} in Section~\ref{sec:gb-exp-rp-effect}. We
compare multi-output gradient boosting approaches and multi-output random forest
approaches in Section~\ref{sec:systematic-analysis} over 29 real multi-label and
multi-output datasets.

\subsection{Experimental protocol}
\label{sec:experimental-protocol}

We describe the metrics used to assess the performance of the supervised
learning algorithms in Section~\ref{subsec:metrics-protocol}. The protocol used
to optimize hyper-parameters is given in
Section~\ref{subsec:hyperparam-protocol}.

Note that the datasets used in the following experiments are described in
Appendix~\ref{ch:datasets}. Whenever the number of testing samples is not given,
we use half of the data as training set and half of the data as testing set.

\subsubsection{Accuracy assessment protocol}
\label{subsec:metrics-protocol}

We assess the accuracy of the predictors on a test set using the  ``Label
Ranking Average Precision (LRAP)'' (defined in
Section~\ref{subsec:multilabel-metrics}) for multi-label classification tasks
and  the ``macro-$r^2$ score'' (defined in
Section~\ref{subsec:regression-metrics}) for multi-output regression tasks.

\subsubsection{Hyper-parameter optimization protocol}
\label{subsec:hyperparam-protocol}

The hyper-parameters of the supervised learning algorithms are optimized as
follows: we define an hyper-parameter grid and the best hyper-parameter set is
selected using $20\%$ of the training samples as a validation set. The results
shown are averaged over five random split of the dataset
while preserving the training-testing set size ratio.

For the boosting ensembles, we optimize the learning rate $\mu$ among $\{1.,
0.5, 0.2, 0.1, 0.05, 0.02, 0.01\}$ and use decision trees as weak models whose
hyper-parameters are also optimized: the number of features drawn at each node
$k$ during the tree growth is selected among $k \in \{\sqrt{p}, 0.1p, 0.2p,
0.5p, p\}$, the maximum number of tree leaves $n_{\max\_leaves}$ grown in
best-first fashion is chosen among $n_{\max\_leaves} \in \{2,\ldots,8\}$. Note
that a decision tree with $n_{\max\_leaves}=2$ and $k=p$ is called a stump. We
add new weak models to the ensemble by minimizing either the square loss or the
absolute loss (or their multi-output extensions) in regression and either the
square loss or the logistic loss (or their multi-output extensions) in
classification, the choice of the loss being an additional hyper-parameter
tuned on the validation set.

We also optimize the number of boosting steps $n_{iter}$ of each gradient
boosting algorithm over the validation set. However note that the number of
steps has a different meaning depending on the algorithm. For binary relevance /
single target gradient boosting, the number of boosting steps $n_{iter}$ gives
the number of weak models fitted per output. The implemented algorithm here fits
weak models in a round robin fashion over all outputs. For all other (multi-output) methods,
the number of boosting steps $n_{iter}$ is the total number of weak models for
all outputs as only one model is needed to fit all outputs. The computing time
of one boosting iteration is thus different between the approaches. We will set
the budget, the maximal number of boosting steps $n_{iter}$, for each algorithm
to $n_{iter}=10000$ on synthetic experiments (see
Section~\ref{subsec:synthetic-exp}) so that the performance of the estimator is
not limited by the computational power. On the real datasets however, this
setting would have been too costly. We decided instead to limit the computing
time allocated to each gradient boosting algorithm on each classification
(resp. regression) problem to $100\times T$ (resp. $500\times T$), where $T$ is
the time needed on this specific problem for one iteration of multi-output
gradient boosting (Algorithm~\ref{algo:gb-mo}) with stumps and the
$\ell_2$-norm loss. The maximum number of iterations, $n_{iter}$, is thus set
independently for each problem and each hyper-parameter setting such that this
time constraint is satisfied. As a consequence, all approaches thus receive
approximately the same global time budget for model training and
hyper-parameter optimization.

For the random forest algorithms, we use the default hyper-parameter setting
suggested in~\cite{friedman2001elements}, which corresponds in classification
to 100 totally developed trees with $k=\sqrt{p}$ and in regression to $100$
trees with $k=p/3$ and a minimum of 5 samples to split a node ($n_{\min{}}=5$).

The base learner implementations are based on the
random-output-trees\footnote{\url{https://github.com/arjoly/random-output-trees}}~\cite{joly2014random}
version 0.1 and on the scikit-learn~\cite{buitinck2013api,pedregosa2011scikit}
of version 0.16 Python package. The algorithms presented in this chapter will
be provided in random-output-trees version 0.2.

\subsection{Experiments on synthetic datasets with known output correlation structures}
\label{subsec:synthetic-exp}

We study here the proposed boosting approaches on synthetic datasets whose
output correlation structures are known. The datasets are first presented in
Section~\ref{sec:artificial-datasets}. We then compare on these datasets
multi-output gradient boosting approaches in terms of their convergence
speed in Section~\ref{sec:convergence-artificial} and in terms of their best
performance whenever hyper-parameters are optimized in
Section~\ref{sec:synthetic-exp}.

\subsubsection{Synthetic datasets}
\label{sec:artificial-datasets}

To illustrate multi-output boosting strategies, we use three synthetic datasets
with a specific output structure: (i) chained outputs, (ii) totally correlated
outputs and (iii) fully independent outputs. Those tasks are derived from the
\textbf{friedman1} regression dataset which consists in solving the following
single target regression task~\cite{friedman1991multivariate}
\begin{align}
f(x) &= 10 \sin(\pi x_1 x_2)  + 20 (x_3 - 0.5)^2 + 10 x_4 + 5 x_5
y &= f(x) + \epsilon
\end{align}
\noindent with  $x \in \mathbb{R}^5 \sim \mathcal{N}(0; I_5)$ and $\epsilon \sim \mathcal{N}(0;
1)$ where $I_5$ is an identity matrix of size $5 \times 5$.

The \textbf{friedman1-chain} problem consists in $d$ regression tasks
forming a chain obtained by cumulatively adding independent standard
Normal noise. We draw samples from the following distribution
\begin{align}
y_1 &= f(x) + \epsilon_1,\\
y_j &= y_{j-1} + \epsilon_j \quad \forall j \in \{2, \ldots, d \}
\end{align}
\noindent with $x \sim \mathcal{N}(0; I_5)$ and $\epsilon \sim \mathcal{N}(0;
I_{d})$. Given the chain structure, the output with the least amount of noise
is the first one of the chain and averaging a subset of the outputs would not
lead to any reduction of the output noise with respect to the first output,
since total noise variance accumulates more than linearly with the number of
outputs. The optimal multi-output strategy is thus to build a model using only
the first output and then to replicate the prediction of this model for all
other outputs.

The \textbf{friedman1-group} problem consists in solving $d$ regression tasks
simultaneously obtained from one friedman1 problem without noise where an
independent normal noise is added. Given $x \sim \mathcal{N}(0; I_5)$ and
$\epsilon \sim \mathcal{N}(0; I_{d})$, we have to solve the following task:
\begin{align}
y_j =& f(x) + \epsilon_j  \quad \forall j \in \{1, \ldots, d \}.
\end{align}
\noindent If the output-output structure is known, the additive noises
$\epsilon_j, \forall j \in \{1, \ldots, d \},$ can be filtered out by averaging
all outputs. The optimal strategy to address this problem is thus to train a
single output regression model to fit the average output. Predictions on unseen
data would be later done by replicating the output of this model for all
outputs.

The \textbf{friedman1-ind} problem consists in $d$ independent friedman1 tasks.
Drawing samples from  $x \sim \mathcal{N}(0; I_{5 d})$ and $\epsilon \sim
\mathcal{N}(0; I_{d})$, we have
\begin{align}
y_j =& f(x_{5j+1:5j+5}) + \epsilon_j \quad \forall j \in \{1, \ldots, d \}.
\end{align}
\noindent where $x_{5j+1:5j+5}$ is a slice of feature vector from feature $5j+1$
to $5j+5$. Since all outputs are independent, the best multi-output strategy
is single target: one independent model fits each output

For each multi-output friedman problem, we consider 300 training samples, 4000
testing samples and $d=16$ outputs.

\subsubsection{Convergence with known output correlation structure}
\label{sec:convergence-artificial}

We first study the macro-$r^2$ score convergence as a function of time (see
Figure~\ref{fig:gbrt-subsample}) for three multi-output gradient boosting
strategies: (i) single target of gradient tree boosting (st-gbrt), (ii)
gradient boosting with multi-output regression tree (gbmort,
Algorithm~\ref{algo:gb-mo}) and (iii) gradient boosting with output subsampling
of the output space (gbrt-rpo-subsample, Algorithm~\ref{algo:gbrt-rp}). We train
each boosting algorithm on the three friedman1 artificial datasets with the same
set of hyper-parameters: a learning rate of $\mu=0.1$  and stumps as weak
estimators (a decision tree with $k=p$, $n_{\max{}\_leaves}=2$) while minimizing
the square loss.


\begin{figure}
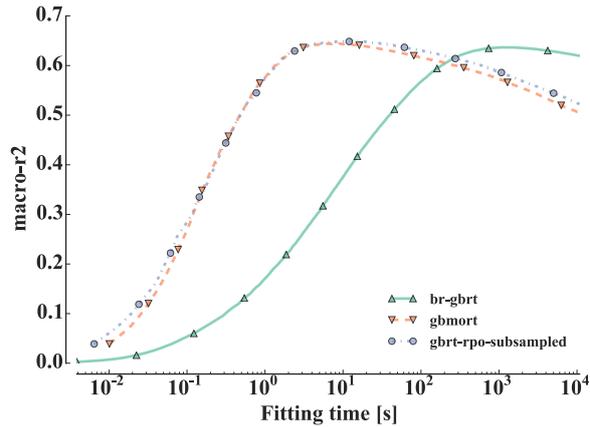
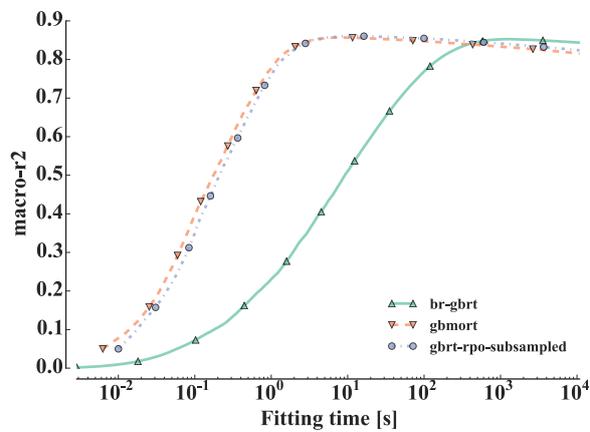
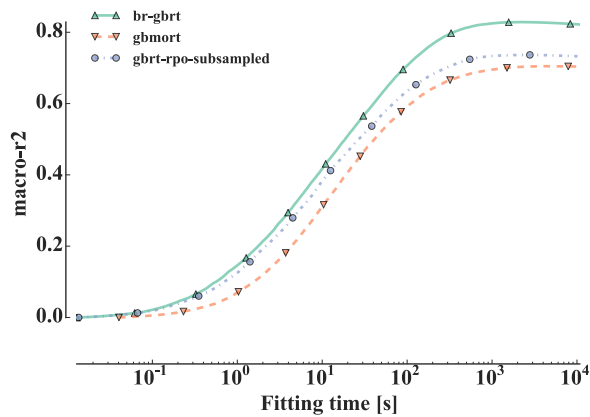

\centering
\subfloat[Friedman1-chain]{\label{subfig:subsample-chain}\includegraphics[width=0.7\textwidth]{{{ch06/friedman-subsample/friedman1_chain_v2_n_out_16_noise_1.0_log_time_macro-r2}}}} \\
\subfloat[Friedman1-group]{\label{subfig:subsample-gr}\includegraphics[width=0.7\textwidth]{{{ch06/friedman-subsample/friedman1_mo_v3_16_1_16_log_time_macro-r2}}}}\\
\subfloat[Friedman1-ind]{\label{subfig:subsample-ind}\includegraphics[width=0.7\textwidth]{{{ch06/friedman-subsample/friedman1_mo_16_16_1_log_time_macro-r2}}}}
\caption{The convergence speed and the optimum reached are affected by the
output correlation structure. The gbmort and gbrt-rpo-subsampled algorithms both exploit
output correlations, which yields faster convergence and slightly better performance than
st-gbrt on friedman1-chain and friedman1-group. However, st-gbrt
converges to a better optimum than gbmort and gbrt-rpo-subsample when there is no
output correlation as in friedman1-ind. (Model parameters: $k=p$,
$n_{\max{}\_leaves}=2$, $\mu=0.1$)}
\label{fig:gbrt-subsample}
\end{figure}

On the friedman1-chain (see Figure~\ref{subfig:subsample-chain}) and
friedman1-group (see Figure~\ref{subfig:subsample-gr}), gbmort and
gbrt-rpo-subsampled converge more than 100 times faster (note the logarithmic
scale of the abscissa) than single target. Furthermore, the optimal
macro-$r^2$ is slightly better for gbmort and gbrt-rpo-subsampled than st-gbrt.
Since all outputs are correlated on both datasets, gbmort and
gbrt-rpo-susbsampled are exploiting the output structure to have faster
convergence. The gbmort method exploits the output structure by filtering the
output noise as the stump fitted at each iteration is the one that maximizes the
reduction of the average output variance. By contrast, gbrt-rpo-subsample
detects output correlations by optimizing the $\rho_m$ constant and then shares
the information obtained by the current weak model with all other outputs.

On the friedman1-ind dataset (see Figure~\ref{subfig:subsample-ind}),
all three methods converge at the same rate. However, the single target
strategy converges to a better optimum than gbmort and gbrt-rpo-subsample. Since
all outputs are independent, single target enforces the proper correlation
structure (see Figure~\ref{subfig:subsample-ind}). The gbmort method has the
worst performance as it assumes the wrong set of hypotheses. The
gbrt-rpo-subsampled method pays the price of its flexibility by over-fitting the
additive weight associated to each output, \textbf{but less than gbmort.}

This experiment confirms that enforcing the right correlation structure yields
faster convergence and the best accuracy. Nevertheless, the output structure is
unknown in practice. We need flexible approaches such as gbrt-rpo-subsampled
that automatically detects and exploits the correlation structure.

\subsubsection{Performance and output modeling assumption}
\label{sec:synthetic-exp}

The presence or absence of structures among the outputs have shown to affect the
convergence speed of multi-output gradient boosting methods. As discussed
in~\cite{cheng2010bayes}, we talk about conditionally independent outputs when:
$$P(y_1,\ldots,y_q|x)=P(y_1|x)\cdots P(y_q|x)$$ and about unconditionally
independent outputs when:$$P(y_1,\ldots,y_q)=P(y_1)\cdots P(y_q).$$ When the
outputs are not conditionally independent and the loss function can not be
decomposed over the outputs (eg., the subset $0-1$ loss), one might need to
model the joint output distribution $P(y_1,\ldots,y_q|x)$ to obtain a Bayes
optimal prediction. If the outputs are conditionally independent however or if
the loss function can be decomposed over the outputs, then a Bayes optimal
prediction can be obtained by modeling separately the marginal conditional
output distributions $P(y_j|x)$ for all $j$. This suggests that in this case,
binary relevance / single target is not really penalized asymptotically with
respect to multiple output methods for not considering the outputs jointly. In
the case of an infinite sample size, it is thus expected to provide as good
models as all the multiple output methods. \emph{Since in practice we have to
deal with finite sample sizes, multiple output methods may provide better
results by better controlling the bias/variance trade-off.}

Let us study this question on the three synthetic datasets: friedman1-chain,
friedman1-group or friedman1-ind. We optimize the hyper-parameters with a
computational budget of 10000 weak models per hyper-parameter set. Five
strategies are compared (i) the artificial-gbrt method, which assumes that the
output structure is known and implements the optimal strategy on each problem as
explained in Section~\ref{sec:artificial-datasets}, (ii) single target of
gradient boosting regression trees (st-gbrt), (iii) gradient boosting with
multi-output regression tree (gbmort, Algorithm~\ref{algo:gb-mo}) and gradient
boosting with randomly sub-sampled outputs (iv) without relabelling
(gbrt-rpo-subsampled, Algorithm~\ref{algo:gbrt-rp}) and (v) with relabelling
(gbrt-relabel-rpo-subsampled, Algorithm~\ref{algo:gbrt-rp-relabel}). All
boosting algorithms minimize the square loss, the absolute loss or their
multi-outputs extension the $\ell_2$-norm loss.

We give the performance on the three tasks for each estimator in
Table~\ref{table:friedman1} and the p-value of Student's paired $t$-test
comparing the performance of two estimators on the same dataset in
Table~\ref{tab:tstudent-friedman1}.


\begin{table}
\caption{All  methods compared on the 3 artificial datasets. Exploiting the
output correlation structure (if it exists) allows beating single target
in a finite sample size, decomposable metric and conditionally independent
output.}\small
\label{table:friedman1}
\centering
\renewcommand{\tabcolsep}{1.1pt}
\begin{tabular}{llll}
\toprule
Dataset &          friedman1-chain &        friedman1-group &          friedman1-ind \\
\midrule
artificial-gbrt             &  $0.654\hspace*{-0.8mm}\pm\hspace*{-0.8mm}0.015$(1) &  $0.889\hspace*{-0.8mm}\pm\hspace*{-0.8mm}0.009$(1) &  $0.831\hspace*{-0.8mm}\pm\hspace*{-0.8mm}0.004$(1) \\
st-gbrt                     &  $0.626\hspace*{-0.8mm}\pm\hspace*{-0.8mm}0.016$(5) &  $0.873\hspace*{-0.8mm}\pm\hspace*{-0.8mm}0.008$(5) &   $0.830\hspace*{-0.8mm}\pm\hspace*{-0.8mm}0.003$(2) \\
gbmort                      &   $0.640\hspace*{-0.8mm}\pm\hspace*{-0.8mm}0.008$(4) &  $0.874\hspace*{-0.8mm}\pm\hspace*{-0.8mm}0.012$(4) &   $0.644\hspace*{-0.8mm}\pm\hspace*{-0.8mm}0.010$(5) \\
gbrt-relabel-rpo-subsampled &  $0.648\hspace*{-0.8mm}\pm\hspace*{-0.8mm}0.015$(2) &   $0.880\hspace*{-0.8mm}\pm\hspace*{-0.8mm}0.009$(2) &  $0.706\hspace*{-0.8mm}\pm\hspace*{-0.8mm}0.009$(4) \\
gbrt-rpo-subsampled         &  $0.645\hspace*{-0.8mm}\pm\hspace*{-0.8mm}0.013$(3) &  $0.876\hspace*{-0.8mm}\pm\hspace*{-0.8mm}0.007$(3) &  $0.789\hspace*{-0.8mm}\pm\hspace*{-0.8mm}0.003$(3) \\
\bottomrule
\end{tabular}
\end{table}

\begin{table}
\caption{P-values given by Student's paired $t$-test on the synthetic datasets.
We highlight p-values inferior to $\alpha=0.05$ in bold. Note that the sign $<$
(resp. $>$) indicates that the estimator in the row has better (resp.lower)
score than the column estimator.}
\label{tab:tstudent-friedman1}
\setlength{\tabcolsep}{1.2pt}
\centering
\begin{small}
\begin{tabular}{llllll}
{} & \rotatebox{90}{artificial-gbrt} & \rotatebox{90}{st-gbrt} & \rotatebox{90}{gbmort} & \rotatebox{90}{gbrt-relabel-rpo-subsampled} & \rotatebox{90}{gbrt-rpo-subsampled} \\
\midrule
Dataset friedman1-chain \\
\midrule
artificial-gbrt             &                                 &       \textbf{0.003}(>) &                   0.16 &                                        0.34 &                                0.24 \\
st-gbrt                     &               \textbf{0.003}(<) &                         &                   0.11 &                            \textbf{0.04}(<) &                    \textbf{0.03}(<) \\
gbmort                      &                            0.16 &                    0.11 &                        &                                        0.38 &                                0.46 \\
gbrt-relabel-rpo-subsampled &                            0.34 &        \textbf{0.04}(>) &                   0.38 &                                             &                                0.57 \\
gbrt-rpo-subsampled         &                            0.24 &        \textbf{0.03}(>) &                   0.46 &                                        0.57 &                                     \\
\midrule
Dataset friedman1-group \\
\midrule
artificial-gbrt             &                                 &       \textbf{0.005}(>) &      \textbf{0.009}(>) &                           \textbf{0.047}(>) &                   \textbf{0.006}(>) \\
st-gbrt                     &               \textbf{0.005}(<) &                         &                   0.56 &                           \textbf{0.046}(<) &                                0.17 \\
gbmort                      &               \textbf{0.009}(<) &                    0.56 &                        &                                        0.15 &                                0.63 \\
gbrt-relabel-rpo-subsampled &               \textbf{0.047}(<) &       \textbf{0.046}(>) &                   0.15 &                                             &                    \textbf{0.04}(>) \\
gbrt-rpo-subsampled         &               \textbf{0.006}(<) &                    0.17 &                   0.63 &                            \textbf{0.04}(<) &                                     \\
\midrule
Dataset friedman1-ind \\
\midrule
artificial-gbrt             &                                 &                    0.17 &      \textbf{2e-06}(>) &                           \textbf{2e-06}(>) &                   \textbf{1e-05}(>) \\
st-gbrt                     &                            0.17 &                         &      \textbf{2e-06}(>) &                           \textbf{4e-06}(>) &                   \textbf{4e-06}(>) \\
gbmort                      &               \textbf{2e-06}(<) &       \textbf{2e-06}(<) &                        &                           \textbf{9e-05}(<) &                   \textbf{6e-06}(<) \\
gbrt-relabel-rpo-subsampled &               \textbf{2e-06}(<) &       \textbf{4e-06}(<) &      \textbf{9e-05}(>) &                                             &                   \textbf{3e-05}(<) \\
gbrt-rpo-subsampled         &               \textbf{1e-05}(<) &       \textbf{4e-06}(<) &      \textbf{6e-06}(>) &                           \textbf{3e-05}(>) &                                     \\
\bottomrule
\end{tabular}
\end{small}
\end{table}

As expected, we obtain the best performance if the output correlation structure
is known with the custom strategies implemented with artifical-gbrt. Excluding
this artificial method, the best boosting methods on the two problems with
output correlations, friedman1-chain and friedman1-group, are the two gradient
boosting approaches with output subsampling (gbrt-relabel-rpo-subsampled and
gbrt-rpo-subsampled).

In friedman1-chain, the output correlation structure forms a chain as each new
output is the previous one in the chain with a noisy output. Predicting outputs
at the end of the chain, without using the previous ones, is a difficult task.
The single target approach is thus expected to be sub-optimal. And indeed, on
this problem, artificial-gbrt, gbrt-relabel-rpo-subsampled and
gbrt-rpo-subsampled are significantly better than st-gbrt (with
$\alpha=0.05$). All the multi-output methods, including gbmort, are
indistinguishable from a statistical point of view, but we note that gbmort is
however not significantly better than st-gbrt.

In friedman1-group, among the ten pairs of algorithms, four are not
significantly different, showing a p-value greater than 0.05 (see
Table~\ref{tab:tstudent-friedman1}). We first note that gbmort is not better
than st-gbrt while exploiting the correlation. Secondly, the boosting methods
with random output sub-sampling are the best methods. They are however not
significantly better than gbmort and significantly worse than artificial-gbrt,
which assumes the output structure is known. Note that gbrt-relabel-rpo-subsampled
is significantly better than gbrt-rpo-subsampled.

In friedman1-ind, where there is no correlation between the outputs, the best
strategy is single target which makes independent models for each output.  From
a conceptual and statistical point of view, there is no difference between
artificial-gbrt and st-gbrt. The gbmort algorithm, which is optimal when all
outputs are correlated, is here significantly worse than all other methods. The
two boosting methods with output subsampling (gbrt-rpo-subsampled and
gbrt-relabel-rpo-subsampled method), which can adapt themselves to the absence
of correlation between the outputs, perform better than gbmort, but they are
significantly worse than st-gbrt. For these two algorithms, we note that not
relabelling the leaves (gbrt-rpo-subsampled) leads to superior performance
than relabelling them (gbrt-relabel-rpo-subsampled). Since in
friedman1-ind the outputs have disjoint feature support, the test nodes of a
decision tree fitted on one output will partition the samples using these
features. Thus, it is not suprising that relabeling the trees leaves actually
deteriorates performance.

In the previous experiment, all the outputs were dependent of the inputs.
However in multi-output tasks with very high number of outputs, it is likely
that some of them have few or no links with the inputs, i.e., are pure
noise. Let us repeat the previous experiments with the main difference that we
add to the original 16 outputs 16 purely noisy outputs obtained through random
permutations of the original outputs. We show the results of optimizing each
algorithm in Table~\ref{table:raw-results-noisy-out} and the associated
p-values in Table~\ref{tab:tstudent-friedman1-noisy-out1}. We report the
macro-$r^2$ score computed either on all outputs (macro-$r^2$) including the noisy
outputs or only on the 16 original outputs (half-macro-$r^2$).  P-value were
computed between each pair of algorithms using Student's $t$-test on the macro
$r^2$ score computed on all outputs.

We observe that the gbrt-rpo-subsampled algorithm has the best performance on
friedman1-chain and friedman1-group and is the second best on the
friedman1-ind, below st-gbrt.  Interestingly on friedman1-chain and
friedman1-group, this algorithm is significantly better than all the others,
including gbmort. Since this latter method tries to fit all outputs
simultaneously, it is the most disadvantaged by the introduction of the noisy
outputs.


\begin{table}
\caption{Friedman datasets with noisy outputs.}\small
\label{table:raw-results-noisy-out}
\centering
\begin{tabular}{lrl}
\toprule
friedman1-chain  &  half-macro-$r^2$ &           macro-$r^2$ \\
\midrule
st-gbrt                     &       0.611 (4) &  $0.265 \pm 0.006$ (4) \\
gbmort                      &       0.617 (3) &  $0.291 \pm 0.012$ (3) \\
gbrt-relabel-rpo-subsampled &       0.628 (2) &  $0.292 \pm 0.006$ (2) \\
gbrt-rpo-subsampled         &       0.629 (1) &  $0.303 \pm 0.007$ (1) \\

\midrule
Friedman1-group &  half-macro-$r^2$ &           macro-$r^2$ \\
\midrule
st-gbrt                     &       0.840 (3) &  $0.364 \pm 0.007$ (4) \\
gbmort                      &       0.833 (4) &  $0.394 \pm 0.004$ (3) \\
gbrt-relabel-rpo-subsampled &       0.855 (2) &  $0.395 \pm 0.005$ (2) \\
gbrt-rpo-subsampled         &       0.862 (1) &  $0.414 \pm 0.006$ (1) \\

\midrule
Friedman1-ind &  half-macro-$r^2$ &           macro-$r^2$ \\
\midrule
st-gbrt                     &       0.806 (1) &  $0.3536 \pm 0.0015$ (1) \\
gbmort                      &       0.486 (4) &   $0.1850 \pm 0.0081$ (4) \\
gbrt-relabel-rpo-subsampled &       0.570 (3) &  $0.2049 \pm 0.0033$ (3) \\
gbrt-rpo-subsampled         &       0.739 (2) &  $0.3033 \pm 0.0021$ (2) \\

\bottomrule
\end{tabular}
\end{table}

\begin{table}
  \caption{P-values given by Student's paired $t$-test on the synthetic datasets.
We highlight p-values inferior to $\alpha=0.05$ in bold. Note that the sign $<$
(resp. $>$) indicates that the estimator in the row has better (resp.lower)
score than the column estimator.}
\label{tab:tstudent-friedman1-noisy-out1}
\setlength{\tabcolsep}{1.5pt}
\centering
\begin{small}
\begin{tabular}{lllll}
{} & \rotatebox{90}{st-gbrt} & \rotatebox{90}{gbmort} & \rotatebox{90}{gbrt-relabel-rpo-subsampled} & \rotatebox{90}{gbrt-rpo-subsampled} \\
\midrule
Dataset friedman1-chain \\
\midrule
st-gbrt                     &                         &     \textbf{0.0009}(<) &                          \textbf{0.0002}(<) &                  \textbf{0.0002}(<) \\
gbmort                      &      \textbf{0.0009}(>) &                        &                                        0.86 &                    \textbf{0.04}(<) \\
gbrt-relabel-rpo-subsampled &      \textbf{0.0002}(>) &                   0.86 &                                             &                    \textbf{0.03}(<) \\
gbrt-rpo-subsampled         &      \textbf{0.0002}(>) &       \textbf{0.04}(>) &                            \textbf{0.03}(>) &                                     \\
\midrule
Dataset friedman1-group \\
\midrule
st-gbrt                     &                         &     \textbf{0.0002}(<) &                          \textbf{0.0006}(<) &                  \textbf{0.0003}(<) \\
gbmort                      &      \textbf{0.0002}(>) &                        &                                        0.74 &                   \textbf{0.008}(<) \\
gbrt-relabel-rpo-subsampled &      \textbf{0.0006}(>) &                   0.74 &                                             &                   \textbf{0.002}(<) \\
gbrt-rpo-subsampled         &      \textbf{0.0003}(>) &      \textbf{0.008}(>) &                           \textbf{0.002}(>) &                                     \\
\midrule
Dataset friedman1-ind \\
\midrule
st-gbrt                     &                         &      \textbf{1e-06}(>) &                           \textbf{1e-07}(>) &                   \textbf{1e-06}(>) \\
gbmort                      &       \textbf{1e-06}(<) &                        &                            \textbf{0.02}(<) &                   \textbf{6e-06}(<) \\
gbrt-relabel-rpo-subsampled &       \textbf{1e-07}(<) &       \textbf{0.02}(>) &                                             &                   \textbf{2e-06}(<) \\
gbrt-rpo-subsampled         &       \textbf{1e-06}(<) &      \textbf{6e-06}(>) &                           \textbf{2e-06}(>) &                                     \\
\bottomrule
\end{tabular}
\end{small}
\end{table}

\subsection{Effect of random projection}
\label{sec:gb-exp-rp-effect}

With the gradient boosting and random projection of the output space approaches
(Algorithms~\ref{algo:gbrt-rp} and~\ref{algo:gbrt-rp-relabel}), we have
considered until now only sub-sampling a single output at each iteration as
random projection scheme. In Section~\ref{sec:exp-projections}, we show
empirically the effect of other random projection schemes such as Gaussian
random projection. In Section~\ref{sec:exp-rp-size}, we study the effect of
increasing the number of projections in the gradient boosting algorithm with
random projection of the output space and relabelling
(parameter $q$ of Algorithm~~\ref{algo:gbrt-rp-relabel}). We also show empirically the link
between Algorithm~~\ref{algo:gbrt-rp-relabel} and gradient boosting with
multi-output regression tree (Algorithm~\ref{algo:gb-mo}).

\subsubsection{Choice of the random projection scheme}
\label{sec:exp-projections}


Beside random output sub-sampling, we can combine the multi-output gradient boosting strategies
(Algorithms~\ref{algo:gbrt-rp} and~\ref{algo:gbrt-rp-relabel}) with other random
projection schemes. A key difference between
random output sub-sampling and random projections such as Gaussian and (sparse)
Rademacher projections is that the latter combines together several outputs.

We show in Figures~\ref{fig:mediamill-projections}, \ref{fig:delicious-projections} and
 \ref{fig:friedman1-ind-projections} the LRAP or macro-$r^2$ score
convergence of gradient boosting with randomly projected outputs (gbrt-rpo,
Algorithm~\ref{algo:gbrt-rp}) respectively on the mediamill, delicious, and Friedman1-ind
datasets with different random projection schemes.

The impact of the random projection scheme on convergence speed of gbrt-rpo
(Algorithm~\ref{algo:gbrt-rp}) is very problem dependent. On the mediamill
dataset, Gaussian, Achlioptas, or sparse random projections all improve
convergence speed by a factor of 10 (see
Figure~\ref{fig:mediamill-projections}) compared to subsampling randomly only
one output. On the delicious (Figure~\ref{fig:delicious-projections}) and
friedman1-ind (Figure~\ref{fig:friedman1-ind-projections}), this is the
opposite: subsampling leads to faster convergence than all other projections
schemes.  Note that we have the same behavior if one relabels the tree
structure grown at each iteration as in Algorithm~\ref{algo:gbrt-rp-relabel}
(results not shown).

Dense random projections, such as Gaussian random projections, force the weak
model to consider several outputs jointly and it should thus only improve when
outputs are somewhat correlated (which seems to be the case on mediamill). When
all of the outputs are independent or the correlation is less strong, as in
friedman1-ind or delicious, this has a detrimental effect. In this situation,
sub-sampling only one output at each iteration leads to the best performance.


\begin{figure}
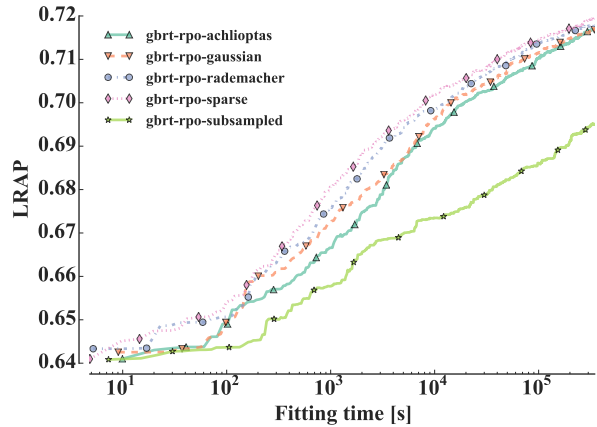

\centering
\includegraphics[width=0.7\textwidth]{{{ch06/projection-norelabel/mediamill_log_time_lrap}}}
\caption{
On the mediamill dataset, Gaussian, Achlioptas and sparse random projections with
gbrt-rpo (Algorithm \ref{algo:gbrt-rp}) show 10 times faster convergence in
terms of LRAP score, than sub-sampling one output variable at each iteration.
($k=p$, stumps, $\mu=0.1$, logistic loss)}
\label{fig:mediamill-projections}
\end{figure}

\begin{figure}
\centering
\includegraphics[width=0.7\textwidth]{{{ch06/projection-norelabel/delicious_log_time_lrap}}}
\caption{
On the delicious dataset, Gaussian, Achlioptas and sparse random projections with
gbrt-rpo (Algorithm~\ref{algo:gbrt-rp}) show 10 times faster convergence in
terms of LRAP score, than sub-sampling one output variable at each iteration.
($k=p$, stumps, $\mu=0.1$, logistic loss)}
\label{fig:delicious-projections}
\end{figure}

\begin{figure}
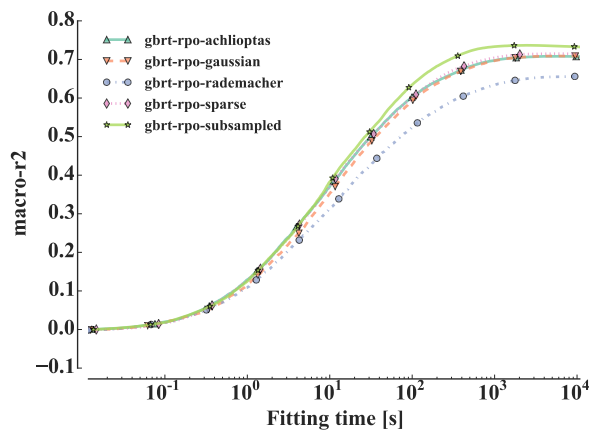

\centering
\includegraphics[width=0.7\textwidth]{{{ch06/projection-norelabel/friedman1_mo_16_16_1_log_time_macro-r2}}}
\caption{
On the friedman1-ind dataset where there is no output correlation, gbrt-rpo
(Algorithm~\ref{algo:gbrt-rp}) with one random subsampled output leads to a higher
macro-$r^2$ score than using Gaussian, Achlioptas or sparse random projections.
($k=p$, stumps, $\mu=0.1$, square loss)
\label{fig:friedman1-ind-projections}}
\end{figure}

\subsubsection{Effect of the size of the projected space}
\label{sec:exp-rp-size}

The multi-output gradient boosting strategy combining random projections and
tree relabelling (Algorithm~\ref{algo:gbrt-rp-relabel}) can use  more than one
random projection ($q\geq 1$) by using multi-output trees as base learners. In this section, we
study the effect of the size of the projected space $q$ in
Algorithm~\ref{algo:gbrt-rp-relabel}. This approach corresponds to the one
developed in Chapter~\ref{ch:rf-output-projections} for random forest.

Figure~\ref{fig:delicious-time-q} shows the LRAP score as a function of the
fitting time for gbmort (Algorithm~\ref{algo:gb-mo}) and gbrt-relabel-rpo
(Algorithm~\ref{algo:gbrt-rp-relabel}) with either Gaussian random projection
(see Figure \ref{subfig:delicious-time-gaussian}) or output subsampling (see
Figure~\ref{subfig:delicious-time-subsample}) for a number of projections $q
\in \left\{1, 98, 196, 491\right\}$ on the delicious dataset. In
Figure~\ref{subfig:delicious-time-gaussian} and
Figure~\ref{subfig:delicious-time-subsample}, one Gaussian random projection or
one sub-sampled output has faster convergence than their counterparts with a
higher number of projections $q$ and gbmort at fixed computational budget. Note
that when the number of projections $q$ increases, gradient boosting with
random projection of the output space and relabeling becomes similar to gbmort.


\begin{figure}
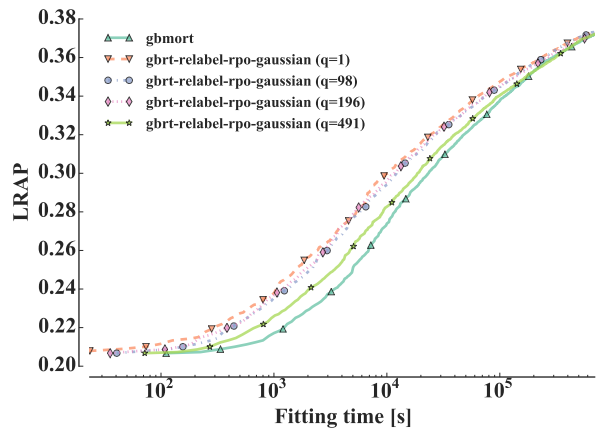
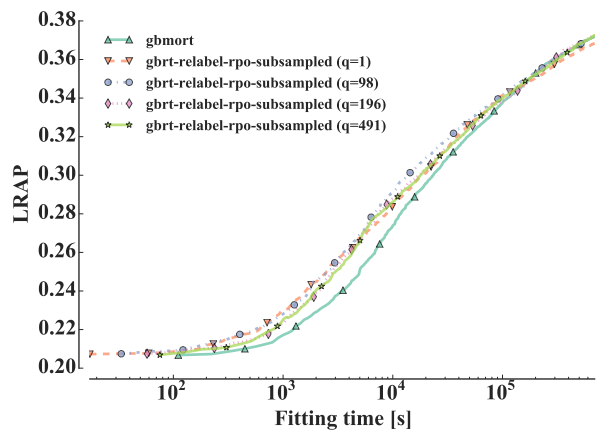

\centering
\subfloat[]{\label{subfig:delicious-time-gaussian}\includegraphics[width=0.7\textwidth]{{{ch06/n_components/gaussian_delicious_log_time_lrap_lr=0.1}}}} \\
\subfloat[]{\label{subfig:delicious-time-subsample}\includegraphics[width=0.7\textwidth]{{{ch06/n_components/subsample_delicious_log_time_lrap_lr=0.1}}}}
\caption{On the delicious dataset, LRAP score as a function of the boosting
ensemble fitting time for  gbrt-rpo-gaussian-relabel and
gbrt-rpo-subsampled-relabel with different number of projections $q$.
($k=p$, stumps, $\mu=0.1$, logistic loss)}
\label{fig:delicious-time-q}
\end{figure}

Instead of fixing the computational budget as a function of the training time,
we now set the computational budget to 100 boosting steps. On the delicious
dataset, gbrt-relabel-rpo (Algorithm~\ref{algo:gbrt-rp-relabel}) with Gaussian
random projection yields approximately the same performance as gbmort with
$q\geq20$ random projections as shown in Figure~\ref{subfig:rp-delicious-lrap}
and reduces computing times by \emph{a factor 7} at $q=20$ projections (see
Figure~\ref{subfig:rp-delicious-time}).


\begin{figure}
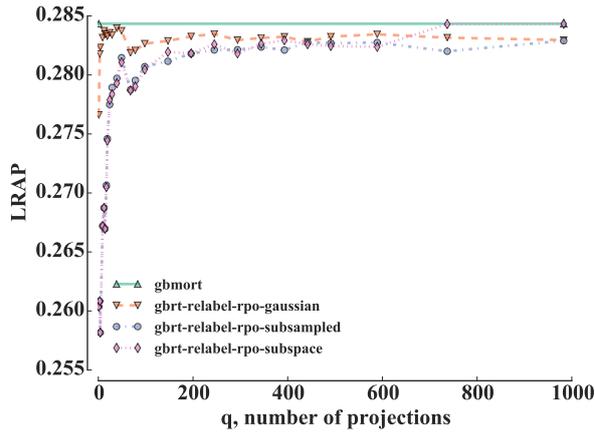
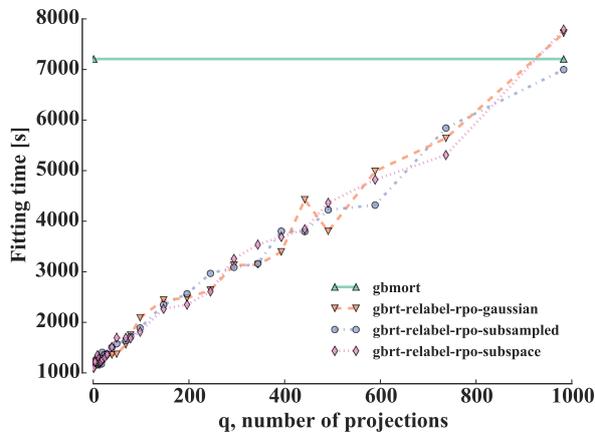

\centering
\subfloat[]{\label{subfig:rp-delicious-lrap}\includegraphics[width=0.7\textwidth]{{{ch06/n_components/delicious_n_components_lrap_loss=logistic-brent}}}} \\
\subfloat[]{\label{subfig:rp-delicious-time}\includegraphics[width=0.7\textwidth]{{{ch06/n_components/delicious_n_components_chrono-fit_loss=logistic-brent}}}}
\caption{On delicious, increasing the number of random projections $q$ allows
         to reach the same LRAP score as gbmort at a significantly
         reduced computational cost.
         ($k=p$, stumps, $\mu=0.1$, $M=100$, logistic loss)}
\label{fig:increasing-rp}
\end{figure}

These experiments show that gradient boosting with random projection and
relabelling (gbrt-relabel-rpo, Algorithm~\ref{algo:gbrt-rp-relabel}) is indeed an
approximation of gradient boosting with multi-output trees (gbmort,
Algorithm~\ref{algo:gb-mo}). The number of random projections $q$ influences
simultaneously the bias-variance tradeoff and the convergence speed of
Algorithm~\ref{algo:gbrt-rp-relabel}.

\subsection{Systematic analysis over real world datasets}
\label{sec:systematic-analysis}


We perform a systematic analysis over real world multi-label classification and multi-output
regression datasets. For this study, we evaluate the proposed algorithms:
gradient boosting of multi-output regression trees (gbmort,
Algorithm~\ref{algo:gb-mo}), gradient boosting with random projection of the
output space (gbrt-rpo, Algorithm~\ref{algo:gbrt-rp}), and gradient boosting
with random projection of the output space  and relabelling (gbrt-relabel-rpo,
Algorithm~\ref{algo:gbrt-rp-relabel}). For the two latter algorithms, we
consider two random projection schemes: (i) Gaussian random projection, a dense
random projection, and (ii) random output sub-sampling, a sparse random
projection. They will be compared to three common and well established
tree-based multi-output algorithms: (i) binary relevance / single target of
gradient boosting regression tree (br-gbrt / st-gbrt), (ii) multi-output random
forest (mo-rf) and (iii) binary relevance / single target of random forest
models (br-rf / st-rf).

We will compare all methods on multi-label tasks in
Section~\ref{subsec:gb-multilabel-exp} and on multi-output regression tasks
in Section~\ref{subsec:gb-mo-reg-exp}. Following the recommendations of
\citep{demvsar2006statistical}, we use the Friedman test and its associated
Nemenyi post-hoc test. Pairwise comparisons are also carried out using the
Wilcoxon signed ranked test.

\subsubsection{Multi-label datasets}
\label{subsec:gb-multilabel-exp}

Table~\ref{table:summary-multilabel} and Table~\ref{table:summary-multilabel-2}
show the performance of the random forest models and the boosting algorithms
over the 21 multi-label datasets. The critical distance diagram of
Figure~\ref{fig:cd-multilabel} gives the ranks of the algorithms and has an
associated Friedman test p-value of $1.36 \times 10^{-10}$ with a critical
distance of $2.29$ given by the Nemenyi post-hoc test ($\alpha=0.05$). Thus, we
can reject the null hypothesis that all methods are
equivalent. Table~\ref{tab:wilcoxon-multilabel} gives the outcome of the
pairwise Wilcoxon signed ranked tests.

The best average performer is gbrt-relabel-rpo-gaussian which is significantly
better according to the Nemenyi post-hoc test than all methods except
gbrt-rpo-gaussian and gbmort.

Gradient boosting with the Gaussian random projection has a significantly
better average rank than the random output sub-sampling projection. Relabelling
tree leaves allows to have better performance on the 21 multi-label dataset.
Indeed, both gbrt-relabel-rpo-gaussian and gbrt-relabel-rpo-subsampled are
better ranked and significantly better than their counterparts without
relabelling (gbrt-rpo-gaussian and gbrt-rpo-subsampled). These results somewhat
contrast with the results obtained on the artificial datasets, where
relabelling was always counterproductive.

Among all compared methods, br-gbrt has the worst rank and it is significantly
less good than all gbrt variants according to the Wilcoxon signed rank
test. This might be actually a consequence of the constant budget in time that
was allocated to all methods (see Section~\ref{sec:experimental-protocol}). All
methods were given the same budget in time but, given the very slow convergence
rate of br-gbrt, this budget may not allow to grow enough trees per output with
this method to reach competitive performance.

We notice also that both random forests based methods (mo-rf and br-rf) are
less good than all gbrt variants, most of the time significantly, except for
br-gbrt. It has to be noted however that no hyper-parameter was tuned for the
random forests. Such tuning could slightly change our conclusions, although
random forests often work well with default setting.

\begin{table}
\caption{LRAP scores over 21 multi-label datasets (part 1).}
\label{table:summary-multilabel}
\centering
\setlength{\tabcolsep}{1.5pt}
\begin{small}\footnotesize
\begin{tabular}{llll}
\toprule
                            &                   CAL500 &                   bibtex &                  birds \\
\midrule
br-gbrt                     &  $0.505 \pm 0.002$ (3.5) &    $0.587 \pm 0.007$ (6) &  $0.787 \pm 0.009$ (6) \\
br-rf                       &    $0.484 \pm 0.002$ (8) &    $0.542 \pm 0.005$ (8) &  $0.802 \pm 0.013$ (1) \\
gbmort                      &    $0.501 \pm 0.005$ (6) &  $0.595 \pm 0.005$ (4.5) &  $0.772 \pm 0.007$ (8) \\
gbrt-relabel-rpo-gaussian   &    $0.507 \pm 0.009$ (1) &    $0.607 \pm 0.005$ (1) &    $0.800 \pm 0.017$ (2) \\
gbrt-relabel-rpo-subsampled &    $0.499 \pm 0.008$ (7) &    $0.596 \pm 0.005$ (3) &   $0.790 \pm 0.016$ (4) \\
gbrt-rpo-gaussian           &  $0.505 \pm 0.006$ (3.5) &      $0.600 \pm 0.003$ (2) &  $0.793 \pm 0.017$ (3) \\
gbrt-rpo-subsampled         &    $0.506 \pm 0.006$ (2) &  $0.595 \pm 0.007$ (4.5) &  $0.779 \pm 0.018$ (7) \\
mo-rf                       &    $0.502 \pm 0.003$ (5) &    $0.553 \pm 0.005$ (7) &  $0.789 \pm 0.012$ (5) \\
\midrule
                            &              bookmarks   &                corel5k &              delicious \\
\midrule
br-gbrt                     &  $0.4463 \pm 0.0038$ (7) &    $0.291 \pm 0.006$ (7) &    $0.347 \pm 0.002$ (8) \\
br-rf                       &  $0.4472 \pm 0.0019$ (6) &    $0.273 \pm 0.012$ (8) &  $0.373 \pm 0.004$ (6.5) \\
gbmort                      &  $0.4855 \pm 0.0016$ (2) &  $0.312 \pm 0.009$ (3.5) &  $0.384 \pm 0.003$ (3.5) \\
gbrt-relabel-rpo-gaussian   &  $0.4893 \pm 0.0003$ (1) &  $0.315 \pm 0.007$ (1.5) &    $0.389 \pm 0.003$ (1) \\
gbrt-relabel-rpo-subsampled &  $0.4718 \pm 0.0034$ (4) &     $0.310 \pm 0.007$ (5) &  $0.384 \pm 0.003$ (3.5) \\
gbrt-rpo-gaussian           &  $0.4753 \pm 0.0022$ (3) &   $0.315 \pm 0.010$ (1.5) &    $0.386 \pm 0.004$ (2) \\
gbrt-rpo-subsampled         &  $0.4621 \pm 0.0026$ (5) &  $0.312 \pm 0.006$ (3.5) &    $0.377 \pm 0.003$ (5) \\
mo-rf                       &  $0.4312 \pm 0.0023$ (8) &     $0.294 \pm 0.010$ (6) &  $0.373 \pm 0.004$ (6.5) \\
\midrule
                            &                diatoms &         drug-interaction &               emotions \\
\midrule
br-gbrt                     &  $0.623 \pm 0.007$ (7.5) &  $0.271 \pm 0.018$ (8) &      $0.800 \pm 0.022$ (7) \\
br-rf                       &  $0.623 \pm 0.011$ (7.5) &   $0.310 \pm 0.009$ (5) &    $0.816 \pm 0.009$ (1) \\
gbmort                      &    $0.656 \pm 0.012$ (4) &  $0.304 \pm 0.005$ (7) &    $0.794 \pm 0.014$ (8) \\
gbrt-relabel-rpo-gaussian   &     $0.725 \pm 0.010$ (1) &  $0.326 \pm 0.008$ (1) &  $0.802 \pm 0.017$ (5.5) \\
gbrt-relabel-rpo-subsampled &    $0.685 \pm 0.012$ (3) &  $0.322 \pm 0.009$ (3) &    $0.808 \pm 0.021$ (3) \\
gbrt-rpo-gaussian           &    $0.702 \pm 0.014$ (2) &  $0.323 \pm 0.011$ (2) &    $0.804 \pm 0.009$ (4) \\
gbrt-rpo-subsampled         &  $0.653 \pm 0.013$ (5.5) &  $0.312 \pm 0.013$ (4) &  $0.802 \pm 0.007$ (5.5) \\
mo-rf                       &   $0.653 \pm 0.010$ (5.5) &  $0.308 \pm 0.007$ (6) &      $0.810 \pm 0.010$ (2) \\
\midrule
                            &                    enron &                  genbase &                mediamill \\
\midrule
br-gbrt                     &    $0.685 \pm 0.006$ (6) &  $0.989 \pm 0.009$ (8) &   $0.7449 \pm 0.0020$ (8) \\
br-rf                       &    $0.683 \pm 0.005$ (7) &  $0.994 \pm 0.005$ (2) &  $0.7819 \pm 0.0009$ (1) \\
gbmort                      &  $0.705 \pm 0.004$ (2.5) &   $0.990 \pm 0.004$ (6) &  $0.7504 \pm 0.0013$ (7) \\
gbrt-relabel-rpo-gaussian   &  $0.705 \pm 0.003$ (2.5) &  $0.993 \pm 0.006$ (3) &   $0.7660 \pm 0.0021$ (3) \\
gbrt-relabel-rpo-subsampled &    $0.697 \pm 0.004$ (5) &    $0.990 \pm 0.010$ (6) &  $0.7588 \pm 0.0013$ (5) \\
gbrt-rpo-gaussian           &    $0.706 \pm 0.004$ (1) &  $0.992 \pm 0.007$ (4) &  $0.7608 \pm 0.0008$ (4) \\
gbrt-rpo-subsampled         &    $0.699 \pm 0.005$ (4) &   $0.990 \pm 0.005$ (6) &  $0.7519 \pm 0.0006$ (6) \\
mo-rf                       &    $0.676 \pm 0.004$ (8) &  $0.995 \pm 0.004$ (1) &  $0.7793 \pm 0.0015$ (2) \\
\bottomrule
\end{tabular}
\end{small}
\end{table}

\begin{table}
\caption{LRAP scores over 21 multi-label datasets (part 2).}
\label{table:summary-multilabel-2}
\centering
\setlength{\tabcolsep}{1.5pt}
\begin{small}\footnotesize
\begin{tabular}{llll}
\toprule
                            &                medical &      protein-interaction &                  reuters \\
\midrule
br-gbrt                     &    $0.864 \pm 0.006$ (3) &    $0.294 \pm 0.007$ (6) &   $0.939 \pm 0.0033$ (7) \\
br-rf                       &    $0.821 \pm 0.007$ (8) &    $0.293 \pm 0.006$ (7) &  $0.9406 \pm 0.0016$ (6) \\
gbmort                      &  $0.867 \pm 0.011$ (1.5) &   $0.310 \pm 0.007$ (2.5) &  $0.9483 \pm 0.0014$ (3) \\
gbrt-relabel-rpo-gaussian   &  $0.867 \pm 0.019$ (1.5) &   $0.310 \pm 0.009$ (2.5) &  $0.9508 \pm 0.0009$ (1) \\
gbrt-relabel-rpo-subsampled &    $0.856 \pm 0.012$ (5) &  $0.303 \pm 0.003$ (4.5) &  $0.9441 \pm 0.0016$ (4) \\
gbrt-rpo-gaussian           &    $0.859 \pm 0.017$ (4) &    $0.311 \pm 0.007$ (1) &  $0.9486 \pm 0.0021$ (2) \\
gbrt-rpo-subsampled         &    $0.851 \pm 0.009$ (6) &  $0.303 \pm 0.003$ (4.5) &   $0.9430 \pm 0.0031$ (5) \\
mo-rf                       &    $0.827 \pm 0.006$ (7) &    $0.288 \pm 0.009$ (8) &  $0.9337 \pm 0.0021$ (8) \\
\midrule
                            &                    scene &                  scop-go &        sequence-funcat \\
\midrule
br-gbrt                     &     $0.880 \pm 0.003$ (4) &  $0.716 \pm 0.047$ (8) &  $0.678 \pm 0.008$ (6) \\
br-rf                       &    $0.876 \pm 0.003$ (6) &  $0.798 \pm 0.004$ (2) &  $0.658 \pm 0.008$ (7) \\
gbmort                      &    $0.886 \pm 0.004$ (1) &  $0.796 \pm 0.007$ (3) &  $0.699 \pm 0.005$ (3) \\
gbrt-relabel-rpo-gaussian   &  $0.884 \pm 0.006$ (2.5) &  $0.788 \pm 0.006$ (4) &  $0.703 \pm 0.007$ (2) \\
gbrt-relabel-rpo-subsampled &    $0.879 \pm 0.008$ (5) &    $0.770 \pm 0.010$ (6) &  $0.685 \pm 0.008$ (5) \\
gbrt-rpo-gaussian           &  $0.884 \pm 0.005$ (2.5) &  $0.775 \pm 0.018$ (5) &  $0.706 \pm 0.007$ (1) \\
gbrt-rpo-subsampled         &    $0.875 \pm 0.006$ (7) &  $0.723 \pm 0.016$ (7) &  $0.691 \pm 0.006$ (4) \\
mo-rf                       &    $0.865 \pm 0.003$ (8) &    $0.800 \pm 0.006$ (1) &  $0.643 \pm 0.003$ (8) \\
\midrule
                            &                   wipo &                    yeast &                 yeast-go \\
\midrule
br-gbrt                     &  $0.706 \pm 0.009$ (6) &    $0.756 \pm 0.009$ (8) &  $0.499 \pm 0.009$ (4.5) \\
br-rf                       &  $0.633 \pm 0.013$ (7) &   $0.760 \pm 0.008$ (3.5) &     $0.463 \pm 0.010$ (7) \\
gbmort                      &  $0.762 \pm 0.011$ (3) &   $0.760 \pm 0.007$ (3.5) &    $0.504 \pm 0.015$ (3) \\
gbrt-relabel-rpo-gaussian   &  $0.776 \pm 0.012$ (1) &    $0.762 \pm 0.007$ (2) &    $0.524 \pm 0.012$ (1) \\
gbrt-relabel-rpo-subsampled &  $0.751 \pm 0.017$ (4) &  $0.758 \pm 0.005$ (5.5) &    $0.496 \pm 0.013$ (6) \\
gbrt-rpo-gaussian           &   $0.763 \pm 0.010$ (2) &    $0.763 \pm 0.005$ (1) &    $0.522 \pm 0.012$ (2) \\
gbrt-rpo-subsampled         &  $0.724 \pm 0.011$ (5) &  $0.758 \pm 0.008$ (5.5) &  $0.499 \pm 0.011$ (4.5) \\
mo-rf                       &  $0.624 \pm 0.018$ (8) &    $0.757 \pm 0.008$ (7) &    $0.415 \pm 0.014$ (8) \\
\bottomrule
\end{tabular}
\end{small}
\end{table}

\begin{figure}
\caption{Critical difference diagram between algorithms on the multi-label datasets.}
\label{fig:cd-multilabel}
\centering
\includegraphics[width=\textwidth]{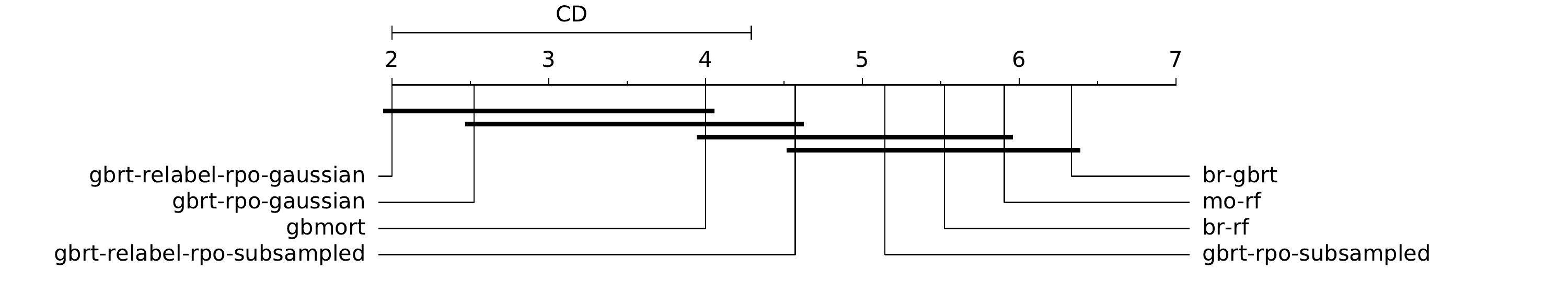}
\end{figure}

\begin{table}
\caption{P-values given by the Wilcoxon signed rank test on the multi-label datasets.
We bold p-values below $\alpha=0.05$. Note that the sign $>$ (resp. $<$) indicates that
the row estimator has superior (resp. inferior) LRAP score than the column estimator.}
\label{tab:wilcoxon-multilabel}
\centering
\setlength{\tabcolsep}{1pt}
\begin{footnotesize}
\begin{tabular}{lllllllll}
{} & \rotatebox{90}{br-gbrt} & \rotatebox{90}{br-rf} & \rotatebox{90}{gbmort} & \rotatebox{90}{gbrt-relabel-rpo-gaussian} & \rotatebox{90}{gbrt-relabel-rpo-subsampled} & \rotatebox{90}{gbrt-rpo-gaussian} & \rotatebox{90}{gbrt-rpo-subsampled} & \rotatebox{90}{mo-rf} \\
br-gbrt                     &                         &                  0.79 &      \textbf{0.001}(<) &                         \textbf{6e-05}(<) &                           \textbf{0.001}(<) &                \textbf{0.0001}(<) &                   \textbf{0.003}(<) &                  0.54 \\
br-rf                       &                    0.79 &                       &       \textbf{0.01}(<) &                         \textbf{0.002}(<) &                            \textbf{0.02}(<) &                 \textbf{0.005}(<) &                                0.07 &                  0.29 \\
gbmort                      &       \textbf{0.001}(>) &      \textbf{0.01}(>) &                        &                         \textbf{0.001}(<) &                                        0.36 &                  \textbf{0.04}(<) &                    \textbf{0.03}(>) &      \textbf{0.02}(>) \\
gbrt-relabel-rpo-gaussian   &       \textbf{6e-05}(>) &     \textbf{0.002}(>) &      \textbf{0.001}(>) &                                           &                          \textbf{0.0002}(>) &                 \textbf{0.005}(>) &                   \textbf{7e-05}(>) &    \textbf{0.0007}(>) \\
gbrt-relabel-rpo-subsampled &       \textbf{0.001}(>) &      \textbf{0.02}(>) &                   0.36 &                        \textbf{0.0002}(<) &                                             &                \textbf{0.0002}(<) &                    \textbf{0.02}(>) &     \textbf{0.008}(>) \\
gbrt-rpo-gaussian           &      \textbf{0.0001}(>) &     \textbf{0.005}(>) &       \textbf{0.04}(>) &                         \textbf{0.005}(<) &                          \textbf{0.0002}(>) &                                   &                   \textbf{7e-05}(>) &     \textbf{0.002}(>) \\
gbrt-rpo-subsampled         &       \textbf{0.003}(>) &                  0.07 &       \textbf{0.03}(<) &                         \textbf{7e-05}(<) &                            \textbf{0.02}(<) &                 \textbf{7e-05}(<) &                                     &      \textbf{0.04}(>) \\
mo-rf                       &                    0.54 &                  0.29 &       \textbf{0.02}(<) &                        \textbf{0.0007}(<) &                           \textbf{0.008}(<) &                 \textbf{0.002}(<) &                    \textbf{0.04}(<) &                       \\
\end{tabular}
\end{footnotesize}
\end{table}

\subsubsection{Multi-output regression datasets}
\label{subsec:gb-mo-reg-exp}

Table~\ref{table:summary-regression} shows the performance of the random forest
models and the boosting algorithms over the 8 multi-output regression datasets.
The critical distance diagram of Figure~\ref{fig:cd-regression} gives the rank
of each estimator. The associated Friedman test has a p-value of $0.3$. Given
the outcome of the test, we can therefore not reject the null hypothesis that
the estimator performances can not be distinguished.
Table~\ref{tab:wilcoxon-regression} gives the outcomes of the pairwise Wilcoxon
signed ranked tests. They confirm the fact that all methods are very close to
each other as only two comparisons show a p-value lower than 0.05 (st-rf is
better than st-gbrt and gbrt-rpo-subsampled). This lack of statistical power is
probably partly due here to the smaller number of datasets included in the
comparison (8 problems versus 21 problems in classification).

If we ignore statistical tests, as with multi-label tasks,
gbrt-relabel-rpo-gaussian has the best average rank and st-gbrt the worst
average rank. This time however, gbrt-relabel-rpo-gaussian is followed by the
random forest based algorithms (st-rf and mo-rf) and gbmort. Given the lack of
statistical significance, this ranking should however be intrepreted cautiously.


\begin{table}
\caption{Performance over 8 multi-output regression dataset}
\label{table:summary-regression}
\centering
\setlength{\tabcolsep}{1.5pt}
\begin{small}\footnotesize
\begin{tabular}{llll}
\toprule
                            &                  atp1d &                  atp7d &                    edm \\
\midrule
gbmort                      &   $0.80 \pm 0.03$(5.5) &  $0.63 \pm 0.03$(2) &  $0.39 \pm 0.16$(3) \\
gbrt-relabel-rpo-gaussian   &  $0.81 \pm 0.03$(3.5) &  $0.66 \pm 0.04$(1) &  $0.25 \pm 0.28$(8) \\
gbrt-relabel-rpo-subsampled &    $0.79 \pm 0.04$(7) &  $0.54 \pm 0.13$(7) &   $0.35 \pm 0.10$(5) \\
gbrt-rpo-gaussian           &   $0.80 \pm 0.04$(5.5) &   $0.54 \pm 0.20$(7) &  $0.36 \pm 0.04$(4) \\
gbrt-rpo-subsampled         &  $0.81 \pm 0.04$(3.5) &  $0.54 \pm 0.16$(7) &  $0.31 \pm 0.27$(7) \\
mo-rf                       &    $0.82 \pm 0.03$(2) &   $0.6 \pm 0.06$(4) &  $0.51 \pm 0.02$(1) \\
st-gbrt                     &    $0.78 \pm 0.05$(8) &  $0.59 \pm 0.08$(5) &  $0.34 \pm 0.14$(6) \\
st-rf                       &    $0.83 \pm 0.02$(1) &  $0.61 \pm 0.07$(3) &  $0.47 \pm 0.04$(2) \\
\midrule
                            &                  oes10 &                  oes97 &                    scm1d \\
\midrule
gbmort                      &  $0.77 \pm 0.05$(3.5) &    $0.67 \pm 0.07$(8) &  $0.908 \pm 0.003$(4.5) \\
gbrt-relabel-rpo-gaussian   &  $0.75 \pm 0.04$(7.5) &  $0.71 \pm 0.07$(2.5) &   $0.910 \pm 0.004$(2.5) \\
gbrt-relabel-rpo-subsampled &  $0.75 \pm 0.06$(7.5) &    $0.68 \pm 0.07$(6) &    $0.912 \pm 0.003$(1) \\
gbrt-rpo-gaussian           &  $0.77 \pm 0.03$(3.5) &    $0.68 \pm 0.08$(6) &   $0.910 \pm 0.004$(2.5) \\
gbrt-rpo-subsampled         &  $0.76 \pm 0.02$(5.5) &  $0.71 \pm 0.08$(2.5) &  $0.908 \pm 0.004$(4.5) \\
mo-rf                       &  $0.76 \pm 0.04$(5.5) &    $0.69 \pm 0.05$(4) &    $0.898 \pm 0.004$(8) \\
st-gbrt                     &  $0.79 \pm 0.03$(1.5) &    $0.68 \pm 0.07$(6) &    $0.905 \pm 0.003$(7) \\
st-rf                       &  $0.79 \pm 0.03$(1.5) &    $0.72 \pm 0.05$(1) &    $0.907 \pm 0.004$(6) \\
\midrule
                            &                   scm20d &          water-quality \\
\midrule
gbmort                      &    $0.856 \pm 0.006$(2) &  $0.14 \pm 0.01$(4.5) \\
gbrt-relabel-rpo-gaussian   &    $0.862 \pm 0.006$(1) &    $0.15 \pm 0.01$(2) \\
gbrt-relabel-rpo-subsampled &    $0.854 \pm 0.007$(3) &  $0.14 \pm 0.02$(4.5) \\
gbrt-rpo-gaussian           &    $0.852 \pm 0.006$(4) &  $0.14 \pm 0.01$(4.5) \\
gbrt-rpo-subsampled         &     $0.850 \pm 0.007$(5) &  $0.13 \pm 0.02$(7.5) \\
mo-rf                       &  $0.849 \pm 0.007$(6.5) &    $0.16 \pm 0.01$(1) \\
st-gbrt                     &    $0.836 \pm 0.006$(8) &  $0.13 \pm 0.02$(7.5) \\
st-rf                       &  $0.849 \pm 0.006$(6.5) &  $0.14 \pm 0.01$(4.5) \\
\bottomrule
\end{tabular}
\end{small}
\end{table}

\begin{figure}
\caption{Critical difference diagram between algorithm on the multi-output regression
datasets.}
\label{fig:cd-regression}
\centering
\includegraphics[width=\textwidth]{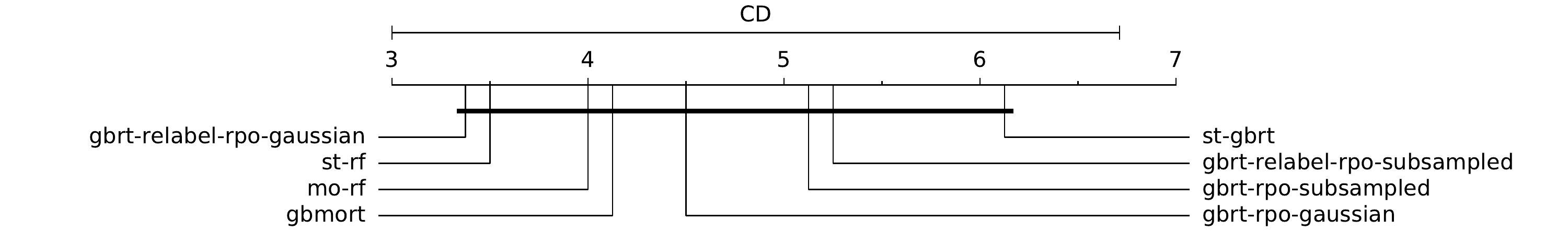}
\end{figure}

\begin{table}
\caption{P-value given by the Wilcoxon signed rank test on multi-output
regression datasets. We bold p-values below $\alpha=0.05$. Note that the sign $>$ (resp. $<$) indicates that
the row estimator has superior (resp. inferior) macro-$r^2$ score than the column estimator.}
\label{tab:wilcoxon-regression}
\centering
\setlength{\tabcolsep}{2pt}
\begin{footnotesize}
\begin{tabular}{lllllllll}
{} & \rotatebox{90}{st-gbrt} & \rotatebox{90}{st-rf} & \rotatebox{90}{gbmort} & \rotatebox{90}{gbrt-relabel-rpo-gaussian} & \rotatebox{90}{gbrt-relabel-rpo-subsampled} & \rotatebox{90}{gbrt-rpo-gaussian} & \rotatebox{90}{gbrt-rpo-subsampled} & \rotatebox{90}{mo-rf} \\
st-gbrt                     &                         &      \textbf{0.02}(<) &                   0.26 &                                      0.58 &                                        0.78 &                              0.67 &                                0.89 &                  0.16 \\
st-rf                       &        \textbf{0.02}(>) &                       &                   0.33 &                                      0.67 &                                        0.09 &                              0.09 &                    \textbf{0.04}(>) &                  0.58 \\
gbmort                      &                    0.26 &                  0.33 &                        &                                       0.4 &                                        0.16 &                              0.67 &                                 0.4 &                  0.48 \\
gbrt-relabel-rpo-gaussian   &                    0.58 &                  0.67 &                    0.4 &                                           &                                        0.16 &                               0.4 &                                0.48 &                  0.58 \\
gbrt-relabel-rpo-subsampled &                    0.78 &                  0.09 &                   0.16 &                                      0.16 &                                             &                              0.16 &                                   1 &                  0.07 \\
gbrt-rpo-gaussian           &                    0.67 &                  0.09 &                   0.67 &                                       0.4 &                                        0.16 &                                   &                                0.48 &                  0.26 \\
gbrt-rpo-subsampled         &                    0.89 &      \textbf{0.04}(<) &                    0.4 &                                      0.48 &                                           1 &                              0.48 &                                     &                   0.4 \\
mo-rf                       &                    0.16 &                  0.58 &                   0.48 &                                      0.58 &                                        0.07 &                              0.26 &                                 0.4 &                       \\
\end{tabular}
\end{footnotesize}
\end{table}

\clearpage

\section{Conclusions}
\label{sec:conclusions}

In this chapter, we have first formally extended the gradient boosting
algorithm to multi-output tasks leading to the ``multi-output gradient boosting
algorithm'' (gbmort). It sequentially minimizes a multi-output loss using
multi-output weak models considering that all outputs are correlated. By
contrast, binary relevance / single target of gradient boosting models fit one
gradient boosting model per output considering that all outputs are
independent. However in practice, we do not expect to have either all outputs
independent or all outputs dependent. So, we propose a more flexible approach
which adapts automatically to the output correlation structure called
``gradient boosting with random projection of the output space'' (gbrt-rpo). At
each boosting step, it fits a single weak model on a random projection of the
output space and optimize a multiplicative weight separately for each output.
We have also proposed a variant of this algorithm (gbrt-relabel-rpo) only valid
with decision trees as weak models: it fits a decision tree on the randomly
projected space and then it relabels tree leaves with predictions in the
original (residual) output space. The combination of the gradient boosting
algorithm and the random projection of the output space yields faster
convergence by exploiting existing correlations between the outputs and by
reducing the dimensionality of the output space. It also provides new
bias-variance-convergence trade-off potentially allowing to improve
performance.

We have evaluated in depth these new algorithms on several artificial and real
datasets. Experiments on artificial problems highlighted that gb-rpo with
output subsampling offers an interesting tradeoff between single target and
multi-output gradient boosting. Because of its capacity to automatically adapt
to the output space structure, it outperforms both methods in terms of
convergence speed and accuracy when outputs are dependent and it is superior to
gbmort (but not st-rt) when outputs are fully independent. On the 29 real
datasets, gbrt-relabel-rpo with the denser Gaussian projections turns out to be
the best overall approach on both multi-label classification and multi-output
regression problems, although all methods are statistically undistinguisable on
the regression tasks. Our experiments also show that gradient boosting based
methods are competitive with random forests based methods. Given that
multi-output random forests were shown to be competitive with several other
multi-label approaches in \cite{madjarov2012extensive}, we are confident that
our solutions will be globally competitive as well, although a broader
empirical comparison should be conducted as future work. One drawback of
gradient boosting with respect to random forests however is that its
performance is more sensitive to its hyper-parameters that thus require careful
tuning. Although not discussed in this chapter, besides predictive performance,
gbrt-rpo (without relabeling) has also the advantage of reducing model size
with respect to mo-rf (multi-output random forests) and gbmort, in particular
in the presence of many outputs. Indeed, in mo-rf and gbmort, one needs to
store a vector of the size of the number of outputs per leaf node. In gbrt-rpo,
one needs to store only one real number (a prediction for the projection) per
leaf node and a vector of the size of the number of outputs per tree
($\rho_m$). At fixed number of trees and fixed tree complexity, this could lead
to a strong reduction of the model memory requirement when the number of labels
is large. Note that the approach proposed in Chapter
\ref{ch:rf-output-projections} does not solve this issue because of leaf node
relabeling. This could be addressed by desactivating leaf relabeling and
inverting the projection at prediction time to obtain a prediction in the
original output space, as done for example in
\cite{hsu2009multi,kapoor2012multilabel,tsoumakas2014multi}. However, this
would be at the expense of computing times at prediction time and of accuracy
because of the potential introduction of errors at the decoding stage. Finally, while we
restricted our experiments here to tree-based weak learners,
Algorithms~\ref{algo:gb-mo} and \ref{algo:gbrt-rp} are generic and could
exploit respectively any multiple output and any single output regression
method. As future work, we believe that it would interesting to evaluate them with other weak
learners.


\part{Exploiting sparsity for growing and compressing decision trees}
\label{part:sparsity}


\chapter{$\ell_1$-based compression of random forest models}
\label{ch:rf-compression}

\begin{remark}{Outline}
Random forests are effective supervised learning methods applicable to
large-scale datasets. However, the space complexity of tree ensembles, in terms
of their total number of nodes, is often prohibitive, specially in the context
of problems with large sample sizes and very high-dimensional input spaces. We
propose to study their compressibility by applying a $\ell_1$-based
regularization to the set of indicator functions defined by all their nodes. We
show experimentally that preserving or even improving the model accuracy while
significantly reducing its space complexity is indeed possible.

\textit{This chapter extends on previous work published in}
\begin{quote}
Arnaud Joly, Fran{\c{c}}ois Schnitzler, Pierre Geurts, and Louis Wehenkel. L1-based
compression of random forest models. In European Symposium on Artificial Neural
Networks, Computational Intelligence and Machine Learning, 2012.
\end{quote}
\end{remark}

High-dimensional supervised learning problems, \emph{e.g.} in image exploitation
and bioinformatics, are more frequent than ever. Tree-based ensemble methods,
such as random forests~\cite{breiman2001random} and extremely randomized
trees~\cite{geurts2006extremely}, are effective variance reduction techniques
offering in this context a good trade-off between accuracy, computational
complexity, and interpretability. The number of nodes of a tree ensemble grows
as $n M$ ($n$ being the size of the learning sample and $M$ the number of trees
in the ensemble). Empirical observations show that the variance of individual
trees increases with the dimension $p$ of the original feature space used to
represent the inputs of the learning problem. Hence, the number $M(p)$ of
ensemble terms yielding near-optimal accuracy, which is proportional to this
variance, also increases with $p$.  The net result is that the space complexity
of these tree-based ensemble methods will grow as $n  M(p)$, which  may
jeopardize their practicality in large scale problems, or when memory is
limited.

While pruning of single tree models is a standard approach, less work has been
devoted to pruning ensembles of trees. On the one hand, \citet{geurts2000some}
proposes to transpose the classical cost-complexity pruning of individual trees
to ensembles. On the other hand,
\citet{meinshausen2010node,friedman2008predictive,meinshausen2009forest}
propose to improve model interpretability by selecting optimal rule subsets from
tree-ensembles. Another approach to reduce complexity and/or  improve accuracy
of ensembles of trees is to merely select an optimal subset of trees from a very
large ensemble generated in a random fashion at the first hand (see,
e.g.~\cite{bernard2009selection,martinez2009analysis}).

To further investigate the feasibility of reducing the space complexity of
tree-based ensemble models, we consider in this chapter  the following
method~\cite{joly2012l1}: (i) build an ensemble of trees; (ii) apply to this
ensemble a `compression step' by reformulating the tree-ensemble based model as
a linear model in terms of node indicator functions and by using an $\ell_1$-norm
regularization approach - \`a la Lasso~\cite{tibshirani1996regression} -  to
select a minimal subset of these indicator functions while maintaining
predictive accuracy.  We propose an algorithmic framework and an empirical
investigation of this idea, based on three complementary datasets, and we show
that indeed it is possible to so compress significantly tree-based ensemble
models, both in regression and in classification problems. We also observe that
the compression rate and the accuracy of the compressed models further increase
with the ensemble size $M$, even beyond the number  $M(p)$ of terms  required to
ensure convergence of the variance reduction effect.

The rest of this chapter is organized as follows:
Section~\ref{sec:ch07-algo_desc} introduces the $\ell_1q$-norm based compression
algorithm of random forests; Section~\ref{sec:ch07-empirical_experiments}
provides our empirical study and Section~\ref{sec:ch07-conclusion} concludes and
describes further perspectives.

\section{Compressing tree ensembles by $\ell_1$-norm regularization}
\label{sec:ch07-algo_desc}

From an ensemble of $M$ decision trees, one can extract a set of node indicator
functions as follows: each indicator function $1_{m,l}(x)$ is a binary variable
equal to 1 if the input vector $x$ reaches the $l${th} node in the $m${th} tree,
$0$ otherwise. Using these indicator functions, the output predicted by the
model may be rewritten as~\cite{geurts2006extremely,vens2011random}:
\begin{equation}
\hat{f}(x) = \frac{1}{M} \sum_{m=1}^M \sum_{l=1}^{N_m} w_{m,l}\,1_{m,l}(x),
\label{eq:prediction}
\end{equation}
\noindent where $N_m$ is the number of nodes in the $m$th tree and $w_{m,l}$ is
equal to the leaf-label if node $(m,l)$ is a leaf  and to zero if it is an
internal node.

We can therefore interpret a tree building algorithm as the (random) inference
of a new representation which lifts the original input space $\mathcal{X}$
towards the space $\mathcal{Z}$ of dimension $q = \sum_{m=1}^{M}N_{m}$ by \[ z(x) = \left(
1_{1,1}(x), \ldots, 1_{1,N_{1}}(x), \ldots{}, 1_{M,1}(x), \ldots, 1_{M,N_{M}}(x)
\right).\]

As an illustration, Figure~\ref{fig:sample-indicator-path} shows a set of
three decision trees with respective sizes 7, 7 and 5 nodes. The propagation of
a sample $x_s$ through the forest makes it pass trough nodes  1.1, 1.2, 1.5 in
the left tree, nodes 2.1, 2.2, 2.5 in the middle tree and nodes 3.1, 3.3 and 3.4
in the left tree (highlighted in orange).

\begin{figure}
\centering
\includegraphics[width=\textwidth]{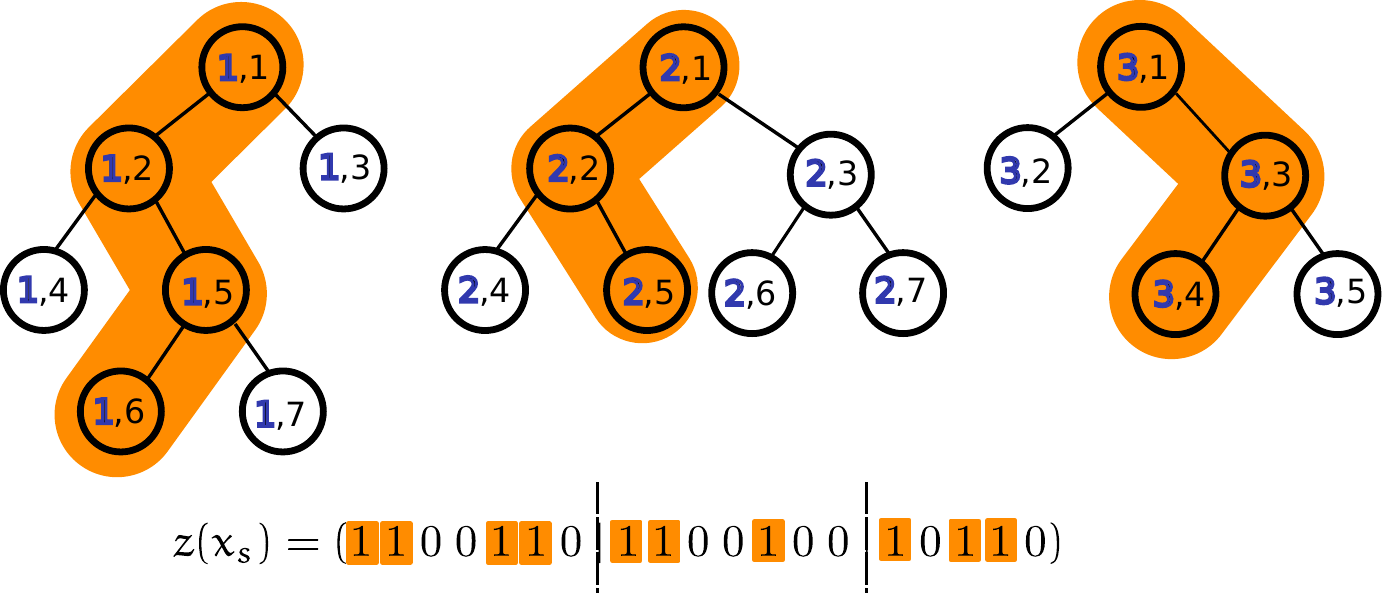}
\caption{From a random forest model, one can lift the original input space
representation of a sample $x_s$ toward the node indicator space $\mathcal{Z}$. }
\label{fig:sample-indicator-path}
\end{figure}

We propose to compress the tree ensemble by applying a variable selection method
to its induced feature space $\mathcal{Z}$. Namely, by $\ell_1$-regularization we
can search for a linear model by solving the following optimization problem:
\begin{align}
&\left(\beta_j^{*}(t)\right)_{j=0}^q = \arg\min_{\beta} \sum_{i=1}^n \left(y^i - \beta_0  - \sum_{j=1}^{q}  \beta_j \, z_{j}(x^i) \right)^2 \nonumber  \\
\mbox{s.t.}& \sum_{j=1}^q  | \beta_j | \leq t.
\label{eq:regularisation_L1_norm}
\end{align}

This optimization problem, also called Lasso~\cite{tibshirani1996regression}
(see Section~\ref{sub:linear-models-classes}), has received much attention in
the past decade and is particularly successful in high dimension. The $\ell_1$-norm
constraint leads to a sparse solution: only a few weights $\beta_j$ will be non
zero, and their number tends to zero with $t\rightarrow 0$; the optimal value
$t^{*}$ of $t$ is problem specific and is typically adjusted by
cross-validation.

In order to solve Equation~\ref{eq:regularisation_L1_norm} for growing values of
$t$, we use the `incremental forward stagewise regression'
algorithm~\cite{hastie2007forward} solving the monotone Lasso which imposes that
each $\beta^{*}_{j}(t)$ increases monotonically with $t$. This version deals
indeed better with many correlated variables, which is relevant in our setting,
since each node indicator function is highly correlated with those of its
neighbor nodes in the tree from which it originates. The final weights
$\beta_j^{*}(t^{*})$ may be exploited to prune the randomized tree ensemble: a
test node can be deleted if all its descendants correspond to
$\beta_j^{*}(t^{*})=0$.

Starting from a forest model $\hat{f}$, a value of parameter $t$, and a sample
$\mathcal{S}$, the tree ensemble compression procedure is described
in Algorithm~\ref{alg:l1-pruning}.

\begin{algorithm}
\caption{$\ell_1$-based compression of tree ensemble model $\hat{f}$ using a sample
$\mathcal{S}= \{x^i, y^i\in\mathcal{X}\times\mathcal{Y}\}_{i=1}^n$}
\label{alg:l1-pruning}
\begin{algorithmic}[1]
\Function{ForestCompression}{$\mathcal{S}$, $\hat{f}$, t}
\State Lift the sample  $\mathcal{S}$ to the random forest space $\mathcal{Z}$
       \[
       \mathcal{S}_z = \{(z(x^i), y^i) \in \mathcal{Z} \times \mathcal{Y}\}_{(x,y)\in\mathcal{S}}
       \] with the induced feature space by the forest model $\hat{f}$
       \[ z(x) = \left(
       1_{1,1}(x), \ldots, 1_{1,N_{1}}(x), \ldots{}, 1_{M,1}(x), \ldots, 1_{M,N_{M}}(x)
       \right).\]
\State Select weight vector $\beta^{*}(t)$ over $\mathcal{Z}$ through $\ell_1$ minimization
\begin{align}
&\left(\beta_j^{*}(t)\right)_{j=0}^q = \arg\min_{\beta} \sum_{i=1}^n \left(y^i - \beta_0  - \sum_{j=1}^{q}  \beta_j \, z_{j}(x^i) \right)^2 \nonumber  \\
\mbox{s.t.}& \sum_{j=1}^q  | \beta_j | \leq t.
\nonumber 
\end{align}
\State Compress the random forest model $\hat{f}$ using vector $\beta^{*}(t)$
\State \Return The compressed model.
\EndFunction
\end{algorithmic}
\end{algorithm}

Note that in practice both the forest construction and the generation of its
sequence of compressed versions for growing values of $t$ may use the same
sample (the learning set). A separate validation set is however required to
select the optimal value of parameter $t$. This is similar to what is done with
the pruning of a single decision tree (see Section~\ref{sec:dt-pruning}).

\section{Empirical analysis}
\label{sec:ch07-empirical_experiments}

In the following experiments, datasets are pre-whitened: input/output data are
translated to zero mean and rescaled to unit variance. All results shown are
averaged over $50$ runs in order to avoid randomization artifacts.

Each one of these runs consisted of first generating a training set, and a
testing set, and then working as follows. When using the monotone Lasso, we
apply the incremental forward stagewise algorithm with a  step size
$\epsilon=0.01$. The optimal number of steps $n_{step}^*$ or the optimal point
$t^{*} = n_{step}^*\epsilon$ was chosen by ten-fold cross-validation
$t_{cv}^{*}$ over the training set (to this end, we used a quadratic loss in
regression and a $0-1$ loss in classification). More precisely, the training set
is first divided ten times through cross-validation into a learning set, used
both to fit a forest model and to run the incremental forward stagewise
algorithm on it, and into a validation set, to estimate the losses of the
resulting sequence of compressed forests. For each fold, we assess the model
fitted over the training set using the validation set with increasing values of
$t$ by steps of $\epsilon$. For each value of $t$, the ten model losses are
averaged. The optimal value of $t^{*}$ and the
corresponding model compression level are those leading to the best average loss
over the ten folds. The model is then refitted using the entire training set
with $t=t^{*}$.

Below, we will apply our approach while using the extremely randomized trees
method~\cite{geurts2006extremely} to grow the forests (abbreviated by ``ET'')
and we denote their $\ell_1$-regularization-based compressed version ``rET''.

We present an overall performance analysis in
Section~\ref{subsec:l1-pruning-overall}. Later on, we enhance our comprehension
of the pruning algorithm by studying the effect of the regularization parameter
$t$ in Section~\ref{subsec:regularization} and of the complexity of the initial
forest model by varying the pre-pruning rule values $n_{\min}$, the minimum
number of samples to split, and $M$, the number of trees, in
Section~\ref{subsec:tree-complexity}. While in these last two sections, we focus
our analysis on models obtained on the Friedman1 problem, we notice that similar
conclusions can also be drawn for Two-norm and SEFTi datasets.

\subsection{Overall performances}
\label{subsec:l1-pruning-overall}

We have evaluated our approach on two regression datasets Friedman1 and SEFTi
and one classification dataset Two-norm (see Appendix~\ref{ch:datasets} for
their description).

We have used a set of representative meta-parameter values ($K$, $n_{\min}$ and
$M$) of the Extra-Trees algorithm (see Table~\ref{tab:pruning-perf}). Accuracies
are measured on the test sample and complexity is measured by the number of test
nodes of the ET and rET models (the compression factor being the ratio of the
former to the latter). We observe a compression factor between 9 and 34, a
slightly lower error for the rET model than for the ET model on the two
regression problems (Friedman1 and SEFTi) and the opposite on Two-norm. To
compare, we show the results obtained with the linear Lasso based on the
original features (its complexity is measured by the number of kept features):
it is much less accurate than  both ET and rET on the (non-linear) regression
problems (Friedman1 and SEFTi), but superior on the (linear) classification
problem (Two-norm).

\begin{table}[hbt]
\caption{Overall assessment (parameters of the Extra-Tree method: $M=100$;
$K=p$;  $n_{\min}=1$ on Friedman1 and Two-norm, $n_{\min}=10$ on SEFTi). }
\label{tab:pruning-perf}

\ra{1.1}
\renewcommand{\tabcolsep}{1.6mm}

\centering
\begin{small}
\begin{tabular}{@{} r c ccc c ccc c@{}}
\toprule
Datasets & &     \multicolumn{3}{c}{Error}             & &     \multicolumn{4}{c}{Complexity}          \\
\cmidrule{3-5} \cmidrule{7-10}
  & & ET &   $\text{rET}$ & Lasso && ET  &  $\text{rET}$ & ET/rET & Lasso\\

\midrule
Friedman1 & & $0.19587$ &  $0.18593$& $0.282441$ && $29900$  & $~885$ & $34$ & $~4$\\
Two-norm   & & $0.04177$ &  $0.06707$& $0.033500$ && $~4878$  & $~540$ & $~9$ & $20$\\
SEFTi 	  & & $0.86159$ &  $0.84131$& $0.988031$ && $39436$  & $2055$ & $19$ & $14$\\
\bottomrule
\end{tabular}
\end{small}
\end{table}

Side experiments (results not provided) show that changing the value of
parameter $K$ does not influence significantly the final accuracy and complexity
on the Two-norm and Friedman1 datasets, while for SEFTi,  accuracy increases
strongly with $K$ (presumably due to a large number of noisy and/or irrelevant
features) with however little impact on the final complexity.

\subsection{Effect of the regularization parameter $t$.}
\label{subsec:regularization}

The complexity of the regularized ET model is shrunk with the $\ell_1$-norm
constraint of Equation~\ref{eq:regularisation_L1_norm}  in a way depending on
the value of $t$. As shown in Figure~\ref{fig:tree-pruning-path}(a), an increase
of $t$ decreases the error of rET until $t=3$, leading to a complexity
(Figure~\ref{fig:tree-pruning-path}(b)) of about 900 test nodes. Notice that in
general the rET model eventually overfits when $t$ becomes large, although this
is not visible on the range of values displayed in
Figure~\ref{fig:tree-pruning-path}(a) as the algorithm stops before.

\begin{figure}[h]
\centering
\subfloat[Estimated risk]{\includegraphics[width=0.48\textwidth]{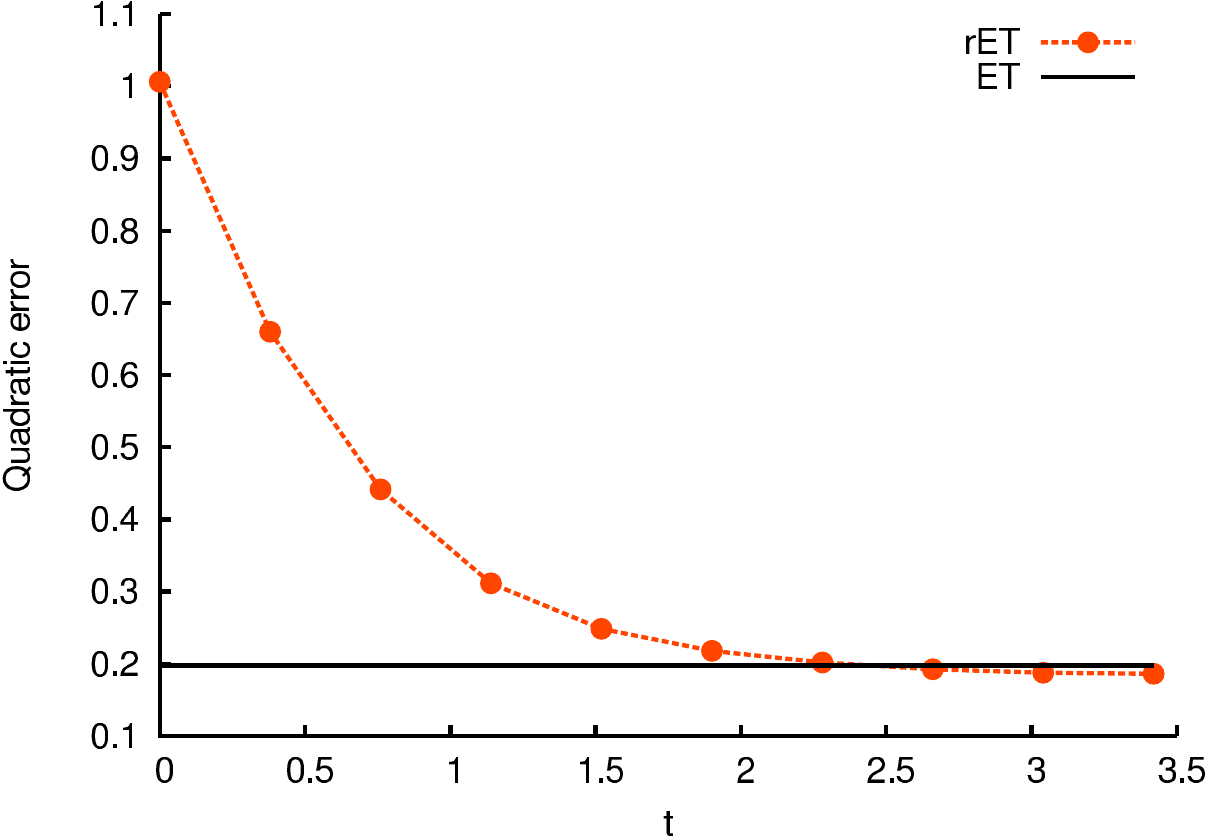}\label{fig:friedman_path_t_vs_RSS_M100_nmin1_ts}}\hspace*{3mm}
\subfloat[Complexity]{\includegraphics[width=0.48\textwidth]{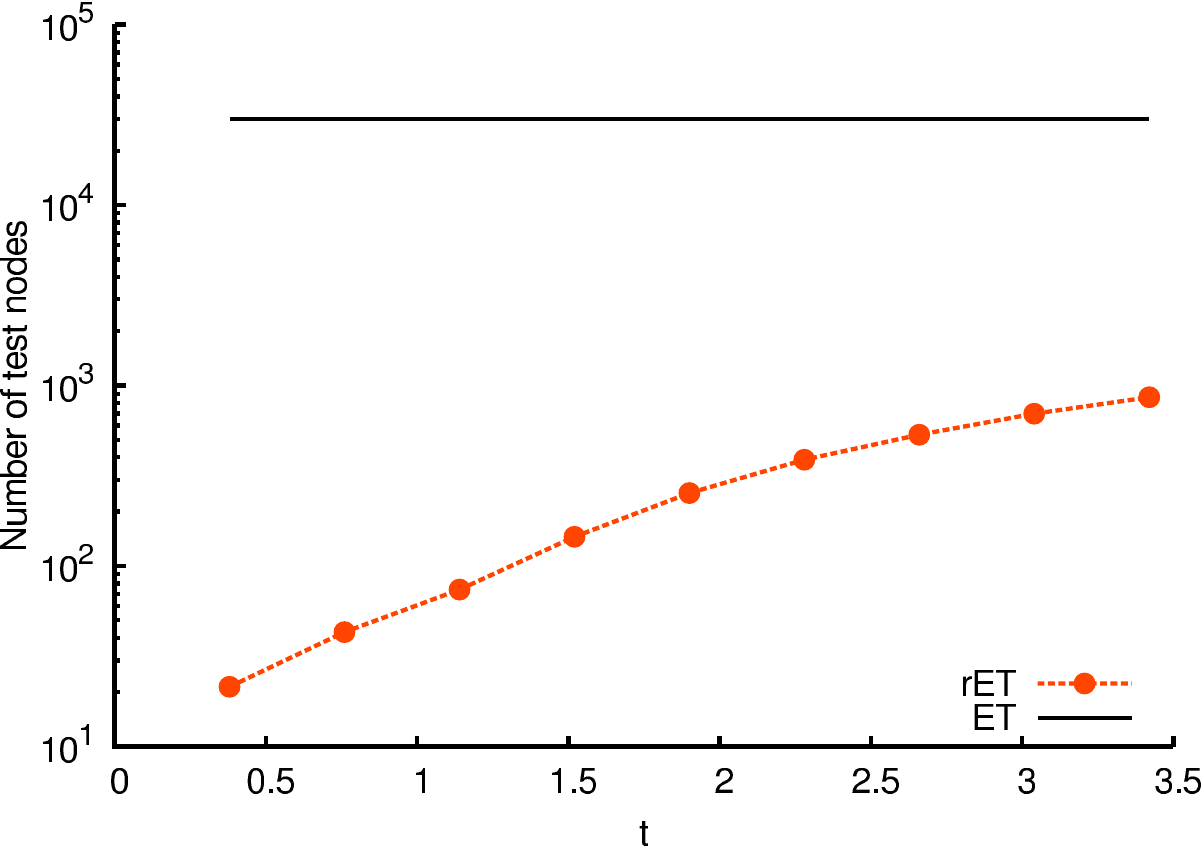}\label{fig:friedman_path_t_vs_complexity_M100_nmin1_nfolds_10.eps}}
\caption{An increase of $t$ decreases the error of rET until t = 3 with drastic
pruning (Friedman1, $M=100$, $K=p=10$ and $n_{\min}=1$).}
\label{fig:tree-pruning-path}
\end{figure}

\subsection{Influence of the Extra-Tree meta parameters $n_{\min}$ and $M$.}
\label{subsec:tree-complexity}

The complexity of an ET model grows (linearly) with the size of the ensemble $M$
and is inversely proportional to its pre-pruning parameter $n_{\min}$.

Figure~\ref{fig:nmin} shows the effect of $n_{\min}$ on both ET and rET.
Interestingly, the accuracy and the complexity of the rET model are both more
robust with respect to the choice of the precise value of $n_{\min}$ than those
of the ET model, specially for the smaller values of $n_{\min}$ ($n_{\min}\leq
10$, in Figures~\ref{fig:nmin}).

\begin{figure}[h]
\centering
\subfloat[Estimated risk]{\includegraphics[width=0.48\textwidth]{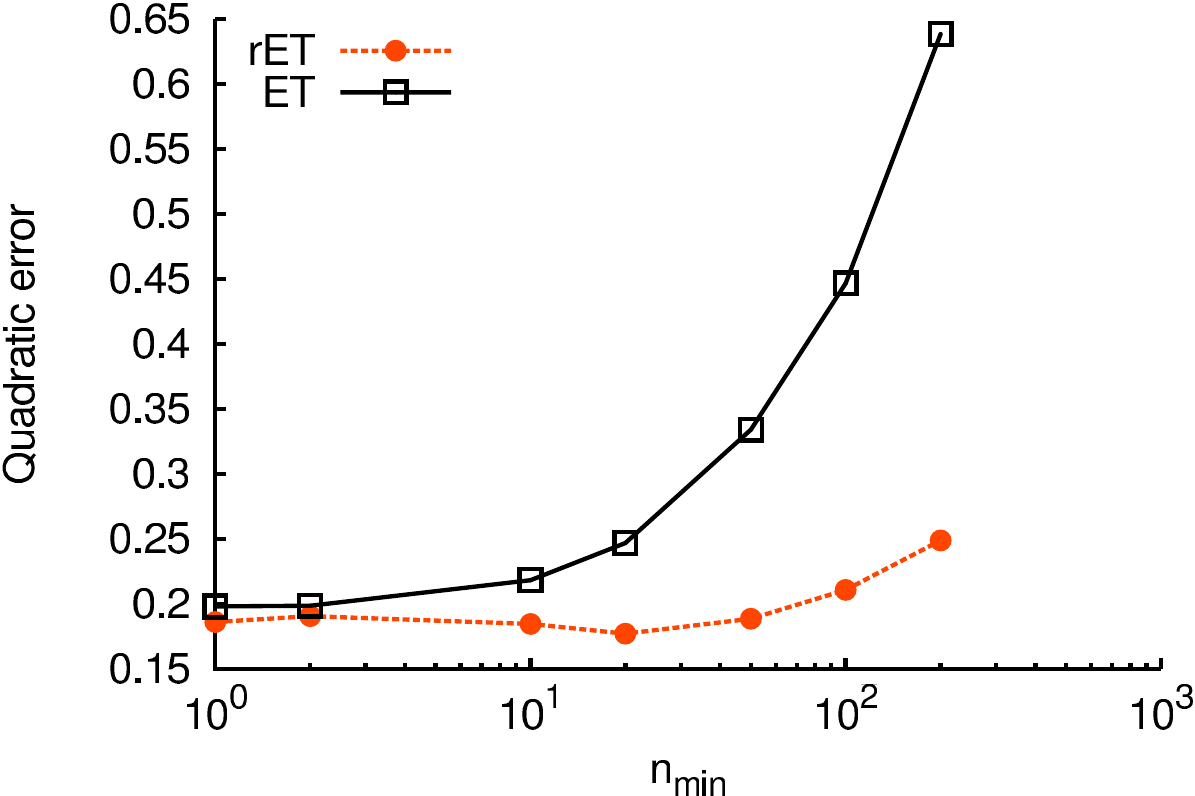}\label{fig:friedman_nmin_vs_RSS_M100_nfolds10_ts}}\hspace*{3mm}
\subfloat[Complexity]{\includegraphics[width=0.48\textwidth]{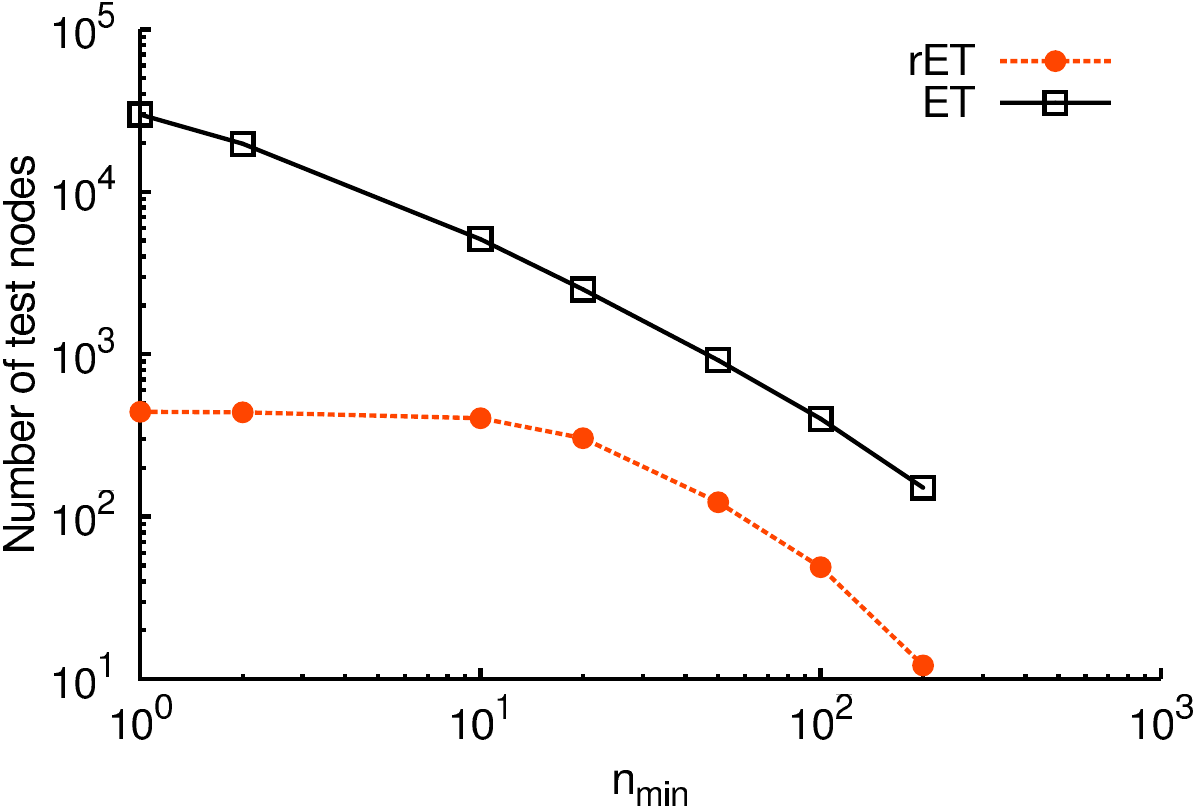}\label{fig:friedman_nmin_vs_complexity_M100_nfolds10}}
\caption{The accuracy and complexity of an rET model does not depend on
$n_{\min}$, for $n_{\min}$ small enough (Friedman1, $M=100$, $K=p=10$ and
$t=t_{cv}^*$).}
\label{fig:nmin}
\end{figure}

Figure \ref{fig:M} shows the effect of $M$ on both ET and rET models. We observe
that  increasing the value of $M$ beyond the value $M(p)$ where variance
reduction has stabilized ($M(p)\simeq100$ in Figure~\ref{fig:M}) allows to
further improve the accuracy of the rET model without increasing its complexity.

\begin{figure}[h]
\centering
\subfloat[Estimated risk]{\includegraphics[width=0.48\textwidth]{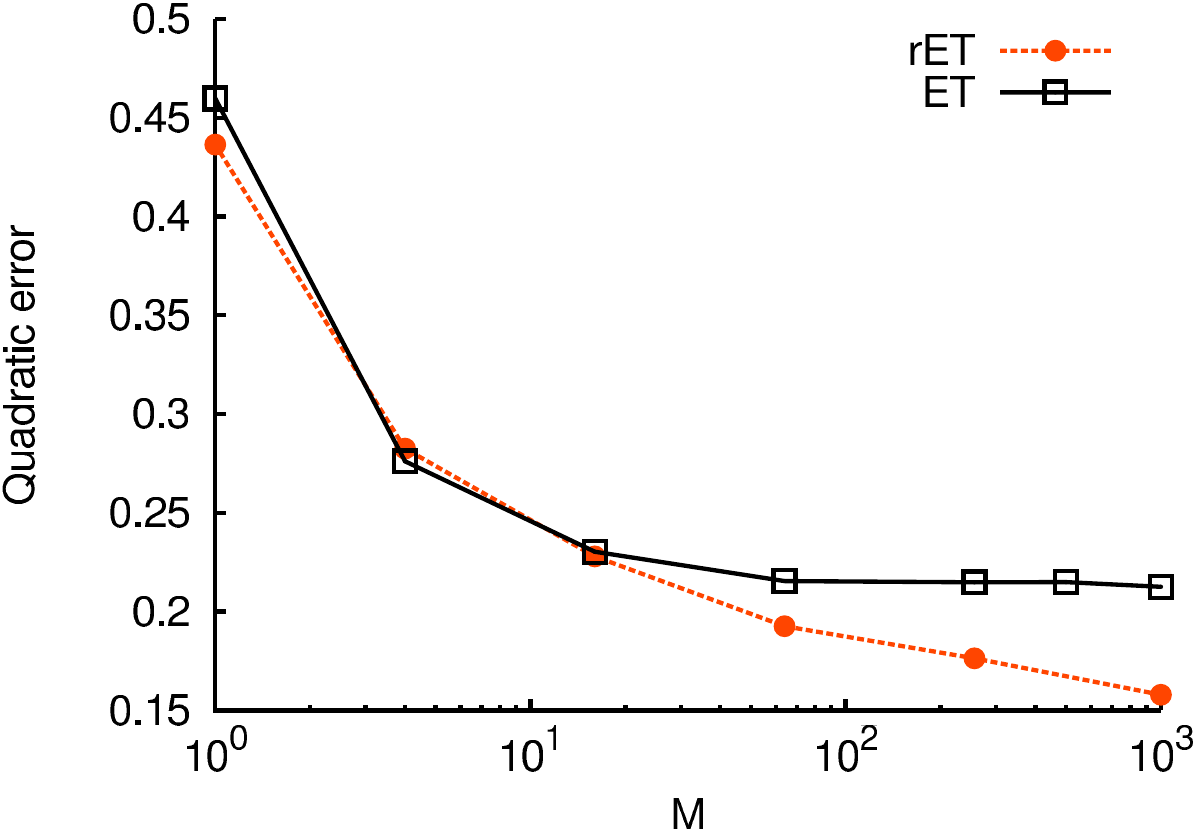}\label{fig:friedman_M_vs_RSS_nmin10_nfolds10_ts}}\hspace*{3mm}
\subfloat[Complexity]{\includegraphics[width=0.48\textwidth]{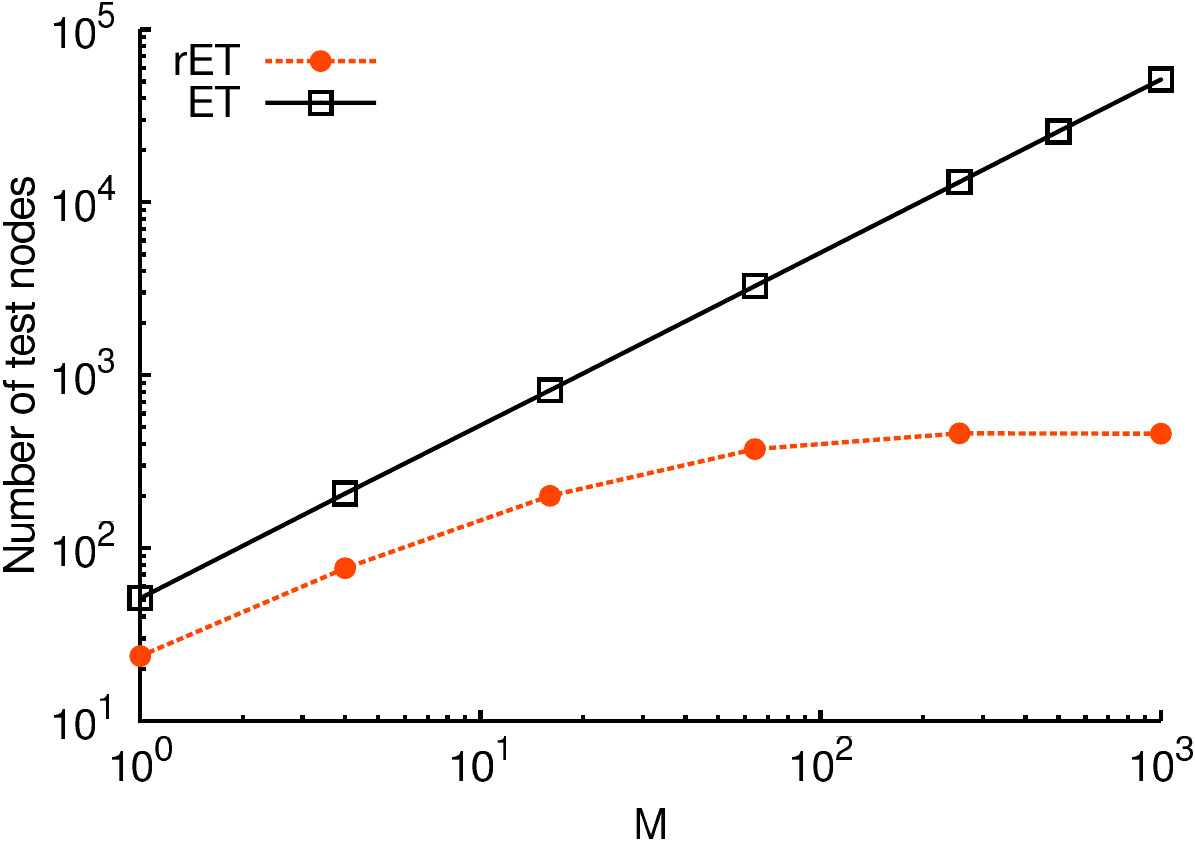}\label{fig:friedman_M_vs_complexity_nmin10_nfolds10}}
\caption{After variance reduction has stabilized ($M\simeq100$), further
increasing $M$ keeps enhancing the accuracy of the rET model without increasing
complexity (Friedman1, $n_{\min}=10$, $K=p=10$ and $t=t_{cv}^*$).}
\label{fig:M}
\end{figure}


%

\section{Conclusion}
\label{sec:ch07-conclusion}

Compression of randomized tree ensembles with $\ell_1$-norm regularization leads to
a drastic reduction of space complexity while preserving accuracy. The
complexity of the pruned model does not seem to be directly related to the
complexity of the original forest, \emph{i.e.} the number and complexity of each
randomized tree, as long as this forest has explored a large enough space of
variable interactions.

The strong compressibility of large randomized tree ensemble models suggests
that it could be possible to design novel algorithms based on tree-based
randomization which would scale in a better way to very high-dimensional input
spaces than the existing methods. To achieve this, one open question is how to
get  the compressed tree ensemble directly, \emph{i.e.} without generating a
huge randomized tree ensemble and then pruning it.

Tree-based  ensemble models may be interpreted as lifting  the original
input space towards a (randomly generated) high-dimensional discrete and sparse
representation, where each  induced feature corresponds to the indicator
function of a particular tree node, and takes the value $1$ for a given
observation if this observation reaches this node, and $0$ otherwise. The
dimension of this representation  is  on the order of $n  M(p)$, but the number
$s$ of non-zero components  for a given observation is only on the order  of
$M(p) \log n$. Compressed sensing theory~\cite{candes2008introduction} tells us
that high-dimensional sparsely representable observations may be compressed by
projecting them on a random subspace of dimension proportional to $s\log p$,
where  $p$ is the original dimension of the observations and $s \ll p$ is the
number of non-zero terms in their sparse representation basis. This suggests
that one could reduce the space complexity of tree-based method by applying
compressed sensing to their original input feature space if its dimension is
high, and/or to their induced feature space if $nM(p)$ is too large.


Since the publication of our work on this subject, several authors
have proposed similar ideas to post-prune a fully grown random forest
model: \citet{ren2015global} propose to iteratively remove or
re-weight the leaves of the random forest model, while
\citet{duroux2016impact} study the impact of pre-pruning on random
forest models.

\chapter{Exploiting input sparsity with decision tree}
\label{ch:tree-sparse}

\begin{remark}{Outline}
Many supervised learning tasks, such as text annotation, are characterized by
high dimensional and sparse input spaces where the input vectors of each sample
has only a few non zero values. We show how to exploit algorithmically the input
space sparsity within decision tree methods. It leads to significant speed up
both on synthetics and real datasets, while leading to exactly the same model.
We also reduce the required memory to grow such models by exploiting sparse
memory storage instead of dense memory storage for the input matrix.

\emph{This contribution is a joint work with Fares Hedayati and Panagiotis
Papadimitriou, working at \url{www.upwork.com}. The outcome of this research has
been proposed and merged in the
\emph{scikit-learn}~\cite{buitinck2013api,pedregosa2011scikit} open source
package.}
\end{remark}

Many machine learning tasks such as text annotation usually require training
over very big datasets with millions of web documents. Such tasks require
defining a mapping between the raw input space and the output space. For example
in text classification, a text document (raw input space) is usually mapped to a
vector whose dimensions correspond to all of the possible words in a dictionary
and the values of the vector elements are determined by the frequency of the
words in the document. Although such vectors have many dimensions, they are
often sparsely representable. For instance, the number of unique words
associated to a text document is actually small compared to the number of words
of a given language. We describe those samples with sparse input vectors as
having a few non zero values.

Exploiting the low density, i.e. the fraction of non zero elements, and the high
sparsity, i.e. the fraction of zero elements, is key to address such high dimensional
supervised learning tasks. Many models directly formulate their entire algorithm
to exploit the input sparsity. Linear models such as logistic regression or
support vector machine harness the sparsity by expressing most of their
operations as a set of linear algebra operations such as dot products who
directly exploit the sparsity to speed up computations.

Unfortunately, decision tree methods are not expressible only through linear
algebra operations. Decision tree methods are recursively partitioning the input
space by searching for the best possible splitting rules. As a consequence, most
machine learning packages either do not support sparse input vectors for
tree-based methods, only support stumps (decision tree with only one internal
node) or have a sub-optimal implementation through the simulation of a random
access memory as in the dense case. The only solution is often to densify the
input space which leads first to severe memory constraints and then to slow
training time.

In Section~\ref{sec:sparse-input-trees}, we present efficient algorithms to grow
vanilla decision trees, boosting and random forest methods on sparse input data.
In Section~\ref{sec:tree-sparse-prediction}, we describe how to adapt the
prediction algorithm of these models to sparse input data. In
Section~\ref{sec:sparse-experiments}, we show empirically the speed up obtained
by fitting decision trees with this input sparsity aware implementation.

\section{Tree growing}
\label{sec:sparse-input-trees}

During the decision tree fitting, the tree growing algorithm (see
Algorithm~\ref{algo:tree-grow}) interacts with the input space at two key points:
\begin{enumerate}

\item during the search of a splitting rule $s_t$ using a sample set
$\mathcal{L}_t$ at the expansion of node $t$ (see
line~\ref{alg-line:split-search} of Algorithm~\ref{algo:tree-grow});

\item during the data partitioning of the sample $\mathcal{L}_t$ into a left and
a right partition following the splitting rule $s_t$ at node $t$ (see
line~\ref{alg-line:sample-partitionning} of Algorithm~\ref{algo:tree-grow}).
\end{enumerate}

In this section, we show how to adapt decision tree at these three key points
to handle sparsely expressed data. While at the same time, we will show how to
harness the sparsity to speed up the original algorithm. We first explain how
node splitting is implemented in standard decision trees in
Section~\ref{subsec:standardsplitting} and then explain our efficient
implementation for sparse input data in
Section~\ref{subsec:sparse-splitting-rule-search}. In
Section~\ref{subsec:tree-data-partitionning}, we further describe how to
propagate samples with sparse input during the decision tree growing.

\subsection{Standard node splitting algorithm}
\label{subsec:standardsplitting}

During the decision tree growth, the crux of the tree growing algorithm in high
dimensional input space is the search of the best possible local splitting rule
$s_t$ (as described in Section~\ref{subsec:node-split}). Given a learning set
$\mathcal{L}_t$ reaching a node $t$, we search for the splitting rule $s_t$
among all the possible binary and axis-wise splitting rules
$\Omega(\mathcal{L}_t)$. We strive to maximize the impurity reduction $\Delta I$
obtained by dividing the sample set $\mathcal{L}_t$ into two partitions
$(\mathcal{L}_{t,r}, \mathcal{L}_{t,l})$. The splitting rule selection problem
(line \ref{alg-line:split-search} of the tree growing
Algorithm~\ref{algo:tree-grow}) is written as
\begin{equation}
s_t = \arg\max_{s \in \Omega(\mathcal{L}_t)}
\Delta I(\mathcal{L}_t, \mathcal{L}_{t,l}, \mathcal{L}_{t,r})
\label{eq:impurity }
\end{equation}
\noindent with
\begin{align}
\Omega(\mathcal{L}_t) = &
\Bigg\{s : s \in \bigcup_{j\in\{1,\ldots,p\}} Q(x_j), \nonumber \\
& \quad \mathcal{L}_{t,l} = \{(x,y)\in\mathcal{L}_{t}: s(x) = 1\}, \nonumber \\
& \quad \mathcal{L}_{t,r} = \{(x,y)\in\mathcal{L}_{t}: s(x) = 0\}, \nonumber \\
& \quad \mathcal{L}_{t,l}\not=\emptyset, \mathcal{L}_{t,r}\not=\emptyset \Bigg\}
\end{align}
\noindent where $Q(x_j)$ is the set of all splitting rules associated
to an input variable $x_j$.

Decision tree libraries carefully optimize this part of the algorithm to have
the proper computational complexity with a low constant. A careful design of the
algorithm for instance does not move around samples according to the partitions,
but instead move an identification number linked to each sample. The learning
set $\mathcal{L}$ is implemented as an array of row indices $L$ linked to the
rows of the input matrix $X$ and the output matrix $Y$. The sample set
$\mathcal{L}_t$ reaching a node $t$ is implemented as a slice of the array
$L[start_t:end_t[$ where the elements from the $start_t$ to the $`end_t-1`$
indices gives the indices of the samples reaching node $t$.

Let us take a small example with a set of $10$ training samples $L =
[0,\,1\,,\cdots, 9]$ illustrating the management of the array $L$. During the
tree growth (see Algorithm~\ref{algo:tree-grow}) when we partition the sample
set $\mathcal{L}_t=\{1,\ldots,10\}$ into two sample sets
$\mathcal{L}_{t,l}=\{9,\, 1, \,5, \,3\}$ and $\mathcal{L}_{t,r}=\{2, \,7, \,6,
\,4, \,8, \,0\}$. In practice, we modify the array $L$ such that from 0 to
$|\mathcal{L}_{1,l}|=4$ (resp. from $|\mathcal{L}_{1,l}|=4$ to
$|\mathcal{L}_1|=10$) are located the samples of the left child (resp. right
child). It leads to
\[
L = [\textcolor{orange}{9, \,1, \,5, \,3}, \,2, \,7, \,6, \,4,\, 8, \,0].
\]
\noindent We represent each sample set $\mathcal{L}_t$ as a slice $[start:end[$,
a chunk, of the array $L$. The sample set $\mathcal{L}_{1,l}$ is the slice
$[0:4[$ of $L$, while the sample set $\mathcal{L}_{1,r}$ is the slice $[4:10[$
of $L$. Now if the right node $\mathcal{L}_{1,r}$ is further split into a left
node with samples $\{6,0\}$ and a right node with samples $\{2, 7, 4, 8\}$ (in
orange), then $L[4:10[$ is further modified to reflect the split:
\[
L = [9, \,1,\, 5,\, 3,\, \textcolor{orange}{2,\, 7,\, 4,\, 8},\, 6,\,0].
\]

To further speed up the best splitting rule search with ordered variables, we
sort the possible thresholds associated to an ordered input variable
$x_j$ (programmatically $sort(X_j[L[start_t:end_t[[)$). By sorting the
possible thresholds sets, we can evaluate the impurity measure $I$ and the
impurity reduction $\Delta I$ in an online fashion. For instance, the Gini index
and entropy criteria can be computed by updating the class frequency in the left
and right split when moving from one splitting threshold to the next.

\subsection{Splitting rules search on sparse data}
\label{subsec:sparse-splitting-rule-search}

To handle sparse input data with decision trees, we need efficient procedures to
select the best splitting rules $s_t$ among all the possible splitting rules
knowing that we have a high proportion of zeros in the input matrix $X$. In this
section, we propose an efficient method to exploit the sparsity of the input
space with decision tree models. Our method takes advantage of the input
sparsity by avoiding sorting sample sets of a node along a feature unless there
are non zero elements at this feature. This approach speeds up training
substantially as extracting the possible threshold values and sorting them is a
costly but essential and ubiquitous component of tree-based models.

The splitting rule search algorithm for an ordered variable $x_j$ at a
node $t$ is divided in two parts (see Algorithm~\ref{algo:sparse-split}): (i) to
extract efficiently the non zero values associated to $x_j$ in the sample
partition $\mathcal{L}_t$ (line~\ref{alg-line:nnz-extraction}) and (ii) to
search separately among the splitting rules with the positive, negative or zero
threshold (line~\ref{alg-line:sparse-split-optim}).

\begin{algorithm}
\caption{Search for the best splitting rule $s_t^*$ given a sparse input
matrix $X$ and a set of samples $\mathcal{L}_t$}
\label{algo:sparse-split}
\begin{algorithmic}[1]
\Function{FindBestSparseSplit}{$X$, $\mathcal{L}_t$}
\For{$j=1$ to $p$}
    \parState{Extract strictly positive $X_{j,pos}$ and strictly negatives $X_{j,neg}$
              values from $X_j$ given $\mathcal{L}_t$.\label{alg-line:nnz-extraction}}
    \parState{Search for the best splitting rule of the form
              $s_j^*(x) = x_j \leq \tau$ with $\tau \in X_{j,pos}\cup \{0\}\cup X_{j,neg}$
              maximizing the impurity reduction
              $\Delta I$ over the sample set $\mathcal{L}_t$.
              \label{alg-line:sparse-split-optim}}
    \parState{Update $s^*$ if the splitting rule $s_j^*$ leads to higher impurity
              reduction.}
\EndFor
\State \Return $s^*$
\EndFunction
\end{algorithmic}
\end{algorithm}

To extract the non zero values of a sparse input matrix $X \in \mathbb{R}^{n
\times p}$ with sparsity $s$ in the context of the decision tree growth, we need
to perform efficiently two operations on matrices: (i) the column indexing for a
given input variable $j$ and (ii) the extraction of the non zero row values
associated to the set of samples $\mathcal{L}_t$ reaching the node $t$ in this
column $j$. The overall cost of extracting $|\mathcal{L}_t|$ samples at a column
$j$ from the input matrix $X$ should be proportional to the number of non zero
elements and not to $|\mathcal{L}_t|$.\footnote{We assume here that the matrix
$X$ is uniformly sparse.}

Among the different sparse matrix
representations~\cite{pissanetzky1984sparse,templates,hwu2009programming}, the sparse
csc matrix format is the most appropriate for tree growing as it allows
efficient column indexing, as required during node splitting. Let us show how to
perform an efficient extraction of the non zero values given the sample set
$\mathcal{L}_t$ using this matrix format. Note that using a compressed row
storage sparse format\footnote{The compressed row storage (csr) sparse array
format is made of three arrays $indptr$, $indices$ and $value$. The non zero
elements of the $i$-th row of the sparse csc matrix are stored from $indptr[i]$
to $indptr[i+1]$ in the $indices$ arrays, giving the column indices, and $value$
arrays, giving the stored values. It is the transposed version of the csc sparse
format.} would not be appropriate during the tree growth as we need to be able
to efficiently subsample input variables at each expansion of a new testing
node.

\begin{remark}{Compressed Sparse Column (CSC) matrix format}
The sparse csc matrix with $n_{nz}$ non zero elements is a data
structure composed of three arrays:
\begin{description}
\item[$indices \in \mathbb{Z}^{n_{nz}}$] containing the row indices of the non
zero elements.
\item[$data \in \mathbb{R}^{n_{nz}}$] containing the values of the non
zero elements.
\item[$indptr \in \mathbb{Z}^{p}$] containing the slice of the non zero
elements. For a column $j \in \{1,\ldots,p\}$, the row index and the values of
the non zero elements of columns $j$ are stored from $indptr[j]$ to
$indptr[j+1]$ in the $indices$ and $data$ arrays.
\end{description}

The non zero values associated to an input variable $j$ are located from
$indptr[j]$ to $indptr[j+1]-1$ in the $indices$ and the $data$ arrays (when
$indptr[j]=indptr[j+1]$, the column thus contains only zeros). Extracting them
requires to perform a set intersection between the sample set $\mathcal{L}_t$
reaching node $t$ and the non zero values $indices[indptr[j]:indptr[j+1][$ of
    the column $j$.

For instance, the following matrix $A$ has only 3 non zero elements, but we
would use an array of 20 elements:
\begin{equation}
A \in \mathbb{R}^{4 \times 5} = \begin{bmatrix}
a & 0 & 0 & b & 0 \\
0 & 0 & 0 & c & 0 \\
0 & 0 & 0 & 0 & 0 \\
0 & 0 & 0 & 0 & 0 \\
\end{bmatrix}.
\end{equation}
\noindent The csc representation of this matrix $A$ is given by
\begin{align*}
data &= \begin{bmatrix}a & b & c\end{bmatrix}, \\
indices &= \begin{bmatrix}0 & 0 & 1\end{bmatrix}, \\
inptr &= \begin{bmatrix}0 & 1 & 1 & 1 & 3 & 3\end{bmatrix}.
\end{align*}
\end{remark}

Let $n_{nz,j} = (indptr[j+1] -indptr[j])\,\forall j$ be the number of samples
with non zero values for input variable $j$ and let us assume that the
$indices$ of the input csc matrix array are sorted column-wise, i.e. for the
$j$-th row, the elements of $indices$ from $indptr[j]$ to $indptr[j+1]-1$ are
sorted. Standard intersection algorithms have the following time complexity:
\begin{enumerate}
\item in $O(|\mathcal{L}_t| \log{n_{nz,j}})$ by performing $|\mathcal{L}_t| $ binary
search on the sorted $n_{nz,j}$ nonzero elements; \label{en:bsearch}

\item in $O(|\mathcal{L}_t|\log{|\mathcal{L}_t|} + n_{nz,j} )$ by sorting the
sample set $\mathcal{L}_t$ and retrieving the intersection by iterating over
both arrays;

\item in $O(n_{nz,j})$ by maintaining a data structure such as a hash table of
$\mathcal{L}_t$ allowing to efficiently check if the elements of
$indices[indptr[j]$:$indptr[j+1][$ are contained in the sample partition
$\mathcal{L}_t$. \label{en:hash-aproach}
\end{enumerate}

The optimal intersection algorithm depends on the number of non zero elements
for input variable $j$ and the number of samples $|\mathcal{L}_t|$ reaching
node $t$. During the decision tree growth, we have two opposite situations:
either the size of the sample partition $|\mathcal{L}_t|$ is high with respect
to the number of non zero elements ($|\mathcal{L}_t| \approx O(n) \ggg
n_{nz,j}$) or, typically at the bottom of the tree, the partition size is small
with respect to the number of non zero elements ($n_{nz,j} \ggg
|\mathcal{L}_t|$). In the first case (i.e., at the top of the tree), the most
efficient approach is thus approach~(\ref{en:hash-aproach}), while in the
second case (i.e., at the bottom of the tree), approach~(\ref{en:bsearch})
should be faster. We first describe how to implement
approach~(\ref{en:hash-aproach}), then approach~(\ref{en:bsearch}), and
finally how to combine both approaches.

A straightforward implementation of approach~(\ref{en:hash-aproach}) is, at
each node, to allocate a hash table containing all training examples in that
node (in $O(|\mathcal{L}_t|)$) and then to compute the intersection by checking
if the non zero elements of the csc matrix belong to the hash table (in
$O(n_{nz,j})$). We can however avoid the overhead required for the allocation,
creation, and deallocation of the hash table by maintaining and exploiting a
mapping between the csc matrix and the sample set $\mathcal{L}_t$. Since the
array $L$ is constantly modified during the tree growth, we propose to use an array, denoted $mapping$, to keep track of the position of a sample $i$ in the array $L$ as illustrated in
Figure~\ref{fig:mapping}. During the tree growing, we keep the following
invariant:
\begin{equation}
mapping[L[\mathbf{i}]] = \mathbf{i}.
\end{equation}
\noindent In the above example, the array $L$ was $[0,\, 1,\ldots\,9]$ with
$mapping=[0,\, 1,\ldots\,9]$. After a few splits, the array $L$ has become
\[
L = [9, \,1, \,5,\, 3,\, 2,\, 7,\, 4,\, 8,\, 6,\, 0]
\]
\noindent and the associated $mapping$ array is
\[
mapping = [9,\, 1,\, 4,\, 3,\, 6,\, 2,\, 8,\, 5,\, 7,\, 0].
\]


\begin{figure}
\centering
\def\svgwidth{\textwidth}
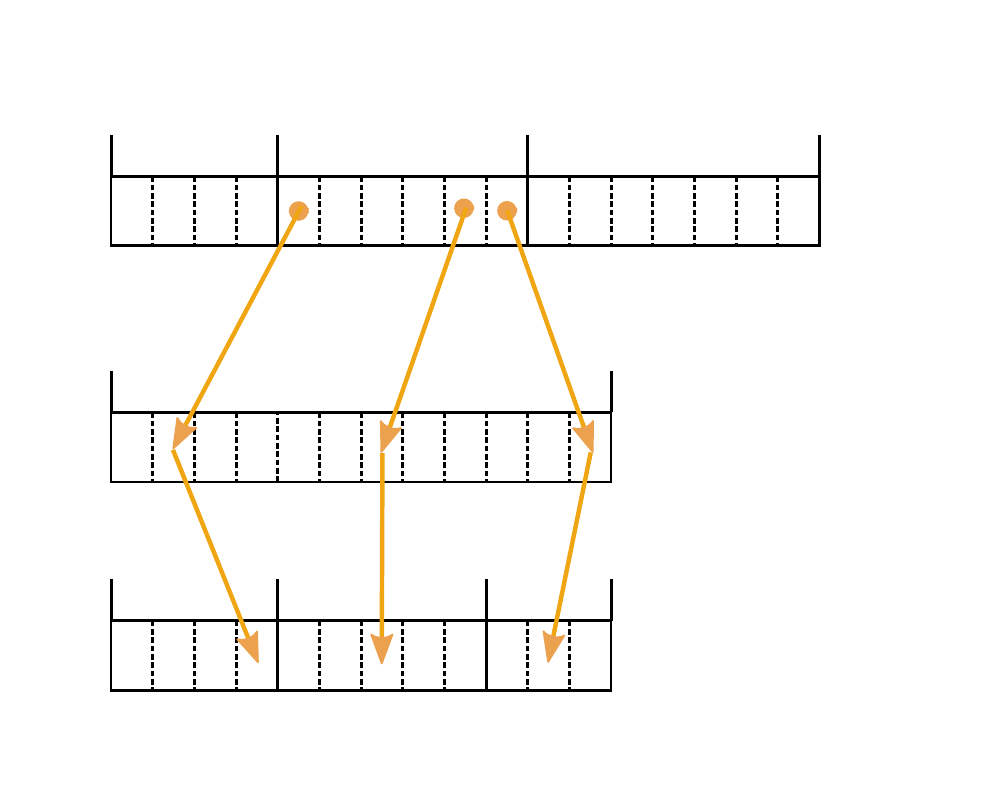
\caption{The array $mapping$ allows to efficiently compute the intersection
between the $indices$ array of the csc matrix and a sample set $\mathcal{L}_t$.}
\label{fig:mapping}
\end{figure}

Thanks to the $mapping$ array, we can now check in constant time $O(1)$ whether
a sample $i$ belongs to the sample set $\mathcal{L}_t$. Indeed, given that $\mathcal{L}_t$ is represented by a slice from an index $start$ to an
index $end - 1$ in $L$, sample $i$ belongs to $\mathcal{L}_t$ if the
following condition holds:
\begin{equation}
start \leq mapping[i] < end.
\end{equation}

To extract the non zero values of the csc matrix associated to the $j$-th input
variable in the sample set $\mathcal{L}_t$, we check the previous condition for
all samples from $indptr[j]$ to $indptr[j+1]-1$ in the $indices$ array. Thus, we
perform the intersection between $\mathcal{L}_t$ and the $n_{nz,j}$ non zero
values in $O(n_{nz,j})$. The whole method is described in
Algorithm~\ref{algo:extract-mapping}. Note that to swap samples in the array
$L$, we use a modified swap function (see Algorithm~\ref{algo:swap}) which
preserves the mapping invariant .

\begin{algorithm}
\caption{Return the $n_{neg}$ strictly negative and $n_{pos}$ positive values
$(X_{j, neg}, X_{j, pos})$ associated to the $j$-th variable from the sample set
$L[start:end[$ through a \emph{given $mapping$ satisfying $mapping[L[i]] = i$}. The
array $L$ is modified so that $L[start:start+n_{neg}[$ contains the samples with
negatives values, $L[start+n_{neg}:end - n_{pos}[$ contains the zero values and
$L[end - n_{pos}:end[$ the samples with positives values.}
\label{algo:extract-mapping}
\begin{algorithmic}[1]
\Function{extract\_nnz\_mapping}{$X$, $j$, $L$, $start$, $end$, $mapping$}
    \State $X_{j,pos} = []$
    \State $X_{j,neg} = []$
    \State $start_p$ = $end$
    \State $end_n$ = $start$
    \For{$k \in [X.indptr[j] \mbox{:} X.indptr[j+1][$}
        \State $index = indices[k]$
        \State $value = data[k]$
        \If{$start \leq mapping[index] < end$}
            \State $i$ = $mapping[index]$
            \If {$value > 0$}
                \State $X_{j,pos}$.\Call{append}{$value$}
                \State $start_p -= 1$
                \State \Call{Swap}{$\mathcal{L}$,  $i$, $start_p$, $mapping$}
            \Else
                \State $X_{j,neg}$.\Call{append}{$value$}
                \State \Call{Swap}{$\mathcal{L}$,  $i$, $end_n$, $mapping$}
                \State $end_n += 1$
            \EndIf
        \EndIf
    \EndFor
    \State \Return {$X_{j,pos}, X_{j,neg}, end_n - start, end - start_p$}
\EndFunction
\end{algorithmic}
\end{algorithm}

\begin{algorithm}
\caption{Swap two elements at positions $p_1$ and $p_2$ in the array $L$ in
place while maintaining the invariant of the $mapping$ array.}
\label{algo:swap}
\begin{algorithmic}[1]
\Function{Swap}{$L$, $p_1$, $p_2$, $mapping$}
\State $L[p_1]$, $L[p_2]$ = $L[p_2]$, $L[p_1]$
\State $mapping[L[p_1]] = p_1$
\State $mapping[L[p_2]] = p_2$
\EndFunction
\end{algorithmic}
\end{algorithm}

In practice, the number of non zero elements $n_{nz,j}$ of feature $j$ could be much
greater than the size of a sample set $\mathcal{L}_t$. This is likely to
happen near the leaf nodes. Whenever the tree is fully developed, there are only
a few samples reaching these nodes. The approach~(\ref{en:bsearch}) shown in
Algorithm~\ref{algo:extract-bsearch} exploits the relatively small size of the
sample set and performs repeated binary search on the $n_{nz,j}$ non zero
elements associated to the feature $j$.

\begin{algorithm}
\caption{Return the $n_{neg}$ strictly negative and $n_{pos}$ positive values
$(X_{j, neg}, X_{j, pos})$ associated to the $j$-th variable from the sample set
$L[start:end[$ through \emph{repeated binary search}. The array $L$ is modified so that
$L[start:start+n_{neg}[$ contains the samples with negatives values,
$L[start+n_{neg}:end - n_{pos}[$ contains the zero values and $L[end -
n_{pos}:end[$ the samples with positives values.}
\label{algo:extract-bsearch}
\begin{algorithmic}[1]
\Function{extract\_nnz\_bsearch}{$X$, $j$, $L$, $start$, $end$, $mapping$}
    \State $X_{j,pos} = []$
    \State $X_{j,neg} = []$
    \State $start_p$ = $end$
    \State $end_n$ = $start$
    \State $indices_j = X.indices[X.indptr[j]:X.indptr[j+1][$
    \State $data_j= X.data[X.indptr[j]:X.indptr[j+1][$
    \State $\mathcal{L}$ = \Call{sort}{$\mathcal{L}$, start, end}
    \For{$i \in [start:end[$}
        \parState{// Get the position of $L[i]$ in $indices_j$,
                  and -1 if it is not found:
                  $p$ = \Call{BinarySearch}{$L[i]$, $indices_j$}}
        \If {$p \neq -1$ }
            \If{$data_j[p]  > 0$}
                \State $start_p -= 1$
                \State $X_{j,pos}$.\Call{append}{$data_j[p]$}
                \State \Call{Swap}{$\mathcal{L}$,  $i$, $start_p$, $mapping$}
            \Else
                \State $X_{j,neg}$.\Call{append}{$data_j[p]$}
                \State \Call{Swap}{$\mathcal{L}$,  $i$, $end_n$, $mapping$}
                \State $end_n += 1$
            \EndIf
        \EndIf
    \EndFor
    \State \Return {$X_{j,pos}, X_{j,neg}, end_n - start, end - start_p$}
\EndFunction
\end{algorithmic}
\end{algorithm}

The optimal extraction of non zero values is a hybrid approach combining the
mapping-based algorithm (Algorithm~\ref{algo:extract-mapping}) and the binary
search algorithm (Algorithm~\ref{algo:extract-bsearch}). Empirical experiments
have shown that it is advantageous to use the mapping-based algorithm
whenever
\begin{equation}
\label{switch}
|\mathcal{L}_t| \times \log(n_{nz,j}) < 0.1 \times n_{nz,j}.
\end{equation}
\noindent and the binary search otherwise (see
Algorithm~\ref{algo:extract-nnz}). The formula is based on the computational
complexity of both algorithms. We have determined the constant of $0.1$
empirically.

\begin{algorithm}
\caption{Return the $n_{neg}$ strictly negative and $n_{pos}$ positive values
$(X_{j, neg}, X_{j, pos})$ associated to the $j$-th variable from the sample set
$L[start:end[$. The array $L$ is modified so that
$L[start:start+n_{neg}[$ contains the samples with negatives values,
$L[start+n_{neg}:end - n_{pos}[$ contains the zero values and $L[end -
n_{pos}:end[$ the samples with positives values.}
\label{algo:extract-nnz}
\begin{algorithmic}
  \Function{extract\_nnz}{$X$, $j$, $L$, $start$, $end$,  $mapping$}\\
    Let $n_{nz,j}$ be the number of non zero values in column $j$ of $X$.
    \If {$(end-start) \times \log(n_{nz,j}) < 0.1 \times n_{nz,j}$}
        \State \Return \Call{extract\_nnz\_mapping}{$X$, $j$, $L$, $start$, $end$, $mapping$}
    \Else
        \State \Return \Call{extract\_nnz\_bsearch}{$X$, $j$, $L$, $start$, $end$, $mapping$}
    \EndIf
\EndFunction
\end{algorithmic}
\end{algorithm}

Note that after extracting the non zero values, we need to sort the thresholds
of the splitting rules to search efficiently for the best one. Thanks to
Algorithm~\ref{algo:extract-nnz}, Algorithm~\ref{algo:extract-mapping} and
Algorithm~\ref{algo:extract-bsearch}, we have already made a three way partition
pivot as in the quicksort on the value 0. This speeds up the overall splitting
algorithm (the line~\ref{alg-line:sparse-split-optim} of
Algorithm~\ref{algo:sparse-split}). Instead of sorting the thresholds in the
sample set $\mathcal{L}_t$ in $O(|\mathcal{L}_t \log(\mathcal{L}_t)|)$, we can
perform the sort in $O(d\mathcal{L}_t \log(d\mathcal{L}_t))$ given an input
space density $d$.

As a further refinement, let us note that we can sometimes significantly speed
up the decision tree growth by avoiding to search for splitting rules on
constant input variables. To do so, we can cache the input variables that were
found constant during the expansion of the parents of the node of $t$. If an
input variable is found constant, caching this information avoids the overhead
of searching for a splitting rule when no valid one exists.

\subsection{Partitioning sparse data}
\label{subsec:tree-data-partitionning}

During the tree growth, we need to partition a sample set $\mathcal{L}_t = \{(x,
y) \in \mathcal{X} \times \mathcal{Y}\}_{i=1}^n)$ at a testing node $t$
according to a splitting rule $s_t(x)$. The splitting rule $s_t$ associated to
an ordered input variable is of the form $s_t(x) = 1(x_{F_t} \leq \tau_t)$,
where $\tau_t$ is a threshold constant on the $F_t$-th input variable.

During the tree growth, we have the constraint that the input data matrix is in
the csc sparse format. We can not convert the current sparse format to another
one as it would require to store both the new and old representations into
memory. An efficient way to split the sample set $\mathcal{L}_t$ into its left
$\mathcal{L}_{t,l}$ and right $\mathcal{L}_{t,r}$ subset is to use the
Algorithm~\ref{algo:extract-nnz}. It will extract the non zero values of a
given input variable, but also partition the array $L$ representing the sample
set $\mathcal{L}_t$ into three parts: (i) $L[start:start+n_{neg}[$ contains the
$n_{neg}$ samples with negatives values, (ii) $L[start+n_{neg}:end - n_{pos}[$
contains the elements with zero values and (iii) $L[end - n_{pos}:end[$ the
$n_{pos}$ samples with positives values. Once the non zero values have been
extracted, we have to partition the samples either with negative values
($L[start:start+n_{neg}[$) or with positive values ($L[end - n_{pos}:end[$)
according to the sign of the threshold $\tau_t$.

The complexity to partition once the data is
$O(\min\left(n_{nz,j},|\mathcal{L}_t| \times \log(n_{nz,j})\right))$ for a batch
of $\mathcal{L}_t$ samples instead of the usual $O(|\mathcal{L}_t|)$ with dense
input data.

\section{Tree prediction} \label{sec:tree-sparse-prediction}

The prediction of an unseen sample $x$ by a decision tree (see
Algorithm~\ref{algo:tree-predict}) is done by traversing the tree from the top
to the bottom. At each test node, a splitting rule of the form tests whether or
not the sample $x$ should go in the left or the right branch. The splitting rule
$s_t$ associated to an ordered input variable is of the form $s_t(x) = 1(x_{F_t}
\leq \tau_t)$, where $\tau_t$ is a threshold constant on the $F_t$-th input
variable.

We need to have an efficient row and column indexing of the input data matrix.
We discuss here two options: (i) using the dictionary of key (dok) sparse matrix
format and (ii) using a csr sparse matrix format\footnote{The compressed row
storage (csr) sparse array
format~\cite{pissanetzky1984sparse,templates,hwu2009programming} is made of three
arrays $indptr$, $indices$ and $value$. The non zero elements of the $i$-th row
of the sparse csc matrix are stored from $indptr[i]$ to $indptr[i+1]$ in the
$indices$ arrays, giving the column indices, and $value$ arrays, giving the
stored values. It is the transposed version of the csc sparse format.}.

The dictionary of key (dok) sparse matrix format store the non zero values in a
hash table whose keys are the pairs formed from the row and the column index. It
is straightforward to apply Algorithm~\ref{algo:tree-predict} with the dok
format. The computational complexity is thus unchanged.

To predict one or several unseen samples using a csr array, we need a procedure
to efficiently access to both the row and the column index without densifying the csr
matrix. We allocate two arrays $nz\_mask\in\mathcal{Z}^p$ with all elements
having the value ``$-1$'' and $nz\_value \in \mathbb{R}^p$ of size $p$. To
predict the $i$-th sample from a test set, we set in the array $nz\_mask$ the
value $i$ to the non zero values $indices[indptr[i]:indptr[i+1][$ associated to
this sample. The array $nz\_value$ is modified to contain the non zero values of
the $i$-th sample, i.e. $values[indptr[i]:indptr[i+1][$. During the tree
traversal (see Algorithm~\ref{algo:tree-predict}), we get the $j$-th input value
by first checking if it is zero with $nz\_mask[j]\not=i$, otherwise the value is
stored at $nz\_value[j]$. Assuming a proportion of zero elements $s$ for a batch
of $n$ test samples with $p$ input variables, the extra cost of using this
approach is $O(p + (1-s)np)$. Note that using the $nz\_mask$ and the $nz\_value$
arrays is more efficient than densifying each sample as it would add an extra
cost of $O(np)$.

The csr sparse format leads to a worse computational complexity than the dok
format, which has no extra computing cost. However, the csr approach was chosen
in the scikit-learn machine learning python library. The standard implementation
of the dok format in scipy is indeed currently implemented using the python dict
object. While with the csr format, we can work only with low level c arrays and
we can also easily release the global interpreter lock (GIL). Note that here the
csr format is also better suited than the csc format as the complexity is
independent of the number of samples to predict at once.

\section{Experiments} \label{sec:sparse-experiments}

In this section, we compare the training time and prediction time of decision
tree growing and prediction algorithms using either dense data representation
or sparse data representation. More specifically, we compare three input matrix
layouts using scikit-learn version 0.17: (i) the dense c array layout, a row
major order layout whose consecutive and contiguous elements are row values,
(ii) the dense fortran array layout, a column major order layout whose
consecutive and contiguous elements are column values, and (iii) the sparse
array layout, the csc sparse format during tree growing and the csr sparse
format during tree prediction (as proposed respectively in
Sections~\ref{subsec:sparse-splitting-rule-search} and
\ref{subsec:tree-data-partitionning}). The comparison will be made with stumps,
a decision tree with a single test node, and fully grown decision trees. These
results will be indicative of what could be gained in the context of boosting
methods using decision tree of low complexity such as stumps and random forest
methods using deep decision trees. Note that all splitting algorithms lead
exactly to the same decision tree structure and have the same generalization
performance.

We assess the effect of the input space density on synthetic datasets in
Section~\ref{sec:sparse-synthetic}. Then, we compare training times using each
of the three input matrix layouts on real datasets in
Section~\ref{sec:input-density-real}.

\subsection{Effect of the input space density on synthetic datasets}
\label{sec:sparse-synthetic}

As a synthetic experiment, we compare the decision tree growing algorithm and
tree prediction algorithm on synthetic regression tasks with $n=10^5$ samples and
$p=10^3$ features.  The input matrices are sparse random matrices whose non zero
elements are drawn uniformly in $\mathcal{N}(0; 1)$. The output vector is drawn
uniformly at random in $[0, 1]$. The input space density ranges from
$10^{-3}$ to $1$. Each point is an average over 20 experiments.

Figure~\ref{subfig:stump-fit} shows in logarithmic scale the computing times to
grow a single stump. We first note that column-based layouts (the fortran and
csc format) are more appropriate to grow decision tree on sparse data. While the
density is ranging from $10^{-3}$ to $10^{-1}$, the most expensive part of the
tree growing for a single stump is to retrieve the input values from a sample
set and to sort them. For a set of $n$ samples and $p$ features with a sparsity
$s$, the computational cost to grow a stump on dense data is $O(p n \log{n})$.
By contrast, growing a stump using a csc sparse input matrix has a computational
complexity of
\begin{align}
&O(p(1-s)n\log{((1-s)n)} + \min\{p n \log{(n(1-s)), p n(1-s)}\}) \nonumber \\
&= O(p(1-s)n\log{((1-s)n)})\}.
\label{eq:complexity-sparse-fit-stump}
\end{align}
\noindent The first term of Equation~\ref{eq:complexity-sparse-fit-stump}
corresponds to the sorting of non zero elements. The second term highlights the
contribution of the retrieval of the non zero values, which is always less
costly than the sorting operation. If the density is $1$ ($s=0$), both the dense
and sparse formats have the same complexity as shown in the right point of
Figure~\ref{subfig:stump-fit}. Overall with a sparse dataset, the csc format is
significantly faster as it leverages the sparsity. The bad performance of the
dense c layout compared to the fortran layout or csc layout can be explained by
the higher number of cache misses.

The time required to predict the training set using the fitted stump is shown in
Figure~\ref{subfig:stump-predict}. The only difference between the three input
matrix layouts (c, fortran and csr) is the access to the non zero elements. The
differences between the dense and the sparse input matrix format can be
explained by a better exploitation of the cache for the sparse format,
especially when the density is below $0.02$. When the density if over $0.02$,
the cost of copying the non zero values to the arrays $nz\_mask$ and
$nz\_values$ becomes dominant.

\begin{figure}
\centering
\subfloat[Learning time]{{\includegraphics[width=0.5\textwidth]{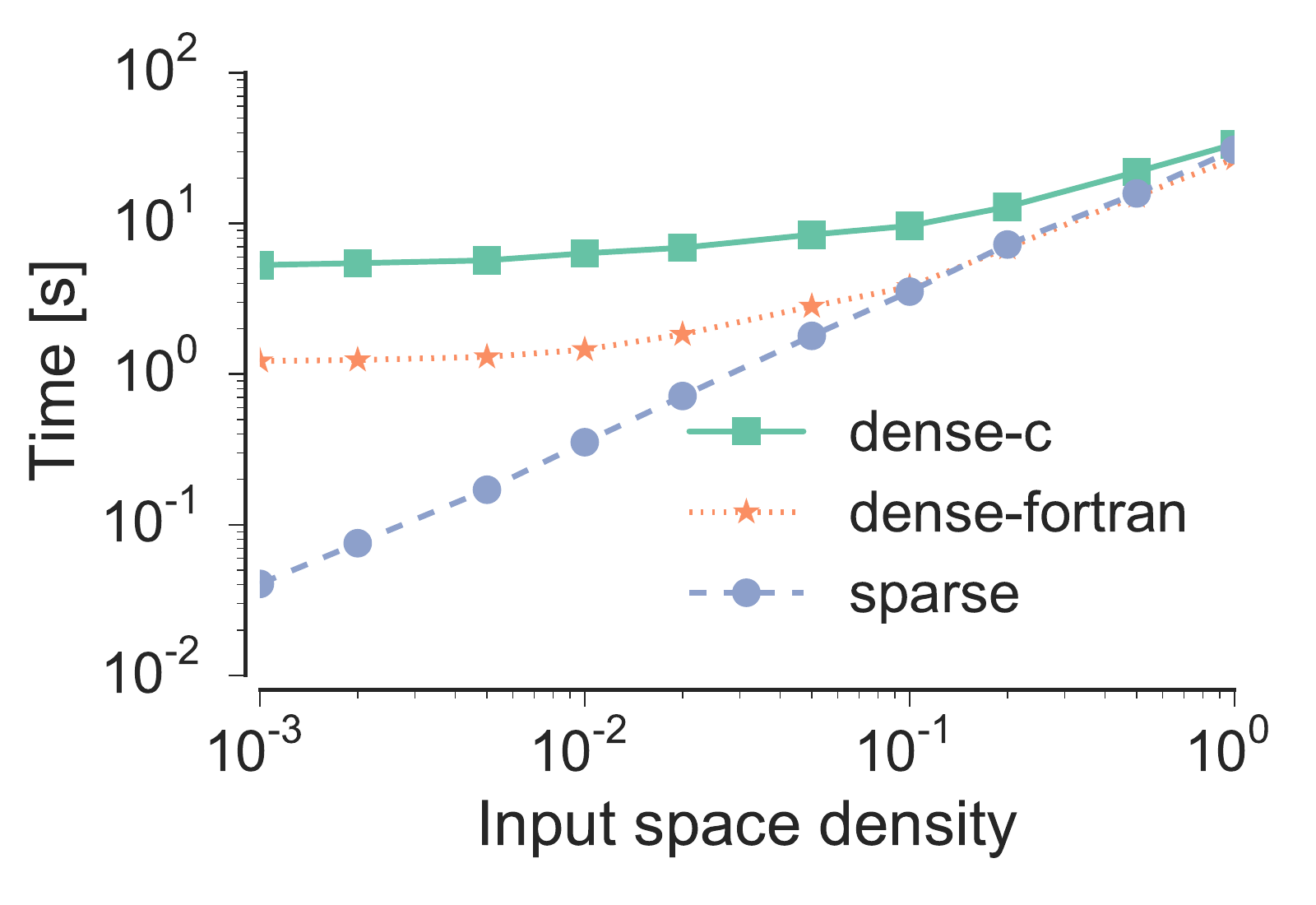}
\label{subfig:stump-fit}}}
\subfloat[Prediction time]{{\includegraphics[width=0.5\textwidth]{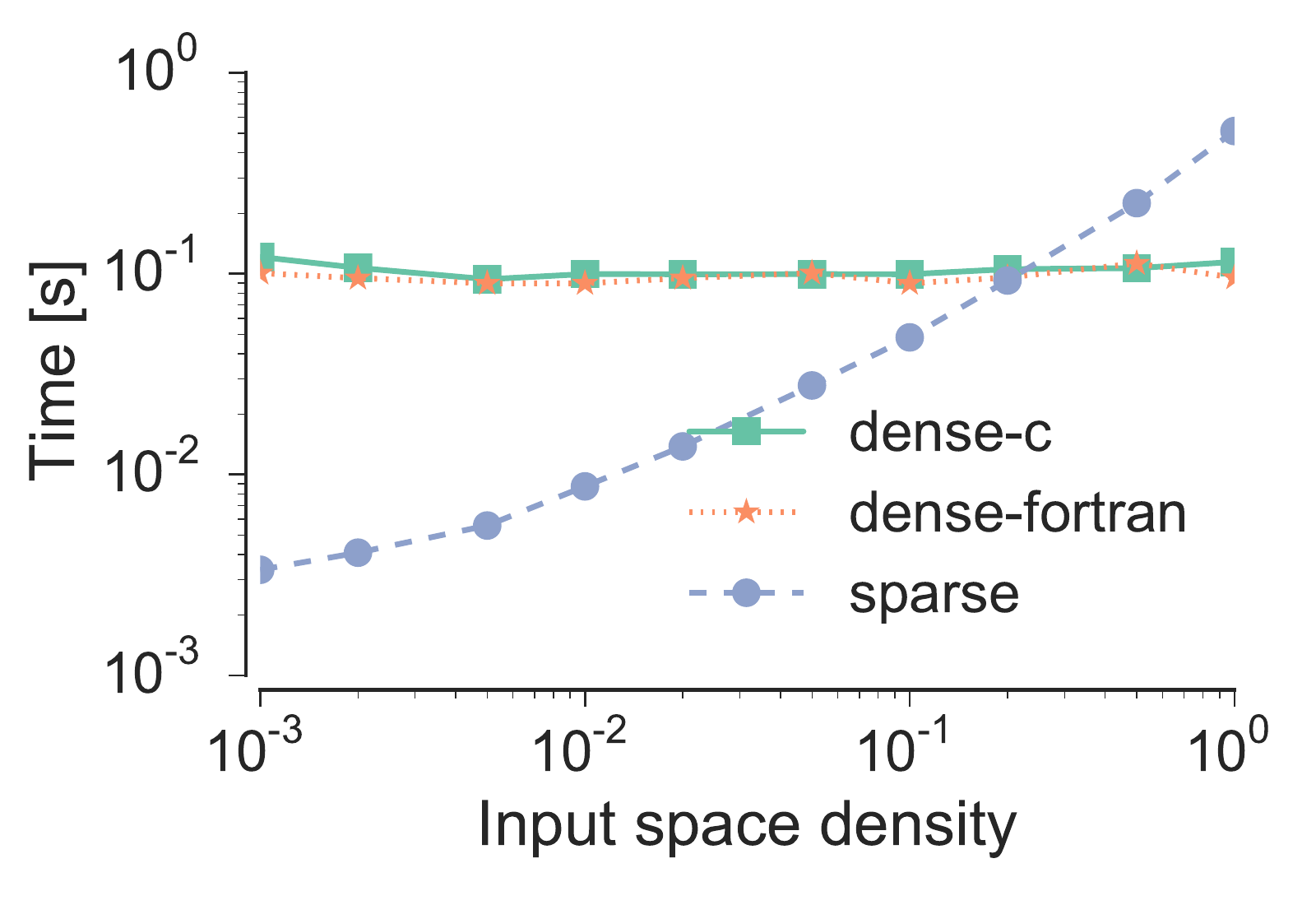}
\label{subfig:stump-predict}}}
\caption{Learning and prediction time of stumps as a function of the input space
density.}
\label{fig:sparse-input-stump}
\end{figure}

Figure~\ref{fig:cart-sparse-input} shows the time required to learn a fully
grown decision tree as a function of the dataset density. Note that the maximal
depth decreases as a function of the dataset density (see
Figure~\ref{subfig:dt-max-depth}) as the decision tree becomes more balanced. As
with the stump, the fortran layout is more appropriate than the c layout to grow
a fully grown decision tree on sparse data. The sparse csc algorithm is
significantly faster than the dense splitting algorithm if the density is
sufficiently low (here below $0.02$). The sparse splitting algorithm becomes
slower as the density increases (here beyond $0.02$) since the extraction of the
non zero values becomes more costly than finding the right split.

With fully developed decision trees, the prediction time (see
Figure~\ref{subfig:dt-predict}) is similar between its dense and sparse version.
The prediction time is lower when the input space density is high as the trees
are more balanced. We note that the prediction time is correlated with the
maximal depth of the tree.

Whenever the complexity of the decision tree lies between a stump and a fully
developed tree, the behavior moves from one extreme to the other.

\begin{figure}
\centering
\subfloat[Learning time]{{\includegraphics[width=0.7\textwidth]{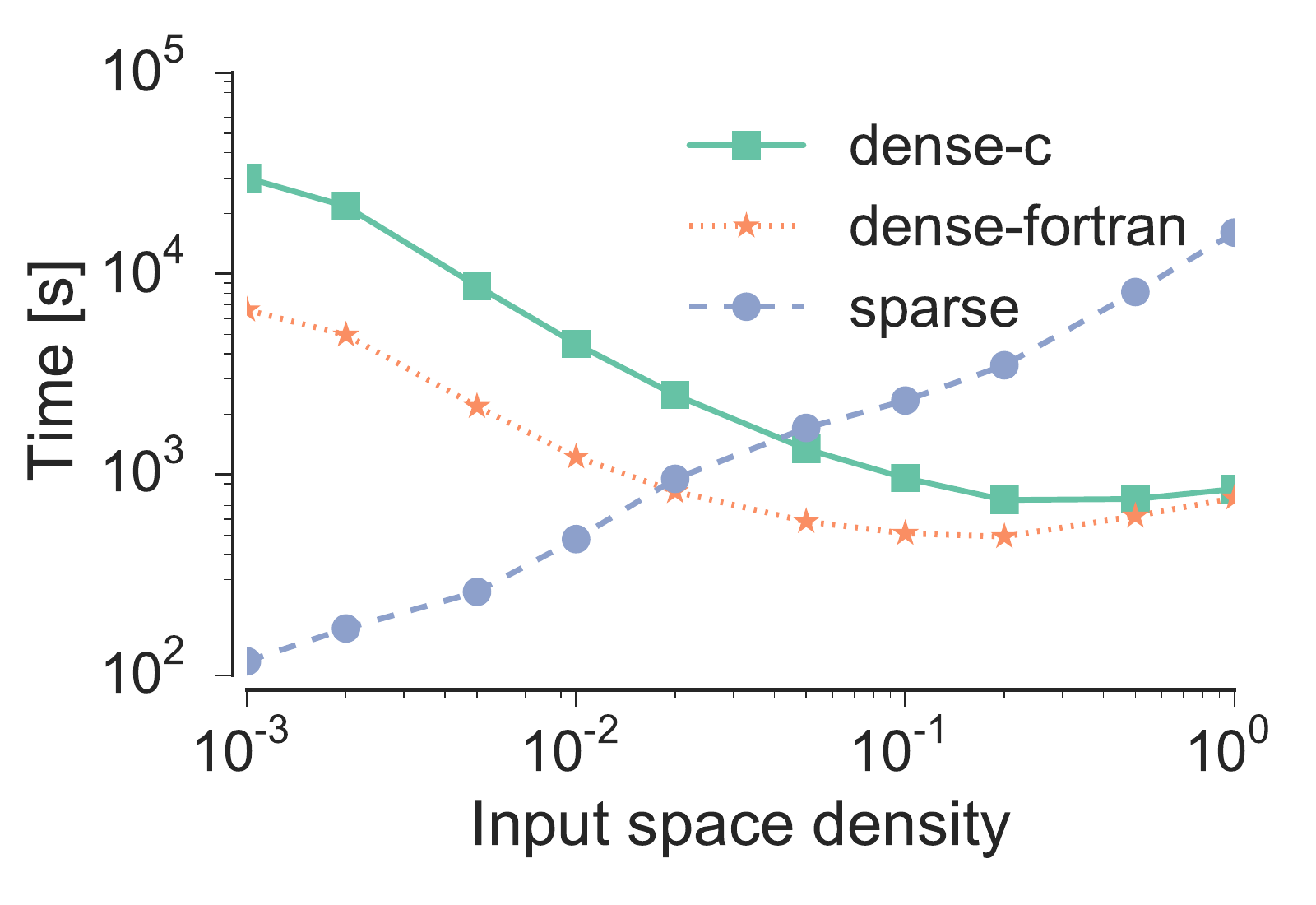}
\label{subfig:dt-fit}}} \\
\subfloat[Prediction time]{{\includegraphics[width=0.7\textwidth]{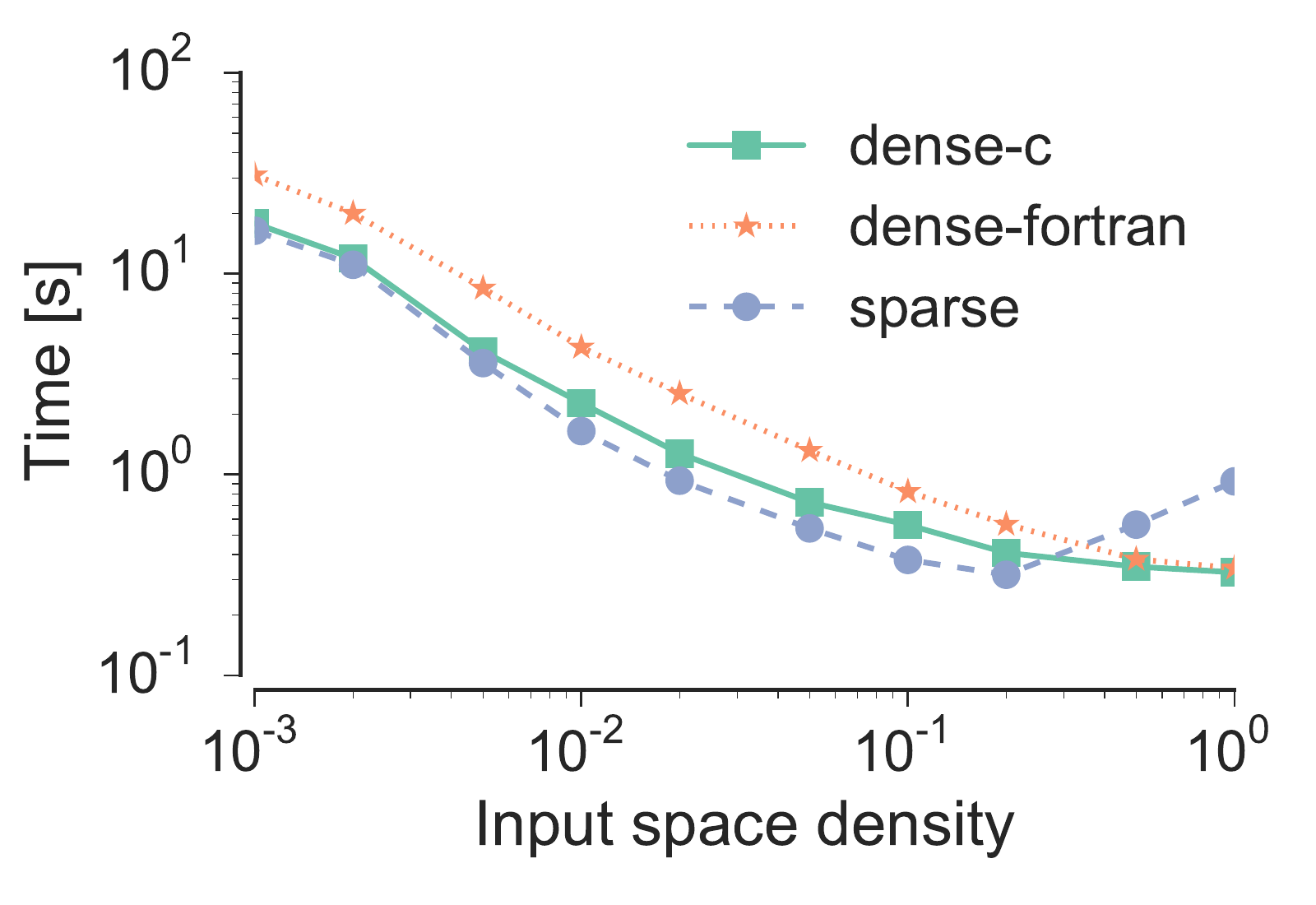}
\label{subfig:dt-predict}}} \\
\subfloat[Maximal depth]{{\includegraphics[width=0.7\textwidth]{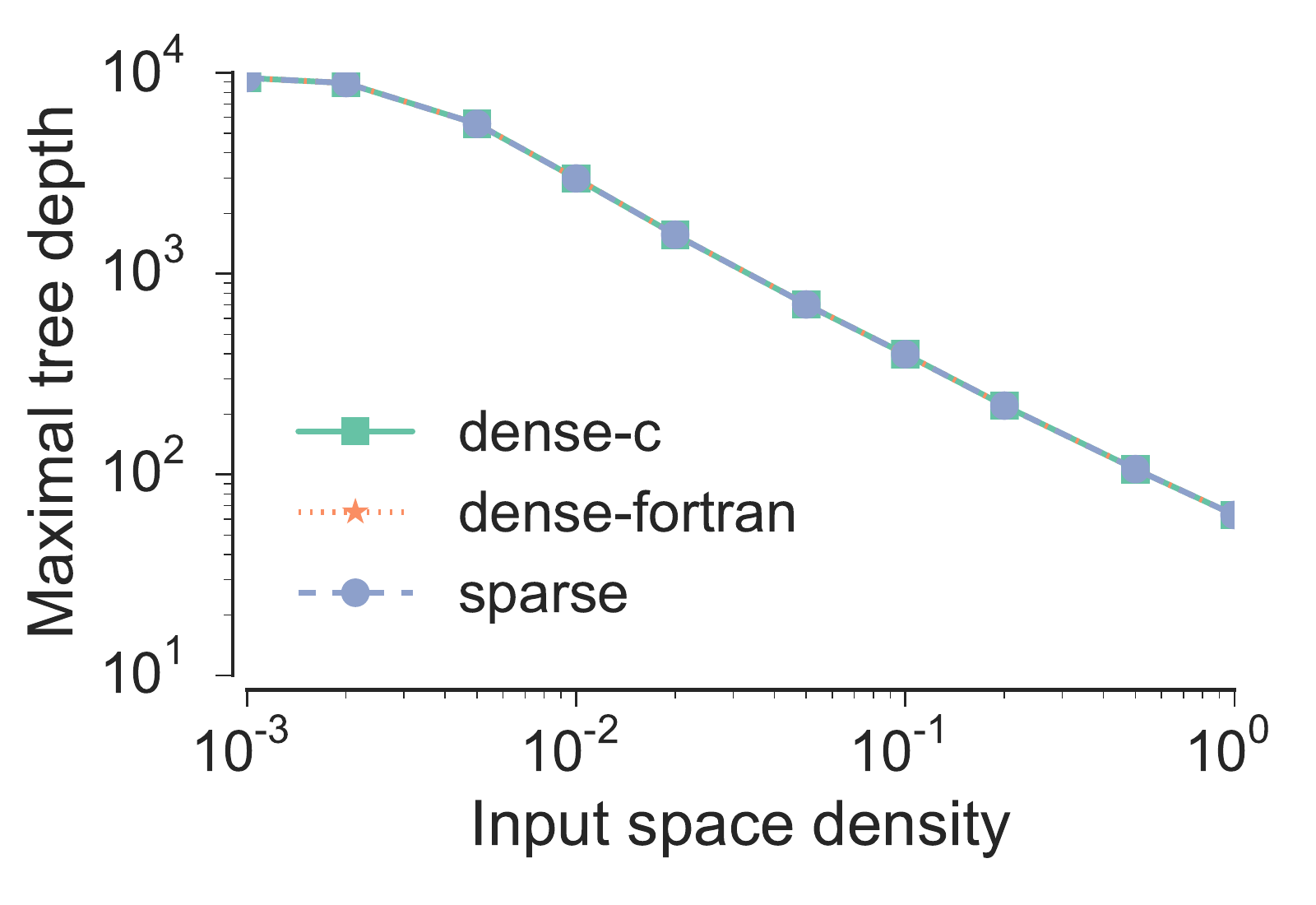}
\label{subfig:dt-max-depth}}}
\caption{Learning time, prediction time and maximal depth as a function of the input
space as a function of the input space density for fully grown decision trees.}
\label{fig:cart-sparse-input}
\end{figure}

\subsection{Effect of the input space density on real datasets}
\label{sec:input-density-real}

To further study the impact of the input space density, we have selected 9
datasets whose input space density ranges from $0.00034$ to $0.22$. These
datasets are presented in Table~\ref{table:dataset-properties} ordered by input
space density.

\begin{table}
\caption{Dataset properties ordered by input space density.}
\label{table:dataset-properties}
\centering
\setlength{\tabcolsep}{2pt}
\begin{footnotesize}
\begin{tabular}{lrrr}
\toprule
       Dataset &       n &        p &   Density \\
\midrule
 news20.binary~\cite{keerthi2005modified}                       &   19996 &  1355191 &  0.0003 \\
   20newsgroup~\cite{Lang95,rennie2001improving}                &   11314 &   130107 &  0.0012 \\
          rcv1~\cite{bekkerman2008data}                         &   23149 &    47236 &  0.0016 \\
  sector-scale~\cite{rennie2001improving,keerthi2008sequential} &    9619 &    55197 &  0.0029 \\
 farm-ads-vect~\cite{mesterharm2011active}         &    4143 &    54877 &  0.0036 \\
   E2006-train~\cite{kogan2009predicting}                       &   16087 &   150360 &  0.0083 \\
     mushrooms~\cite{Lichman:2013}                              &    8124 &      112 &  0.1875 \\
         mnist~\cite{lecun1998gradient}                         &   70000 &      784 &  0.1914 \\
       covtype~\cite{Lichman:2013}                              &  581012 &       54 &  0.2110 \\
\bottomrule
\end{tabular}
\end{footnotesize}
\end{table}

Table~\ref{table:real-dataset-Stump-chrono_fit} shows the time to train a single
stump. The fastest algorithm here is the tree growing algorithm with the input
sparse csc matrix. The speed up factor between the sparse and fortran memory
layout ranges from 1.1 to 23 times. Note that the fortran layout is always
faster than the c layout. The column major order layout is here better suited
for sparse dataset. The difference could be explained by fewer cache misses with
the fortran memory layout than with the c memory layout.

\begin{table}
\caption{The time (in second) required to train a stump using a sparse csc
layout is always faster than with the fortran or c dense memory layout on all
sparse selected datasets.}
\label{table:real-dataset-Stump-chrono_fit}
\centering
\begin{tabular}{lrrrr}
\toprule
       Dataset &       c & fortran &  sparse & fortran / sparse \\
\midrule
 news20.binary &     N/A &     N/A &   3.31 &              N/A \\
   20newsgroup &  213.18 &   15.77 &   0.79 &             20.1 \\
          rcv1 &  197.61 &   28.90 &  15.97 &              1.8 \\
  sector-scale &   58.56 &    7.10 &   1.08 &              6.6 \\
 farm-ads-vect &   26.03 &    3.22 &   0.14 &             23.0 \\
   E2006-train &  413.35 &   35.44 &   8.62 &              4.1 \\
     mushrooms &    0.03 &    0.02 &   0.02 &              1.2 \\
         mnist &    5.23 &    2.65 &   2.00 &              1.3 \\
       covtype &    3.89 &    2.57 &   2.35 &              1.1 \\
\bottomrule
\end{tabular}
\end{table}

Table~\ref{table:real-dataset-CART-chrono_fit} shows the time required to grow a
fully developed decision tree with c, fortran or sparse csc memory layout. The
dense fortran layout is here always faster than the dense c layout. The sparse
memory layout is faster by a factor between 1.3 and 7.5 than the fortran layout
when the input space density is below $0.3\%$.

\begin{table}
\caption{The time required to train a fully grown decision tree using a c, a fotran
or a sparse csc memory layout.}
\label{table:real-dataset-CART-chrono_fit}
\centering
\begin{tabular}{lrrrr}
\toprule
       Dataset &         c &  fortran &    sparse & fortran / sparse \\
\midrule
 news20.binary &      N/A &      N/A &   428.89 &              N/A \\
   20newsgroup &  4281.02 &   518.34 &    69.06 &              7.5 \\
          rcv1 &  1458.45 &   562.19 &   442.15 &              1.3 \\
  sector-scale &  5337.65 &   878.67 &   206.69 &              4.3 \\
 farm-ads-vect &   227.24 &    45.71 &     7.95 &              5.7 \\
   E2006-train &  2467.39 &   752.15 &  1083.62 &              0.7 \\
     mushrooms &     0.07 &     0.05 &     0.04 &              1.3 \\
         mnist &    80.09 &    56.11 &   178.59 &              0.3 \\
       covtype &    53.51 &    33.82 &   240.26 &              0.1 \\
\bottomrule
\end{tabular}
\end{table}

Note that we were unable to grow a decision tree with a dense memory layout on
the news20.binary dataset as it would require 108.4 Gigabyte to only store the
input matrix instead of 78 Megabytes.

\subsection{Algorithm comparison on 20 newsgroup}
\label{sec:sparse-20newsgroup}

Decision trees are rarely used in the context of sparse input datasets. One
reason is the lack of implementations exploiting sparsity during the decision
tree growth. With the previous experiments, we have shown that it increases
significantly the computing time, but also the amount of memory needed. With
the proposed tree growing and prediction algorithms, it is interesting to
compare the training time, prediction time and accuracy of some tree based
models, such as random forest or adaboost, with methods more commonly used in
the presence of sparse data.

We compare tree based methods to methods more commonly used on sparse datasets
on the 20 newsgroup dataset, which have $p=130107$ input variables, 11314 training
sample ands 7532 testing samples.

The compared algorithms are optimized on their respective hyper-parameters (see
Table~\ref{table:hyper-param-grid} for the details) using 5-fold cross
validation strategy on the training samples.

\begin{table}
\caption{Hyper-parameters grids.}
\label{table:hyper-param-grid}
\centering
\setlength{\tabcolsep}{2pt}
\begin{small}
\begin{tabular}{ll}
\toprule
\em{$k$-nearest neighbors} \\
\midrule
$k$, number of neighbors & $\{1,\ldots,10\}$ \\
\toprule
\em{Decision tree} \\
\midrule
$n_{\min}$, min. number of samples to split a node & $\{2, 5, 10, 15\}$ \\
\toprule
\em{Extra trees, random forest} \\
\midrule
$n_{\min}$, min. number of samples to split  a node & $\{2, 5, 10, 15\}$ \\
$k$, number of features drawn at each nodes &  $\{1, \log{p}, \sqrt{p}, 0.001p\}$\\
$M$, ensemble size & 100 or 1000 \\
\toprule
\em{Adaboost with decision trees as weak estimators} \\
\midrule
$n_{\min}$, min. number of samples to split  a node & $\{2, 5, 10, 15\}$ \\
$k$, number of features drawn at each nodes &  $\{1, \log{p}, \sqrt{p}, 0.001p\}$\\
$M$, ensemble size & 100 \\
$\mu$, learning rate &  $\{1, 0.1, 0.01\}$ \\
\toprule
\em{Bernoulli and multinomial naive Bayes} \\
\midrule
$\lambda$, additive smoothing parameter & $\{0, 0.001, 0.01, 0.1, 1\}$\\
\toprule
\em{Ridge classifier}\\
\midrule
$\lambda$, penalty parameter & $\{0.0001, 0.001, 0.01, 0.1, 1, 10\}$\\
\toprule
\em{Linear support vector machine classifier (SVC)}\\
\midrule
$\lambda$, penalty parameter & $\{10^i\}_{i=-5}^4$\\
\toprule
\em{Stochastic gradient descent (SGD)}\\
\midrule
Loss &  $\{\text{hinge}, \text{logistic}\}$ \\
Penalty constraint & $\{\ell_1, \ell_2, \text{elastic net}\}$\\
$\lambda$, penalty parameter & $ \{0.0001, 0.001, 0.01, 0.1, 1, 10\}$\\
Number of iterations & 100 \\
\bottomrule
\end{tabular}
\end{small}
\end{table}

The results obtained on the 20 newsgroup dataset are shown in
Table~\ref{table:20news-sparse} using scikit-learn version 0.17.1 and input
sparsity-aware implementations. The algorithm with the highest accuracy is the
linear estimator trained with ridge regression. It is closely followed by the
random forest ($m=1000$) model, the multinomial naives Bayes and extra-trees
($m=1000$). More generally, tree-based ensemble methods (random forest, extra
trees, and adaboost) show similar performance as linear methods (ridge, naive
bayes, linear SVC and SGD), with all methods from these two families reaching
at least $0.75$ of accuracy. On the other hand, the $k$-nearest neighbors and the
single decision tree perform very poorly (with an accuracy
below $0.6$).

We also note that increasing the number of trees from 100 to 1000 significantly
improves the performance of both random forests and extra trees. Their accuracy
increases respectively by $0.0573$ and $0.0409$ in absolute value. Building
tree ensembles also very significantly improves the accuracy with respect to
single trees (by at least $0.20$). This further suggests that the variance of
single trees is very high on this problem.

From a modeling perspective, growing decision tree ensemble on datasets with
sparse inputs is possible. From a training time perspective, the time needed to
grow and to optimize an ensemble of 100 trees is comparable to the time needed
to train linear models, e.g., with SGD or ridge regression. Note that
naive Bayes models are particularly fast to train compared to the other
estimators. However note that the comparison is dependent upon the chosen
hyper-parameters, the implementation and the grid size. From the point of view
of prediction time, tree ensemble methods are particularly slow compared to the
other estimators.

\begin{table}
\caption{Accuracy, training time (in second) and prediction time (in second) of
algorithms on 20 newsgroup sorted by accuracy score.}
\label{table:20news-sparse}
\centering
\setlength{\tabcolsep}{2pt}
\begin{tabular}{lrrr}
\toprule
                Estimator &  Training time &  Prediction time &  Accuracy \\
\midrule
    $K$-nearest neighbors &            295 &            22.35 &    0.457 \\
            Decision tree &            859 &             0.02 &    0.557 \\
                 Adaboost &          37867 &             8.84 &    0.752 \\
   Bernoulli naives Bayes &             25 &             0.52 &    0.765 \\
  Random forest ($m=100$) &          11766 &            12.71 &    0.778 \\
    Extra trees ($m=100$) &          22186 &            28.07 &    0.794 \\
                      SGD &          42776 &             0.25 &    0.814 \\
               Linear SVC &           2481 &             0.20 &    0.822 \\
   Extra trees ($m=1000$) &         213009 &           253.39 &    0.833 \\
 Multinomial naives Bayes &             14 &             0.11 &    0.833 \\
 Random forest ($m=1000$) &         114472 &           125.92 &    0.835 \\
                    Ridge &           5737 &             0.13 &    0.844 \\
\bottomrule
\end{tabular}
\end{table}

\section{Conclusion}

We propose an algorithm to grow decision tree models whenever the input space
is sparse. Our approach takes advantage of input sparsity to speed up the
search and selection of the best splitting rule during the tree growing. It
first speeds up the expansion of a tree node by extracting efficiently the non
zero threshold of the splitting rules used to partition data. Secondly, the
selection of best splitting rule is also faster as we avoid to sort data with
zero values. We reduce the memory needed as we do not need to densify the input
space. We also show how to predict samples with sparse inputs.


\chapter{Conclusions}
\label{ch:conclusions}


\section{Conclusions}

As we now gather or generate data at every moment, machine learning techniques
are emerging as ubiquitous tools in sciences, engineering, or society. Within
machine learning, this thesis focuses on supervised learning, which aims at
modelling input-output relationships only from observations of input-output
pairs, using tree-based ensemble methods, a popular method family exploiting
tree structured input-output models.  Modern applications of supervised
learning raise new computational, memory, and modeling challenges to existing
supervised learning algorithms. In this work, we identified and addressed the
following questions in the context of tree-based supervised learning methods:
(i) how to efficiently learn in high dimensional, and possibly sparse, output
spaces? (ii) how to reduce the memory requirement of tree-based models at
prediction time? (iii) how to efficiently learn in high dimensional and sparse
input spaces? We summarize below our main contributions and conclusions around
these three questions.

\paragraph{Learning in high dimensional and possibly sparse output spaces.}

Decision trees are grown by recursively partitioning the input space while
selecting at each node the split maximizing the reduction of some impurity
measure. Impurity measures have been extended to address with such models
multi-dimensional output spaces, so as to solve multi-label classification or
multi-target regression problems. However, when the output space is of high
dimension, the computation of the impurity becomes a computational bottleneck of
the tree growing algorithm. To speed up this algorithm, we propose to
approximate impurity computations during tree growing through random projections
of the output space. More precisely, before growing a tree, a few random
projections of the output space are computed and the tree is grown to fit these
projections instead of the original outputs. Tree leaves are then relabelled in
the original output space to provide proper predictions at test time. We show
theoretically that when the number of projections is large enough, impurity
scores, and thereby the learned tree structures and their predictions, are not
affected by this trick. We then exploit the randomization introduced by the
projections in the context of random forests, by building each tree of the
forest from a different randomly projected subspace. Through experiments on
several multi-label classification problems, we show that randomly projecting
the outputs can significantly reduce computing times at training without
affecting predictive performance. On some problems, the randomization induced by
the projections even allows to reach a better bias-variance tradeoff within
random forests, which leads to improved overall performance. In contrast with
existing works on random projections of the output, our proposed leaf
relabelling strategy also allows to avoid any decoding step and thus preserves
computational efficiency at prediction time with respect to standard unprojected
multi-output random forests.

Multi-output random forests build a single tree ensemble to predict all outputs
simultaneously. While often effective, this approach is justified only when the
individual outputs are strongly dependent (conditionally to the inputs). On the
other hand, building a separate ensemble for each output, as done in the binary
relevance / single target approach, is justified only when the outputs are
(conditionally) independent. In our second contribution, we build on gradient
boosting and random projections to propose a new approach that tries to bridge
the gap between these two extreme assumptions and can hopefully adapt
automatically to any intermediate output dependency structure. The idea of this
approach is to grow each tree of a gradient boosting ensemble to fit a random
projection of the original (residual) output space and then to weight this model
in the prediction of each output according to its ``correlation'' with this
output. Through extensive experiments on several artificial and real problems,
we show that the resulting method has a faster convergence than binary relevance
and that it can adapt better than both binary relevance and multi-output
gradient boosting to any output dependency structure. The resulting method is
also competitive with multi-output random forests. Although we only carried out
experiments with tree-based weak models, the resulting gradient boosting methods
are generic and can thus be used with any base regressor.

\paragraph{Reducing memory requirements of tree-based models at prediction time.}

One drawback of random forest methods is that they need to grow very large
ensembles of unpruned trees to achieve optimal performance. The resulting
models are thus potentially very complex, especially with large datasets, as
the complexity of unpruned trees typically depends linearly on the dataset
size. On the other hand, only very few nodes are required to make a prediction
for a given test example. Our investigation of the question of ensemble
compression started with the observation that the random forest model can be
viewed as  linear models in the node indicator space. Each of these binary
variables defining this space indicates whether or not a sample reaches a given
node of the forest. In the original linear representation of a forest in the
indicator space, non zero ``coefficients'' are given only to the leaf nodes. We
propose to post-prune the random forest model by selecting and re-weighting the
nodes of the linear model through the exploitation of sparse linear
estimators. More precisely, from the tree ensemble, we first extract node
indicator variables. Then, we project a sample set on this new representation
and select a subset of these variables through a Lasso model. The non zero
coefficients of the obtained Lasso model are later used to prune and to re-weight the
decision tree structure. The resulting post-pruning approach is shown
experimentally to reduce very significantly the size of random forests, while
preserving, and even sometimes improving, their predictive performance.

\paragraph{Learning in high dimensional and sparse input spaces.}

Some supervised learning tasks (e.g., text classification) need to deal with
high dimensional and sparse input spaces, where input variables have each only
a few non zero values. Dealing with input sparsity in the context of decision
tree ensembles is challenging computationally for two reasons: (i) it is more
difficult algorithmic-wise to exploit sparsity during the tree growth than for
example with linear models, leading to slow tree training algorithms requiring a
high amount of memory, (ii) the decision tree structures are very unbalanced,
which further affects computational complexity.  For these two reasons, linear
methods are often preferred to decision tree algorithms to learn with sparse
datasets. In our last contribution, we specifically developed an efficient
implementation of the tree growing algorithm to handle sparse input
variables. While previous implementations required to densify the input data
matrix, our implementation allows to directly fit decision trees on appropriate
sparse input matrices. It speeds up decision tree training on sparse input data
and saves memory by avoiding input data ``densification''.  We also show how to
predict unseen sparse input samples with a decision tree model. Note that in
this contribution we only focus on improving computing times without modifying
the original algorithm.

\section{Perspectives and future works}

We collect in this section some future research directions based on the presented
ideas of this thesis.

\subsection{Learning in compressed space through random projections}

\begin{itemize}

\item  The combination of random forest models with random projections adds two
new hyper-parameters to tree based methods: the choice and the size of the
random output projection subspace. It is not clear yet what would be good
default hyper-parameter choices. Extensive empirical studies and the
Johnson-Lindenstrauss lemma might help us to define good default values.
These two hyper-parameters also introduce randomization in the output space. It
would be interesting to further investigate how the input and output space
randomizations jointly modify the bias-variance tradeoff of the ensemble.

\item Single random projection of the output space with the gradient boosting
algorithm is a generic multi-output method usable with any single output
regressor. In this thesis, we specifically focused on tree-structured weak models.
We suggest  to investigate other kinds of weak models.

\item We have combined a dimensionality reduction method with an ensemble
method, random forest methods in Chapter~\ref{ch:rf-output-projections} and with
gradient boosting methods in Chapter~\ref{ch:gbrt-output-projection}, while
keeping the generation of the random projection matrix independent from the
supervised learning task. It would be interesting to investigate more
adaptive projection schemes. A simple
instance of this approach would be to draw a new random projection matrix
according to the residuals, e.g. by sub-sampling an output variable with a
probability proportional to the fraction of unexplained variance. An optimal
instance of this approach, but computationally more expensive, would compute a
projection maximizing the variance along each axis with the principal component
analysis algorithm.

\item Kernelizing the output of tree-based
methods~\cite{geurts2006kernelizing,geurts2007gradient} allows one to treat complex
outputs such as images, texts and graphs. It would be interesting to investigate
how to combine output kernel tree-based methods with random projection of the
output space to improve computational complexity and accuracy. This idea has
been studied~\cite{lopez2014randomized} in the context of the kernel principal
component algorithm and the kernel canonical correlation analysis algorithm.

\end{itemize}

\subsection{Growing and compressing decision trees}

\begin{itemize}

\item In the context of the $\ell_1$-based compression of random forests, we
first grow a whole forest, and then prune it. The post-pruning step is costly in
terms of computational time and memory as we start from a complex random forest
model. A first study in collaboration with Jean-Michel
Begon~\cite{begon2016joint} shows that we actually do not need to start from the whole
forest, and can grow a compressed random forest model greedily. Starting from a set of root
nodes, the idea is to sequentially develop the nodes that reduce the most the
error of the chosen loss function. The process continues until reaching a
complexity constraint, saving time and memory.

\item The space complexity of a decision tree model is linear in the number of (test
 and leaf) nodes, which is typically proportional to the training set size $n$. In the
context of $d$ outputs multi-output classification or regression tasks, or $d$
classes multi-class classification tasks, a leaf node is a constant model
stored as a vector of size $d$. The space complexity of a decision tree
model is thus $O(nd)$. With high dimensional output spaces or many classes, it thus may become
prohibitive to store decision tree or random forest models. We would like to
investigate two further approaches to compress such models: (i) by adapting the
(post)-pruning method developed in~\cite{joly2012l1} and in
Chapter~\ref{ch:rf-compression} to multi-output tasks and multi-class
classification tasks and (ii) by compressing exactly or approximately each
constant leaf model. Both approaches can be used together. For approach~(ii), an
exact solution would compute the constant leaf models on-the-fly at prediction
time by storing once the output matrix and the indices of the samples reaching
the leaf at learning time. With totally developed trees, it should not modify
the computational complexity of the prediction algorithm. If we agree to depart
from the vanilla decision tree model, it is also possible to approximate the
leaf model, for instance by keeping at the leaf nodes only the subset of the $k$
output-values reducing the most the error either at the leaf level as
in~\cite{prabhu2014fastxml} or at the tree level. Also, if the output space
is sparse, appropriate sparse data structures could help to further reduce
the memory footprint of the models.

\end{itemize}

\subsection{Learning in high dimensional and sparse input-output spaces}

\begin{itemize}

\item  In Chapter~\ref{ch:tree-sparse}, we have shown how to improve tree
growing efficiency in the case of  sparse high-dimensional input spaces.
However, the small fraction of non zero input values exploited for each split
typically leads to highly unbalanced tree structure. On such tasks, it would be
interesting to grow decision trees with multivariate splitting rules to make the
tree balanced. For instance in text-based supervised learning, the input text is
often converted to a vector through a bag-of-words, where each variable
indicates the presence of a single word. In this context, each test node
assesses the presence or the absence of a single word. The tree growing
algorithm lead to unbalanced trees as each training sample has only a small
fraction of all possible words. We propose to investigate node splitting methods
that would combine several sparse input variables into a dense one. In the text
example, we would generate new dense variables by collapsing several words
together. In a more general context, we could use random ``or'' or random
``addition'' functions of several sparse input variables.

Furthermore, while we have shown empirically that the implementation proposed in
Chapter ~\ref{ch:tree-sparse} indeed translates into a speed up and reduction of
memory consumption, it would be interesting to study formally its computational
complexity as a function of the input-space sparsity.

\item In the multi-label classification task, the output space is often very
large and \emph{sparse} (as in Chapter~\ref{ch:rf-output-projections} and
Chapter~\ref{ch:gbrt-output-projection}), having few ``non zero
values''\footnote{We assume that the majority class of each output is coded as
``0'' and the minority classes is coded with the value ``1''.}. It would be
interesting to exploit the output space sparsity to speed up the decision tree
algorithm. The algorithm interacts with the output space during the search of
the best split and the training of the leaf models. The search for the best split
for an ordered variable is done by first sorting the possible splitting rules
according to their threshold values, and then the best split is selected by
computing incrementally the reduction of impurity by moving samples from the
right partition to the left partition. The leaf models are constant estimators
obtained by aggregating output values. Both procedures require efficient
sample-wise indexing or row-wise indexing as provided by the compressed row
storage (csr) sparse matrix format. We propose to implement impurity functions
and leaf model training procedures to work with csr sparse matrices.

\end{itemize}


\appendix
\cleardoublepage
\part{Appendix}

\chapter{Description of the datasets}
\label{ch:datasets}

\section{Synthetic datasets}

\begin{itemize}
\item Friedman1~\cite{friedman1991multivariate} is a regression problem with
$p=10$ independent input variables of uniform distribution $\mathcal{U}(0,1)$.
We try to estimate the output $ y = 10\sin{(\pi{}\,x_1\,x_2)} + 20(x_3 -
\frac{1}{2})^2 + 10 x_4 + 6 x_5	+ \epsilon{} $, where $\epsilon$ is a Gaussian
noise $\mathcal{N}(0,1)$. There are $300$ learning samples and $2000$ testing
samples.

\item  Two-norm~\cite{breiman1996bias} is a binary classification problem with
$p=20$ normally distributed (and class-conditionally independent) input
variables: either from $\mathcal{N}(-a,1)$ if the class is $0$ or from
$\mathcal{N}(a,1)$ if the class is 1 (with $a = \frac{2}{\sqrt{20}}$). There are
$300$ learning and $2000$ testing samples.
\end{itemize}

\section{Regression datasset}

\begin{itemize}
\item SEFTi~\cite{SETFI} is a (simulated) regression problem which concerns the
tool level fault isolation in a semiconductor manufacturing.  One quarter of the
values are missing at random and were replaced by the median. There are $p=600$
input variables, $2000$ learning samples and $2000$ testing samples.
\end{itemize}

\section{Multi-label dataset}

Experiments are performed on several multi-label datasets:
the yeast~\cite{elisseeff2001kernel} and
the bird~\cite{briggs20139th}
datasets in the biology domain;
the corel5k~\cite{duygulu2002object} and
the scene~\cite{boutell2004learning}
datasets in the image domain;
the emotions~\cite{tsoumakas2008multi} and
the CAL500~\cite{turnbull2008semantic}
datasets in the music domain;
the bibtex~\cite{katakis2008multilabel},
the bookmarks~\cite{katakis2008multilabel},
the delicious~\cite{tsoumakas2008effective},
the enron~\cite{klimt2004enron},
the EUR-Lex (subject matters, directory codes and eurovoc descriptors)~\cite{mencia2010efficient}
the genbase~\cite{diplaris2005protein},
the medical\footnote{The medical dataset comes from the computational medicine
center's 2007 medical natural language processing challenge
 \url{http://computationalmedicine.org/challenge/previous}.},
the tmc2007~\cite{srivastava2005discovering}
datasets in the text domain
and the mediamill~\cite{snoek2006challenge}
dataset in the video domain.

Several hierarchical classification tasks
are also studied to increase the diversity in the number of label
and treated as multi-label classification task. Each node of the hierarchy
is treated as one label. Nodes of the hierarchy which never occured in the
training or testing set were removed.
The reuters~\cite{rousu2005learning},
WIPO~\cite{rousu2005learning} datasets are from the text domain. The
Diatoms~\cite{dimitrovski2012hierarchical} dataset is from the image domain.
SCOP-GO~\cite{clare2003machine}, Yeast-GO~\cite{barutcuoglu2006hierarchical}
and Expression-GO~\cite{vens2008decision} are from the biological domain.
Missing values in the Expression-GO dataset were inferred using
the median for continuous features and the most frequent value for categorical
features using the entire dataset.
The inference of a drug-protein interaction
network~\cite{yamanishi2011extracting} is also considered
either using the drugs to infer the interactions with the protein
(drug-interaction), either using the proteins to infer the interactions
with the drugs (protein-interaction).

Those datasets were selected to have a wide range of number of outputs $d$.
Their basic characteristics are summarized at
Table~\ref{tab:dataset_summary}. For more information on a particular dataset,
please see the relevant paper.

\begin{table}[htb]
\caption{Selected datasets have a number of labels $d$  ranging from 6 up to
         3993 in the biology, the text, the image, the video or the music domain.
         Each dataset has $n_{LS}$ training samples,  $n_{TS}$ testing
         samples and $p$ input features.}
\centering
\renewcommand{\tabcolsep}{1.4mm}
\begin{tabular}{@{} l rrrr  @{}}
\toprule
Datasets                      & $n_{LS}$ & $n_{TS}$ &      $p$ &        $d$\\
\midrule
emotions                      &      391 &      202 &       72 &         6 \\
scene                         &     1211 &     1196 &     2407 &         6 \\
yeast                         &     1500 &      917 &      103 &        14 \\
birds                         &       322 &      323 &    260 &    19 \\
tmc2007                       &    21519 &     7077 &    49060 &        22 \\
genbase                       &      463 &      199 &     1186 &        27 \\
reuters                       &     2500 &     5000 &    19769 &        34 \\
medical                       &      333 &      645 &     1449 &        45 \\
enron                         &     1123 &      579 &     1001 &        53 \\
mediamill                     &    30993 &    12914 &      120 &       101 \\
Yeast-GO                      &     2310 &     1155 &     5930 &       132 \\
bibtex                        &     4880 &     2515 &     1836 &       159 \\
CAL500                        &      376 &      126 &       68 &       174 \\
WIPO                          &     1352 &      358 &    74435 &       188 \\
EUR-Lex \footnotesize{(subject matters)} &    19348 &    10-cv &     5000 &       201 \\
bookmarks                     &    65892 &    21964 &     2150 &       208 \\
diatoms                       &     2065 &     1054 &      371 &      359 \\
corel5k                       &     4500 &      500 &      499 &       374 \\
EUR-Lex \footnotesize{(directory codes)} &    19348 &    10-cv &     5000 &       412 \\
SCOP-GO                       &     6507 &     3336 &     2003 &       465 \\
delicious                     &    12920 &     3185 &      500 &       983 \\
drug-interaction              &     1396 &      466 &      660 &      1554 \\
protein-interaction           &     1165 &      389 &      876 &      1862 \\
Expression-GO                 &     2485 &      551 &     1288 &      2717 \\
EUR-Lex \scriptsize{(eurovoc descriptors)} &    19348 &    10-cv &     5000 &      3993 \\
\bottomrule
\end{tabular}
\label{tab:dataset_summary}
\end{table}

\section{Multi-output regression datasets}

Multi-output regression is evaluated on several real world datasets:
the edm~\cite{karalivc1997first} dataset
in the industrial domain;
the water-quality~\cite{dvzeroski2000predicting}
dataset in the environmental domain;
the atp1d~\cite{spyromitros2012multi},
the atp7d~\cite{spyromitros2012multi},
the scm1d~\cite{spyromitros2012multi} and
the scm20d~\cite{spyromitros2012multi}
datasets in the price prediction domain;
the oes97~\cite{spyromitros2012multi} and
the oes10~\cite{spyromitros2012multi} datasets
in the human resource domain. The output of those datasets were normalized
to have zero mean and unit variance.

If the number of testing samples is unspecified, we use a $50\%$ of the
samples as training and validation set and $50\%$ of the samples as testing set.

\begin{table}[t]
\caption{Selected multi-output regression ranging from $d=2$ to $d=16$ outputs.}
\label{tab:dataset-multi-output regression}
\centering
\begin{tabular}{lrlrr}
\toprule
      Datasets &  $n_{LS}$ & $n_{TS}$ &  $p$ &  $d$ \\
\midrule
         atp1d &       337 &          &  411 &    6 \\
         atp7d &       296 &          &  411 &    6 \\
           edm &       154 &          &   16 &    2 \\
         oes10 &       403 &          &  298 &   16 \\
         oes97 &       334 &          &  263 &   16 \\
         scm1d &      8145 &     1658 &  280 &   16 \\
        scm20d &      7463 &     1503 &   61 &   16 \\
 water-quality &      1060 &          &   16 &   14 \\
\bottomrule
\end{tabular}
\end{table}


\manualmark{}
\markboth{\spacedlowsmallcaps{\bibname}}{\spacedlowsmallcaps{\bibname}} 
\refstepcounter{dummy}
\addtocontents{toc}{\protect\vspace{\beforebibskip}} 
\addcontentsline{toc}{chapter}{\tocEntry{\bibname}}
\bibliographystyle{abbrvnat}
\label{app:bibliography}
\bibliography{Bibliography}


\end{document}